\newcommand{\dx}{\mathrm{d}x}
\newcommand{\dy}{\mathrm{d}y}
\newcommand{\dz}{\mathrm{d}z}
\newcommand{\EE}{\mathbb{E}}
\newcommand{\RR}{\mathbb{R}}
\newcommand{\NN}{\mathbb{N}}
\newcommand{\zV}{\mathbf{0}}
\newcommand{\cF}{\mathcal{F}}
\newcommand{\df}{\phi}
\newcommand{\fP}{\psi}
\newcommand{\cC}{\mathcal{C}}
\newcommand{\cO}{\mathcal{O}}
\newcommand{\cN}{\mathcal{N}}
\newcommand{\cH}{\mathcal{H}}
\newcommand{\hp}{\hat{\gf}}
\newcommand{\ap}{\alpha}
\newcommand{\eps}{\epsilon}
\newcommand{\veps}{\varepsilon}
\newcommand{\Sp}{\Omega}
\newcommand{\D}{\delta}
\newcommand{\W}{\mathrm{W}}
\newcommand{\PD}{D_f}
\newcommand{\gf}{f}
\newcommand{\KL}{\mathrm{KL}}
\newcommand{\SKL}{\mathrm{SKL}}
\newcommand{\JS}{\mathrm{JS}}
\newcommand{\SH}{\mathrm{H}^2}
\newcommand{\TV}{\mathrm{TV}}
\newcommand{\CS}{\chi^2}
\newcommand{\HA}{\mathcal{H}_\alpha}
\newcommand{\PQ}[1]{(P_{#1},Q_{#1})}
\newcommand{\ee}{\mathrm{e}}
\newcommand{\vspan}{\mathrm{span}}
\newcommand{\cl}{\mathrm{cl}}
\newcommand{\trace}{\mathrm{Tr}}
\newcommand{\var}{\mathrm{Var}}
\newcommand{\cov}{\mathrm{Cov}}
\newcommand{\dom}{\mathrm{dom}}
\newcommand{\diam}{\mathrm{diam}}
\newcommand{\bigCI}{\mathrel{\text{\scalebox{1.07}{$\perp\mkern-10mu\perp$}}}}
\newtheoremstyle{break}
  {}
  {}
  {\itshape}
  {}
  {\bfseries}
  {}
  {\newline}
  {}
\newtheorem{definition}{Definition}
\newtheorem{theorem}{Theorem}
\newtheorem{corollary}[theorem]{Corollary}
\newtheorem{lemma}[theorem]{Lemma}
\newtheorem{example}{Example}
\newtheorem{counterexample}[example]{Counter-Example}
\newenvironment{prevproof}[1]{\noindent {\em {Proof of \cref{#1}:}}}{\hfill $\square$\vskip \belowdisplayskip}
\begin{document}

%
\runningtitle{GANs with Conditional Independence Graphs}

%

\twocolumn[

\aistatstitle{GANs with Conditional Independence Graphs:\\
On Subadditivity of Probability Divergences}

\aistatsauthor{Mucong Ding \And Constantinos Daskalakis \And Soheil Feizi}

\aistatsaddress{University of Maryland,\\College Park\\\href{mailto:mcding@umd.edu}{\nolinkurl{mcding@umd.edu}}  \And Massachusetts Institute of Technology \And University of Maryland,\\College Park}
]

\begin{abstract}
Generative Adversarial Networks (GANs) are modern methods to learn the underlying distribution of a data set. GANs have been widely used in sample synthesis, de-noising, domain transfer, etc. GANs, however, are designed in a {\it model-free} fashion where {\it no} additional information about the underlying distribution is available. In many applications, however, practitioners have access to the underlying independence graph of the variables, either as a Bayesian network or a Markov Random Field (MRF). We ask: how can one use this additional information in designing {\it model-based} GANs? In this paper, we provide theoretical foundations to answer this question by studying subadditivity properties of probability divergences, which establish upper bounds on the distance between two high-dimensional distributions by the sum of distances between their marginals over (local) neighborhoods of the graphical structure of the Bayes-net or the MRF. We prove that several popular probability divergences satisfy some notion of subadditivity under mild conditions. These results lead to a principled design of a model-based GAN that uses a set of simple discriminators on the neighborhoods of the Bayes-net/MRF, rather than a giant discriminator on the entire network, providing significant statistical and computational benefits. Our experiments on synthetic and real-world datasets demonstrate the benefits of our principled design of model-based GANs.
\end{abstract}

\section{Introduction}
\label{sec:introduction}

Generative Adversarial Networks (GANs)~\citep{goodfellow2014generative} have been successfully used to model complex distributions such as image data. GANs model the learning problem as a {\it min-max} game between generator and discriminator functions. Depending on the specific cost function and constraints on the discriminator network, the associated optimization problem aims at estimating a Wasserstein distance \citep{arjovsky2017wasserstein}, an Integral Probability Measure (IPM)~\citep{muller1997integral}, an $f$-divergence \citep{nowozin2016f}, etc., between the target and generated distributions.

GANs are often designed in a {\it model-free} fashion where {\it no} additional information about the underlying distribution is available\footnote{Some works have studied GANs under some strict assumptions on the input data distribution. For example, \citet{feizi2017understanding} has designed GANs for multivariate Gaussians while \citet{balaji2019normalized} and \citet{farnia2020gat} have studied GANs for mixtures of Gaussians. In contrast, our method is applicable to any Bayesian network or Markov Random Field, which are significantly richer families of distributions.}. In some applications, however, one may have some side information about the data distribution. For example, one may know that there is a Markov chain governing the underlying independence graph of the variables. In general, the underlying independence graph of variables may be available as a Bayesian network (i.e. a directed graph) or a Markov Random Field (i.e. an undirected graph). In this paper, we ask: how can we use this additional information in a principled {\it model-based} design of GANs?

In this paper, we provide theoretical foundations to answer the aforementioned question for high-dimensional distributions with conditional independence structure captured by either a Bayesian network or a Markov Random Field (MRF). We mainly focus on the application to GANs, while the theory developed can be used by any other type of adversarial learning that exploits discriminator networks. The pertinent question is whether a known Bayes-net or MRF structure can be exploited to design a GAN with multiple discriminators that are localized and simple. In particular, we are interested in whether we can replace the large discriminator of the vanilla GAN implementation with several simple discriminators that are used to enforce constraints on local neighborhoods of the Bayes-net or the MRF (i.e. local discriminators). Ignoring the underlying conditional independence structure we might know about the target distribution and letting the GAN ``learn it on its own''  requires a very large discriminator network, especially in applications where data is gathered across many time steps. Large discriminators face computational and statistical challenges, given that min-max training is computationally challenging, and statistical hypothesis testing in large dimensions requires sample complexity exponential in the dimension; see e.g.~discussions by~\citet{daskalakis2017square,daskalakis2019testing,canonne2020testing}.

Our proposed framework is based on {\it subadditivity} properties of probability divergences over a Bayes-net or a MRF, which establish upper bounds on the distance between two high-dimensional distributions with the same Bayes-net or MRF structure by the sum of distances between their marginals over (local) neighborhoods of the graphical structure of the Bayes-net or the MRF~\citep{daskalakis2017square}. For a Bayes-Net, each local neighborhood is defined as the union of a node $i$ and its parents $\Pi_i$, as it is the smallest set that encodes conditional dependence. For a MRF, the set of local neighborhoods can be defined as the set of maximal cliques $\cC$ of the underlying graph.

Let $\D$ be some divergence or probability metric, such as some Wasserstein distance or $\gf$-divergence, that is estimated by each of the local discriminators in their dedicated neighborhood. If we train a generator with the set of local discriminators, it samples a distribution $Q$ that minimizes the sum of divergences $\D$ between marginals of $P$ and $Q$ over the local neighborhoods, where $P$ is the target distribution. As per our description of what the local neighborhoods are in each case, the optimization objective becomes $\sum_{i=1}^n\D\PQ{X_i\cup X_{\Pi_i}}$ on a Bayes-net, and $\sum_{C\in\cC}\D\PQ{X_C}$ on a MRF. However, our real goal is to minimize some divergence $\D'(P, Q)$ of interest measured on the joint (high-dimensional) distributions. We say that $\D(.,.)$ satisfies {\it generalized subadditivity} if the sum $\sum_{i=1}^n\D\PQ{X_i\cup X_{\Pi_i}}$ or $\sum_{C\in\cC}\D\PQ{X_C}$ upper-bounds the divergence $\D'\PQ{}$ of interest up to some constant factor $\ap>0$ and additive error $\eps\geq0$, i.e. $\D'(P, Q)-\eps \leq \ap\cdot\sum_{i=1}^n\D\PQ{X_i\cup X_{\Pi_i}}$ (on Bayes-nets), or $\D'(P, Q)-\eps \leq \ap\cdot\sum_{C\in\cC}\D\PQ{X_C}$ (on MRFs), where $\D'$ can be the same or different from $\D$. In this sense, the generator effectively minimizes $\D'\PQ{}$ by minimizing its upper-bound. Since, in many applications, local neighborhoods can be significantly smaller than the entire graph, local discriminators targeting each of these neighborhoods will enjoy improved computational and statistical properties in comparison to a global discriminator targeting the entire graph.

The key question is which divergences or metrics exhibit subadditivity to be used in our proposed framework. For testing the identity of Bayes-nets, \citet{daskalakis2017square} shows that squared Hellinger distance, Kullback-Leibler divergence, and Total Variation distance satisfy some notion of generalized subadditivity. Since our goal in this paper is to exploit subadditivity in the design of GANs, we are interested in establishing generalized subadditivity bounds for distances and divergences that are commonly used in GAN formulations. In this work, we prove that
\begin{itemize}[leftmargin=*, topsep=0pt]
    \item Jensen-Shannon divergence used in the original GAN model \citep{goodfellow2014generative},
    \item Wasserstein distance used in Wasserstein GANs \citep{arjovsky2017wasserstein}, and Integral Probability Metric (IPM) \citep{muller1997integral} used in Wasserstein, MMD and Energy-based GANs \citep{li2015generative,zhao2017energy},
    \item and nearly all $\gf$-divergences used in $\gf$-GANs \citep{nowozin2016f},
\end{itemize}
satisfy some notion of generalized subadditivity over Bayes-nets under some mild conditions.\footnote{We discuss the notion of ``local subadditivity" in~\cref{sec:local} and \cref{appendix:local}.} Moreover, we prove that under some mild conditions
\begin{itemize}[leftmargin=*, topsep=0pt]
    \item Wasserstein distance and IPM satisfy generalized subadditivity on MRFs. 
\end{itemize}
These results establish theoretical foundations for using underlying conditional independence graphs in GAN's designs. We demonstrate benefits of our design over several synthetic and real datasets such as the synthetic ``ball throwing trajectory'' dataset and two real Bayes-net datasets: {\it the EARTHQUAKE dataset}~\citep{korb2010bayesian} and {\it the CHILD dataset}~\citep{spiegelhalter1992learning}.

\section{Related Works}

In many applications, adversarial learning has been used in a broader sense where {\it multiple local discriminators} have been employed in the learning framework. For example, in image-to-image translation methods~\citep{isola2017image, zhu2017unpaired, yi2017dualgan, choi2018stargan, yu2019free, demir2018patch}, local discriminators are applied to different patches of images~\citep{li2016precomputed}. In the analysis of time-series data as well as natural language processing (NLP) tasks, local discriminators based on sliding windows~\citep{li2019mad}, self-attention~\citep{clark2019efficient}, recurrent neural networks (RNNs)~\citep{esteban2017real, mogren2016c}, convolution neural networks (CNNs)~\citep{nie2018relgan}, and dilated causal convolutions~\citep{oord2016wavenet, donahue2018adversarial} have been applied on different subsequences of the data. These models have been applied to a wide range of tasks including image style transfer~\citep{isola2017image, zhu2017unpaired, yi2017dualgan, choi2018stargan}, inpainting~\citep{yu2019free, demir2018patch}, and texture synthesis~\citep{li2016precomputed}, as well as time-series generation~\citep{esteban2017real, mogren2016c}, imputation~\citep{liu2019naomi}, anomaly detection~\citep{li2019mad}, and even video generation~\citep{clark2019efficient} and inpainting~\citep{chang2019free}. 

Intuitively, these methods aim at structuring the generation process and/or narrowing down the purview of the discriminator to capture known dependencies leading to improved computational and statistical properties. These methods, however, are mostly not accompanied by theoretical foundations. In particular, it is not clear what subset of features each local discriminator should be applied to, how many local discriminators should be used in the learning process, and what the effect of the discriminator localization is on estimating the distance between the generated and target distributions.
\section{Notation}
\label{sec:notations}

Consider a Directed Acyclic Graph (DAG) $G$ with nodes $\{1, \ldots, n\}$. Let $\Pi_i$ be the set of parents of node $i$ in $G$. Assume that $(1, \ldots, n)$ is a topological ordering of~$G$, i.e. $\Pi_i\subseteq\{1,\ldots,i-1\}$ for all~$i$. A probability distribution $P(x)$ defined over space $\Sp=\{(x_1, \ldots, x_n)\}$ is a {\em Bayes-net with respect to graph $G$} if it can be factorized as $P(x)=\prod_{i=1}^n P_{X_i|X_{\Pi_i}}(x_i|x_{\Pi_i})$.

Given an undirected graph $G$ with nodes $\{1, \ldots, n\}$, a probability distribution $P(x)$ defined over space $\Sp=\{(x_1, \ldots, x_n)\}$ is a {\em MRF with respect to graph $G$} if any two disjoint subsets of variables $A, B\subseteq \{1, \ldots, n\}$ are conditionally independent conditioning on a separating subset $S$ of variables (i.e.~$S$ such that all paths in $G$ from nodes in $A$ to nodes in $B$ pass through $S$). This conditional independence property is denoted $X_A\bigCI X_B\:|\:X_S$. Such $P(x)$ can be factorized as $P(x)=\prod_{C\in\cC}\fP_C(X_C)$, where $\cC$ is the set of maximal cliques in $G$. In this paper, unless otherwise noted, we always assume $X_i\in\RR^d$, thus $\Sp\subseteq\RR^{nd}$, and use the Euclidean metric. We always assume the density exists.
\section{Generalized Subadditivity on Bayes-nets}
\label{sec:bayes-nets}

In this section, we define the notion of {\em generalized subadditivity} of a statistical divergence $\D$ on Bayes-nets. We discuss subadditivity on MRFs in \cref{sec:mrfs}.

\begin{definition}[Generalized Subadditivity of Divergences on Bayes-nets]
\label{def:subadditivity}
    Consider two Bayes-nets $P, Q$ over the same sample space $\Sp=\{(x_1, \ldots, x_n)\}$ and defined with respect to the same DAG, $G$, i.e. factorizable as $P(x)=\prod_{i=1}^{n} P_{X_i|X_{\Pi_i}}(x_i|x_{\Pi_i})$, $Q(x)=\prod_{i=1}^{n} Q_{X_i|X_{\Pi_i}}(x_i|x_{\Pi_i})$, where ${\Pi_i}$ is the set of parents of node $i$ in $G$.
    For a pair of statistical divergences $\D$ and $\D'$, and constants $\ap>0$ and $\eps\geq 0$, if the following holds for all Bayes-nets $P,Q$ as above:
    \[
        \D'(P, Q)-\eps \leq \ap\cdot\sum_{i=1}^n\D\PQ{X_i\cup X_{\Pi_i}},
    \]
    then we say that $\D$ satisfies {\em $\ap$-linear subadditivity with error $\eps$ with respect to $\D'$ on Bayes-nets}. For the  common case $\eps=0$ and $\D'=\D$, we  say that $\D$ satisfies  {\em $\alpha$-linear subadditivity on Bayes-nets}. When additionally $\alpha=1$, we say that $\D$ satisfies subadditivity on Bayes-nets.
\end{definition}

We refer to the right-hand side of the subadditivity inequality as the subadditivity upper bound. If a statistical divergence $\D$ satisfies linear subadditivity with respect to $\D'$, minimizing the subadditivity upper bound serves as a proxy to minimizing  $\D'\PQ{}$. The subadditivity upper bound is often used as the objective function in adversarial learning when local discriminators are employed.

We argue that subadditivity of $\D$ on
\begin{inparaenum}[(1)]
	\item product measures, and
	\item length-$3$ Markov Chains
\end{inparaenum}
suffices to imply subadditivity on all Bayes-nets. The claim is implicit in the proof of Theorem 2.1 by~\citet{daskalakis2017square}; we state it explicitly here and provide its proof in \cref{proof:subadditivity-markov-chain} for completeness. Roughly speaking, the proof follows because we can always combine nodes of a Bayes-net into super-nodes to obtain a $3$-node Markov Chain or a $2$-node product measure, and apply the Markov Chain/Product Measure subadditivity property recursively.

\begin{theorem}
\label{thm:subadditivity-markov-chain}
    If a divergence $\D$ satisfies the following:
    \begin{compactenum}[(1)]
        \item For any two Bayes-nets $P$ and $Q$ on DAG $X\to Y\to Z$, the following subadditivity holds: $\D\PQ{XYZ}\leq \D\PQ{XY} + \D\PQ{YZ}$.
        \item For any two product measures $P$ and $Q$ over variables $X$ and $Y$, the following subadditivity holds: $\D\PQ{XY}\leq \D\PQ{X} + \D\PQ{Y}$.
    \end{compactenum} then $\D$ satisfies subadditivity on Bayes-nets.
\end{theorem}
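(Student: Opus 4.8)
The plan is to show that a Bayes-net on DAG $G$ with topological order $(1,\dots,n)$ can be processed one node at a time from the last node backwards, repeatedly ``peeling off'' node $n$ by viewing the whole network as a three-node Markov chain of super-nodes and invoking hypotheses~(1) and~(2). Concretely, I would induct on the number of nodes $n$. The base case $n=1$ is trivial since $\D\PQ{X_1}\leq\D\PQ{X_1}$. For the inductive step, fix the last node $n$ with parent set $\Pi_n$, and let $R=\{1,\dots,n-1\}\setminus\Pi_n$ be the remaining ancestors. The key structural observation is that, because $P$ and $Q$ factor over $G$ with the same DAG, the triple $(X_R,\,X_{\Pi_n},\,X_n)$ forms a Markov chain $X_R\to X_{\Pi_n}\to X_n$ under both $P$ and $Q$: conditioned on its parents, $X_n$ is independent of all earlier non-parent nodes. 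Treating $X_R$, $X_{\Pi_n}$, $X_n$ as three (grouped) variables, hypothesis~(1) gives
\[
    \D\PQ{X_{\{1,\dots,n\}}} \;=\; \D\PQ{X_R X_{\Pi_n} X_n} \;\leq\; \D\PQ{X_R X_{\Pi_n}} \;+\; \D\PQ{X_{\Pi_n} X_n}.
\]
The second term $\D\PQ{X_{\Pi_n} X_n}=\D\PQ{X_n\cup X_{\Pi_n}}$ is exactly the local term we want for node $n$, and $X_R X_{\Pi_n}=X_{\{1,\dots,n-1\}}$ is the marginal of a Bayes-net on the induced DAG $G[\{1,\dots,n-1\}]$ (deleting a sink node from a DAG leaves a DAG, and the marginal of a Bayes-net on that sub-DAG is again a Bayes-net on it with the same topological prefix). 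Applying the inductive hypothesis to $\D\PQ{X_{\{1,\dots,n-1\}}}$ bounds it by $\sum_{i=1}^{n-1}\D\PQ{X_i\cup X_{\Pi_i}}$, and summing yields the claim.

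One subtlety to handle carefully: hypothesis~(1) is stated for the chain $X\to Y\to Z$, but in our application $R$ may be empty (when $\Pi_n=\{1,\dots,n-1\}$, i.e. node $n$ is connected to every predecessor) or $\Pi_n$ may be empty (node $n$ is a root). When $R=\emptyset$ there is nothing to peel and the node-$n$ term already appears directly. When $\Pi_n=\emptyset$, node $n$ is independent of everything else in both $P$ and $Q$, so $(X_{\{1,\dots,n-1\}},X_n)$ is a product measure under each, and hypothesis~(2) gives $\D\PQ{X_{\{1,\dots,n-1\}}X_n}\leq\D\PQ{X_{\{1,\dots,n-1\}}}+\D\PQ{X_n}$ — which is the $\Pi_n=\emptyset$ instance of the local term. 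So hypotheses~(1) and~(2) together cover all cases; in fact~(2) is the degenerate case of~(1) with the middle variable $Y$ trivial, but stating it separately keeps the argument clean and matches how it will be used. I would also remark that grouping several nodes into one ``variable'' is legitimate because divergences are defined on arbitrary measurable spaces, so $X_R$ (a vector) is just as valid an argument to $\D$ as a single coordinate.

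The main thing to get right — and the only place where real care is needed — is the claim that marginalizing out a topologically-last node, or equivalently passing to the induced sub-DAG, preserves the Bayes-net structure, and that the grouped triple is genuinely a length-$3$ Markov chain under \emph{both} $P$ and $Q$ simultaneously. This is where the shared-DAG assumption in \cref{def:subadditivity} is essential: it is what guarantees that the same conditional-independence factorization can be applied to $P$ and to $Q$ in lockstep, so that hypothesis~(1) — which assumes \emph{both} $P$ and $Q$ are Markov chains on $X\to Y\to Z$ — is applicable. The rest is bookkeeping: an induction on $n$, with the inductive step a single application of~(1) (or~(2) in the root case) followed by the inductive hypothesis on the $(n-1)$-node marginal. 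I expect the write-up to be short; the ``hard part'' is purely expository, namely making the super-node reduction and the sub-DAG marginalization precise without drowning in notation.
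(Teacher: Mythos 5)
Your proposal is correct and follows essentially the same route as the paper's proof: peel off the topologically last node via the super-node Markov chain $X_{\{1,\dots,n-1\}\setminus\Pi_n}\to X_{\Pi_n}\to X_n$, handle the degenerate cases $\Pi_n=\{1,\dots,n-1\}$ and $\Pi_n=\varnothing$ separately (the latter via hypothesis (2)), and recurse on the $(n-1)$-node marginal, which remains a Bayes-net on the induced sub-DAG. The only cosmetic difference is that you phrase the recursion as induction on $n$ while the paper unrolls it as an explicit iteration over $k=1,\dots,n-2$; the substance is identical.
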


Using \cref{thm:subadditivity-markov-chain}, it is not hard to prove that squared Hellinger distance has subadditivity on Bayes-nets, as shown by~\citet{daskalakis2017square}. For completeness, we provide proof of the following in \cref{proof:subadditivity-sh}

\begin{theorem}[Theorem 2.1 by~\citet{daskalakis2017square}]
\label{thm:subadditivity-sh}
    The squared Hellinger distance defined as $\SH(P, Q)\coloneqq1-\int\sqrt{PQ}~\dx$ satisfies subadditivity on Bayes-nets.
\end{theorem}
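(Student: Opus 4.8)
The plan is to apply \cref{thm:subadditivity-markov-chain}, so it suffices to verify its two hypotheses for $\D=\SH$: subadditivity on length-$3$ Markov chains $X\to Y\to Z$, and subadditivity on two-variable product measures. The device that makes both cases easy is to pass from the squared Hellinger distance to the \emph{Hellinger affinity}
\[
    \rho(P,Q)\coloneqq\int\sqrt{PQ}\,\dx=1-\SH(P,Q)\in[0,1],
\]
which factorizes multiplicatively over independent blocks and over conditional densities.

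For the product-measure hypothesis, if $P=P_X\otimes P_Y$ and $Q=Q_X\otimes Q_Y$, then Fubini gives $\rho\PQ{XY}=\rho\PQ{X}\,\rho\PQ{Y}$. Writing $a=\rho\PQ{X}\in[0,1]$ and $b=\rho\PQ{Y}\in[0,1]$, the claimed inequality $\SH\PQ{XY}\leq\SH\PQ{X}+\SH\PQ{Y}$ reads $1-ab\leq(1-a)+(1-b)$, i.e.\ $(1-a)(1-b)\geq0$, which is immediate.

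For the Markov-chain hypothesis, factor $P(x,y,z)=P_Y(y)\,P_{X\mid Y}(x\mid y)\,P_{Z\mid Y}(z\mid y)$ (legitimate since $X\bigCI Z\mid Y$ on the DAG $X\to Y\to Z$), and likewise for $Q$. Put $\mathrm{d}\mu(y)=\sqrt{P_Y(y)Q_Y(y)}\,\dy$, and let $C(y)=\rho\!\left(P_{X\mid Y=y},Q_{X\mid Y=y}\right)$ and $A(y)=\rho\!\left(P_{Z\mid Y=y},Q_{Z\mid Y=y}\right)$, both valued in $[0,1]$, with $\int\mathrm{d}\mu=\rho\PQ{Y}\leq1$. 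Splitting the square roots and integrating out the irrelevant variables yields $\rho\PQ{XYZ}=\int CA\,\mathrm{d}\mu$, $\rho\PQ{XY}=\int C\,\mathrm{d}\mu$, and $\rho\PQ{YZ}=\int A\,\mathrm{d}\mu$. Hence $\SH\PQ{XYZ}\leq\SH\PQ{XY}+\SH\PQ{YZ}$ is equivalent to $\int\!\big(C+A-CA\big)\,\mathrm{d}\mu\leq1$, and since $C+A-CA=1-(1-C)(1-A)\leq1$ pointwise, the integral is at most $\int\mathrm{d}\mu\leq1$, as required.

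I do not expect a real obstacle here: once \cref{thm:subadditivity-markov-chain} reduces the claim to the two base cases and everything is rephrased through $\rho$, both cases collapse to the one-line inequality $(1-a)(1-b)\geq0$. The only points requiring a word of care — not a genuine difficulty — are that the conditional factorizations and the Fubini/Tonelli manipulations are valid under the standing assumption that densities exist, and that $A,C$ need only be defined $\mu$-almost everywhere, since the set $\{P_Y Q_Y=0\}$ carries no $\mu$-mass.
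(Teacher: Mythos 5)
Your proof is correct, and it shares the paper's skeleton: both arguments invoke \cref{thm:subadditivity-markov-chain} to reduce the claim to the two base cases (length-$3$ Markov chain and two-variable product measure) and then exploit the factorization of the Hellinger integrand $\sqrt{PQ}$ over the conditional decomposition $P=P_Y\,P_{X|Y}\,P_{Z|Y}$. The difference is in how the final inequality is extracted. The paper argues additively: it expands $\sqrt{P_{XY}Q_{XY}}=\tfrac12(P_{XY}+Q_{XY})-\tfrac12\bigl(\sqrt{P_{XY}}-\sqrt{Q_{XY}}\bigr)^2$ and applies the pointwise bound $\sqrt{PQ}\leq\tfrac12(P+Q)$ twice, once to the $Y$-marginal and once to the conditional of $Z$ given $Y$, to reassemble $\SH\PQ{XY}+\SH\PQ{YZ}$. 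You instead work multiplicatively with the affinity $\rho=1-\SH$: since the conditional affinities $A,C$ lie in $[0,1]$ and $\int\mathrm{d}\mu=\rho\PQ{Y}\leq1$, both base cases collapse to the scalar inequality $(1-a)(1-b)\geq0$. This buys a unified treatment of the Markov-chain and product-measure cases and makes the subadditivity gap explicit, namely $\SH\PQ{XY}+\SH\PQ{YZ}-\SH\PQ{XYZ}=\bigl(1-\rho\PQ{Y}\bigr)+\int(1-C)(1-A)\,\mathrm{d}\mu$, whose first term is exactly the parent-marginal discrepancy that the paper later identifies (in the discussion after \cref{thm:subadditivity-kl} and in \cref{thm:local-perturbation-suadditivity}) as the dominant source of slack; the paper's route, by contrast, keeps the two pointwise estimates separate and is closer in form to the original argument of \citet{daskalakis2017square}. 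Your measure-theoretic caveats (densities exist, $A$ and $C$ only needed $\mu$-almost everywhere) are the same ones the paper's computation implicitly relies on, so there is no gap.
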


\subsection{Subadditivity of \texorpdfstring{$\gf$}{f}-Divergences}
\label{subsec:bayes-fdiv}

For two probability distributions $P$ and $Q$ on $\Omega$, the $\gf$-divergence of $P$ from $Q$, denoted $\PD(P, Q)$, is defined as $\PD(P, Q)=\int_\Sp \gf\left(P(x)/Q(x)\right)Q(x)\dx$. We assume $P$ is absolutely continuous with respect to $Q$, written as $P\ll Q$. Common $\gf$-divergences are Kullback-Leibler divergence ($\KL$), Symmetric KL divergence ($\SKL$), Jensen-Shannon divergence ($\JS$), and Total Variation distance ($\TV$); see \cref{appendix:fdivs}. The subadditivity of KL-divergence on Bayes-nets is claimed by~\citet{daskalakis2017square} without a proof. We provide a proof in \cref{proof:subadditivity-kl} for completeness.

\begin{theorem}[Claimed by~\citet{daskalakis2017square}]
\label{thm:subadditivity-kl} The KL-divergence defined as $\KL\PQ{}\coloneqq\int P\log\left(P/Q\right)\dx$ satisfies subadditivity on Bayes-nets.
\end{theorem}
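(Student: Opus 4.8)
The plan is to invoke \cref{thm:subadditivity-markov-chain}: it suffices to verify the two base cases, namely subadditivity of $\KL$ on length-$3$ Markov chains $X\to Y\to Z$ and on two-variable product measures. Both reduce to the chain rule (tensorization identity) for KL-divergence together with its nonnegativity, mirroring the structure of the squared-Hellinger proof but with an exact chain rule in place of the product formula for $\SH$.

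For the product-measure base case, write $P_{XY}=P_X\otimes P_Y$ and $Q_{XY}=Q_X\otimes Q_Y$. The chain rule for KL gives $\KL\PQ{XY}=\KL\PQ{X}+\EE_{x\sim P_X}\big[\KL\big(P_{Y\mid X=x},Q_{Y\mid X=x}\big)\big]$, and since both measures are products the conditionals are constant in $x$, so the expectation equals $\KL\PQ{Y}$. Hence one actually gets the equality $\KL\PQ{XY}=\KL\PQ{X}+\KL\PQ{Y}$, which is stronger than base case (2) requires.

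For the Markov-chain base case, because $P$ and $Q$ both factor as Bayes-nets on $X\to Y\to Z$ we have $P_{Z\mid X,Y}=P_{Z\mid Y}$ and $Q_{Z\mid X,Y}=Q_{Z\mid Y}$. Splitting off the $Z$-coordinate via the chain rule yields
\[
\KL\PQ{XYZ}=\KL\PQ{XY}+\EE_{(x,y)\sim P_{XY}}\big[\KL\big(P_{Z\mid Y=y},Q_{Z\mid Y=y}\big)\big],
\]
where the Markov property was used to replace $P_{Z\mid X=x,Y=y}$ by $P_{Z\mid Y=y}$ (and likewise for $Q$). The expectand depends on $(x,y)$ only through $y$, so the expectation equals $\EE_{y\sim P_Y}\big[\KL\big(P_{Z\mid Y=y},Q_{Z\mid Y=y}\big)\big]$. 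Applying the chain rule a second time to the pair $(Y,Z)$ gives $\KL\PQ{YZ}=\KL\PQ{Y}+\EE_{y\sim P_Y}\big[\KL\big(P_{Z\mid Y=y},Q_{Z\mid Y=y}\big)\big]$, so that expectation equals $\KL\PQ{YZ}-\KL\PQ{Y}$. Substituting back and using $\KL\PQ{Y}\ge 0$ gives $\KL\PQ{XYZ}=\KL\PQ{XY}+\KL\PQ{YZ}-\KL\PQ{Y}\le\KL\PQ{XY}+\KL\PQ{YZ}$, which is base case (1). \cref{thm:subadditivity-markov-chain} then finishes the proof.

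There is essentially no hard step: the only points requiring care are the absolute-continuity bookkeeping (that $P\ll Q$ passes to all relevant marginals and conditionals, which holds under the paper's standing assumption that densities exist) and making sure the conditionals used in the chain rule are exactly the regular conditional distributions appearing in the Bayes-net factorization. If one prefers to avoid conditionals entirely, the same identity follows by expanding densities directly: with $p(x,y,z)=p(x)p(y\mid x)p(z\mid y)$ and $q(x,y,z)=q(x)q(y\mid x)q(z\mid y)$, plug into $\KL\PQ{XYZ}=\int p\log(p/q)\,\dx$, split the logarithm over the three factor-ratios, and recognize that the $\log(p(z\mid y)/q(z\mid y))$ term integrates to $\EE_{P_Y}\big[\KL\big(P_{Z\mid Y},Q_{Z\mid Y}\big)\big]$ while the remaining terms integrate to $\KL\PQ{XY}$, recovering the decomposition above.
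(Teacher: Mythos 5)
Your proposal is correct and follows essentially the same route as the paper: both reduce to \cref{thm:subadditivity-markov-chain}, establish exact additivity for product measures, and derive the identity $\KL\PQ{XYZ}=\KL\PQ{XY}+\KL\PQ{YZ}-\KL(P_Y,Q_Y)$ from the Markov property, concluding by nonnegativity of $\KL$. Your chain-rule phrasing and the direct density expansion you sketch at the end are the same computation the paper carries out via $P_{XYZ}=P_{XY}P_{YZ}/P_Y$.
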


It follows from the proof of \cref{thm:subadditivity-kl} that the following conditions suffice for the $\KL$ subadditivity to become additivity: $\forall i$, $P_{X_{\Pi_i}}=Q_{X_{\Pi_i}}$ (almost everywhere). From the investigation of local subadditivity of $\gf$-divergences (\cref{thm:local-perturbation-suadditivity} in \cref{appendix:local}), we will see that this is the minimum set of requirements possible. The subadditivity of KL divergence easily implies the subadditivity of the Symmetric KL divergence.

\begin{corollary}
\label{coro:subadditivity-jf}
The Symmetric KL divergence defined as $\SKL\PQ{} \coloneqq \KL\PQ{} + \KL(Q,P)$ satisfies subadditivity on Bayes-nets.
\end{corollary}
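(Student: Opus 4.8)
The plan is to deduce this directly from \cref{thm:subadditivity-kl} by exploiting the additive symmetrization structure of $\SKL$. First I would observe that the hypothesis ``$P,Q$ are Bayes-nets with respect to the same DAG $G$'' is symmetric in $P$ and $Q$: if $P(x)=\prod_{i=1}^n P_{X_i|X_{\Pi_i}}(x_i|x_{\Pi_i})$ and $Q(x)=\prod_{i=1}^n Q_{X_i|X_{\Pi_i}}(x_i|x_{\Pi_i})$, then renaming $P$ and $Q$ yields another admissible pair for \cref{thm:subadditivity-kl}. Hence \cref{thm:subadditivity-kl} applies both to the ordered pair $(P,Q)$ and to the ordered pair $(Q,P)$, giving
\[
\KL\PQ{}\leq\sum_{i=1}^n\KL\PQ{X_i\cup X_{\Pi_i}},
\qquad
\KL(Q,P)\leq\sum_{i=1}^n\KL(Q_{X_i\cup X_{\Pi_i}},P_{X_i\cup X_{\Pi_i}}).
\]

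Next I would add these two inequalities term by term. On the left-hand side the sum is exactly $\KL\PQ{}+\KL(Q,P)=\SKL\PQ{}$ by definition. On the right-hand side, for each fixed $i$ the two summands combine into $\KL(P_{X_i\cup X_{\Pi_i}},Q_{X_i\cup X_{\Pi_i}})+\KL(Q_{X_i\cup X_{\Pi_i}},P_{X_i\cup X_{\Pi_i}})=\SKL(P_{X_i\cup X_{\Pi_i}},Q_{X_i\cup X_{\Pi_i}})$, again by the definition of $\SKL$ applied to the marginal pair on the neighborhood $X_i\cup X_{\Pi_i}$. This yields $\SKL\PQ{}\leq\sum_{i=1}^n\SKL\PQ{X_i\cup X_{\Pi_i}}$, i.e. $\ap=1$ subadditivity with error $\eps=0$, which is exactly the claim.

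The only point requiring a word of care is absolute continuity and finiteness: $\SKL\PQ{}$ is finite only when $P$ and $Q$ are mutually absolutely continuous, which is the implicit standing assumption whenever $\SKL$ is written. Under this assumption marginalization preserves mutual absolute continuity (if $P\ll Q$ then $P_A\ll Q_A$ for any subset $A$ of coordinates), so every KL term appearing above is well-defined; and if some right-hand side term equals $+\infty$ the inequality is trivially true. I do not anticipate any real obstacle here — the corollary follows immediately from \cref{thm:subadditivity-kl} once the symmetry of the Bayes-net hypothesis and the additive form of $\SKL$ are noted.
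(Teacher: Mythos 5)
Your argument is correct and is exactly the route the paper intends: the paper derives \cref{coro:subadditivity-jf} directly from \cref{thm:subadditivity-kl} by applying KL subadditivity to both ordered pairs $(P,Q)$ and $(Q,P)$ and summing, which is precisely your proof. Your remark on absolute continuity and possibly infinite terms is a harmless extra precaution and does not change the argument.
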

Moreover, the linear subadditivity of Jensen-Shannon divergence ($\JS$) follows from the subadditivity property of squared Hellinger distance; see \cref{proof:linear-subadditivity-js}.

\begin{corollary}
\label{coro:linear-subadditivity-js}
    The Jensen-Shannon divergence defined as $\JS\PQ{} \coloneqq \frac12\KL\left(P,(P+Q)/2\right)+\frac12\KL\left(Q,(P+Q)/2\right)$ satisfies $(1/\ln2)$-linear subadditivity on Bayes-nets.
\end{corollary}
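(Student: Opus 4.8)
The plan is to sandwich $\JS$ between two multiples of the squared Hellinger distance $\SH$ and then invoke the subadditivity of $\SH$ from \cref{thm:subadditivity-sh}. The whole argument rests on the two-sided comparison
\[
    \ln 2\cdot\SH(P,Q)\;\le\;\JS(P,Q)\;\le\;\SH(P,Q),
\]
valid for every pair of distributions $P,Q$. Granting this, the corollary is immediate: apply the right inequality to $(P,Q)$, then $\SH$-subadditivity, then the left inequality to each neighbourhood marginal pair, obtaining
\[
    \JS(P,Q)\;\le\;\SH(P,Q)\;\le\;\sum_{i=1}^n\SH\PQ{X_i\cup X_{\Pi_i}}\;\le\;\frac{1}{\ln 2}\sum_{i=1}^n\JS\PQ{X_i\cup X_{\Pi_i}},
\]
which is precisely $(1/\ln 2)$-linear subadditivity of $\JS$ on Bayes-nets in the sense of \cref{def:subadditivity}.

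To prove the sandwich I would reduce it to a one-dimensional inequality. For densities $P,Q$, set $f=P+Q$ and $r=P/(P+Q)$ (defined where $f>0$). Expanding the two KL terms in $\JS(P,Q)=\tfrac12\KL(P,\tfrac{P+Q}{2})+\tfrac12\KL(Q,\tfrac{P+Q}{2})$ and using $\int f\,\dx=2$, one gets the pointwise representations $\JS(P,Q)=\int f(x)\bigl(\tfrac{\ln 2}{2}-\tfrac12 H_b(r(x))\bigr)\dx$ and $\SH(P,Q)=\int f(x)\bigl(\tfrac12-\sqrt{r(x)(1-r(x))}\bigr)\dx$, where $H_b(r)=-r\ln r-(1-r)\ln(1-r)$. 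Since $f\ge 0$, the sandwich follows once we establish the scalar double inequality
\[
    \ln 2\cdot\Bigl(\tfrac12-\sqrt{r(1-r)}\Bigr)\;\le\;\tfrac{\ln 2}{2}-\tfrac12 H_b(r)\;\le\;\tfrac12-\sqrt{r(1-r)}\qquad\text{for all }r\in[0,1],
\]
and integrate it against $f$.

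The only nontrivial step is this scalar inequality, and it is the point where I expect the real work to be. Both gap functions are symmetric about $r=\tfrac12$, vanish there, and are nonnegative at $r\in\{0,1\}$ (where $\tfrac{\ln 2}{2}-\tfrac12 H_b$ equals $\tfrac{\ln2}{2}=\ln2\cdot\tfrac12$ while $\tfrac12-\sqrt{r(1-r)}$ equals $\tfrac12$), so the claim amounts to showing that neither gap vanishes in the interior of $(0,\tfrac12)$; since the upper gap vanishes to fourth order at $\tfrac12$ and the lower one to second order, a local expansion does not settle positivity and one must argue globally, e.g.\ by signing the derivative of each gap on $(0,\tfrac12)$ after the substitution $w=1-2r$ (using $\ln\tfrac{1+w}{1-w}=2\operatorname{arctanh}w$). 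Alternatively, the sandwich $\ln 2\cdot\SH\le\JS\le\SH$ is a classical comparison between $f$-divergences and can simply be cited, in which case the proof reduces to the one-line chaining of the first paragraph together with \cref{thm:subadditivity-sh}.
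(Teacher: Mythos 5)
Your proposal is correct and follows essentially the same route as the paper: the two-sided comparison $(\ln 2)\,\SH\PQ{}\leq\JS\PQ{}\leq\SH\PQ{}$ (which the paper cites as Theorem 11 of Sason--Verd\'u and proves as \cref{thm:fdiv-ineq-sh&js}) chained with the subadditivity of the squared Hellinger distance from \cref{thm:subadditivity-sh}. Your reduction of the sandwich to a scalar inequality in $r=P/(P+Q)$ is just a reparametrization of the paper's generator-ratio argument, and deferring to the classical citation is exactly what the paper's main text does, so there is no gap.
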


Using a slightly modified version of \cref{thm:subadditivity-markov-chain}, it is not hard to derive the linear subadditivity of Total Variation distance, which is stated without proof by~\citet{daskalakis2017square}. We provide a proof in \cref{proof:linear-subadditivity-tv} for completeness.

\begin{theorem}[Claimed by~\citet{daskalakis2017square}]
\label{thm:linear-subadditivity-tv}
    The Total Variation distance defined as $\TV(P, Q) \coloneqq \frac12\int\left|P-Q\right|\dx$ satisfies $2$-linear subadditivity on Bayes-nets.
\end{theorem}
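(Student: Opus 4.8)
The plan is to re-run the node-collapsing recursion behind \cref{thm:subadditivity-markov-chain}, but with an \emph{asymmetric} Markov-chain bound in place of the symmetric one. Concretely, I will first record the following variant of \cref{thm:subadditivity-markov-chain}: if a divergence $\D$ satisfies (i) $\D\PQ{XYZ}\le\D\PQ{XY}+2\,\D\PQ{YZ}$ for all Bayes-nets $P,Q$ on the DAG $X\to Y\to Z$, and (ii) $\D\PQ{XY}\le\D\PQ{X}+\D\PQ{Y}$ for all product measures $P,Q$ over $X,Y$, then $\D$ satisfies $2$-linear subadditivity on Bayes-nets. Its proof is the same recursion used for \cref{thm:subadditivity-markov-chain}, now tracking the extra factor of $2$: peeling a topologically-last node $n$ off a Bayes-net on $\{1,\dots,n\}$, the triple $\bigl(X_{A\setminus\Pi_n},X_{\Pi_n},X_n\bigr)$ with $A=\{1,\dots,n-1\}$ is a Markov chain $X\to Y\to Z$ under both $P$ and $Q$ (conditioned on $X_A$ the law of $X_n$ depends only on $X_{\Pi_n}$, i.e.\ $X_n\bigCI X_{A\setminus\Pi_n}\mid X_{\Pi_n}$), so (i) gives $\D\PQ{X_{1:n}}\le\D\PQ{X_A}+2\,\D\PQ{X_n\cup X_{\Pi_n}}$, and the inductive hypothesis $\D\PQ{X_A}\le 2\sum_{i=1}^{n-1}\D\PQ{X_i\cup X_{\Pi_i}}$ on the sub-Bayes-net over $A$ closes the induction. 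The boundary cases are routine: if $\Pi_n=\emptyset$ apply (ii) to $(X_A,X_n)$ together with $\D\PQ{X_n}\le 2\,\D\PQ{X_n}$, and if $\Pi_n=A$ then $X_n\cup X_{\Pi_n}$ is the whole graph and the bound is immediate.

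It then remains to verify (i) and (ii) for $\TV$. Condition (ii) is just the $L^1$ triangle inequality: writing $P_{XY}=P_XP_Y$, $Q_{XY}=Q_XQ_Y$ and inserting $\pm P_X(x)Q_Y(y)$ inside $|P_{XY}-Q_{XY}|$ gives $\TV\PQ{XY}\le\TV\PQ{X}+\TV\PQ{Y}$. For (i), using the Markov factorizations $P_{XYZ}=P_{XY}P_{Z\mid Y}$ and $Q_{XYZ}=Q_{XY}Q_{Z\mid Y}$ and inserting $\pm P_{XY}(x,y)Q_{Z\mid Y}(z\mid y)$ inside the absolute value, then integrating out $x$ from the first resulting term and $z$ from the second, yields
\[
    \TV\PQ{XYZ}\ \le\ \TV\PQ{XY}\ +\ \EE_{y\sim P_Y}\!\bigl[\TV(P_{Z\mid Y=y},\,Q_{Z\mid Y=y})\bigr].
\]
The expectation term equals $\tfrac12\iint\bigl|P_{YZ}(y,z)-P_Y(y)Q_{Z\mid Y}(z\mid y)\bigr|\,\dz\,\dy$; inserting $\pm Q_{YZ}(y,z)$ and applying the triangle inequality once more bounds it by $\TV\PQ{YZ}+\TV\PQ{Y}$, and since $\TV\PQ{Y}\le\TV\PQ{YZ}$ by the data-processing inequality (marginalizing out $Z$), chaining the three estimates gives $\TV\PQ{XYZ}\le\TV\PQ{XY}+2\,\TV\PQ{YZ}$, which is exactly (i). The theorem then follows from the variant above.

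I expect the only genuine subtlety to be keeping the constant at $2$: the conditional decomposition of $\TV\PQ{XYZ}$ naturally yields the term $\EE_{y\sim P_Y}[\TV(P_{Z\mid Y=y},Q_{Z\mid Y=y})]$, which the argument above controls only by $\TV\PQ{YZ}+\TV\PQ{Y}\le 2\,\TV\PQ{YZ}$ (not by $\TV\PQ{YZ}$), so the chain must be collapsed with the asymmetric weights of (i); this is precisely what makes the recursion close at $\ap=2$. Everything else---the $L^1$ triangle inequality, the data-processing inequality, and the super-node recursion---reuses ingredients already present in the proofs of \cref{thm:subadditivity-markov-chain} and \cref{thm:subadditivity-sh}. (Equivalently, one can bypass the modified variant: the hybrid/chain-rule estimate $\TV\PQ{X_{1:n}}\le\sum_{k=1}^n\EE_{x_{\Pi_k}\sim Q}\bigl[\TV(P_{X_k\mid x_{\Pi_k}},Q_{X_k\mid x_{\Pi_k}})\bigr]$, obtained by swapping $P$'s conditionals for $Q$'s one node at a time, reduces matters to the same per-node triangle- and data-processing bound.)
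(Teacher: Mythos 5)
Your proposal is correct and follows essentially the same route as the paper: the same hybrid/triangle-inequality argument for the three-node Markov-chain bound, the same product-measure bound, and the same node-peeling recursion over a topological order. The only cosmetic differences are that you phrase the hybrid step in $L^1$ rather than IPM form and absorb the extra $\TV\PQ{Y}$ term into $2\,\TV\PQ{YZ}$ already at the chain level, whereas the paper carries the terms $\TV\PQ{X_{\Pi_i}}$ through the recursion and applies $\TV\PQ{X_{\Pi_i}}\leq\TV\PQ{X_i\cup X_{\Pi_i}}$ only at the end---the content is identical.
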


\subsection{Subadditivity of Wasserstein Distance and IPMs}
\label{sec:ipm}

Suppose $\Sp$ is a metric space with distance $d(\cdot,\cdot)$. The $p$-Wasserstein distance $\W_p$ is defined as $\W_p\PQ{}\coloneqq(\inf_{\gamma\in\Gamma\PQ{}}\int_{\Sp\times\Sp}d(x,y)^p\mathrm{d}\gamma(x,y))^{1/p}$, where $\gamma\in\Gamma\PQ{}$ denotes the set of all possible couplings of $P$ and $Q$; see \cref{appendix:wasserstein}.

In general, Wasserstein distance does not satisfy subadditivity on Bayes-nets and MRFs shown by a counter-example using Gaussian distributions (\cref{appendix:counter-cexp-wasserstein}). However, based on the linear subadditivity of TV on Bayes-nets, one can prove that all $p$-Wasserstein distances with $p\ge 1$ satisfy $\ap$-linear subadditivity when space $\Sp$ is discrete and finite (\cref{proof:linear-subadditivity-wp}).

\begin{corollary}
\label{coro:linear-subadditivity-wp}
    If $\Sp$ is a finite metric space, $p$-Wasserstein distance for $p\geq 1$ satisfies $(2^{1/p}\diam(\Sp)/d_{\min})$-linear subadditivity on Bayes-nets, where $\diam(\Sp)$ is the diameter and $d_{\min}$ is the smallest distance between pairs of distinct points in $\Sp$.
\end{corollary}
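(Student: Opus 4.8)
The plan is to sandwich the $p$-Wasserstein distance between powers of the Total Variation distance on a finite metric space, and then feed in the $2$-linear subadditivity of $\TV$ from \cref{thm:linear-subadditivity-tv}. First I would record two elementary facts, valid for any distributions $P,Q$ on a finite metric space $(\mathcal X,d)$ with diameter $\diam(\mathcal X)$ and smallest inter-point distance $d_{\min}(\mathcal X)>0$. (i) Coupling $P$ and $Q$ by a maximal (TV-optimal) coupling $\gamma$, which satisfies $\gamma(\{x\neq y\})=\TV(P,Q)$, and bounding the transport cost by $d(x,y)^p\le\diam(\mathcal X)^p\,\mathbf 1[x\neq y]$, yields $\W_p(P,Q)^p\le\diam(\mathcal X)^p\,\TV(P,Q)$, hence $\W_p(P,Q)\le\diam(\mathcal X)\,\TV(P,Q)^{1/p}$. (ii) Conversely, for \emph{any} coupling $\gamma$ of $P,Q$ we have $d(x,y)^p\ge d_{\min}(\mathcal X)^p\,\mathbf 1[x\neq y]$, so $\int d(x,y)^p\,\mathrm{d}\gamma\ge d_{\min}(\mathcal X)^p\,\gamma(\{x\neq y\})\ge d_{\min}(\mathcal X)^p\,\TV(P,Q)$; taking the infimum over $\gamma$ gives $\TV(P,Q)\le \W_p(P,Q)^p/d_{\min}(\mathcal X)^p$, i.e. $\TV(P,Q)^{1/p}\le\W_p(P,Q)/d_{\min}(\mathcal X)$.

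Then I would chain these together. Applying (i) on $\Sp$, then \cref{thm:linear-subadditivity-tv}, then the subadditivity of $t\mapsto t^{1/p}$ on $[0,\infty)$ (which holds since $1/p\le 1$, so that $(2\sum_i a_i)^{1/p}\le 2^{1/p}\sum_i a_i^{1/p}$), and finally (ii) on each neighborhood space, gives
\[
\W_p\PQ{}\;\le\;\diam(\Sp)\,\TV(P,Q)^{1/p}\;\le\;\diam(\Sp)\Big(2\sum_{i=1}^n\TV\PQ{X_i\cup X_{\Pi_i}}\Big)^{1/p}\;\le\;\frac{2^{1/p}\diam(\Sp)}{d_{\min}}\sum_{i=1}^n\W_p\PQ{X_i\cup X_{\Pi_i}},
\]
which is exactly the claimed $(2^{1/p}\diam(\Sp)/d_{\min})$-linear subadditivity.

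The one place that needs a small argument --- and the main (if minor) obstacle --- is the last inequality, where step (ii) is applied on the coordinate subspace carrying the marginal over $X_i\cup X_{\Pi_i}$, whose smallest inter-point distance $d_{\min,i}$ is what naturally appears. Since $\Sp$ is the product of the finite per-coordinate supports and we use the Euclidean metric, any two distinct points of that subspace lift to two distinct points of $\Sp$ at the same distance (pad both with an identical value on the remaining coordinates), so $d_{\min,i}\ge d_{\min}$; replacing $d_{\min,i}$ by the smaller $d_{\min}$ only weakens the bound and gives the constant as stated. Finiteness of $\Sp$ is used to guarantee $d_{\min}>0$ and that the maximal coupling in (i) exists and attains $\gamma(\{x\neq y\})=\TV(P,Q)$; the counter-example in \cref{appendix:counter-cexp-wasserstein} shows that some such boundedness hypothesis is genuinely needed.
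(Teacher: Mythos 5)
Your proposal is correct and follows essentially the same route as the paper: the two-sided bounds $d_{\min}^p\,\TV\le \W_p^p\le \diam(\Sp)^p\,\TV$ (which the paper proves exactly as you do, via the coupling formulation, in its \cref{thm:wasserstein-ineq-tv}), chained with the $2$-linear subadditivity of $\TV$ and the subadditivity of $t\mapsto t^{1/p}$. Your extra remark that the minimum inter-point distance of each marginal space is at least $d_{\min}$ (under the product/Euclidean setup) is a detail the paper glosses over, and it is handled correctly.
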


Integral Probability Metrics (IPMs) are a class of probability distances defined as $d_{\cF}\PQ{} \coloneqq \sup_{\df\in\cF}\left\{\EE_{x\sim P}[\df(x)]-\EE_{x\sim Q}[\df(x)]\right\}$, which include the Wasserstein distance, Maximum Mean Discrepancy, and Total Variation distance. The IPM with $\cF$ being all 1-Lipschitz functions is the 1-Wasserstein distance \citep{villani2008optimal}. Practical GANs take $\cF$ as a parametric function class, $\cF=\{\df_\theta(x)|\theta\in\Theta\}$, where $\df_\theta(x)$ is a neural network. The resulting~IPMs~are~called~neural~distances~\citep{arora2017generalization}.

Next, we prove that neural distances (even those expressible by a single ReLU neuron) satisfy generalized subadditivity with respect to the Symmetric KL divergence. This property establishes substantive theoretical justification for the local discriminators used in GANs based on IPMs.

\begin{theorem}
\label{thm:subadditivity-ipm}
    Consider two Bayes-nets $P, Q$ on $\Sp=\{(X_1, \ldots, X_n)\}\subseteq\RR^{nd}$ with a common DAG $G$, and any set of function classes $\{\cF_1, \ldots, \cF_n\}$. Suppose   the following  conditions are fulfilled:
    \begin{compactenum}[(1)]
        \item the space $\Sp$ is bounded, i.e. $\diam(\Sp)<\infty$;
        \item each discriminator class ($\cF_i$) is larger than the set of single neuron networks with ReLU activations, i.e. $\{\max\{w^Tx+b,0\}\big|\|[w,b]\|_2=1\}$; and
        \item $\log(P_{X_i\cup X_{\Pi_i}}/Q_{X_i\cup X_{\Pi_i}})$ are bounded and Lipschitz continuous for all $i$.
    \end{compactenum}
    Then the neural distances defined by $\cF_1, \ldots, \cF_n$ satisfy the following $\ap$-linear subadditivity with error $\eps$ with respect to the Symmetric KL divergence on Bayes-nets:
    \[
        \SKL\PQ{}-\eps\leq\ap\cdot\sum_{i=1}^n d_{\cF_i}\PQ{X_i\cup X_{\Pi_i}},
    \]
    where $\ap$ and $\eps$ are constants independent of $P, Q$ and $\{\cF_1, \ldots, \cF_n\}$, satisfying
    \[
        \ap>R\Big((k_{\max}+1)d\Big)~~ \text{and} ~~ \eps=\cO\left(n\ap^{-\frac{2}{(k_{\max}+1)d+1}}\log\ap\right),
    \]
    where $R((k_{\max}+1)d)$ is a function that only depends on $k_{\max}$ (the maximum in-degree of $G$) and $d$ (the dimensionality of each variable of the Bayes-net). 
\end{theorem}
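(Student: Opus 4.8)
The plan is to combine a graph-theoretic reduction with a one-shot approximation estimate on each local neighborhood, using only the single-neuron part of the discriminator classes.

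\emph{Step 1 (reduction to local marginals).} Since $\SKL$ is subadditive on Bayes-nets by \cref{coro:subadditivity-jf}, we have $\SKL(P,Q)\le\sum_{i=1}^n\SKL\PQ{X_i\cup X_{\Pi_i}}$, so it suffices to prove, for a single neighborhood, writing $P_i:=P_{X_i\cup X_{\Pi_i}}$ and $Q_i:=Q_{X_i\cup X_{\Pi_i}}$ (distributions on a space of dimension at most $D:=(k_{\max}+1)d$), an estimate of the form $\SKL(P_i,Q_i)\le\ap\cdot d_{\cF_i}(P_i,Q_i)+\eps/n$ with $\ap,\eps$ as claimed, and then sum over $i$.

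\emph{Step 2 (per-neighborhood estimate).} I would start from the identity $\SKL(P_i,Q_i)=\EE_{x\sim P_i}[g_i(x)]-\EE_{x\sim Q_i}[g_i(x)]$ with $g_i:=\log(P_i/Q_i)$, which is bounded and Lipschitz on the bounded domain by conditions (1) and (3). The heart of the argument is a quantitative approximation theorem (of Bach / Klusowski--Barron type) for Lipschitz functions by combinations of \emph{unit-norm} ReLU ridge functions: for any coefficient budget $V>R(D)$ there exist $\{(c_j,w_j,b_j)\}$ with $\|[w_j,b_j]\|_2=1$ and $\sum_j|c_j|\le V$ such that $\hat g_i:=\sum_j c_j\max\{w_j^Tx+b_j,0\}$ satisfies $\|g_i-\hat g_i\|_{\infty}\le\eta(V)$ with $\eta(V)=\cO\!\big(V^{-2/(D+1)}\log V\big)$, where the threshold $R(D)$ and the hidden constants depend only on $D$ and on the sup-norm and Lipschitz bounds of $g_i$; the ``$+1$'' in $D+1$ reflects that the neuron parameter $(w,b)$ ranges over the sphere in $\RR^{D+1}$, and the $\log V$ comes from converting an $L^2$ approximation guarantee to an $L^\infty$ one. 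Each $\max\{w_j^Tx+b_j,0\}$ belongs to $\cF_i$ by condition (2), and $\cF_i$ is closed under negation for the standard neural-net discriminator classes, so every test function $\pm\max\{w_j^Tx+b_j,0\}$ is available and $\big(\EE_{P_i}-\EE_{Q_i}\big)\big[\pm\max\{w_j^Tx+b_j,0\}\big]\le d_{\cF_i}(P_i,Q_i)$. Hence
\[
\SKL(P_i,Q_i)=\big(\EE_{P_i}-\EE_{Q_i}\big)[\hat g_i]+\big(\EE_{P_i}-\EE_{Q_i}\big)[g_i-\hat g_i]\le\Big(\sum_j|c_j|\Big)d_{\cF_i}(P_i,Q_i)+2\eta(V)\le V\,d_{\cF_i}(P_i,Q_i)+2\eta(V).
\]
Choosing $\ap=V$ and summing Step 1 over $i$ gives $\SKL(P,Q)-2n\eta(\ap)\le\ap\sum_{i=1}^n d_{\cF_i}\PQ{X_i\cup X_{\Pi_i}}$, which is the claimed bound with $\ap>R((k_{\max}+1)d)$ and $\eps=2n\eta(\ap)=\cO\!\big(n\,\ap^{-2/(D+1)}\log\ap\big)$.

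\emph{Main obstacle.} The crux is the approximation step: producing a combination of unit-norm ReLU neurons that matches the $D$-dimensional bounded Lipschitz log-ratio $g_i$ in sup-norm with $\ell^1$-coefficient budget scaling as $V\sim\eta^{-(D+1)/2}$ up to logarithmic factors. This is where the dimension $D$ enters (such rates are essentially optimal for ReLU dictionaries), and it requires care at two points: (i) handling the boundary of the bounded domain, e.g.\ by Lipschitz-extending $g_i$ to all of $\RR^D$ before approximating; and (ii) upgrading an $L^2$ approximation bound, available via a Maurey--Jones--Barron sampling argument once one controls the variation/Barron norm of a mollified version of $g_i$, to an $L^\infty$ bound, which produces the $\log\ap$ factor. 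A secondary, routine point is the sign bookkeeping for the coefficients $c_j$ (and for any auxiliary affine terms arising from $\sigma(y)-\sigma(-y)=y$): it is resolved using positive-homogeneity of ReLU together with closedness of $\cF_i$ under negation, or, if one insists on using only the single-neuron class, by first bounding $\SKL(P_i,Q_i)$ by a multiple of $d_{\cF^{\sigma}}(P_i,Q_i)+d_{\cF^{\sigma}}(Q_i,P_i)$ for $\cF^{\sigma}:=\{\max\{w^Tx+b,0\}:\|[w,b]\|_2=1\}$ and then symmetrizing.
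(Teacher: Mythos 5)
Your proposal is correct and follows essentially the same route as the paper's proof: reduce via the subadditivity of $\SKL$ on Bayes-nets (\cref{coro:subadditivity-jf}), bound each local $\SKL(P_i,Q_i)=\EE_{P_i}[g_i]-\EE_{Q_i}[g_i]$ by $r\,d_{\cF_i}(P_i,Q_i)+2\veps_i(r)$ via the span-approximation argument (the paper's \cref{thm:ipm-discriminative}), and control $\veps_i(r)$ by Bach's $\cO(r^{-2/(D+1)}\log r)$ rate for unit-norm ReLU neurons (the paper's \cref{thm:approximability-lipschitz}), then sum over $i$ with $\ap=r\geq R(D_{\max})$. The obstacles you flag (the $L^\infty$ approximation rate and the sign bookkeeping for coefficients) are exactly the points the paper delegates to the cited results of Bach and Zhang et al.
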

Regarding condition (1), bounded space $\Sp$ still allows many real-world data-types, including images and videos. Regarding condition (2), all practical neural networks using ReLU activations satisfy this requirement. Thus, the only non-trivial requirement is  condition (3). In practical GAN training, $Q$ is the output distribution of a generative model, which can be regarded as a transformation of a Gaussian distribution. Thus, in general, $Q$ is bounded and Lipschitz. If we have $P\ll Q$, for bounded and Lipschitz real distribution $P$, the condition (3) is satisfied. If the subadditivity upper bound is minimized, we can minimize $\SKL\PQ{}$ up to $\cO(n)$. For the detailed proof, see \cref{proof:subadditivity-ipm}.
\section{Generalized Subadditivity on MRFs}
\label{sec:mrfs}

The definition of {\em generalized subadditivity} of a statistical divergence with respect to another one over MRFs is the same as in \cref{def:subadditivity}, except that the local neighborhoods are defined as maximal cliques $C\in\cC$ of the MRF. For an alternative definition of subadditivity on MRFs, see \cref{appendix:random-field}.

The clique factorization of MRFs (i.e. $P(x)=\prod_{C\in\cC}\fP^P_C(X_C)$) offers a special method to prove the subadditivity of IPMs on MRFs. Consider the Symmetric KL divergence $\SKL\PQ{}\coloneqq\KL\PQ{}+\KL(Q, P)=\EE_{x\sim P}[\log(P/Q)]-\EE_{x\sim Q}[\log(P/Q)]$. Clique factorization of $P$ and $Q$ decomposes $\SKL\PQ{}$ into $\SKL\PQ{} = \sum_{C\in\cC}(\EE_{x_C\sim P_{X_C}}[\log(\fP^P_C/\fP^Q_C)]-\EE_{x_C\sim Q_{X_C}}[\log(\fP^P_C/\fP^Q_C)])$, where each term in the summation is upper-bounded by an IPM $d_{\cF_C}\PQ{X_C}$ on the clique $C$, as long as $\log(\fP^P_C/\fP^Q_C)\in\cF_C$. This implies the subadditivity of $1$-Wasserstein distance with respect to the Symmetric KL divergence, whenever each $\log(\fP^P_C/\fP^Q_C)$ is Lipschitz continuous; see \cref{proof:wasserstein-mrf} for the proof.

\begin{theorem}
\label{thm:wasserstein-mrf}
    Consider two MRFs $P$, $Q$ with the same factorization. If any of the following  is fulfilled:
    \begin{compactenum}[(1)]
        \item The space $\Sp$ is discrete and finite.
        \item $\log(\fP^P_C/\fP^Q_C)$ are Lipschitz continuous for all $C\in\cC$.
    \end{compactenum}
    Then, the $1$-Wasserstein distance satisfies $\ap$-linear subadditivity with respect to the Symmetric KL Divergence on MRFs, for some constant $\ap>0$ independent of $P$ and $Q$.
\end{theorem}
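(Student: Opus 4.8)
The plan is to flesh out the clique-factorization argument sketched just before the statement: I will (i) make the decomposition of $\SKL\PQ{}$ along maximal cliques rigorous, (ii) reduce case~(1) to the Lipschitz hypothesis of case~(2), and (iii) pin down the constant $\ap$. For $C\in\cC$ set $g_C\coloneqq\log(\fP^P_C/\fP^Q_C)$, which is well defined once the clique potentials are strictly positive (implicit whenever this log is invoked). Since $P=\prod_{C}\fP^P_C$ and $Q=\prod_{C}\fP^Q_C$ over the \emph{same} cliques, $\log(P/Q)=\sum_{C}g_C(x_C)$ pointwise, and since $g_C$ depends only on $x_C$,
\[
    \SKL\PQ{}=\EE_{x\sim P}[\log(P/Q)]-\EE_{x\sim Q}[\log(P/Q)]=\sum_{C\in\cC}\Big(\EE_{x_C\sim P_{X_C}}[g_C]-\EE_{x_C\sim Q_{X_C}}[g_C]\Big).
\]
Interchanging the sum with the expectations is legitimate when each term is finite; if some term is infinite the inequality is vacuous because its right-hand side is then $+\infty$, so I may assume finiteness throughout. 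It then suffices to bound each summand by a fixed multiple of $\W_1\PQ{X_C}$.

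Under condition~(2), $g_C$ is $L_C$-Lipschitz for some finite $L_C$, hence $g_C/L_C$ is $1$-Lipschitz and the Kantorovich--Rubinstein duality --- the fact, already noted in the text, that $\W_1$ is the IPM generated by $1$-Lipschitz functions --- gives $\EE_{x_C\sim P_{X_C}}[g_C]-\EE_{x_C\sim Q_{X_C}}[g_C]\le L_C\,\W_1\PQ{X_C}$. Summing over $C\in\cC$ and taking $\ap\coloneqq\max_{C\in\cC}L_C<\infty$ yields $\SKL\PQ{}\le\ap\sum_{C\in\cC}\W_1\PQ{X_C}$, i.e.\ $\ap$-linear subadditivity with error $0$. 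Here $\ap$ is the largest per-clique Lipschitz constant of the log-potential ratios: a fixed number that, crucially, does not scale with the number of variables $n$ nor with the ambient dimension. (If a literally distribution-free constant is wanted, one restricts to a family of MRFs with uniformly Lipschitz log-potential ratios and takes $\ap$ to be that uniform bound.)

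For condition~(1), I reduce to the hypothesis of case~(2). On a discrete finite metric space with strictly positive potentials, each $g_C$ takes finitely many finite values, hence is bounded, and any bounded $h$ on a finite metric space is Lipschitz with constant at most $2\|h\|_\infty/d_{\min}$ (for $x\ne y$, $|h(x)-h(y)|\le 2\|h\|_\infty\le (2\|h\|_\infty/d_{\min})\,d(x,y)$, with $d_{\min}$ the minimum distance between distinct points of $\Sp$). Feeding $L_C\le 2\|g_C\|_\infty/d_{\min}$ into the previous paragraph gives the claim with $\ap=2\max_{C}\|g_C\|_\infty/d_{\min}$; equivalently, one can bound the summand directly by $\|g_C\|_\infty\|P_{X_C}-Q_{X_C}\|_1=2\|g_C\|_\infty\,\TV\PQ{X_C}\le (2\|g_C\|_\infty/d_{\min})\,\W_1\PQ{X_C}$, using the elementary comparison $\TV\le\W_1/d_{\min}$ on a finite metric space (cf.\ \cref{coro:linear-subadditivity-wp}).

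The argument is short, and its one conceptual ingredient --- the additive decomposition of $\SKL$ along the clique factorization --- is already exhibited in the lead-up to the theorem, so there is no deep obstacle. The remaining care is bookkeeping: checking that $g_C$ is a bona fide integrable function (needs strictly positive potentials and, for the two-sided $\SKL$, mutual absolute continuity of $P$ and $Q$ --- automatic in the finite full-support case, subsumed by the hypotheses in case~(2)), verifying the sum/expectation interchange, and --- the step I expect to need the most words --- making the passage from ``finite space'' to ``uniformly Lipschitz $g_C$'' fully precise, including the $\TV$-vs-$\W_1$ comparison and the resulting dependence of $\ap$ on $d_{\min}$ and $\max_C\|g_C\|_\infty$.
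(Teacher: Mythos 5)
Your proposal is correct and takes essentially the same route as the paper's proof: decompose $\SKL\PQ{}$ along the clique factorization, bound each clique term by $L_C\,\W_1\PQ{X_C}$ via the Kantorovich--Rubinstein (IPM) characterization of $\W_1$, take $\ap$ to be the maximal per-clique Lipschitz constant, and reduce the finite-space case to the Lipschitz case. The only cosmetic difference is in that last reduction, where you bound the Lipschitz constant by $2\|g_C\|_\infty/d_{\min}$ (or go through $\TV\le\W_1/d_{\min}$) while the paper takes the exact maximal slope over pairs of distinct configurations; this changes nothing of substance.
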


Using the aforementioned property of Symmetric KL divergence, the subadditivity of neural distances (\cref{thm:subadditivity-ipm}) can be generalized to MRFs; see \cref{proof:subadditivity-ipm-mrf}.

\begin{corollary}
\label{coro:subadditivity-ipm-mrf}
    For two MRFs $P, Q$ on a common graph $G$ and a set of function classes $\{\cF_C|C\in\cC\}$, if all of the three conditions in \cref{thm:subadditivity-ipm} are fulfilled (with condition (3) replaced by: $\log(\fP^P_C/\fP^Q_C)$ are bounded and Lipschitz continuous for all $C\in\cC$), the neural distances induced by $\{\cF_C|C\in\cC\}$ satisfy $\ap$-linear subadditivity with error $\eps$ with respect to the Symmetric KL divergence on MRFs, i.e. $\SKL\PQ{}-\eps\leq\ap\cdot\sum_{C\in\cC} d_{\cF_C}\PQ{X_C}$, where $\ap$ and $\eps$ are constants independent of $P, Q$ and $\{\cF_C|C\in\cC\}$, satisfying $\ap>R(c_{\max}d)$ and $\eps=\cO\left(|\cC|\ap^{-\frac{2}{c_{\max}d+1}}\log\ap\right)$. $|\cC|$ is the number of maximal cliques in $G$ and $R(c_{\max}d)$ is a function that only depends on $c_{\max}=\max\{|C|\big|C\in\cC\}$ (the maximum size of the cliques in $G$) and $d$.
\end{corollary}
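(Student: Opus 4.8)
The plan is to mirror the proof of \cref{thm:subadditivity-ipm}, feeding it the clique factorization of the MRF in place of the conditional factorization of a Bayes-net. The point of entry is the decomposition of the Symmetric KL divergence recorded in the paragraph before \cref{thm:wasserstein-mrf}: since $P$ and $Q$ have the same clique factorization, $\log(P/Q)=\sum_{C\in\cC}\log(\fP^P_C/\fP^Q_C)$ pointwise, whence
\[ \SKL\PQ{}=\sum_{C\in\cC}\Big(\EE_{x_C\sim P_{X_C}}[g_C]-\EE_{x_C\sim Q_{X_C}}[g_C]\Big),\qquad g_C\coloneqq\log\big(\fP^P_C/\fP^Q_C\big). \]
Each $g_C$ is a function of $X_C$ alone; its domain sits inside $\RR^{|C|d}\subseteq\RR^{c_{\max}d}$ and is bounded by condition (1), and by the replaced condition (3) the function $g_C$ itself is bounded and Lipschitz. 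The individual summands above depend on the gauge chosen for the clique potentials although their sum does not, so one simply fixes a gauge in which the replaced condition (3) holds.

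Next I would establish the per-clique estimate
\[ \EE_{x_C\sim P_{X_C}}[g_C]-\EE_{x_C\sim Q_{X_C}}[g_C]\ \le\ \ap\cdot d_{\cF_C}\PQ{X_C}+\cO\big(\ap^{-\frac{2}{c_{\max}d+1}}\log\ap\big) \]
for every $\ap>R(c_{\max}d)$. This is the same local argument used inside the proof of \cref{thm:subadditivity-ipm}, now carried out on a domain of dimension at most $c_{\max}d$ rather than $(k_{\max}+1)d$: one uniformly approximates the bounded Lipschitz function $g_C$ on its bounded domain by a finite combination of single ReLU neurons $\sum_j a_j\sigma_j$ whose coefficient vector obeys $\|a\|_1\le\ap$, with sup-error $\cO(\ap^{-2/(c_{\max}d+1)}\log\ap)$; this trade-off, together with the admissibility threshold $R(c_{\max}d)$, is exactly what the ReLU-approximation step behind \cref{thm:subadditivity-ipm} delivers in ambient dimension $c_{\max}d$. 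One then uses that by condition (2) each $\sigma_j$, as well as the affine pieces needed to absorb signs (which are differences of ReLU neurons), lies in $\cF_C$, so that $d_{\cF_C}\PQ{X_C}$ dominates the signed combination up to the factor $\|a\|_1$.

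Summing the per-clique estimate over the $|\cC|$ maximal cliques --- and using that $\ap$ can be taken uniform across cliques because it depends only on $c_{\max}$ and $d$ through $R(c_{\max}d)$ --- yields
\[ \SKL\PQ{}\ \le\ \ap\cdot\sum_{C\in\cC}d_{\cF_C}\PQ{X_C}+\cO\big(|\cC|\,\ap^{-\frac{2}{c_{\max}d+1}}\log\ap\big), \]
which is the asserted $\ap$-linear subadditivity with error $\eps=\cO(|\cC|\,\ap^{-2/(c_{\max}d+1)}\log\ap)$ with respect to $\SKL$ on MRFs.

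The genuinely hard ingredient --- the quantitative approximation of a bounded Lipschitz function by a single-ReLU combination with controlled coefficient norm, and the attendant error-versus-scale trade-off --- is already established inside \cref{thm:subadditivity-ipm}, so in this corollary it is used as a black box and no new difficulty arises. What remains to check is mainly bookkeeping: replacing ``a node together with its parents'' (a neighborhood of dimension $(k_{\max}+1)d$, of which there are $n$) by ``a maximal clique'' (a neighborhood of dimension at most $c_{\max}d$, of which there are $|\cC|$), and the minor point that isolating a single $g_C$ is legitimate because the clique decomposition of $\SKL$ is gauge-independent. If anything the MRF version is cleaner than the Bayes-net one, since the replaced condition (3) bounds precisely the functions $g_C$ that appear in the decomposition, whereas on a Bayes-net one must first relate the conditional density ratios to the joint ratios that the hypotheses control.
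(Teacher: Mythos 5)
Your proposal is correct and follows essentially the same route as the paper's proof: decompose $\SKL\PQ{}$ via the common clique factorization into per-clique terms $\EE_{P_{X_C}}[g_C]-\EE_{Q_{X_C}}[g_C]$, bound each term by $2\veps_C(r)+r\,d_{\cF_C}\PQ{X_C}$ using the ReLU-approximation machinery already invoked in \cref{thm:subadditivity-ipm} (with ambient dimension $|C|d\le c_{\max}d$), and sum over the $|\cC|$ maximal cliques with a uniform $r\geq R(c_{\max}d)$. The only cosmetic difference is that you unfold the IPM-domination step (the $\|a\|_1$ bookkeeping) rather than citing it as the black-box lemma the paper uses, and you add a harmless remark about fixing the gauge of the clique potentials, which the paper leaves implicit in its hypothesis on $\log(\fP^P_C/\fP^Q_C)$.
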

\section{Local Subadditivity}
\label{sec:local}

So far, we have stated and proved the subadditivity or generalized subadditivity of some $\gf$-divergences on Bayes-nets or MRFs. However, many divergences may not enjoy subadditivity property (see such a counter-example of $2$-Wasserstein distance in \cref{appendix:counter-cexp-wasserstein}). It is difficult to formulate a general framework for determining which divergence is subadditive.

In this section, we consider a particular scenario when two distributions $P, Q$ are {\it close} to each other, which can happen after some initial training steps in a GAN. In this case, we are able to determine if an arbitrary $\gf$-divergence satisfies generalized subadditivity on Bayes-nets. We only report our main results here. See \cref{appendix:local} and \cref{appendix:local-exp} for more details and proofs. We consider two notions of ``closeness'' for distributions.

\begin{definition}
\label{def:close-distributions}
    Distributions $P, Q$ are one-sided $\eps$-close for some $0<\eps<1$, if $\forall x\in\Sp\subseteq\RR^{nd}$, $P(x)/Q(x)<1+\eps$. Moreover, $P, Q$ are two-sided $\eps$-close, if $\forall x$, $1-\eps<P(x)/Q(x)<1+\eps$. Note this requires $P\ll\gg Q$.
\end{definition}

We find that most $\gf$-divergences satisfy generalized linear subadditivity when the distributions are one- or two-sided $\eps$-close.

\begin{theorem}
\label{thm:local-linear-subadditivity-two-sided-close}
    An $\gf$-divergence whose $\gf(\cdot)$ is continuous on $(0,\infty)$ and twice differentiable at $1$ with $\gf''(1)>0$ satisfies $\ap$-linear subadditivity, when $P, Q$ are two-sided $\eps(\ap)$-close with $\eps>0$, where $\eps(\ap)$ is a non-increasing function and $\lim_{\eps\downarrow0}\ap=1$.
\end{theorem}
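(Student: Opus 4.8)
The plan is to reduce the statement, via \cref{thm:subadditivity-markov-chain}, to two local inequalities: one on length-$3$ Markov chains $X\to Y\to Z$ and one on $2$-variable product measures. But since we only want $\ap$-linear subadditivity (with $\ap\to1$ as $\eps\to0$) rather than exact subadditivity, I would first check that the recursive argument behind \cref{thm:subadditivity-markov-chain} still goes through when each application of the Markov-chain/product-measure step costs a multiplicative factor $\ap$: because the super-node construction applies the basic step $O(n)$ times along the topological order, a naive accumulation would give $\ap^{n}$, which is not acceptable. So the first real task is to argue that the ``closeness'' hypothesis is preserved under marginalization and super-noding — if $P,Q$ are two-sided $\eps$-close on $\Sp$, then every pair of marginals $(P_{X_S},Q_{X_S})$ is also two-sided $\eps$-close (this follows by integrating the inequality $1-\eps<P/Q<1+\eps$ against $Q$ over the complementary coordinates) — and then to show that a single multiplicative constant $\ap=\ap(\eps)$, not a power of it, suffices because the relevant additive-style decomposition of the $\gf$-divergence is itself essentially additive up to the $\ap$ factor. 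Concretely, I expect the cleanest route is not to go through \cref{thm:subadditivity-markov-chain} verbatim but to mimic its proof: decompose $\PD(P,Q)$ along the chain rule for Bayes-nets and bound each conditional term by the corresponding pairwise term times $\ap$.

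The heart of the matter is the following local estimate. For a $\gf$-divergence with $\gf$ continuous on $(0,\infty)$, twice differentiable at $1$ with $\gf''(1)>0$, and for $P,Q$ two-sided $\eps$-close, one has the two-sided quadratic sandwich
\[
    c_-(\eps)\cdot\CS(P,Q)\;\le\;\PD(P,Q)\;\le\;c_+(\eps)\cdot\CS(P,Q),
\]
where $\CS(P,Q)=\int (P-Q)^2/Q\,\dx$ is the $\chi^2$-divergence, and $c_\pm(\eps)\to \tfrac12\gf''(1)$ as $\eps\downarrow0$. This comes from a Taylor expansion of $\gf$ around $1$: writing $t=P(x)/Q(x)\in(1-\eps,1+\eps)$, $\gf(t)=\gf(1)+\gf'(1)(t-1)+\tfrac12\gf''(\xi)(t-1)^2$ for some $\xi$ between $1$ and $t$ (using only that $\gf$ is $C^2$ near $1$ — strictly I would first argue $\gf$ is $C^2$ on a neighborhood of $1$, or replace the Lagrange remainder by the integral form and continuity of $\gf''$ at $1$), then integrating against $Q$ and using $\int(t-1)Q\,\dx=0$. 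The uniform bound $\inf_{|\xi-1|<\eps}\gf''(\xi)\le\gf''(\xi)\le\sup_{|\xi-1|<\eps}\gf''(\xi)$ gives $c_\pm(\eps)$, and continuity of $\gf''$ at $1$ with $\gf''(1)>0$ guarantees $c_-(\eps)>0$ for $\eps$ small. Since this sandwich holds for $P,Q$ and also (by the marginalization remark above) for every marginal pair, we get for any Bayes-net
\[
    \PD(P,Q)\le c_+(\eps)\,\CS(P,Q),\qquad
    \CS(P_{X_i\cup X_{\Pi_i}},Q_{X_i\cup X_{\Pi_i}})\ge \tfrac{1}{c_-(\eps)}\,\PD(P_{X_i\cup X_{\Pi_i}},Q_{X_i\cup X_{\Pi_i}}).
\]
It therefore suffices to establish \emph{subadditivity of $\CS$ itself} on Bayes-nets in this regime, i.e. $\CS(P,Q)\le (1+o(1))\sum_i\CS(P_{X_i\cup X_{\Pi_i}},Q_{X_i\cup X_{\Pi_i}})$ when $P,Q$ are $\eps$-close; combining the three displays yields the claim with $\ap=\ap(\eps)=\frac{c_+(\eps)}{c_-(\eps)}(1+o(1))$, and $\ap\to1$ follows from $c_+(\eps),c_-(\eps)\to\tfrac12\gf''(1)$.

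For the $\CS$ step I would use \cref{thm:subadditivity-markov-chain}-style reasoning at the level of $1+\CS$: note $1+\CS(P,Q)=\int P^2/Q\,\dx=\EE_Q[(P/Q)^2]$, which behaves multiplicatively under products, and on a length-$3$ chain $X\to Y\to Z$ the likelihood ratio factorizes as a product of three conditional ratios, each within $(1-\eps,1+\eps)$ of $1$ after marginalization. Writing $1+\CS = \prod(1+\text{small})$-type identities and expanding $\log(1+\CS)$ (all the $\CS$'s are $O(\eps^2)$, hence small), the cross-terms contribute only $O(\eps^2)\cdot\sum\CS$, which is absorbed into the $(1+o(1))$ factor; the product-measure case is the exact identity $1+\CS(P,Q)=\prod_i(1+\CS(P_{X_i},Q_{X_i}))$, giving $\CS\le\sum\CS+O(\eps^2)(\cdots)$. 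I expect the \textbf{main obstacle} to be bookkeeping the $o(1)$ terms cleanly through the recursion so that they aggregate into a single $\eps$-dependent constant rather than compounding with $n$ — the key leverage is that every intermediate marginal inherits the same $\eps$-closeness (so the ``smallness'' parameter never degrades along the recursion), and that $\CS$ is genuinely subadditive on product measures and only perturbatively superadditive on short chains. A secondary technical point is justifying the Taylor remainder: $\gf$ is only assumed continuous on $(0,\infty)$ and twice differentiable \emph{at} $1$, so strictly one should use the second-order Taylor expansion with Peano remainder, $\gf(t)=\gf(1)+\gf'(1)(t-1)+\tfrac12\gf''(1)(t-1)^2+o((t-1)^2)$ as $t\to1$, and translate the $o$ into explicit $c_\pm(\eps)$ via the definition of the derivative; this is routine but must be done carefully since a pointwise Lagrange remainder would need $C^2$ on all of $(1-\eps,1+\eps)$.
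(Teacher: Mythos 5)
Your route is genuinely different from the paper's. The paper's proof first replaces the generator by $\hp(t)=\gf(t)-\gf'(1)(t-1)$ (which defines the same divergence), observes that $\hp/\gf_{\SH}$ is then bounded between positive constants $A\le B$ on $(1-\eps,1+\eps)$ with $A/B\to1$ as $\eps\downarrow0$, notes (as you do) that two-sided closeness is inherited by all marginals, and then leans on the \emph{exact} subadditivity of the squared Hellinger distance (\cref{thm:subadditivity-sh}) through \cref{lem:local-linear-subadditivity-fdiv}; no perturbative expansion of any divergence is needed. You instead pivot on $\CS$, and this is where your proposal has its gap: $\CS$ is \emph{not} exactly subadditive on Bayes-nets — on product measures $1+\CS$ is multiplicative, so $\CS\PQ{XY}=\CS\PQ{X}+\CS\PQ{Y}+\CS\PQ{X}\CS\PQ{Y}$ is superadditive (your aside that ``$\CS$ is genuinely subadditive on product measures'' is a slip, though your own identity shows the excess is $O(\eps^2)$ times the sum) — so your reduction stands or falls with the claim that $\CS$ satisfies $(1+o_\eps(1))$-linear subadditivity on $\eps$-close Bayes-nets \emph{uniformly in $n$}. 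That is precisely the step you flag as the ``main obstacle'' and then only assert; as written, the proof is incomplete at its central reduction, since a naive per-step factor $(1+O(\eps^2))$ applied $n$ times would give an $n$-dependent constant, which \cref{def:subadditivity} does not allow.

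The good news is that the missing step is true and provable, so your plan can be completed. On the super-node chain $X\to Y\to Z$, writing $u=P_{X|Y}/Q_{X|Y}$, $v=P_Y/Q_Y$, $w=P_{Z|Y}/Q_{Z|Y}$, conditional independence under $Q$ gives $1+\CS\PQ{XYZ}=\EE_{Q_Y}\big[v^2\,a\,c\big]$ with $a=1+\EE_{Q_{X|Y}}[(u-1)^2]$ and $c=1+\EE_{Q_{Z|Y}}[(w-1)^2]$, hence $\CS\PQ{XYZ}=\CS\PQ{XY}+\CS\PQ{YZ}-\CS\PQ{Y}+\EE_{Q_Y}\big[v^2(a-1)(c-1)\big]$, and since the conditional ratios are $1+O(\eps)$, the cross term is at most $O(\eps^2)\cdot\CS\PQ{YZ}$. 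Crucially the $(1+O(\eps^2))$ penalty thus attaches only to the \emph{local} term, not to the joint term that is recursed on, so the induction of \cref{thm:subadditivity-markov-chain} yields a single factor $1+O(\eps^2)$ with no compounding in $n$; combined with your Peano-remainder sandwich $c_-(\eps)\CS\le\PD\le c_+(\eps)\CS$ (correctly justified — the linear term integrates to zero), this gives the theorem with $\ap\to1$. Alternatively, the simplest repair is to keep your sandwich but compare $\PD$ to $\SH$ rather than to $\CS$ (equivalently, bound the quotient $\hp(t)/\gf_{\SH}(t)$ on the interval): since $\SH$ is exactly subadditive, the entire approximate-subadditivity lemma becomes unnecessary, and your argument collapses to the paper's proof of \cref{thm:local-linear-subadditivity-two-sided-close}. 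What your $\CS$-based route buys, if carried out, is a quantitative handle on the subadditivity gap in the close regime (in the spirit of \cref{thm:local-perturbation-suadditivity} and \cref{lem:local-fdiv-approximation}); what the paper's route buys is brevity and no uniform-in-$n$ bookkeeping at all.
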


\begin{theorem}
\label{thm:local-linear-subadditivity-one-sided-close}
    An $\gf$-divergence whose $\gf(\cdot)$ is continuous and strictly convex on $(0,\infty)$, twice differentiable at $t=1$, and has finite $\gf(0)=\lim_{t\downarrow0}\gf(t)$, satisfies $\ap$-linear subadditivity, when $P, Q$ are one-sided $\eps(\ap)$-close with $\eps>0$, where $\eps(\ap)$ is a non-increasing function and $\lim_{\eps\downarrow0}\ap>0$.
\end{theorem}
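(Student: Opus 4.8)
The plan is to sandwich $\PD$ between constant multiples of the squared Hellinger distance over the class of one-sided $\eps$-close pairs $(P,Q)$, and then invoke the exact, unconditional subadditivity of $\SH$ on Bayes-nets from \cref{thm:subadditivity-sh}. This bypasses the node-by-node reduction underlying \cref{thm:subadditivity-markov-chain}, which in the linear regime would accumulate a multiplicative constant per node. First I would normalize the generator: we may assume $\gf(1)=0$ (otherwise $\PD$ is not a divergence), and subtracting the affine term $\gf'(1)(t-1)$ — which leaves $\PD$ unchanged and is legitimate since $\gf$ is differentiable at $1$ — we further arrange $\gf'(1)=0$. Strict convexity then gives $\gf(t)>0$ for all $t\neq1$ (in particular $\gf(0)>0$, and $\gf(0)$ remains finite by hypothesis). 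Put $g(t)\coloneqq(1-\sqrt t)^2$, the squared-Hellinger generator, so that $\EE_{x\sim Q}[g(P/Q)]=2\,\SH(P,Q)$, and set $h(t)\coloneqq\gf(t)/g(t)$ on $(0,\infty)\setminus\{1\}$.

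The crux is to show that $h$ extends to a continuous, strictly positive function on the compact interval $[0,1+\eps]$, which yields finite positive constants $c_1(\eps)\coloneqq\min_{[0,1+\eps]}h$ and $c_2(\eps)\coloneqq\max_{[0,1+\eps]}h$ with
\[
  c_1(\eps)\,g(t)\le\gf(t)\le c_2(\eps)\,g(t)\qquad\text{for all }t\in[0,1+\eps].
\]
Continuity and positivity on $(0,1+\eps]\setminus\{1\}$ are immediate from $\gf,g>0$ there; at $t=0$ one uses $g(0)=1$ with $\gf(0)$ finite and positive; and near $t=1$ one uses the second-order Taylor expansions $\gf(t)=\tfrac12\gf''(1)(t-1)^2+o((t-1)^2)$ and $g(t)=\tfrac14(t-1)^2+o((t-1)^2)$, which give $\lim_{t\to1}h(t)=2\gf''(1)$. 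I expect this last point to be the main obstacle: for the \emph{lower} bound $c_1(\eps)>0$ one needs $\gf$ to be quadratically — not merely super-quadratically — curved at $1$, i.e.\ $\gf''(1)>0$, since strict convexity only buys positivity of $h$ away from $1$ and it is the curvature at $1$ that stitches the two ends of the interval together.

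Next I would record that one-sided $\eps$-closeness is preserved under marginalization: if $P(x)/Q(x)<1+\eps$ pointwise, then for any subset $S$ of the coordinates, $P_{X_S}(x_S)/Q_{X_S}(x_S)=\EE_{x\sim Q}[P/Q\mid X_S=x_S]<1+\eps$; in particular the likelihood ratio on every neighborhood $X_i\cup X_{\Pi_i}$ stays in $[0,1+\eps)$. Combining the ingredients, for one-sided $\eps$-close Bayes-nets $P,Q$ on a common DAG,
\[
  \PD(P,Q)=\EE_Q[\gf(P/Q)]\le c_2(\eps)\,\EE_Q[g(P/Q)]=2c_2(\eps)\,\SH(P,Q)\le 2c_2(\eps)\sum_{i=1}^n\SH\PQ{X_i\cup X_{\Pi_i}},
\]
where the last inequality is \cref{thm:subadditivity-sh}; and since on each neighborhood the local ratio lies in $[0,1+\eps)$, the left sandwich bound there gives $\SH\PQ{X_i\cup X_{\Pi_i}}\le\tfrac1{2c_1(\eps)}\PD\PQ{X_i\cup X_{\Pi_i}}$. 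Chaining the two displays, the factors of $2$ cancel and we get $\PD(P,Q)\le\ap(\eps)\sum_{i=1}^n\PD\PQ{X_i\cup X_{\Pi_i}}$ with $\ap(\eps)=c_2(\eps)/c_1(\eps)$ — i.e.\ $\ap$-linear subadditivity with zero additive error.

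Finally I would check the bookkeeping. Enlarging $\eps$ enlarges $[0,1+\eps]$, so $c_2(\eps)$ is non-decreasing and $c_1(\eps)$ non-increasing; hence $\ap(\eps)=c_2(\eps)/c_1(\eps)$ is monotone in $\eps$, giving the asserted monotone relationship between $\ap$ and $\eps$. And as $\eps\downarrow0$ the interval shrinks to $[0,1]$, which — unlike the two-sided case — is not a single point, so $\ap(\eps)\to\max_{[0,1]}h/\min_{[0,1]}h$, a constant $\ge1$ and in particular strictly positive; this is $\lim_{\eps\downarrow0}\ap>0$.
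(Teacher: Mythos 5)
Your proposal is correct and takes essentially the same route as the paper: after the affine normalization $\gf(t)\mapsto\gf(t)-\gf'(1)(t-1)$, you sandwich the generator against the squared-Hellinger generator on the compact interval $[0,1+\eps]$ (using $\gf(0)<\infty$, strict convexity away from $1$, and the limit $2\gf''(1)$ at $t=1$), observe that one-sided $\eps$-closeness is inherited by all marginals, and then invoke the exact subadditivity of $\SH$ (\cref{thm:subadditivity-sh}) — which is precisely the paper's \cref{lem:local-linear-subadditivity-fdiv} combined with its one-sided argument. Your remark that the lower bound requires $\gf''(1)>0$ matches the paper's intent as well, since its proof inherits that hypothesis from \cref{thm:local-linear-subadditivity-two-sided-close}.
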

\begin{figure}[t]
    \centering
    \includegraphics[width=.95\linewidth]{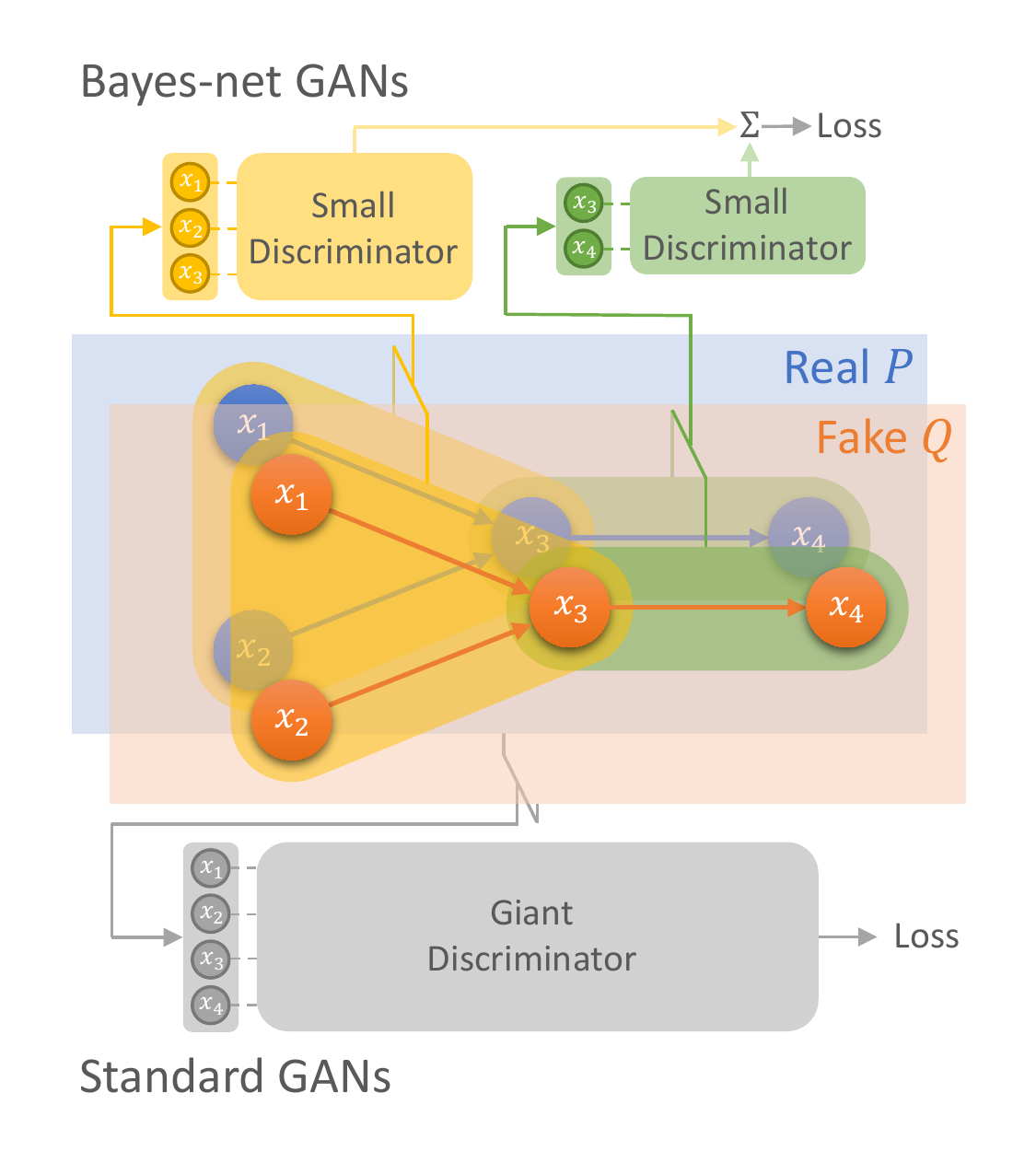}
    \caption{Conceptual diagram of the Bayes-net GANs with local discriminators compared with the standard GANs.}
    \label{fig:gans-conceptual}
\end{figure}

\section{GANs with Bayes-Nets/MRFs}
\label{sec:gans}

Our proposed model-based GAN minimizes the generalized subadditivity upper bound of a divergence measure $\D$. For example, a Bayes-net GAN\footnote{A model-based GAN on MRFs can be formulated similarly.} is formulated as the following optimization problem: 
\begin{align*}
\min_{Q}\quad \sum_{i=1}^n\D\PQ{X_i\cup X_{\Pi_i}}.   
\end{align*}
Similar to a standard GAN~\citep{goodfellow2014generative, arjovsky2017wasserstein}, the generated distribution $Q$ is characterized as $G(Z)$ where $G(.)$ is the generator function and $Z$ is a normal distribution. Note that the discriminator is implicit in the definition of the $\D$ (Figure \ref{fig:gans-conceptual}). Since local neighborhoods are often significantly smaller than the entire graph, our proposed model-based GAN enjoys improved computational and statistical properties compared to a model-free GAN that uses a global discriminator targeting the entire graph.
\begin{figure*}[t]
    \centering
    \begin{subfigure}{.25\linewidth}
        \centering
        \includegraphics[width=\linewidth]{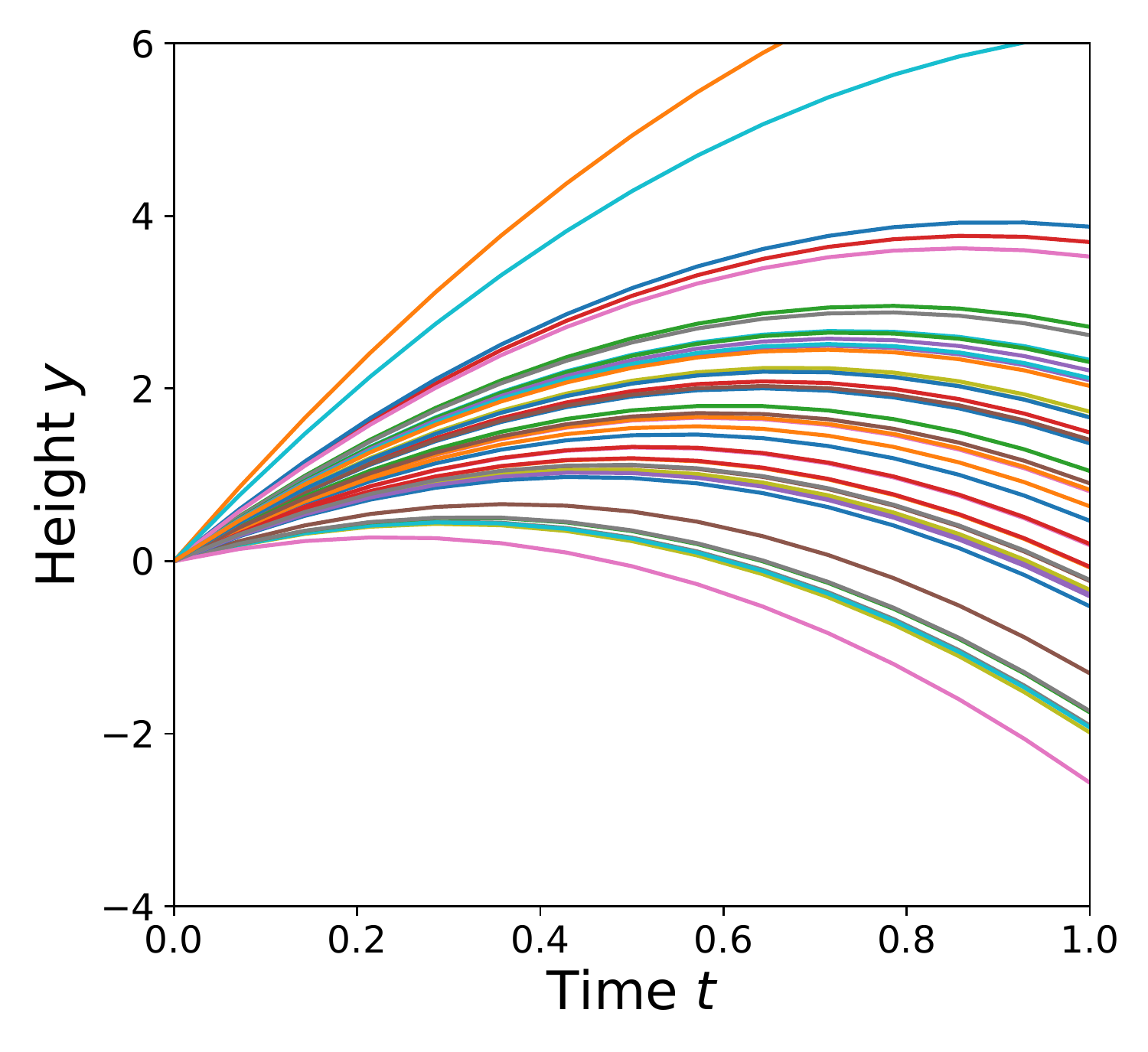}
        \caption{Ground truth}
        \label{fig:throwing-ball-samples-gt}
    \end{subfigure}%
    \begin{subfigure}{.25\linewidth}
        \centering
        \includegraphics[width=\linewidth]{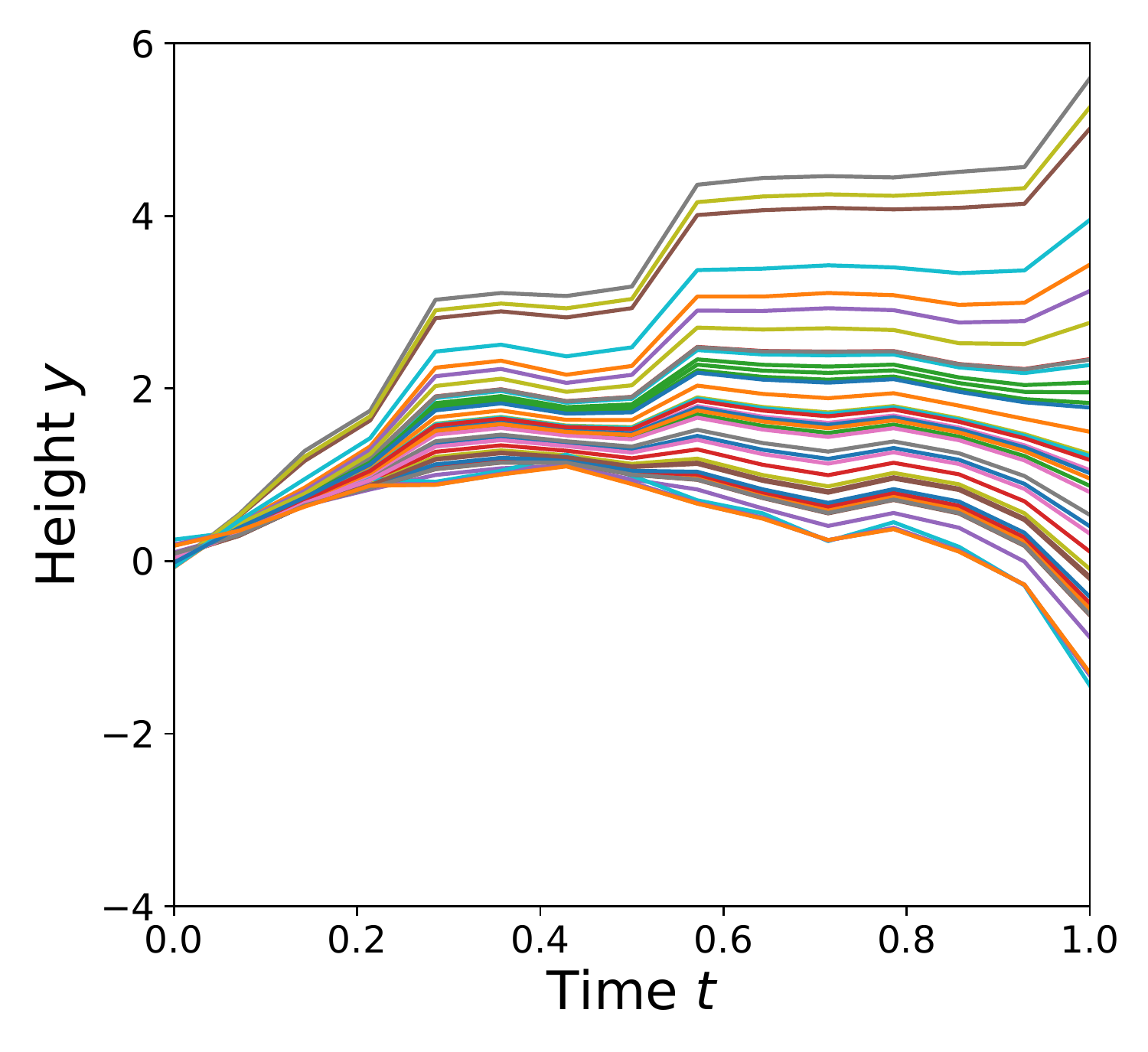}
        \caption{Local-width $2$}
        \label{fig:throwing-ball-samples-corrlength2}
    \end{subfigure}%
    \begin{subfigure}{.25\linewidth}
        \centering
        \includegraphics[width=\linewidth]{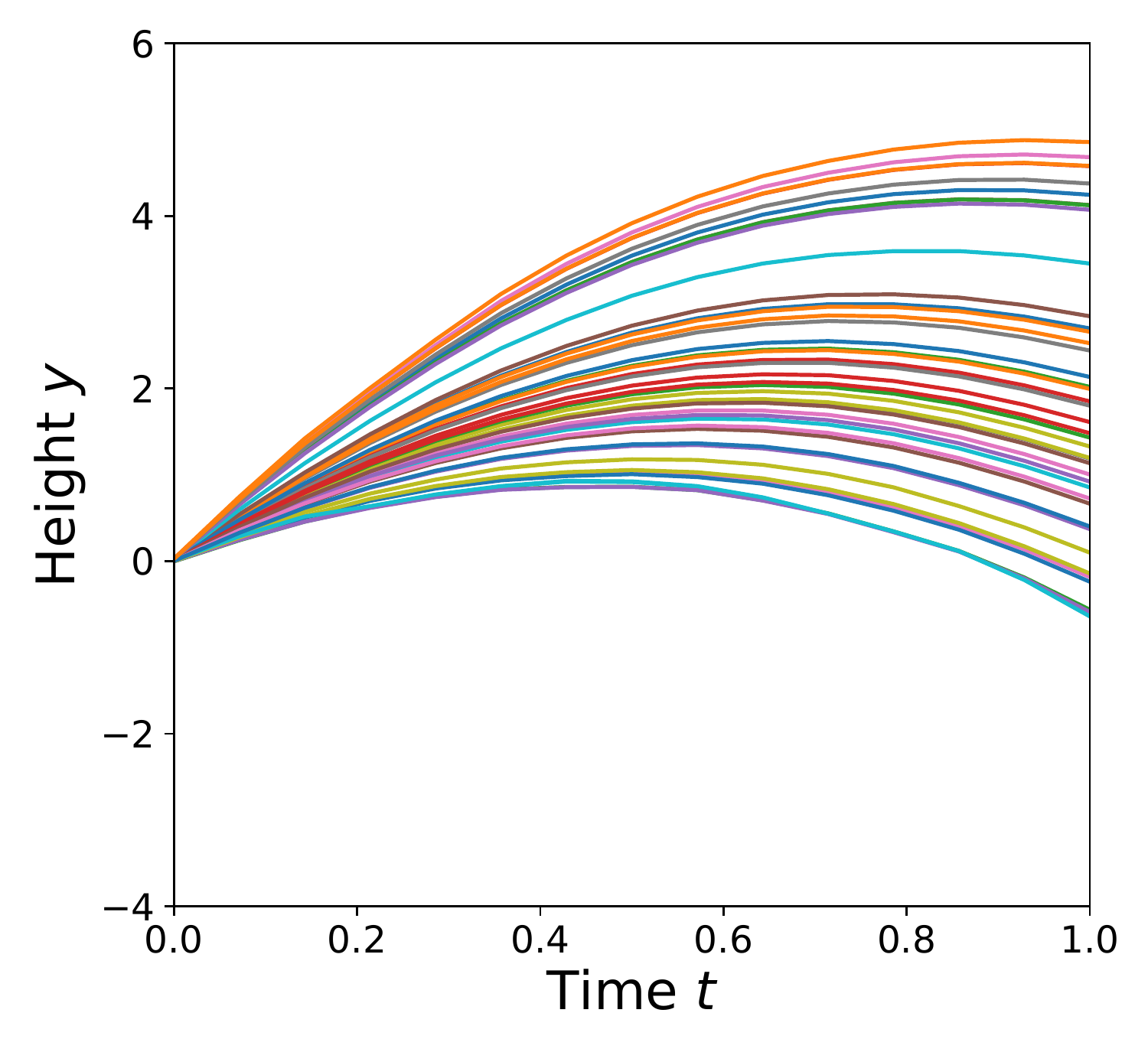}
        \caption{Local-width $3$}
        \label{fig:throwing-ball-samples-corrlength3}
    \end{subfigure}%
    \begin{subfigure}{.25\linewidth}
        \centering
        \includegraphics[width=\linewidth]{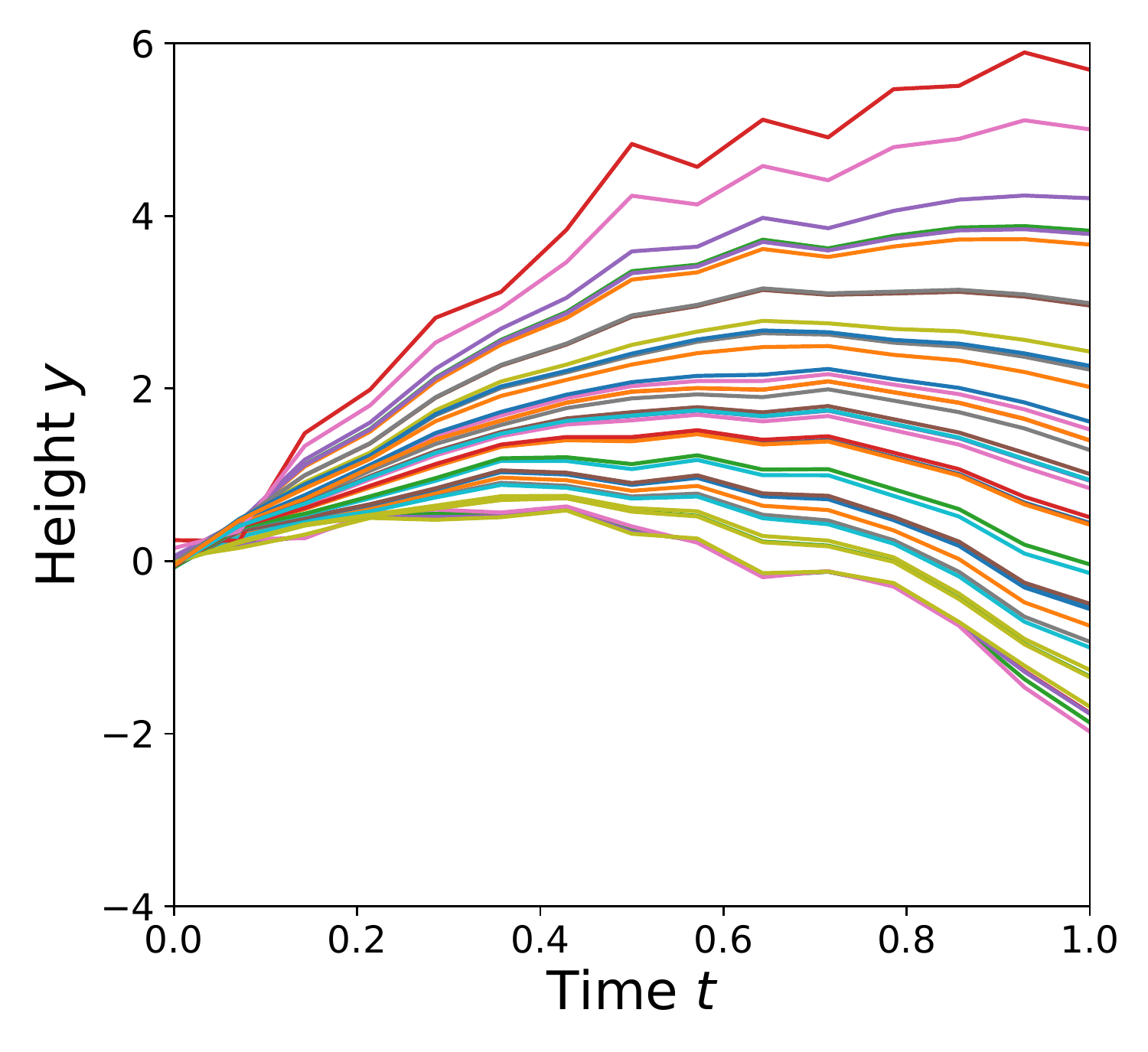}
        \caption{Standard GAN}
        \label{fig:throwing-ball-samples-corrlength15}
    \end{subfigure}%
    \caption{GAN-generated ball throwing trajectories by \subref{fig:throwing-ball-samples-corrlength2} the Bayes-net GAN (ours) with {\it localization width} $2$ (the width of the local neighborhoods that the discriminators test on), \subref{fig:throwing-ball-samples-corrlength3} the Bayes-net GAN with local-width $3$, and \subref{fig:throwing-ball-samples-corrlength15} the standard GAN.}
    \label{fig:throwing-ball-samples}
\end{figure*}

\begin{figure}[t]
    \centering
    \includegraphics[width=\linewidth]{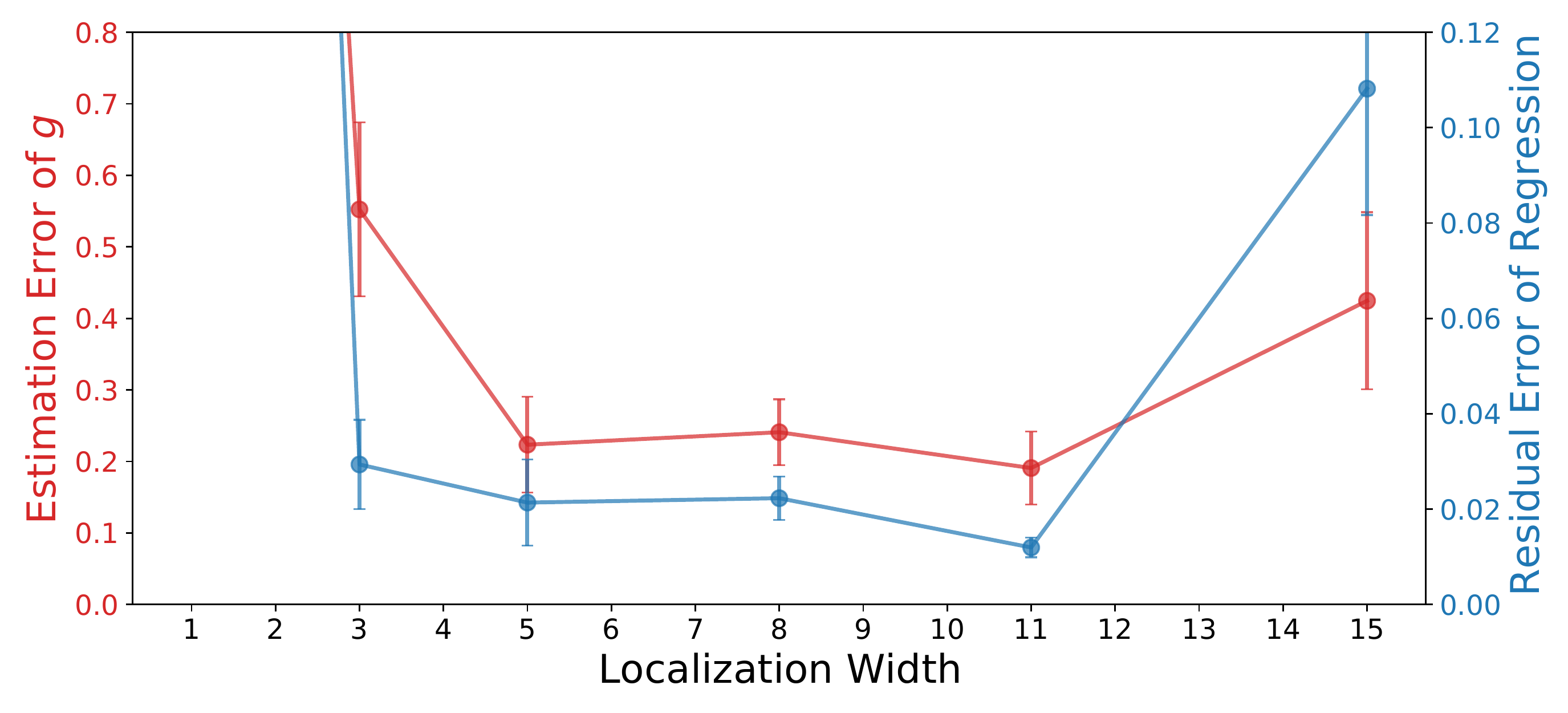}
    \caption{Estimation errors of gravitational acceleration $g$ and residual errors of degree-$2$ polynomial regression on the generated trajectories with varying localization width.}
    \label{fig:throwing-ball-metrics}
\end{figure}

\section{Experiments}
\label{sec:experiments}

In this section, we provide experimental results demonstrating the benefits of exploiting the underlying Bayes-net or MRF structure of the data in the design of model-based GANs. In our experiments, we consider a synthetic {\it ball throwing trajectory dataset} as well as two real Bayes-net datasets: {\it the EARTHQUAKE dataset}~\citep{korb2010bayesian} and {\it the CHILD dataset}~\citep{spiegelhalter1992learning}. Unless otherwise stated, the Wasserstein GAN~\citep{arjovsky2017wasserstein} with gradient penalty~\citep{gulrajani2017improved} is used in the experiments. Detailed experimental setups (including network architectures and hyper-parameters) can be found in \cref{appendix:experimental-setups}. The experiments on MRF datasets and more experimental findings on Bayes-nets including the sensitivity analysis of Bayes-net GANs are reported in \cref{appendix:more-experiment-results}.

\subsection{Synthetic Ball throwing trajectories}
\label{subsec:experiment-timeseries}

In this section, we consider a simple synthetic dataset that consists of single-variate time-series data $(y_1, \ldots, y_{15})$ representing the $y$-coordinates of ball throwing trajectories lasting 1 second, where $y_t = v_0*(t/15)-g(t/15)^2/2$. $v_0$ is a Gaussian random variable and $g=9.8$ is the gravitational acceleration. These trajectories are Bayes-nets, where the underlying DAG has the following structure: each node $t\in\{1, \ldots, 15\}$ has two parents, $(t-1)$ and $(t-2)$ (if they exist). This is because, given $g$ and without known $v_0$, one can determine $y_t$ from $y_{t-1}$ and $y_{t-2}$.

We train two types of GANs to generate ``ball throwing trajectories'': (1) Bayes-net GANs with local discriminators where each discriminator has a certain {\it time localization width} and (2) a standard GAN with one global discriminator. From the underlying physics of this dataset, we know that a proper discriminator design should have at least a localization width of $3$ since one needs at least three consecutive coordinates $y_{t-2}, y_{t-1}, y_t$ to estimate the gravitational acceleration $g$. Thus, from the theory, a GAN trained using local discriminators with a localization width of $2$ should not be able to generate high-quality samples. This is in fact verified by our experiments. In \cref{fig:throwing-ball-samples}, we see samples generated by the local-width $3$ GAN (\cref{fig:throwing-ball-samples-corrlength3}) are visually very similar to the ground truth trajectories (\cref{fig:throwing-ball-samples-gt}), while samples generated by the local-width $2$ GAN demonstrate poor quality.

\begin{table*}
    \centering
    \resizebox{\textwidth}{!}{%
        \begin{tabular}{cc|cccc}
        \hline
        Dataset & GAN used & \begin{tabular}[c]{@{}c@{}}{\bf Energy Stats.} ($\times 10^{-2}$)\\ {\small (smaller is better)}\end{tabular} & \begin{tabular}[c]{@{}c@{}}{\bf Detection AUC}\\ {\small (smaller is better)}\end{tabular} & \begin{tabular}[c]{@{}c@{}}{\bf Rel. BIC} ($\times 10^2$)\\ {\small (larger is better)}\end{tabular} & \begin{tabular}[c]{@{}c@{}}{\bf Rel. GED}\\ {\small (smaller is better)}\end{tabular} \\ \toprule
        \multirow{2}{*}{{\it EARTHQUAKE}} & Bayes-net (ours) & $0.24\pm0.04$ & $0.523\pm0.005$ & $+1.68\pm0.17$ & $0.4\pm0.7$ \\ \cline{2-6} 
        & Standard & $1.72\pm0.08$ & $0.564\pm0.012$ & $-4.30\pm0.21$ & $5.6\pm0.7$ \\ \midrule
        \multirow{2}{*}{{\it CHILD}} & Bayes-net (ours) & $2.37\pm0.10$ & $0.644\pm0.008$ & $+0.6\pm1.5$ & $9\pm4$ \\ \cline{2-6} 
        & Standard & $4.40\pm0.22$ & $0.689\pm0.019$ & $-7.1\pm2.0$ & $24\pm8$ \\ \bottomrule
        \end{tabular}%
    }
    \captionof{table}{Quality metrics of samples generated by the standard and Bayes-net GANs trained on the Bayes-nets.}
    \label{tab:bayesnets-scores}
\end{table*}

\begin{figure*}[t]
    \centering
    \begin{subfigure}[t]{.21\linewidth}
        \centering
        \includegraphics[width=\linewidth]{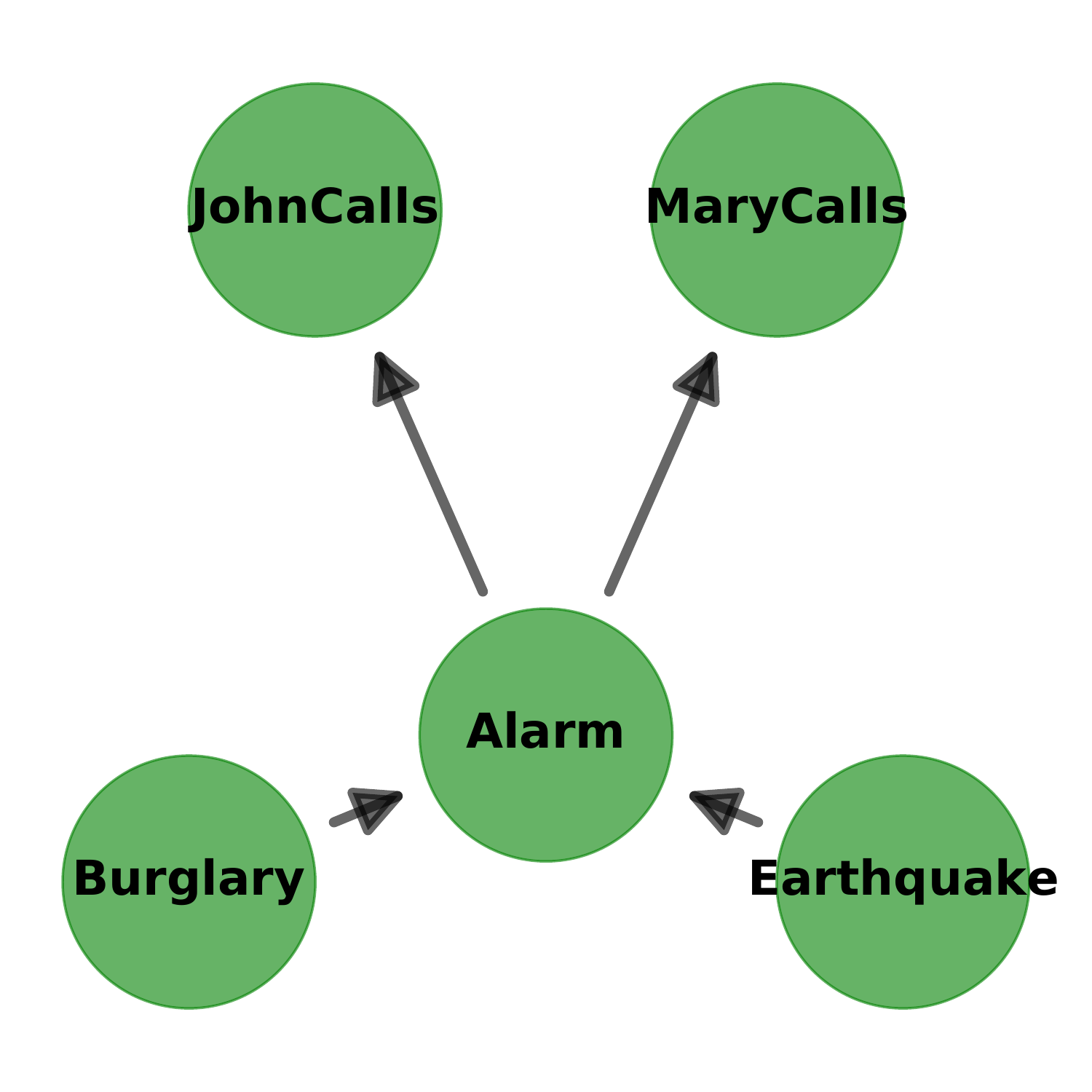}
        \caption{Ground truth}
        \label{fig:bayesnet-structure-ground-truth}
    \end{subfigure}%
    \hspace{.4em}
    \begin{subfigure}[t]{.21\linewidth}
        \centering
        \includegraphics[width=\linewidth]{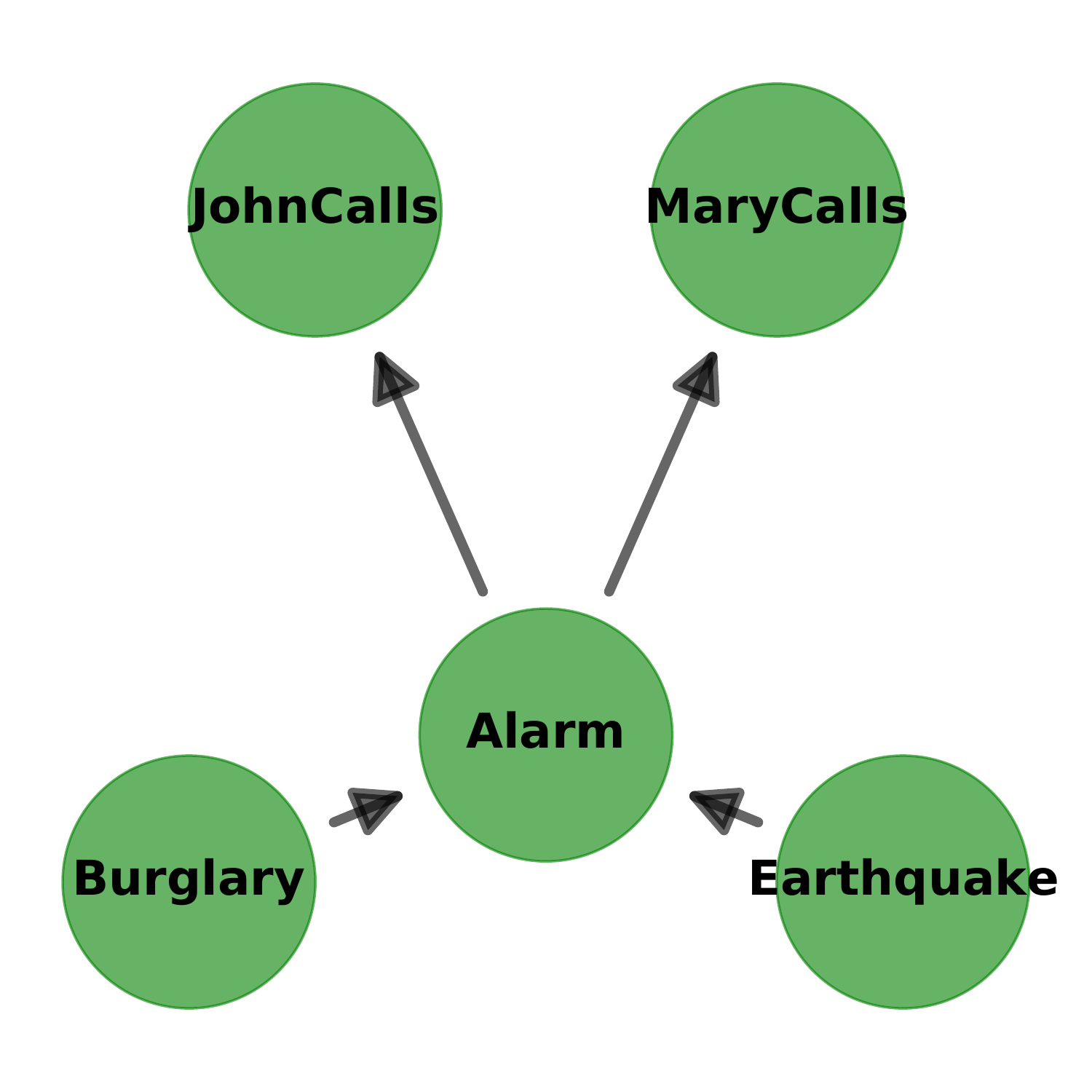}
        \caption{Direct Prediction}
        \label{fig:bayesnet-structure-directly-sampled}
    \end{subfigure}%
    \hspace{.4em}
    \begin{subfigure}[t]{.21\linewidth}
        \centering
        \includegraphics[width=\linewidth]{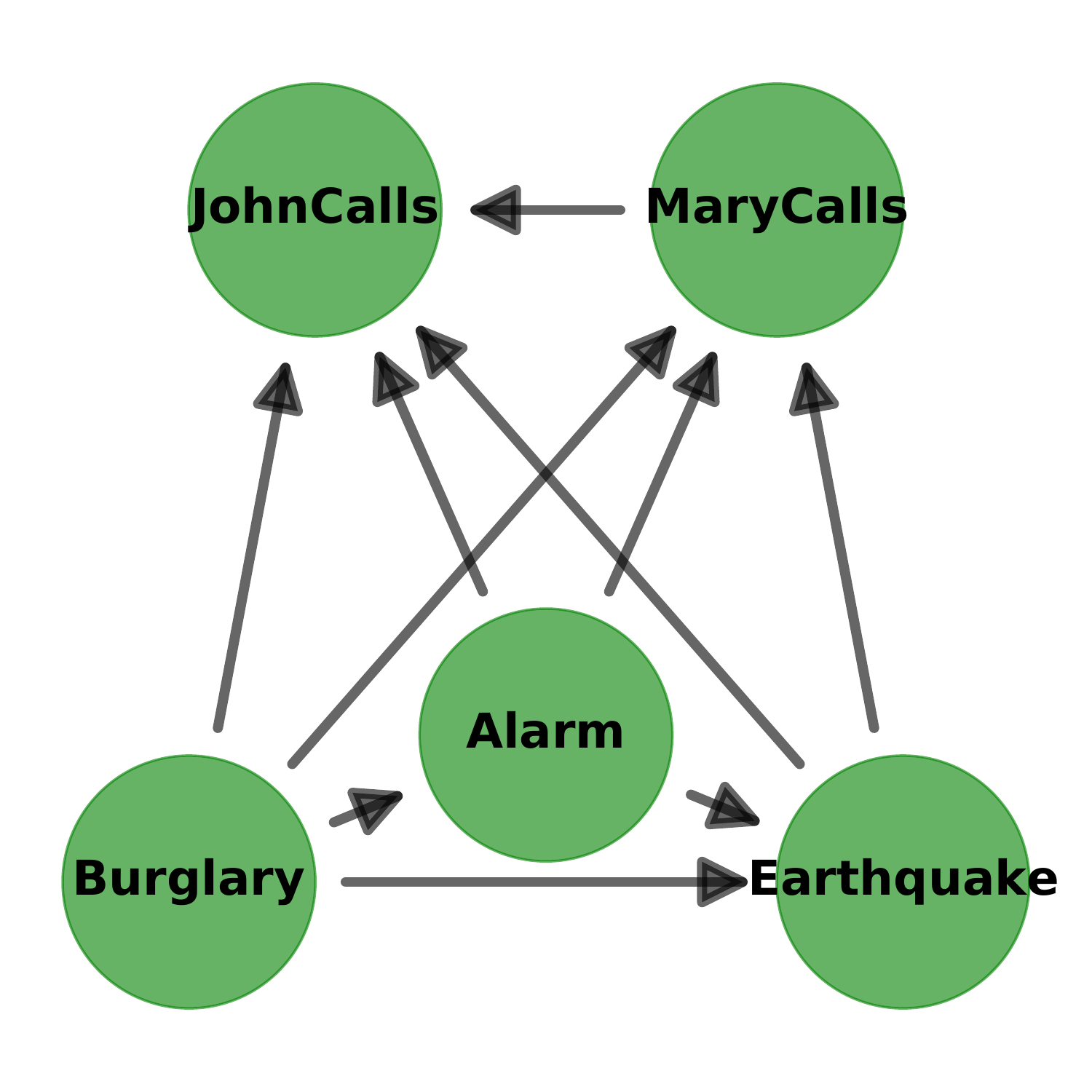}
        \caption{Standard GAN}
        \label{fig:bayesnet-structure-standard-gan}
    \end{subfigure}%
    \hspace{.4em}
    \begin{subfigure}[t]{.21\linewidth}
        \centering
        \includegraphics[width=\linewidth]{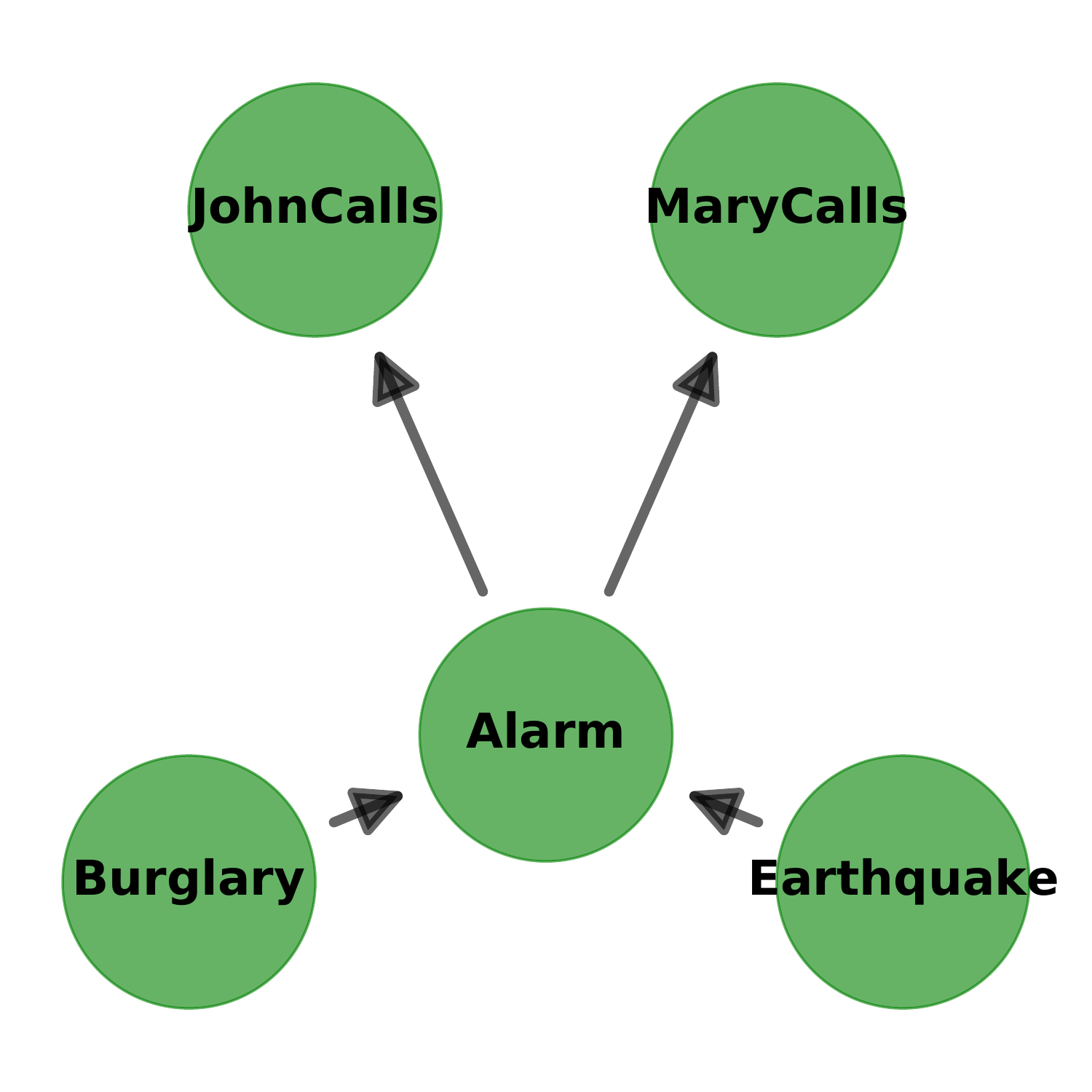}
        \caption{Bayes-net GAN (ours)}
        \label{fig:bayesnet-structure-local-true}
    \end{subfigure}%
    \caption{Causal structures predicted from \subref{fig:bayesnet-structure-directly-sampled} the observed data, \subref{fig:bayesnet-structure-standard-gan} the data generated by the standard GAN, and \subref{fig:bayesnet-structure-local-true} the data generated by the Bayes-net GAN (ours).}
    \label{fig:bayesnet-structure}
\end{figure*}

\begin{figure*}[t]
    \centering
    \includegraphics[width=\linewidth]{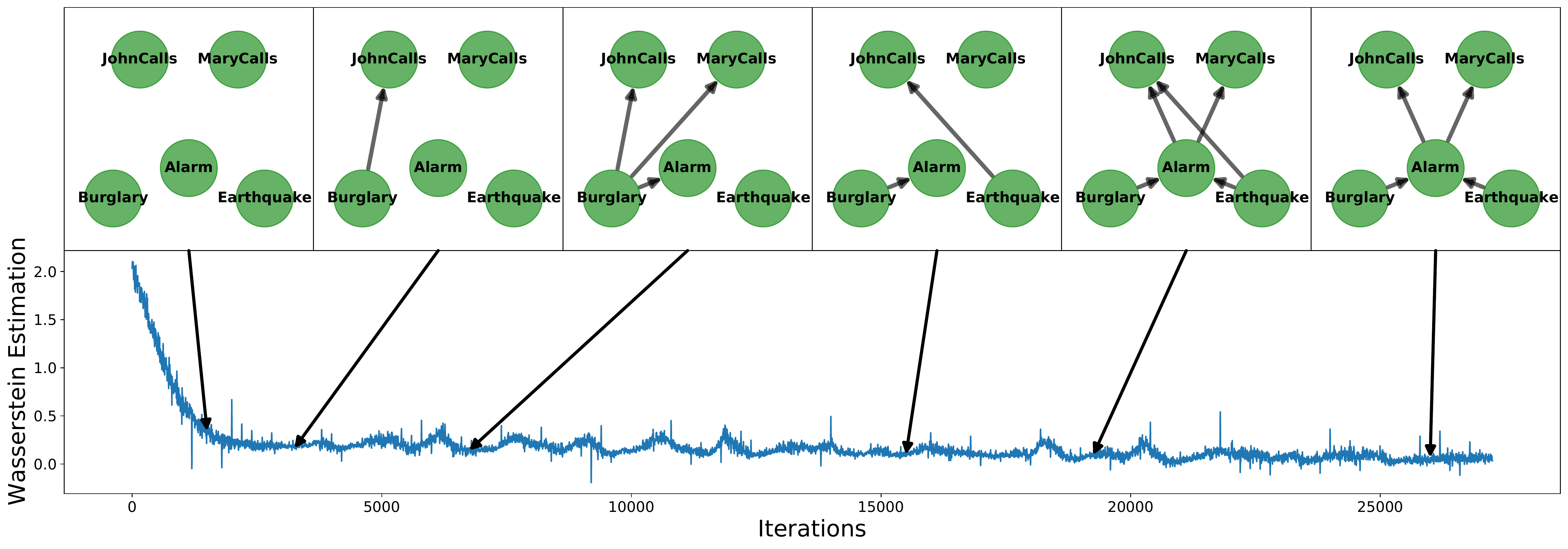}
    \caption{Causal structures predicted from the data generated by the Bayes-net GAN at different stages of training and the Wasserstein loss curve.}
    \label{fig:bayesnet-wasserstein-structure}
\end{figure*}

\begin{figure}[t]
    \centering
    \includegraphics[width=\linewidth]{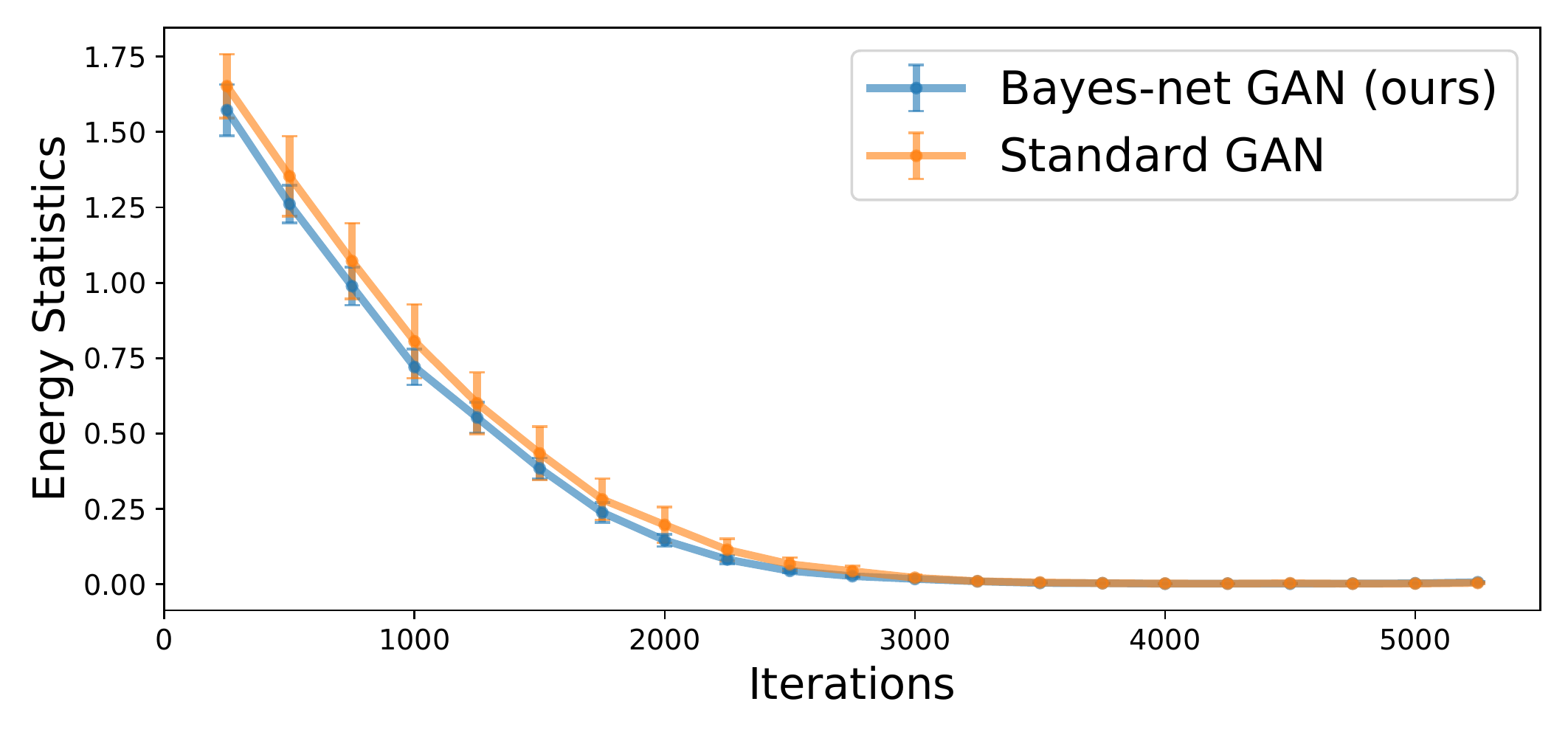}
    \caption{Energy statistics between generated and observed samples at different stages of training.}
    \label{fig:bayesnet-loss-energy-statistics}
\end{figure}

Note that increasing the localization width of the discriminators enhances their discrimination power, but at the same time, it increases the model complexity, which can cause statistical and computational issues during the training. To understand this trade-off, we progressively increase the localization width from 3 to 15, obtaining one giant discriminator at the end. The quality of generated trajectories from the standard GAN (corresponding to the giant discriminator) is, in fact, worse (\cref{fig:throwing-ball-samples-corrlength15}).

In \cref{fig:throwing-ball-metrics}, we compare the estimation errors of the gravitational acceleration $g$ and the residual errors of degree-$2$ polynomial regression (which evaluate the ``smoothness'' of generated trajectories) among GANs with different localization widths. Interestingly, the curves of both metrics demonstrate a $U$-shaped behavior indicating that there is an optimal localization width balancing between the discrimination power and the model complexity and its resulting statistical/computational burden.

\subsection{Real Bayes-nets}
\label{subsec:experiment-bayesnet}
Next, we consider two real Bayes-net datasets:
\begin{inparaenum}[(1)]
    \item {\it the EARTHQUAKE dataset} which is a small Bayes-net with $5$ nodes and $4$ edges characterizing the alarm system against burglary which can get occasionally set off by an earthquake~\citep{korb2010bayesian}, and
    \item {\it the CHILD dataset} which is a Bayes-net for diagnosing congenital heart disease in a newborn ``blue baby''~\citep{spiegelhalter1992learning}, with $20$ nodes and $25$ edges.
\end{inparaenum}
The underlying Bayes-nets of both datasets are known. We first generate samples from the Bayes-nets, then train both standard GANs and Bayes-net GANs (using the subadditivity upper-bound as objectives) on them (Since all the features are categorical, we use {\it Gumbel-Softmax}~\citep{jang2016categorical} as a differentiable approximation to the {\it Softmax} function in the generator; see \cref{appendix:experimental-setups}.)

If a GAN learns the Bayes-net well, it should learn both the joint distribution and the conditional dependencies. We evaluate the quality of the generated samples by four scores:
\begin{itemize}[leftmargin=*, itemsep=1pt, topsep=0pt]
	\item {\bf Energy Statistics} measuring how close the real and fake empirical distributions based on a statistical potential energy (a function of distances between observables)~\citep{szekely2013energy},
	\item {\bf Detection AUC}: AUC scores of binary classifiers trained to distinguish fake samples from real ones,
	\item {\bf Relative BIC}: the Bayesian information criterion of fake samples (a log-likelihood score with an additional penalty for the network complexity) \citep{koller2009probabilistic} subtracted by the BIC of real ones, and
	\item {\bf Relative GED}: the graph editing distance between the DAGs predicted from the fake and real samples by a greedy search starting from the ground truth DAG.
\end{itemize}
The first two metrics characterize the similarity between the joint distributions, while the last two evaluate how accurately the causal structure is learned.

We find that the Bayes-net GAN using the ground truth causal graph consistently outperforms the model-free standard GAN on all four quality metrics (\cref{tab:bayesnets-scores}). For Bayes-net GANs, the relative BIC scores (the second last column) are positive, i.e., the BIC of samples generated by the Bayes-net GANs is even higher than the BIC of observed data. Because the Bayes-net GANs are designed to conveniently capture the ground truth causal dependencies (compared to the other correlations), the likelihood of the ground truth causal structure can further increase. On {\it the EARTHQUAKE dataset}, we can usually recover the true causal graph from the data generated by the Bayes-net GAN (\cref{fig:bayesnet-structure-local-true}). This is not the case if we use standard GANs (\cref{fig:bayesnet-structure-standard-gan}), where any pair of nodes are directly dependent on each other. In this regard, we conclude the standard GANs cannot efficiently capture the conditional independence relationships among variables. 

Next, we study how a Bayes-net GAN learns the causal structure during the training (\cref{fig:bayesnet-wasserstein-structure}). In general, discrete Bayes-nets are multi-modals. The Bayes-net GAN learns some strong conditional dependencies at first, e.g. ``Burglary'' leads to ``JohnCalls'' in the second snapshot, although it is not a direct dependence (in fact, ``Burglary'' triggers ``Alarm'', then ``JohnCalls''). After some training, the dependence relation is further specified, and the edge (``Burglary''$\to$``JohnCalls'') is replaced by a pair of new edges, (``Burglary''$\to$``Alarm'') and (``Alarm''$\to$``JohnCalls'') in the second last snapshot. During training, we rarely observe that the Bayes-net GAN captures any non-existing dependencies (e.g. ``Earthquake'' and ``Burglary''). However, this happens often for standard GANs; see \cref{fig:bayesnet-structure-standard-gan} for an 
example.

The success of learning causal independence structures also simplifies the task of learning joint distribution. Without changing any setup or hyper-parameters, replacing the discriminator with a set of local discriminators brings a performance gain on the first two scores as well (\cref{tab:bayesnets-scores}). Moreover, Bayes-net GANs are computationally efficient when the Bayes-nets are not very large. On average, they converge faster than the standard GAN on Bayes-nets; see \cref{fig:bayesnet-loss-energy-statistics} for the averaged curves of energy statistics on {\it the EARTHQUAKE dataset}. These results highlight the statistical and computational benefits of our principled design of Bayes-net GANs.


\subsubsection*{Acknowledgements}
We thank the Simons Institute for the Theory of Computing, where this collaboration started, during the ``Foundations of Deep Learning'' program. This project was supported in part by NSF CAREER AWARD 1942230, Simons Fellowship, Qualcomm Faculty Award, IBM Faculty Award, DOE Award 302629-00001 and a sponsorship from Capital One. Constantinos Daskalakis was supported by NSF Awards IIS-1741137, CCF-1617730 and CCF-1901292, by a Simons Investigator Award, by the DOE PhILMs project (No. DE-AC05-76RL01830), by the DARPA award HR00111990021, by a Google Faculty award, and by the MIT Frank Quick Faculty Research and Innovation Fellowship.

\bibliographystyle{unsrtnat}
\bibliography{main}

\clearpage
\newpage
\onecolumn
\appendix
{\centering \bf \LARGE
    Appendix to\par
    GANs with Conditional Independence Graphs:\\On Subadditivity of Probability Divergences\par
}

\section{Proofs}
\label{appendix:proofs}

\subsection{Proof of \texorpdfstring{\cref{thm:subadditivity-markov-chain}}{Theorem \ref{thm:subadditivity-markov-chain}}}
\label{proof:subadditivity-markov-chain}

\begin{proof}
    The theorem is implicit in \citep{daskalakis2017square}. For completeness, we provide a full argument here.
    
    For a pair of Bayes-nets $P$ and $Q$ with respect to a Directed Acyclic Graph (DAG) $G$, consider the topological ordering $(1, \cdots, n)$ of the nodes of $G$. Consistent with the topological ordering, consider the following Markov Chain on super-nodes: $X_{\{1,\cdots,n-1\}\setminus\Pi_n}\to X_{\Pi_n}\to X_n$, where $\Pi_n$ is the set of parents of node $n$ and $\Pi_n\subseteq\{1,\cdots,n-1\}$. We distinguish three cases:
    \begin{enumerate}
        \item $\Pi_n\neq\varnothing$ and $\Pi_n\subsetneqq \{1,\cdots,n-1\}$: In this case, we apply the subadditivity property of $\D$ with respect to Markov Chains to obtain
        $\D\PQ{}\leq\D\PQ{\cup_{i=1}^{n-1}X_i}+\D\PQ{X_{\Pi_n}\cup X_n}$.
        \item $\Pi_n=\{1,\cdots,n-1\}$: In this case, it is trivial that $\D\PQ{}\equiv\D\PQ{X_{\Pi_n}\cup X_n}\leq\D\PQ{\cup_{i=1}^{n-1}X_i}+\D\PQ{X_{\Pi_n}\cup X_n}$.
        \item $\Pi_n=\varnothing$: In this case, $X_n$ is independent from $(X_1,\ldots,X_{n-1})$ in both Bayes-nets. Thus we apply the subadditivity of $\D$ with respect to product measures to obtain $\D\PQ{}\leq \D\PQ{\cup_{i=1}^{n-1}X_i}+\D(P_{X_n},Q_{X_n})\equiv\D\PQ{\cup_{i=1}^{n-1}X_i}+\D\PQ{X_{\Pi_n}\cup X_n}$.
    \end{enumerate}
    We proceed by induction. For each inductive step $k=1,\cdots,n-2$, we consider the following Markov Chain on super-nodes: $X_{\{1,\cdots,n-k-1\}\setminus\Pi_{n-k}}\to X_{\Pi_{n-k}}\to X_{n-k}$. No matter what $\Pi_{n-k}$ is, we always have: $\D\PQ{\cup_{i=1}^{n-k}X_i}\leq\D\PQ{\cup_{i=1}^{n-k-1}X_i}+\D\PQ{X_{\Pi_{n-k}}\cup X_{n-k}}$. In the end of the induction, we obtain: $\D(P, Q)\leq\D\PQ{X_1}+\sum_{i=2}^n\D\PQ{\Pi_{i}\cup X_{i}}\equiv\sum_{i=1}^n\D\PQ{\Pi_{i}\cup X_{i}}$, since $\Pi_1\equiv\varnothing$. The subadditivity of $\D$ on Bayes-nets is proved.
\end{proof}

\subsection{Proof of \texorpdfstring{\cref{thm:subadditivity-sh}}{Theorem \ref{thm:subadditivity-sh}}}
\label{proof:subadditivity-sh}

\begin{proof}
    The subadditivity of squared Hellinger distance is proved in Theorem 2.1 of \citep{daskalakis2017square}. Here, we repeat the proof for completeness.
    
    Given \cref{thm:subadditivity-markov-chain}, we only need to show the following:
    \begin{enumerate}
        \item For two Markov Chains $P, Q$ on variables $X\to Y\to Z$, it holds that $\SH\PQ{XYZ}\leq \SH\PQ{XY}+\SH\PQ{YZ}$.
        \item For two product measures $P, Q$ on variables $X, Y$, it holds that $\SH\PQ{XY}\leq \SH\PQ{X}+\SH\PQ{Y}$.
    \end{enumerate} 
    We first show the subadditivity with respect to Markov Chains. Using the Markov property, we know $P_{XYZ}=P_{Z|XY}P_{XY}=P_{Z|Y}P_{XY}$ (and the same holds for $Q$), thus,
    \[
        \begin{split}
            &\SH\PQ{XYZ}\\
            &=1-\int\sqrt{P_{XYZ}Q_{XYZ}}\dx\dy\dz\\
            &=1-\int\sqrt{P_{XY}Q_{XY}}\left(\int\sqrt{P_{Z|Y}Q_{Z|Y}}\dz\right)\dx\dy\\
            &=1-\int\frac12(P_{Y}+Q_{Y})\left(\int\sqrt{P_{Z|Y}Q_{Z|Y}}\dz\right)\dy+\int\frac12\left(\sqrt{P_{XY}}-\sqrt{Q_{XY}}\right)^2\left(\int\sqrt{P_{Z|Y}Q_{Z|Y}}\dz\right)\dx\dy\\
        \end{split}
    \]
    Since all densities are non-negative, we have $\sqrt{P_{Y}Q_{Y}}\leq\frac12\left(P_{Y}+Q_{Y}\right)$ and $\sqrt{P_{Z|Y}Q_{Z|Y}}\leq\frac12\left(P_{Z|Y}+Q_{Z|Y}\right)$ point-wisely. Thus,
    \[
        \begin{split}
            &\SH\PQ{XYZ}\\
            &\leq1-\int\sqrt{P_{Y}Q_{Y}}\left(\int\sqrt{P_{Z|Y}Q_{Z|Y}}\dz\right)\dy+\int\frac12\left(\sqrt{P_{XY}}-\sqrt{Q_{XY}}\right)^2\left(\int\frac12\left(P_{Z|Y}+Q_{Z|Y}\right)\dz\right)\dx\dy\\
            &=\left(1-\int\sqrt{P_{YZ}Q_{YZ}}\dy\dz\right)+\frac12\int\left(\sqrt{P_{XY}}-\sqrt{Q_{XY}}\right)^2\dx\dy\\
            &=\SH\PQ{XY}+\SH\PQ{YZ}
        \end{split}
    \]
    
    It remains to show the subadditivity with respect to product measures. If $P, Q$ are product measures over $X, Y$, then $P_{XY}=P_XP_Y$ and $Q_{XY}=Q_XQ_Y$. Since all densities are non-negative, we have  $\sqrt{P_{Y}Q_{Y}}\leq\frac12\left(P_{Y}+Q_{Y}\right)$ point-wise. Hence,
    \[
        \begin{split}
            &\SH\PQ{XY}\\
            &=1-\int\sqrt{P_{XY}Q_{XY}}\dx\dy\\
            &=1-\int\sqrt{P_{X}Q_{X}}\left(\int\sqrt{P_{Y}Q_{Y}}\dy\right)\dx\\
            &=1-\int\frac12(P_{X}+Q_{X})\left(\int\sqrt{P_{Y}Q_{Y}}\dy\right)\dx+\int\frac12\left(\sqrt{P_{X}}-\sqrt{Q_{X}}\right)^2\left(\int\sqrt{P_{Y}Q_{Y}}\dy\right)\dx\\
            &\leq1-\left(\int\frac12(P_{X}+Q_{X})\dx\right)\left(\int\sqrt{P_{Y}Q_{Y}}\dy\right)+\int\frac12\left(\sqrt{P_{X}}-\sqrt{Q_{X}}\right)^2\left(\int\frac12\left(P_{Y}+Q_{Y}\right)\dy\right)\dx\\
            &=1-\int\sqrt{P_{Y}Q_{Y}}\dy+\int\frac12\left(\sqrt{P_{X}}-\sqrt{Q_{X}}\right)^2\dx\\
            &=\SH\PQ{X}+\SH\PQ{Y}
        \end{split}
    \]
\end{proof}

\subsection{Proof of \texorpdfstring{\cref{thm:subadditivity-kl}}{Theorem \ref{thm:subadditivity-kl}}}
\label{proof:subadditivity-kl}

\begin{proof}
    The subadditivity of KL-divergence is claimed in \citep{daskalakis2017square} without proof. Here, we provide a proof for completeness.
    
    Given \cref{thm:subadditivity-markov-chain}, we only need to show the following:
    \begin{enumerate}
        \item For two Markov Chains $P, Q$ on variables $X\to Y\to Z$, it holds that $\KL\PQ{XYZ}\leq \KL\PQ{XY}+\KL\PQ{YZ}$.
        \item For two product measures $P, Q$ on variables $X, Y$, it holds that $\KL\PQ{XY}\leq \KL\PQ{X}+\KL\PQ{Y}$.
    \end{enumerate} 
    We first show the subadditivity with respect to Markov Chains. The Markov property implies $P_{XYZ}=P_{XY}P_{YZ}/P_{Y}$ (and the same holds for $Q$). Thus,
    \[
        \begin{split}
            \KL\PQ{XYZ}&=\int P_{XYZ}\log\left(\frac{P_{XY}}{Q_{XY}}\frac{P_{YZ}}{Q_{YZ}}\middle/\frac{P_{Y}}{Q_{Y}}\right)\dx\dy\dz\\
            &=\int P_{XY}\log\left(\frac{P_{XY}}{Q_{XY}}\right)\dx\dy + \int P_{YZ}\log\left(\frac{P_{YZ}}{Q_{YZ}}\right)\dy\dz-\int P_{Y}\log\left(\frac{P_{Y}}{Q_{Y}}\right)\dy\\
            &=\KL\PQ{XY} + \KL\PQ{YZ} - \KL(P_{Y}, Q_{Y})\\
        \end{split}
    \]
    The subadditivity follows from the non-negativity of KL-divergence. Additivity holds when $\KL(P_{Y}, Q_{Y})=0$.
    
    It remains to show the subadditivity with respect to product measures. We will, in fact, show additivity rather than subadditivity. If $P, Q$ are product measures over $X, Y$, then $P_{XY}=P_XP_Y$ and $Q_{XY}=Q_XQ_Y$, hence,
      \[
        \begin{split}
            \KL\PQ{XY}&=\int P_{XY}\log\left(\frac{P_{X}}{Q_{X}}\frac{P_{Y}}{Q_{Y}}\right)\dx\dy\\
            &=\int P_{X}\log\left(\frac{P_{X}}{Q_{X}}\right)\dx + \int P_{Y}\log\left(\frac{P_{Y}}{Q_{Y}}\right)\dy\\
            &=\KL\PQ{X} + \KL\PQ{Y}.
        \end{split}
    \]
\end{proof}

\subsection{Proof of \texorpdfstring{\cref{coro:linear-subadditivity-js}}{Theorem \ref{coro:linear-subadditivity-js}}}
\label{proof:linear-subadditivity-js}

\begin{proof}
    The subadditivity of Jensen-Shannon divergence follows from:
    \begin{enumerate}
        \item The subadditivity of squared Hellinger distance (\cref{thm:subadditivity-sh}).
        \item $\gf$-Divergence inequalities (Theorem 11 of \citep{sason2016f}, repeated as \cref{thm:fdiv-ineq-sh&js} in \cref{appendix:subsec:fdiv-ineq}): for any two densities $P$ and $Q$,
        \[
            (\ln2)\SH\PQ{}\leq\JS\PQ{}\leq\SH\PQ{}
        \]
    \end{enumerate}
    Combining the inequalities implies that, for any pair of Bayes-nets $P, Q$ with respect to a DAG $G$, we have,
    \[
        \JS(P, Q)\leq\SH(P, Q)\leq\sum_{i=1}^n\SH\PQ{\Pi_{i}\cup X_{i}}\leq\frac{1}{\ln2}\sum_{i=1}^n\JS\PQ{\Pi_{i}\cup X_{i}}
    \]
    This proves that Jensen-Shannon divergence satisfies $(1/\ln2)$-linear subadditivity on Bayes-nets.
    
    Note that we assume natural logarithm is used in the definition of Jensen-Shannon divergence when deriving the inequalities between $\JS\PQ{}$ and $\SH\PQ{}$ (see \cref{thm:fdiv-ineq-sh&js} for details). However, the choice of the base of the logarithm does not affect the $(1/\ln2)$-linear subadditivity of Jensen-Shannon divergence.
\end{proof}

\subsection{Proof of \texorpdfstring{\cref{thm:linear-subadditivity-tv}}{Theorem \ref{thm:linear-subadditivity-tv}}}
\label{proof:linear-subadditivity-tv}

In the following proofs, we extensively use the Integral Probability Metric (IPM) formula of Total Variation distance \citep{muller1997integral}. If $\cF$ is the set of measurable functions on $\Sp$ taking values in $[0, 1]$, then,
\[
    \TV\PQ{}=\sup_{\df\in\cF}\Big|\EE_{x\sim P}[\df(x)]-\EE_{x\sim Q}[\df(x)]\Big|
\]

\begin{lemma}
\label{lem:subadditivity-tv-markov-chain}
    Let $P$ and $Q$ be two Bayes-nets with respect to DAG $X\to Y\to Z$. Then,
    \[
        \TV\PQ{XYZ}\leq\TV\PQ{XY}+\TV\PQ{Y}+\TV\PQ{YZ}
    \]
\end{lemma}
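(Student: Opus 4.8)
The plan is to argue directly with densities. By the Markov property on $X\to Y\to Z$ we have the factorizations $P_{XYZ}=P_{XY}\,P_{Z|Y}$ and $Q_{XYZ}=Q_{XY}\,Q_{Z|Y}$, and likewise $P_{YZ}=P_Y\,P_{Z|Y}$ and $Q_{YZ}=Q_Y\,Q_{Z|Y}$, so the whole estimate reduces to manipulating these products inside the $L^1$ expression $\TV(P,Q)=\tfrac12\int|P-Q|$.

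First I would split
\[
P_{XYZ}-Q_{XYZ}=P_{XY}\bigl(P_{Z|Y}-Q_{Z|Y}\bigr)+\bigl(P_{XY}-Q_{XY}\bigr)Q_{Z|Y}
\]
and apply the triangle inequality to $\tfrac12\int|P_{XYZ}-Q_{XYZ}|\,\dx\dy\dz$. In the second resulting term, $\int Q_{Z|Y}\,\dz=1$, so it equals $\tfrac12\int|P_{XY}-Q_{XY}|\,\dx\dy=\TV(P_{XY},Q_{XY})$. In the first term, $\int P_{XY}\,\dx=P_Y$, so it equals $\tfrac12\int P_Y\,|P_{Z|Y}-Q_{Z|Y}|\,\dy\dz$.

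Next I would treat this leftover quantity by the same device: from $P_Y\bigl(P_{Z|Y}-Q_{Z|Y}\bigr)=(P_{YZ}-Q_{YZ})-(P_Y-Q_Y)Q_{Z|Y}$ the triangle inequality gives $\tfrac12\int P_Y\,|P_{Z|Y}-Q_{Z|Y}|\,\dy\dz\le\TV(P_{YZ},Q_{YZ})+\TV(P_Y,Q_Y)$, again using $\int Q_{Z|Y}\,\dz=1$. Adding the two bounds yields exactly the claimed inequality.

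This is elementary; the only point to be mildly careful about is the convention for the conditional density on the null set where a marginal vanishes, but since each conditional always appears multiplied by that same marginal this has no effect on any integral, so there is no real obstacle. An alternative proof via the $[0,1]$-valued test-function (IPM) characterization of $\TV$ is possible, but the density decomposition above is the most direct route.
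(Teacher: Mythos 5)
Your proof is correct and follows essentially the same route as the paper: your two density splits are exactly the paper's hybrid argument through the intermediate distributions $P_{XY}Q_{Z|Y}$ and $P_{Y}Q_{Z|Y}$, only carried out directly on densities via $\tfrac12\int|\cdot|$ rather than through the $[0,1]$-valued test-function (IPM) characterization of $\TV$. The two formulations are equivalent here, so nothing further is needed.
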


\begin{proof}
    We do a hybrid argument. By the triangle inequality, we have:
    \[
        \TV\PQ{XYZ}\leq\TV(P_{XYZ},P_{XY}Q_{Z|Y})+\TV(P_{XY}Q_{Z|Y},Q_{XYZ})
    \]
    We bound each term on the right-hand side separately. 
    
    Let us start with the second term. Let $\cF_{xy}$ be the set of measurable functions on variables $x$ and $y$ taking values in $[0, 1]$, and $\cF_{xyz}$ be the set of measurable functions on variables $x, y, z$ taking values in $[0, 1]$, etc. Using the Markov property, we know $P_{XYZ}=P_{XY}P_{Z|Y}=P_{Y}P_{X|Y}P_{Z|Y}$ (and the same holds for $Q$). Then,
    \[
        \begin{split}
            \TV(P_{XY}Q_{Z|Y},Q_{XYZ})&=\sup_{\df\in\cF_{xyz}}\Big|\EE_{P_{XY}Q_{Z|Y}}[\df(x,y,z)]-\EE_{Q_{XYZ}}[\df(x,y,z)]\Big|\\
            &=\sup_{\df\in\cF_{xyz}}\Big|\EE_{P_{XY}}\left[\EE_{Q_{Z|Y}}[\df(x,y,z)]\right]-\EE_{Q_{XY}}\left[\EE_{Q_{Z|Y}}[\df(x,y,z)]\right]\Big|\\
            &\leq\sup_{\df\in\cF_{xy}}\Big|\EE_{P_{XY}}\left[\df(x,y)\right]-\EE_{Q_{XY}}\left[\df(x,y)\right]\Big|\\
            &\equiv\TV\PQ{XY}
        \end{split}
    \]
    Let us now bound the first term,
    \[
        \begin{split}
            \TV(P_{XYZ},P_{XY}Q_{Z|Y})&=\sup_{\df\in\cF_{xyz}}\Big|\EE_{P_{XYZ}}[\df(x,y,z)]-\EE_{P_{XY}Q_{Z|Y}}[\df(x,y,z)]\Big|\\
            &=\sup_{\df\in\cF_{xyz}}\Big|\EE_{P_{Y}P_{Z|Y}}\left[\EE_{P_{X|Y}}[\df(x,y,z)]\right]-\EE_{P_{Y}Q_{Z|Y}}\left[\EE_{P_{X|Y}}[\df(x,y,z)]\right]\Big|\\
            &\leq\sup_{\df\in\cF_{yz}}\Big|\EE_{P_{Y}P_{Z|Y}}\left[\df(y,z)\right]-\EE_{P_{Y}Q_{Z|Y}}\left[\df(y,z)\right]\Big|\\
            &\leq\sup_{\df\in\cF_{yz}}\Big|\EE_{P_{Y}P_{Z|Y}}\left[\df(y,z)\right]-\EE_{Q_{Y}Q_{Z|Y}}\left[\df(y,z)\right]\Big|\\
            &\qquad+\sup_{\df\in\cF_{yz}}\Big|\EE_{Q_{Y}Q_{Z|Y}}\left[\df(y,z)\right]-\EE_{P_{Y}Q_{Z|Y}}\left[\df(y,z)\right]\Big|\\
            &=\TV(P_{YZ},Q_{YZ})+\sup_{\df\in\cF_{yz}}\Big|\EE_{Q_{Y}}\left[\EE_{Q_{Z|Y}}\left[\df(y,z)\right]\right]-\EE_{P_{Y}}\left[\EE_{Q_{Z|Y}}\left[\df(y,z)\right]\right]\Big|\\
            &\leq\TV\PQ{YZ}+\sup_{\df\in\cF_y}\Big|\EE_{Q_{Y}}\left[\df(y)\right]-\EE_{P_{Y}}\left[\df(y)\right]\Big|\\
            &\leq\TV\PQ{YZ}+\TV\PQ{Y} 
        \end{split}
    \]
    Combining the two inequalities concludes the proof.
\end{proof}

\begin{lemma}
\label{lem:subadditivity-tv-product-measure}
    Let $P$ and $Q$ be two product measures over variables $X$ and $Y$. Then,
    \[
        \TV\PQ{XY}\leq\TV\PQ{X}+\TV\PQ{Y}
    \]
\end{lemma}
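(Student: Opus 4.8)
The plan is to run a hybrid argument through the intermediate product measure $P_XQ_Y$, exactly parallel to (but simpler than) the proof of \cref{lem:subadditivity-tv-markov-chain}. By the triangle inequality for $\TV$,
\[
    \TV\PQ{XY}=\TV(P_XP_Y,Q_XQ_Y)\leq\TV(P_XP_Y,P_XQ_Y)+\TV(P_XQ_Y,Q_XQ_Y),
\]
so it suffices to show $\TV(P_XP_Y,P_XQ_Y)\leq\TV\PQ{Y}$ and $\TV(P_XQ_Y,Q_XQ_Y)\leq\TV\PQ{X}$; summing these gives the claim.

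For the first inequality I would invoke the IPM formula of $\TV$ with $\cF_{xy}$ the class of measurable functions of $(x,y)$ valued in $[0,1]$ and $\cF_y$ the analogous class of functions of $y$. Since $P_X$ is the common first marginal,
\[
    \EE_{P_XP_Y}[\df(x,y)]-\EE_{P_XQ_Y}[\df(x,y)]=\EE_{x\sim P_X}\Big[\EE_{P_Y}[\df(x,\cdot)]-\EE_{Q_Y}[\df(x,\cdot)]\Big].
\]
For each fixed $x$ the slice $y\mapsto\df(x,y)$ belongs to $\cF_y$, so the bracketed quantity is at most $\TV\PQ{Y}$ in absolute value; bounding $|\EE_{x\sim P_X}[\,\cdot\,]|\leq\EE_{x\sim P_X}[|\,\cdot\,|]$ and using $\int P_X\,\dx=1$ yields $|\EE_{P_XP_Y}[\df]-\EE_{P_XQ_Y}[\df]|\leq\TV\PQ{Y}$ uniformly in $\df\in\cF_{xy}$, hence $\TV(P_XP_Y,P_XQ_Y)\leq\TV\PQ{Y}$. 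The second inequality follows symmetrically, factoring out the common marginal $Q_Y$ and integrating the slice-wise bound $\TV\PQ{X}$ against $Q_Y$. Equivalently, and even more directly, one can work with densities: from the pointwise estimate $|P_XP_Y-Q_XQ_Y|\leq P_X|P_Y-Q_Y|+Q_Y|P_X-Q_X|$, integrating over $\Sp$ and using $\int P_X\,\dx=\int Q_Y\,\dy=1$ gives $\TV\PQ{XY}\leq\TV\PQ{Y}+\TV\PQ{X}$.

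I do not anticipate any genuine obstacle: the statement is elementary and each of the two routes above is only a few lines. The only points needing a word of care are the Fubini/measurability justification for interchanging the order of integration in $\EE_{P_X}[\EE_{P_Y}[\,\cdot\,]]$ — valid because $\df$ is bounded and $P,Q$ are product measures with densities (as assumed in \cref{sec:notations}) — and the trivial but load-bearing observation that the restriction of a $[0,1]$-valued function of $(x,y)$ to a fixed value of $x$ is again a $[0,1]$-valued function of $y$, which is what makes the slice-wise supremum equal exactly $\TV\PQ{Y}$.
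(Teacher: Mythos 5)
Your proposal is correct and follows essentially the same route as the paper's proof: a hybrid/triangle-inequality step through the intermediate product $P_XQ_Y$, with each term bounded via the IPM formula of $\TV$ by restricting the test function to one variable (your slice-then-integrate version and the paper's single averaged test function $\EE_{P_X}[\df(x,\cdot)]\in\cF_y$ are interchangeable). The one-line density argument you add at the end is just the same hybrid decomposition carried out at the level of densities, so there is nothing substantively different to flag.
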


\begin{proof}
    By the triangle inequality, we have:
    \[
        \TV\PQ{XY}\leq\TV(P_{XY},P_{X}Q_{Y})+\TV(P_{X}Q_{Y},Q_{XY})
    \]
    We bound each term on the right hand side separately. Let $\cF_{xy}$ be the set of measurable functions on variables $x$ and $y$ taking values in $[0, 1]$, and $\cF_y$ be the set of measurable functions on variable $y$ taking values in $[0, 1]$, etc. Then,
    \[
        \begin{split}
            \TV(P_{XY},P_{X}Q_{Y})&=\sup_{\df\in\cF_{xy}}\Big|\EE_{P_{XY}}[\df(x,y)]-\EE_{P_{X}Q_{Y}}[\df(x,y)]\Big|\\
            &=\sup_{\df\in\cF_{xy}}\Big|\EE_{P_{Y}}\left[\EE_{P_{X}}[\df(x,y)]\right]-\EE_{Q_{Y}}\left[\EE_{P_{X}}[\df(x,y)]\right]\Big|\\
            &\leq\sup_{\df\in\cF_y}\Big|\EE_{P_{Y}}\left[\df(y)\right]-\EE_{Q_{Y}}\left[\df(y)\right]\Big|\\
            &\equiv\TV\PQ{Y}
        \end{split}
    \]
    Similarly, we get $\TV(P_{X}Q_{Y},Q_{XY})\le \TV(P_{X},Q_{X})$. Combining the two inequalities concludes the proof.
\end{proof}

\begin{prevproof}{thm:linear-subadditivity-tv}
    Similar to the proof of \cref{thm:subadditivity-markov-chain}, for a pair of Bayes-nets $P$ and $Q$ with respect to a DAG $G$, we perform induction on each nodes of $G$. Consider the topological ordering $(1,\cdots,n)$ of the nodes of $G$. Consistent with the topological ordering, consider the following Markov Chain on super-nodes: $X_{\{1,\cdots,n-1\}\setminus\Pi_n}\to X_{\Pi_n}\to X_n$, where $\Pi_n$ is the set of parents of node $n$ and $\Pi_n\subseteq\{1,\cdots,n-1\}$. We distinguish three cases:
    \begin{enumerate}
        \item $\Pi_n\neq\varnothing$ and $\Pi_n\subsetneqq \{1,\cdots,n-1\}$: In this case, we apply \cref{lem:subadditivity-tv-markov-chain} to get $\TV\PQ{}\leq \TV\PQ{\cup_{i=1}^{n-1}X_i}+\TV\PQ{X_{\Pi_n}}+\TV\PQ{X_{\Pi_n}\cup X_n}$.
        \item $\Pi_n=\{1,\cdots,n-1\}$: In this case, it is trivial that $\TV\PQ{}\equiv\TV\PQ{X_{\Pi_n}\cup X_n}\leq\TV\PQ{\cup_{i=1}^{n-1}X_i}+\TV\PQ{X_{\Pi_n}}+\TV\PQ{X_{\Pi_n}\cup X_n}$.
        \item $\Pi_n=\varnothing$: In this case, $X_n$ is independent from $(X_1,\ldots,X_{n-1})$ in both Bayes-nets. Thus we apply \cref{lem:subadditivity-tv-product-measure} to get $\TV\PQ{}\leq\TV\PQ{\cup_{i=1}^{n-1}X_i}+\TV\PQ{X_n}\equiv\TV\PQ{\cup_{i=1}^{n-1}X_i}+\TV\PQ{X_{\Pi_n}}+\TV\PQ{X_{\Pi_n}\cup X_n}$, where $\TV\PQ{X_{\Pi_n}}=0$ and $\TV\PQ{X_{\Pi_n}\cup X_n}=\TV\PQ{X_1}$ as $\Pi_n=\varnothing$.
    \end{enumerate}
    We proceed by induction. For each inductive step $k=1,\cdots,n-2$, we consider the following Markov Chain on super-nodes: $X_{\{1,\cdots,n-k-1\}\setminus\Pi_{n-k}}\to X_{\Pi_{n-k}}\to X_{n-k}$. No matter what $\Pi_{n-k}$ is, we always have: $\TV\PQ{\cup_{i=1}^{n-k}X_i}\leq\TV\PQ{\cup_{i=1}^{n-k-1}X_i}+\TV\PQ{X_{\Pi_{n-k}}}+\TV\PQ{X_{\Pi_{n-k}}\cup X_{n-k}}$. In the end of the induction, we obtain: $\TV\PQ{}\leq\TV\PQ{X_1}+\sum_{i=2}^n\big(\TV\PQ{\Pi_{i}\cup X_{i}}+\TV\PQ{\Pi_{i}}\big)$. Since $\Pi_1\equiv\varnothing$, we know $\TV\PQ{X_{\Pi_1}}=0$ and $\TV\PQ{X_{\Pi_1}\cup X_1}=\TV\PQ{X_1}$. Hence, we conclude that,
    \[
        \TV\PQ{}\leq\sum_{i=1}^n\Big(\TV\PQ{\Pi_{i}\cup X_{i}}+\TV\PQ{\Pi_{i}}\Big)
    \]
    Now we relate this inequality to the notion of linear subadditivity. For two densities $P$ and $Q$ on variables $X, Y$, it holds that,
    \[
        \begin{split}
            \TV\PQ{X}&\equiv\frac12\int\Big|P_{X}-Q_{X}\Big|\dx\\
            &=\frac12\int\Big|\int P_{XY}\dy-\int Q_{XY}\dy\Big|\dx\\
            &\leq\frac12\int\left(\int\Big|P_{XY}-Q_{XY}\Big|\dy\right)\dx\\
            &\equiv\TV\PQ{XY}
        \end{split}
    \]
    Applying this inequality to $X_{\Pi_i}$ and $X_i$, for any $i\in\{1,\cdots,n\}$, we obtain, $\TV\PQ{\Pi_{i}}\leq\TV\PQ{\Pi_{i}\cup X_{i}}$. Thus,
    \[
        \TV\PQ{}\leq2\sum_{i=1}^n\TV\PQ{\Pi_{i}\cup X_{i}}
    \]
    This concludes that Total Variation distance satisfies $2$-linear subadditivity on Bayes-nets.
\end{prevproof}

\subsection{Proof of \texorpdfstring{\cref{coro:linear-subadditivity-wp}}{Theorem \ref{coro:linear-subadditivity-wp}}}
\label{proof:linear-subadditivity-wp}

\begin{proof}
    If $\Sp$ is a finite (and therefore bounded) metric space, there exist two-way bounds between $p$-Wasserstein distance and Total Variation distance (see \cref{thm:wasserstein-ineq-tv} in \cref{appendix:subsec:wassserstein-formulas} for details), namely,
    \[
        \W_p\PQ{}^p/\diam(\Sp)^p\leq \TV\PQ{} \leq \W_p\PQ{}^p/d_{\min}^p
    \]
    where $\diam(\Sp)=\max\{d(x,y)|x,y\in\Sp\}$ is the diameter of the space $\Omega$ and $d_{\min}=\min_{x\neq y}d(x,y)$ is the smallest distance between pairs of distance points in $\Sp$. For $p\geq 1$, this directly implies the $(2^{1/p}\diam(\Sp)/d_{\min})$-linear subadditivity of $p$-Wasserstein distance on Bayes-nets on finite $\Sp$,
    \[
        \W_p(P, Q) \leq \frac{2^{1/p}\diam(\Sp)}{d_{\min}}\sum_{i=1}^n\W_p\PQ{X_i\cup X_{\Pi_i}}
    \]
    via the $2$-linear subadditivity of Total Variation distance (\cref{thm:linear-subadditivity-tv}).
\end{proof}

\subsection{Proof of \texorpdfstring{\cref{thm:subadditivity-ipm}}{Theorem \ref{thm:subadditivity-ipm}}}
\label{proof:subadditivity-ipm}

\begin{proof}
    For reference, we repeat the three conditions of the subadditivity of neural distances here:
    \begin{enumerate}[(1)]
        \item The space $\Sp$ is bounded, i.e. $\diam(\Sp)<\infty$.
        \item For any $i\in\{1,\cdots,n\}$, discriminator class $\cF_i$ is larger than the set of neural networks with a single neuron, which have ReLU activation and bounded parameters, i.e. $\cF_i\supseteq\{\max\{w^Tx+b,0\}\big|w\in\RR^{D_i}, b\in\RR, \|[w,b]\|_2=1\}$, where $D_i$ is the number of dimensions of variables $X_i\cup X_{\Pi_i}$.
        \item For any $i\in\{1,\cdots,n\}$, $\log(P_{X_i\cup X_{\Pi_i}}/Q_{X_i\cup X_{\Pi_i}})$ exists, and is bounded and Lipschitz continuous.
    \end{enumerate}
    
    For two distributions $P, Q$ and a set of discriminators $\cF$ satisfying all the three conditions, by \cref{thm:approximability-lipschitz} we know that for any $i\in\{1,\cdots,n\}$, $\log(P_{X_i\cup X_{\Pi_i}}/Q_{X_i\cup X_{\Pi_i}})$ is inside the closure of the linear span of $\cF_i$, i.e. $\log(P_{X_i\cup X_{\Pi_i}}/Q_{X_i\cup X_{\Pi_i}})\in\cl(\vspan\cF_i)$. Moreover, each $\log(P_{X_i\cup X_{\Pi_i}}/Q_{X_i\cup X_{\Pi_i}})$ is approximated by the corresponding $\cF_i$ with an error decay function, denoted by $\veps_i(r)$. Using \cref{thm:ipm-discriminative}, we upper-bound each Symmetric KL divergence between local marginals, $\SKL\PQ{X_i\cup X_{\Pi_i}}$, by a linear function of the corresponding neural distance $d_\cF\PQ{X_i\cup X_{\Pi_i}}$,
    \[
        \SKL\PQ{X_i\cup X_{\Pi_i}}\leq 2\veps_i(r)+rd_{\cF_i}\PQ{X_i\cup X_{\Pi_i}} \qquad \forall r\geq0, \forall i\in\{1,\cdots,n\}
    \]
    
    Because of the condition (3): each $\log(P_{X_i\cup X_{\Pi_i}}/Q_{X_i\cup X_{\Pi_i}})$ is bounded and Lipschitz continuous, there exists a constant $\eta_i>0$, such that,
    \[
        \left|\log(P_{X_i\cup X_{\Pi_i}}/Q_{X_i\cup X_{\Pi_i}})\right|<\eta_i
    \]
    and for any $x, y\in\Sp_i$ (which is the space of variables $X_i\cup X_{\Pi_i}$), it holds that,
    \[
        \left|\log(P_{X_i\cup X_{\Pi_i}}(x)/Q_{X_i\cup X_{\Pi_i}}(x))-\log(P_{X_i\cup X_{\Pi_i}}(y)/Q_{X_i\cup X_{\Pi_i}}(y))\right|\leq\frac{\eta_i}{\diam(\Sp_i)}\|x-y\|
    \]
    
    Again, by \cref{thm:approximability-lipschitz}, we get an efficient upper-bound on $\veps_i(r)$,
    \[
        \veps_i(r)\leq C(D_i)\eta_i\left(\frac{r}{\eta_i}\right)^{-\frac{2}{D_i+1}}\log\left(\frac{r}{\eta_i}\right) \qquad \forall r\geq R(D_i)>\ee^{\frac{D_i+1}{2}}\eta_i, \forall i\in\{1,\cdots,n\}
    \]
    where $C(D_i)$ and $R(D_i)$ are constants that only depend on the dimensionality, $D_i$, of variables $X_i\cup X_{\Pi_i}$. More specifically, $D_i=(k_i+1)d\leq (k_{\max}+1)d$, where $k_i$ is the in-degree of node $i$, $d$ is the dimensionality of each variable of the Bayes-nets, and $k_{\max}$ is the maximum in-degree of $G$.
    
    Because $C(D_i)$ and $R(D_i)$ are increasing functions of the dimensionality $D_i$, and for $r\geq R(D_i)>\ee^{\frac{D_i+1}{2}}\eta_i$, $\eta_i\left(r/\eta_i\right)^{-\frac{2}{D_i+1}}\log\left(r/\eta_i\right)$ is an increasing function of $\eta_i$, summing up the inequalities for all $i\in\{1,\cdots,n\}$ gives,
    \[
        \sum_{i=1}^n\veps_i(r)\leq nC(D_{\max})\eta_{\max}\left(\frac{r}{\eta_{\max}}\right)^{-\frac{2}{D_{\max}+1}}\log\left(\frac{r}{\eta_{\max}}\right) \qquad \forall r\geq R(D_{\max})
    \]
    where $D_{\max}=\max\{D_i\}=(k_{\max}+1)d$ and $\eta_{\max}=\max\{\eta_i\}$.
    
    Now, we sum up the inequalities $\SKL\PQ{X_i\cup X_{\Pi_i}}\leq 2\veps_i(r)+rd_\cF\PQ{X_i\cup X_{\Pi_i}}$ for $r\geq R(D_{\max})$ for all $i\in\{1,\cdots,n\}$. Because of the subadditivity of Symmetric KL divergence on Bayes-nets $P, Q$ (\cref{coro:subadditivity-jf}), we get,
    \[
        \SKL\PQ{}-2\sum_{i=1}^n\veps_i(r) \leq r\sum_{i=1}^n d_{\cF_i}\PQ{X_i\cup X_{\Pi_i}} \qquad \forall r\geq R(D_{\max})
    \]
    That is, the neural distances defined by $\cF_1, \ldots, \cF_n$ satisfy $r$-linear subadditivity for,
    \[
        r\geq R(D_{\max})
    \]
    with error,
    \[
        \eps=2\sum_{i=1}^n\veps_i(r)=\cO\left(n r^{-\frac{2}{D_{\max}+1}}\log{r}\right)
    \]
    with respect to the Symmetric KL divergence on Bayes-nets. 
    
    Note that $r$ and $\eps$ are constants independent of the Bayes-nets $P, Q$ and the sets of discriminator classes $\{\cF_1,\cdots,\cF_n\}$. And $D_{\max}=(k_{\max}+1)d$ where $k_{\max}$ is the maximum in-degree of $G$ and $d$ is the dimensionality of each variable of the Bayes-nets.
\end{proof}

\subsection{Proof of \texorpdfstring{\cref{thm:wasserstein-mrf}}{Theorem \ref{thm:wasserstein-mrf}}}
\label{proof:wasserstein-mrf}

\begin{proof}
    We first give a proof when condition (2) holds. For a pair of MRFs $P$ and $Q$ with the same factorization (thus with the same underlying graph $G$),
    \[
        P(x)=\prod_{C\in\cC}\fP^P_C(X_C) \qquad Q(x)=\prod_{C\in\cC}\fP^Q_C(X_C)
    \]
    The Symmetric KL divergence between $P$ and $Q$,
    \[
        \SKL\PQ{}\coloneqq\KL\PQ{}+\KL(Q, P)=\EE_{x\sim P}\left[\log(P/Q)\right]-\EE_{x\sim Q}\left[\log(P/Q)\right]
    \]
    can be decomposed into,
    \[
        \SKL\PQ{} = \sum_{C\in\cC}\left(\EE_{x_C\sim P_{X_C}}\left[\log(\fP^P_C/\fP^Q_C)\right]-\EE_{x_C\sim Q_{X_C}}\left[\log(\fP^P_C/\fP^Q_C)\right]\right)
    \]
    Where each term in the summation is upper-bounded by the $1$-Wasserstein distance between $P_{X_C}$ and $Q_{X_C}$ up to a constant factor,
    \[
        \begin{split}
            \EE_{x_C\sim P_{X_C}}\left[\log(\fP^P_C/\fP^Q_C)\right]&-\EE_{x_C\sim Q_{X_C}}\left[\log(\fP^P_C/\fP^Q_C)\right]\\
            &\leq \eta_C\W_1\PQ{X_C} \coloneqq \eta_C\sup_{\df\text{ $1$-Lipschitz}}\left\{\EE_{x_C\sim P_{X_C}}[\df(x)]-\EE_{x_C\sim Q_{X_C}}[\df(x)]\right\}\\
        \end{split}
    \]
    if $\log(\fP^P_C/\fP^Q_C)$ is Lipschitz continuous with Lipschitz constant $\eta_C$. Summing up the inequalities for all maximal cliques $C\in\cC$, we get,
    \[
        \SKL\PQ{} \leq \eta_{\max} \sum_{C\in\cC}\W_1\PQ{X_C} 
    \]
    where $\eta_{\max}=\max\{\eta_C|C\in\cC\}$ is the maximum Lipschitz constant. That is, $1$-Wasserstein distance satisfies $\eta_{\max}$-linear subadditivity with respect to the Symmetric KL Divergence on MRFs.
    
    We conclude the proof by showing that condition (1) implies condition (2). For a discrete and finite space $\Sp$, each $\log(\fP^P_C/\fP^Q_C)$ maps any configuration $x_C$ in $\Sp_C\subseteq\RR^{|C|d}$ (the space of variables $X_C$) to a real number, where $|C|$ is the size of clique $C$ and $d$ is the dimensionality of each variable of the MRFs. We can always extend the domain of $\log(\fP^P_C/\fP^Q_C)$ to $\RR^{|C|d}$, so that the extended function is Lipschitz continuous with Lipschitz constant,
    \[
        \eta_C = \max\left\{\frac{\left|\log(\fP^P_C(x_C^1)/\fP^Q_C(x_C^1))-\log(\fP^P_C(x_C^2)/\fP^Q_C(x_C^2))\right|}{\left\|x_C^1-x_C^2\right\|}\Bigg|x_C^1\neq x_C^2\in\Sp_C\right\}
    \]
    The rest of the proof follows from the proof above.
\end{proof}

\subsection{Proof of \texorpdfstring{\cref{coro:subadditivity-ipm-mrf}}{Theorem \ref{coro:subadditivity-ipm-mrf}}}
\label{proof:subadditivity-ipm-mrf}

\begin{proof}
    The proof is similar to the proof of \cref{thm:subadditivity-ipm} (in \cref{proof:subadditivity-ipm}) with a few differences. For a pair of MRFs $P$ and $Q$ with the same factorization (thus with the same underlying graph $G$), the Symmetric KL divergence between $P$ and $Q$ can be decomposed into,
    \[
        \SKL\PQ{} = \sum_{C\in\cC}\left(\EE_{x_C\sim P_{X_C}}\left[\log(\fP^P_C/\fP^Q_C)\right]-\EE_{x_C\sim Q_{X_C}}\left[\log(\fP^P_C/\fP^Q_C)\right]\right)
    \]
    
    For two distributions $P, Q$ and a set of discriminators $\cF$ satisfying all the three conditions, by \cref{thm:approximability-lipschitz} we know that for any $C\in\cC$, $\log(\fP^P_C/\fP^Q_C)$ is inside the closure of the linear span of $\cF_C$, i.e. $\log(\fP^P_C/\fP^Q_C)\in\cl(\vspan\cF_C)$. Moreover, each $\log(\fP^P_C/\fP^Q_C)$ is approximated by the corresponding $\cF_C$ with an error decay function, denoted by $\veps_C(r)$. Using \cref{thm:ipm-discriminative} and assign $g=\log(\fP^P_C/\fP^Q_C)$ (instead of $\log(P_{X_C}/Q_{X_C})$), we get,
    \[
        \EE_{x_C\sim P_{X_C}}\left[\log(\fP^P_C/\fP^Q_C)\right]-\EE_{x_C\sim Q_{X_C}}\left[\log(\fP^P_C/\fP^Q_C)\right] \leq 2\veps_C(r)+rd_{\cF_C}\PQ{X_C} \qquad \forall r\geq0, \forall C\in\cC
    \]
    
    Because of the condition (3): each $\log(\fP^P_C/\fP^Q_C)$ is bounded and Lipschitz continuous, there exists a constant $\eta_C>0$, such that $\left|\log(\fP^P_C/\fP^Q_C)\right|<\eta_C$, and for any $x, y\in\Sp_C$ (which is the space of variables $X_C$), it holds that $\left|\log(\fP^P_C(x)/\fP^Q_C(x))-\log(\fP^P_C(y)/\fP^Q_C(y))\right|\leq\frac{\eta_C}{\diam(\Sp_C)}\|x-y\|$.
    
    Again, by \cref{thm:approximability-lipschitz}, we get an efficient upper-bound on $\veps_C(r)$,
    \[
        \veps_C(r)\leq C(D_C)\eta_C\left(\frac{r}{\eta_C}\right)^{-\frac{2}{D_C+1}}\log\left(\frac{r}{\eta_C}\right) \qquad \forall r\geq R(D_C)>\ee^{\frac{D_C+1}{2}}\eta_C, \forall C\in\cC
    \]
    where $C(D_C)$ and $R(D_C)$ are constants that only depend on the dimensionality, $D_C$, of variables $X_C$. More specifically, $D_C=|C|d\leq c_{\max}d$, where $|C|$ is the size of clique $C$, $d$ is the dimensionality of each variable of the MRFs, and $c_{\max}=\max\{|C|\big|C\in\cC\}$ is the maximum size of the cliques in $G$.
    
    Because $C(D_C)$ and $R(D_C)$ are increasing functions of the dimensionality $D_C$, and for $r\geq R(D_C)>\ee^{\frac{D_C+1}{2}}\eta_C$, $\eta_C\left(r/\eta_C\right)^{-\frac{2}{D_C+1}}\log\left(r/\eta_C\right)$ is an increasing function of $\eta_C$, summing up the inequalities for all $C\in\cC$ gives,
    \[
        \sum_{C\in\cC}\veps_C(r)\leq |\cC|C(D_{\max})\eta_{\max}\left(\frac{r}{\eta_{\max}}\right)^{-\frac{2}{D_{\max}+1}}\log\left(\frac{r}{\eta_{\max}}\right) \qquad \forall r\geq R(D_{\max})
    \]
    where $|\cC|$ is the number of maximal cliques in $G$, $D_{\max}=\max\{D_C|C\in\cC\}=c_{\max}d$, and $\eta_{\max}=\max\{\eta_C|C\in\cC\}$.
    
    Now, we sum up the inequalities $\EE_{x_C\sim P_{X_C}}\left[\log(\fP^P_C/\fP^Q_C)\right]-\EE_{x_C\sim Q_{X_C}}\left[\log(\fP^P_C/\fP^Q_C)\right] \leq 2\veps_C(r)+rd_{\cF_C}\PQ{X_C}$ for $r\geq R(D_{\max})$ for all $C\in\cC$. Because of the decomposed form of the Symmetric KL divergence on MRFs $P, Q$, we get,
    \[
        \SKL\PQ{}-2\sum_{C\in\cC}\veps_C(r) \leq r\sum_{C\in\cC} d_{\cF_C}\PQ{X_C} \qquad \forall r\geq R(D_{\max})
    \]
    That is, the neural distances defined by $\{\cF_C|C\in\cC\}$ satisfy $r$-linear subadditivity for,
    \[
        r\geq R(D_{\max})
    \]
    with error,
    \[
        \eps=2\sum_{C\in\cC}\veps_C(r)=\cO\left(|\cC| r^{-\frac{2}{D_{\max}+1}}\log{r}\right)
    \]
    with respect to the Symmetric KL divergence on MRFs. 
    
    Note that $r$ and $\eps$ are constants independent of the MRFs $P, Q$ and the sets of discriminator classes $\{\cF_C|C\in\cC\}$. $|\cC|$ is the number of maximal cliques in $G$ and $D_{\max}=c_{\max}d$ where $c_{\max}=\max\{|C|\big|C\in\cC\}$ is the maximum size of the cliques in $G$ and $d$ is the dimensionality of each variable of the MRFs.
\end{proof}
\section{\texorpdfstring{$\gf$}{f}-Divergences and Inequalities}
\label{appendix:fdivs}

For two probability distributions $P$ and $Q$ on the same sample space $\Omega$, the $\gf$-divergence of $P$ from $Q$, denoted $\PD(P, Q)$, is defined as,
\[
    \PD(P, Q) \coloneqq \int_\Sp \gf\left(\frac{\mathrm{d}P}{\mathrm{d}Q}\right)\mathrm{d}Q
\]
If densities exist, $\PD(P, Q)=\int_\Sp \gf\left(\frac{P(x)}{Q(x)}\right)Q(x)\dx$. In this definition, the function $\gf: \RR_+ \to \RR$ is a convex, lower-semi-continuous function satisfying $\gf(1)=0$. We can define $\gf(0)=\lim_{t\downarrow0}\gf(t) \in \RR\cup\{\infty\}$. Every convex, lower semi-continuous function $\gf$ has a convex conjugate function $\gf^*$, defined as $\gf^*=\sup_{u\in\dom_\gf}\{ut-\gf(u)\}$.

\subsection{Common \texorpdfstring{$\gf$}{f}-Divergences}
\label{appendix:subsec:common-fdivs}

All commonly-used $\gf$-divergences are listed in \cref{tab:common-f-divs}.

\begin{table}[H]
    \centering
    \renewcommand{\arraystretch}{1.4}
    \begin{tabular}{lll}
        \toprule
        \textbf{Name} & \textbf{Notation} & \textbf{Generator} $\gf(t)$ \\ \midrule
        Kullback–Leibler & $\KL$ & $t\log(t)$ \\
        Reverse KL & $R\KL$ & $-\log(t)$ \\
        Symmetric KL & $\SKL$ & $(t-1)\log(t)$ \\
        Jensen-Shannon & $\JS$ & $\frac{t}{2}\log\frac{2t}{t+1}+\frac{1}{2}\log\frac{2}{t+1}$ \\
        Squared Hellinger & $\SH$ & $\frac12\left(\sqrt{t}-1\right)^2$ \\
        Total Variation & $\TV$ & $\frac12|t-1|$ \\
        Pearson $\CS$ & $\CS$ & $(t-1)^2$ \\
        Reverse Pearson $\CS$ & $R\CS$ & $\frac{1}{t}-t$ \\
        $\ap$-Divergence & $\HA$ & \begin{tabular}[c]{@{}l@{}}$\begin{cases} \frac{t^\ap-1}{\ap(\ap-1)} & \ap\neq0,1\\ t\ln t & \ap=1\\ -\ln t & \ap=0\\ \end{cases}$\end{tabular} \\
        \bottomrule
    \end{tabular}
    \vspace{10pt}
    \caption{List of common $\gf$-divergences with generator functions.}
    \label{tab:common-f-divs}
\end{table}

We always adopt the most widely-accepted definitions. Note the $\frac12$ coefficients in the definitions of squared Hellinger distance and Total Variation distance, in the spirit of normalizing their ranges to $[0, 1]$. 

The $\ap$-divergences $\HA$ ($\ap\in\RR$), popularized by \citep{liese2006divergences}, generalize many $\gf$-divergences including KL divergence, reverse KL divergence, $\CS$ divergence, reverse $\CS$ divergence, and Hellinger distances. More specifically, they satisfy the following relations: $\cH_1=\KL$, $\cH_0=R\KL$, $\cH_2=\frac12\CS$, $\cH_{-1}=\frac12R_{\CS}$, and $\cH_{\frac12}=4\SH$.

\subsection{Inequalities between \texorpdfstring{$\gf$}{f}-Divergences}
\label{appendix:subsec:fdiv-ineq}

First, we show a general approach to obtain inequalities between $\gf$-divergences. Then, we prove the inequalities between squared Hellinger distance and Jensen-Shannon divergence. 
We also list the well-known Pinsker's inequality for completeness.

\begin{lemma}
\label{lem:fdiv-general-ineq}
    Consider two $\gf$-divergences $D_{\gf_1}$ and $D_{\gf_2}$ with generator functions $\gf_1(\cdot)$ and $\gf_2(\cdot)$. If there exist two positive constants $0<A<B$, such that for any $t\in[0,\infty)$, it holds that,
    \[
        A\gf_2(t)\leq \gf_1(t)\leq B\gf_2(t)
    \]
    Then, for any two densities $P$ and $Q$ (such that $P\ll Q$), we have,
    \[
        AD_{\gf_2}\PQ{}\leq D_{\gf_1}\PQ{}\leq BD_{\gf_2}\PQ{}
    \]
\end{lemma}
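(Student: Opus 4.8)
The plan is to deduce the divergence inequality directly from the pointwise inequality between the generators, using nothing more than monotonicity of the integral. First I would recall the integral form $D_{\gf_j}\PQ{}=\int_{\Sp}\gf_j\!\left(P(x)/Q(x)\right)Q(x)\,\dx$ for $j=1,2$, which is legitimate because the densities exist and $P\ll Q$, so the likelihood ratio $P(x)/Q(x)$ is nonnegative and finite for $Q$-almost every $x$, i.e. it takes values in $[0,\infty)$ --- exactly the range over which the hypothesis $A\gf_2(t)\le\gf_1(t)\le B\gf_2(t)$ is assumed to hold.

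Next I would substitute $t=P(x)/Q(x)$ into the hypothesized inequality for each fixed $x$ with $Q(x)>0$, and multiply the resulting chain through by $Q(x)\ge 0$, which preserves all inequalities:
\[
A\,\gf_2\!\left(\frac{P(x)}{Q(x)}\right)Q(x)\;\le\;\gf_1\!\left(\frac{P(x)}{Q(x)}\right)Q(x)\;\le\;B\,\gf_2\!\left(\frac{P(x)}{Q(x)}\right)Q(x).
\]
On the set $\{x:Q(x)=0\}$, absolute continuity $P\ll Q$ forces $P(x)=0$ there as well, so all three integrands vanish on that set and it contributes nothing; hence the displayed chain holds at every point relevant to the integrals, under the standard conventions for $\gf$-divergences recalled at the beginning of \cref{appendix:fdivs} --- in particular the value $\gf_j(0)=\lim_{t\downarrow0}\gf_j(t)$ is used where $P(x)>0=Q(x)$, and note the hypothesis at $t=0$ pins down $A\gf_2(0)\le\gf_1(0)\le B\gf_2(0)$ accordingly.

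Finally I would integrate the chain over $\Sp$. Since $A,B>0$, each of $A\gf_2$, $\gf_1$, $B\gf_2$ is again a convex generator vanishing at $1$, so each of the three integrals is a bona fide $\gf$-divergence, valued in $[0,+\infty]$, and integration preserves the ordering (under the usual extended-real conventions, so the bounds remain valid even when $D_{\gf_2}\PQ{}=+\infty$). This gives $A\,D_{\gf_2}\PQ{}\le D_{\gf_1}\PQ{}\le B\,D_{\gf_2}\PQ{}$, as claimed. There is essentially no obstacle beyond this measure-theoretic bookkeeping at $\{Q=0\}$ and at $t=0$; the content of the lemma is entirely the pointwise comparison of generators, and its value is that it converts generator estimates (for instance, the comparison between the Jensen--Shannon and squared-Hellinger generators) into the corresponding divergence inequalities.
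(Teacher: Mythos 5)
Your proof is correct and follows essentially the same route as the paper's: substitute $t=P(x)/Q(x)$ into the pointwise bound $A\gf_2(t)\le\gf_1(t)\le B\gf_2(t)$, multiply by the nonnegative density $Q(x)$, and integrate, with $P\ll Q$ guaranteeing that the ratio lies in $[0,\infty)$ and that the set $\{Q=0\}$ contributes nothing. One small bookkeeping slip: the convention $\gf_j(0)=\lim_{t\downarrow0}\gf_j(t)$ is the one invoked where $P(x)=0<Q(x)$ (ratio equal to $0$), not where $P(x)>0=Q(x)$ --- that latter set is $Q$-null and, by absolute continuity, also $P$-null, so it plays no role in either your argument or the paper's.
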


\begin{proof}
    Note that we extend the domain of $\gf_1$ and $\gf_2$ by defining $\gf_1(0)=\lim_{t\downarrow0}\gf_1(t)$ (and similar for $\gf_2$). We require $P\ll Q$ so that $\gf$-divergences are well-defined. In this sense, for any $x\in\Sp$, $P(x)/Q(x)\in[0,\infty)$ is defined, and we have $A\gf_2(P(x)/Q(x))\leq\gf_1(P(x)/Q(x))\leq B\gf_2(P(x)/Q(x))$. Multiply non-negative $Q(x)$ and integrate over $\Sp$. We obtain the desired inequality: $AD_{\gf_2}\PQ{}\leq D_{\gf_1}\PQ{}\leq BD_{\gf_2}\PQ{}$.
\end{proof}

\begin{theorem}[Theorem 11 of \citep{sason2016f}]
\label{thm:fdiv-ineq-sh&js}
    For any two densities $P$ and $Q$, (assume natural logarithm is used in the definition of Jensen-Shannon divergence), we have
    \[
        (\ln2)\SH\PQ{}\leq\JS\PQ{}\leq\SH\PQ{}
    \]
\end{theorem}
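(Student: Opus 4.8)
The plan is to use \cref{lem:fdiv-general-ineq} to reduce the divergence inequality to a pointwise inequality between the generator functions. By \cref{lem:fdiv-general-ineq}, it suffices to show that for all $t\in[0,\infty)$,
\[
    (\ln2)\gf_{\SH}(t)\leq\gf_{\JS}(t)\leq\gf_{\SH}(t),
\]
where, from \cref{tab:common-f-divs}, $\gf_{\SH}(t)=\frac12(\sqrt{t}-1)^2$ and $\gf_{\JS}(t)=\frac{t}{2}\log\frac{2t}{t+1}+\frac12\log\frac{2}{t+1}$ (with natural logarithm). Both generators vanish at $t=1$, are non-negative and convex, so the comparison is a one-variable calculus problem.

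First I would handle the upper bound $\gf_{\JS}(t)\leq\gf_{\SH}(t)$. Define $h(t)\coloneqq\gf_{\SH}(t)-\gf_{\JS}(t)$; then $h(1)=0$, and I would show $h$ has a minimum at $t=1$ by checking $h'(1)=0$ and analyzing the sign of $h'$ (equivalently $h''\geq0$, i.e.\ convexity of $h$, which would follow from comparing $\gf_{\SH}''(t)=\frac{1}{4}t^{-3/2}$ with $\gf_{\JS}''(t)=\frac{1}{2t(t+1)}$; one checks $\frac{1}{4}t^{-3/2}\geq\frac{1}{2t(t+1)}$ iff $(t+1)\geq 2\sqrt{t}$, which is the AM--GM inequality). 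Hence $h$ is convex with a stationary point at $1$, so $h\geq0$ everywhere. For the lower bound $\gf_{\JS}(t)\geq(\ln2)\gf_{\SH}(t)$, I would similarly set $g(t)\coloneqq\gf_{\JS}(t)-(\ln2)\gf_{\SH}(t)$, note $g(1)=0$, $g'(1)=0$, and try to argue $g\geq0$. Here convexity fails in general (the constant $\ln2$ is exactly tuned so that the two functions agree to higher order somewhere, typically as $t\to0$ or $t\to\infty$), so a global convexity argument will not work directly; instead I would examine the behavior of $g$ at the endpoints $t\downarrow0$ and $t\to\infty$ (checking $g(0)=\frac12-\frac{\ln2}{2}>0$ and that $g(t)/t\to 0$ as $t\to\infty$ since $\gf_{\JS}(t)\sim\frac{t}{2}\log 2$ while $\gf_{\SH}(t)\sim\frac{t}{2}$), and then verify that $g$ has no interior zero other than $t=1$ by studying the sign changes of $g'$.

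The main obstacle is the lower bound, i.e.\ pinning down that the constant $\ln2$ is sharp and that $g(t)=\gf_{\JS}(t)-(\ln2)\gf_{\SH}(t)\geq0$ on all of $(0,\infty)$ with equality only at $t=1$; unlike the upper bound this does not reduce to a clean elementary inequality like AM--GM, so it requires a careful monotonicity/asymptotics argument on $g'$, possibly after a substitution such as $t=u^2$ to rationalize the $\sqrt{t}$ terms. Since this is Theorem 11 of \citep{sason2016f}, I would ultimately cite that reference for the tight analysis rather than reproduce the full endpoint casework, and present only the reduction via \cref{lem:fdiv-general-ineq} together with the AM--GM-based proof of the upper bound and a sketch of the lower bound.
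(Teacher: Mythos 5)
Your reduction via \cref{lem:fdiv-general-ineq} to the pointwise generator inequality $(\ln2)\gf_{\SH}(t)\leq\gf_{\JS}(t)\leq\gf_{\SH}(t)$ is exactly the paper's first step, and your proof of the \emph{upper} bound is correct and in fact more elementary than the paper's: since $\gf_{\JS}'(t)=\frac12\ln\frac{2t}{t+1}$, one has $\gf_{\JS}''(t)=\frac{1}{2t(t+1)}\leq\frac14t^{-3/2}=\gf_{\SH}''(t)$, which is equivalent to $t+1\geq2\sqrt{t}$ (AM--GM); as both generators and their first derivatives vanish at $t=1$, the difference $\gf_{\SH}-\gf_{\JS}$ is convex with a stationary zero at $1$ and hence nonnegative. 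The paper instead proves both bounds simultaneously by analyzing the quotient $\xi(t)=\gf_{\JS}(t)/\gf_{\SH}(t)$, so on this half your route is genuinely different and cleaner.

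The lower bound, however, is a real gap, and the sketch you give would fail as stated. Your endpoint computation is wrong: $\gf_{\JS}(0)=\frac12\ln2$ and $\gf_{\SH}(0)=\frac12$, so $g(0)=\gf_{\JS}(0)-(\ln2)\gf_{\SH}(0)=0$, not $\frac12-\frac{\ln2}{2}$ (that is the value of your $h(0)$ from the other bound). The constant $\ln2$ is sharp precisely because the ratio $\gf_{\JS}/\gf_{\SH}$ tends to $\ln2$ both as $t\downarrow0$ and as $t\to\infty$, so a plan of the form ``endpoints strictly positive plus no interior zero other than $t=1$'' cannot be carried out; what is needed is a monotonicity statement for the ratio itself. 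This is exactly what the paper supplies: it shows, by a chain of three derivative-sign computations (the innermost using $\ln x>1-\frac1x$), that $\xi(t)$ is strictly increasing on $[0,1)$ and strictly decreasing on $(1,\infty)$, and then evaluates $\lim_{t\downarrow0}\xi=\lim_{t\to\infty}\xi=\ln2$ and $\lim_{t\to1}\xi=1$, which yields $(\ln2)\gf_{\SH}\leq\gf_{\JS}\leq\gf_{\SH}$ in one stroke. Citing Theorem 11 of \citet{sason2016f} is legitimate, but as a proof attempt your argument establishes only the upper half of the claimed two-sided inequality; to match the paper you would need either the quotient-monotonicity analysis above or some substitute for it on the lower side.
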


\begin{proof}
    Given \cref{lem:fdiv-general-ineq}, we only need to prove that for any $t\in[0,\infty)$, the following inequality holds,
    \[
        (\ln2)\gf_{\SH}(t)\leq \gf_{\JS}(t)\leq \gf_{\SH}(t)
    \]
    where the definitions of $\gf_{\SH}$ and $\gf_{\JS}$ can be found in \cref{tab:common-f-divs}. 
    
    Note that when $t=1$, all terms are $0$ and the inequalities hold trivially. For $t\neq1$, as $\gf_{\SH}(t)>0$, we define,
    \[
        \xi(t)=\frac{\gf_{\JS}(t)}{\gf_{\SH}(t)}=\frac{t\ln\frac{2t}{t+1}+\ln\frac{2}{t+1}}{\left(\sqrt{t}-1\right)^2}
    \]
    $\xi(t)$ is defined on $[0,1)\cup(1,\infty)$, We want to prove that $\ln2\leq\xi(t)\leq1$ always holds. Its derivative is,
    \[
        \xi'(t)=\frac{\sqrt{t}\ln\frac{2t}{t+1}+\ln\frac{2}{t+1}}{\sqrt{t}\left(1-\sqrt{t}\right)^3}
    \]
    Denote the numerator above by $\xi_{(1)}(t)$. Its derivative is,
    \[
        \xi_{(1)}'(t)=\frac{(t+1)\ln\frac{2t}{t+1}+2\left(1-\sqrt{t}\right)}{2\sqrt{t}(t+1)}
    \]
    Again, denote the numerator above by $\xi_{(2)}(t)$. Its derivative is,
    \[
        \xi_{(2)}'(t)=\frac{1}{t}-\frac{1}{\sqrt{t}}+\ln\frac{2t}{t+1}
    \]
    Using the well-known logarithm inequality: for any $x>0$, $\ln{x}>1-\frac{1}{x}$, we have,
    \[
        \xi_{(2)}'(t)\geq\frac{1}{t}-\frac{1}{\sqrt{t}}+1-\frac{t+1}{2t}=\frac{\left(\sqrt{t}-1\right)^2}{2t}\geq0
    \]
    Also, since $\xi_{(2)}(1)=0$, and the denominator of $\xi_{(1)}'(t)$ is always positive, hence,
    \[
        \xi_{(1)}'(t)
        \begin{cases}
            <0 & t\in[0,1)\\
            >0 & t\in(1,\infty)\\
        \end{cases}
    \]
    Because $\xi_{(1)}(1)=0$, this implies $\xi_{(1)}(t)\geq0$. Thus,
    \[
        \xi'(t)
        \begin{cases}
            >0 & t\in[0,1)\\
            <0 & t\in(1,\infty)\\
        \end{cases}
    \]
    That is, $\xi(t)$ is strictly increasing on $[0,1)$, and is strictly decreasing on $(1,\infty)$. To determine its range, we only need to compute these limits: $\lim_{t\downarrow0}\xi(t)$, $\lim_{t\uparrow1}\xi(t)$, $\lim_{t\downarrow1}\xi(t)$, and $\lim_{t\to+\infty}\xi(t)$:
    \[
        \begin{split}
            &\lim_{t\downarrow0}\xi(t)=\ln2\\
            &\lim_{t\uparrow1}\xi(t)=\lim_{t\downarrow1}\xi(t)=\lim_{t\to1}\frac{\sqrt{t}\ln\frac{2t}{t+1}}{\sqrt{t}-1}=\lim_{t\to1}\frac{2\sqrt{t^3}}{t+1}=1\\
            &\lim_{t\to+\infty}\xi(t)=\lim_{t\to+\infty}\frac{t\ln\frac{2t}{t+1}}{\left(\sqrt{t}-1\right)^2}=\lim_{t\to+\infty}\frac{\ln\frac{2t}{t+1}+\frac{1}{t+1}}{\frac{\sqrt{t}-1}{\sqrt{t}}}=\ln2\\
        \end{split}
    \]
    Together with the monotonic properties of $\xi(t)$, we know
    \[
        \ln2\leq \xi(t)\leq1
    \]
\end{proof}

\begin{theorem}[Pinsker's Inequality, Eq. (1) of \citep{sason2016f}]
\label{thm:fdiv-pinskers-ineq}
    For any two densities $P$ and $Q$, we have,
    \[
        \TV\PQ{}\leq\sqrt{\frac12\KL\PQ{}}
    \]
\end{theorem}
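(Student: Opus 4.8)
The plan is to give a self-contained proof via a Cauchy--Schwarz argument with a carefully chosen weight, reducing everything to a one-variable elementary inequality. Write $t(x)=P(x)/Q(x)$ and assume $P\ll Q$ (otherwise $\KL\PQ{}=\infty$ and there is nothing to prove). Since $\TV\PQ{}=\frac12\int|P-Q|\dx$ and, using $\int(t-1)Q\dx=0$, also $\KL\PQ{}=\int(t\log t-t+1)Q\dx$, the target inequality $\TV\PQ{}\le\sqrt{\tfrac12\KL\PQ{}}$ is equivalent to $\big(\int|P-Q|\dx\big)^2\le 2\KL\PQ{}$, which is the form I would actually prove.

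First I would establish the elementary inequality: for all $t\ge 0$,
\[
    3(t-1)^2\le(2t+4)(t\log t-t+1).
\]
Let $h(t)$ denote the difference (right minus left), with the convention $t\log t\to0$ as $t\downarrow0$. Then $h(1)=0$, $h'(t)=4(t+1)\log t-8(t-1)$ with $h'(1)=0$, and $h''(t)=4\big(\log t+1/t-1\big)\ge0$ by the standard bound $\log t\ge 1-1/t$. Hence $h'$ is nondecreasing, so $h$ attains its global minimum at $t=1$, giving $h\ge0$ on $[0,\infty)$.

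Next I would apply the Cauchy--Schwarz inequality with the weight $w(x)=\tfrac23 t(x)+\tfrac43>0$:
\[
    \Big(\int|P-Q|\dx\Big)^2=\Big(\int\tfrac{|t-1|\sqrt Q}{\sqrt w}\cdot\sqrt{w}\sqrt Q\,\dx\Big)^2\le\Big(\int\tfrac{(t-1)^2}{w}\,Q\dx\Big)\Big(\int w\,Q\dx\Big).
\]
The second factor equals $\tfrac23\int P\dx+\tfrac43\int Q\dx=2$. In the first factor, $(t-1)^2/w=3(t-1)^2/(2t+4)\le t\log t-t+1$ by the elementary inequality above, so $\int\tfrac{(t-1)^2}{w}Q\dx\le\int(t\log t-t+1)Q\dx=\KL\PQ{}$. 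Combining the two bounds yields $\big(\int|P-Q|\dx\big)^2\le 2\KL\PQ{}$, i.e. $(2\TV\PQ{})^2\le 2\KL\PQ{}$, which is the claim.

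The only genuinely nontrivial step is the scalar inequality, and even that collapses to the elementary fact $\log t\ge1-1/t$; everything else is bookkeeping, so I do not anticipate a real obstacle. For completeness I would also mention the classical alternative route: take $A=\{x:P(x)>Q(x)\}$ so that $\TV\PQ{}=P(A)-Q(A)$, apply the data-processing inequality for KL under the binary partition $\{A,A^c\}$ to obtain $\KL\PQ{}\ge d\big(P(A)\,\|\,Q(A)\big)$ for the binary KL divergence $d$, and then prove the two-point Pinsker bound $d(p\|q)\ge 2(p-q)^2$ by a sign analysis of $g(q)=d(p\|q)-2(p-q)^2$, using $g(p)=0$, $g'(q)=(p-q)\big(4-\tfrac1{q(1-q)}\big)$, and $q(1-q)\le\tfrac14$.
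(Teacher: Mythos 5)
Your proof is correct, and it is worth noting that the paper itself does not prove this statement at all: it records Pinsker's inequality as a known result and defers to Theorem 2.16 of Massart's book, in contrast with its habit elsewhere in the appendix of reproducing cited proofs for completeness. Your weighted Cauchy--Schwarz argument is a genuine, self-contained alternative, and I checked its two ingredients. The scalar inequality $3(t-1)^2\le(2t+4)(t\log t-t+1)$ holds on $[0,\infty)$ exactly as you argue: with $h$ the difference, $h(1)=0$, $h'(t)=4(t+1)\log t-8(t-1)$ vanishes at $t=1$, and $h''(t)=4\left(\log t+\tfrac1t-1\right)\ge0$, so $t=1$ is the global minimum; the endpoint value $h(0)=1\ge0$ is also fine under the convention $t\log t\to0$. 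The Cauchy--Schwarz step with weight $w=\tfrac23t+\tfrac43$ then gives $\left(\int|P-Q|\dx\right)^2\le\left(\int\tfrac{3(t-1)^2}{2t+4}Q\dx\right)\cdot 2\le 2\KL\PQ{}$, since $\int wQ\dx=\tfrac23+\tfrac43=2$ and $\KL\PQ{}=\int(t\log t-t+1)Q\dx$; together with the reduction to the case $P\ll Q$ (otherwise $\KL\PQ{}=\infty$) this yields $4\,\TV\PQ{}^2\le2\KL\PQ{}$, which is the claim. Your sketched second route (reduce to the two-point case via the set $A=\{P>Q\}$ and data processing, then show $d(p\|q)\ge2(p-q)^2$ by the sign analysis of $g'(q)=(p-q)\left(4-\tfrac{1}{q(1-q)}\right)$) is the more classical textbook argument and is also sound; what your primary argument buys over the citation route taken by the paper is a short, fully self-contained derivation that uses nothing beyond $\log t\ge1-1/t$, at the cost of the somewhat unmotivated choice of weight.
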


It is a well-known result. See for example Theorem 2.16 of \citep{massart2007concentration} for a proof.

\section{Wasserstein Distances: Formulas and Inequalities}
\label{appendix:wasserstein}

Suppose $\Omega$ is a metric space with distance $d(\cdot,\cdot)$. The $p$-Wasserstein distance $\W_p$ is defined as,
\[
    \W_p\PQ{} \coloneqq \left(\inf_{\gamma\in\Gamma\PQ{}}\int_{\Sp\times\Sp}d(x,y)^p\mathrm{d}\gamma(x,y)\right)^{\frac{1}{p}}
\]
where $\gamma\in\Gamma\PQ{}$ denotes the set of all possible couplings of $P$ and $Q$.

\subsection{Formulas for Wasserstein Distances}
\label{appendix:subsec:wassserstein-formulas}

We list the algorithm and the formula to calculate the Wasserstein distance when space $\Sp$ is finite or the distributions $P$ and $Q$ are Gaussians.

\begin{theorem}
\label{thm:wp-formula-discrete}
    For any two discrete distributions $P, Q$ on a finite space $\Sp=\{\mathbf{x}_1,\cdots,\mathbf{x}_n\}$, the $p$-Wasserstein distance $\W_p$ can be computed by the following linear program:
    \begin{equation*}
        \begin{array}{@{}lr@{}l@{}l@{}}
        \W_p\PQ{}^p= & \min\text{ } & \sum_{i=1}^n\sum_{j=1}^n d^p(\mathbf{x}_i, \mathbf{x}_j)\pi_{ij} & \\
                     &\text{subject to } & \sum_{j=1}^n\pi_{ij}=P(\mathbf{x}_i) & \quad i=1,\cdots,n \\
                     &                   & \sum_{i=1}^n\pi_{ij}=Q(\mathbf{x}_j) & \quad j=1,\cdots,n \\
                     &\text{and }        & \pi_{ij}>0 & \quad i=1,\cdots,n \text{ and } j=1,\cdots,n \\
        &\end{array}
    \end{equation*}
\end{theorem}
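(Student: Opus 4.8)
The plan is to translate the Kantorovich definition of $\W_p$ directly into a finite-dimensional linear program, exploiting that on a finite sample space every coupling is simply a nonnegative matrix with prescribed row and column sums. First I would recall that $\W_p\PQ{}^p=\inf_{\gamma\in\Gamma\PQ{}}\int_{\Sp\times\Sp}d(x,y)^p\,\mathrm{d}\gamma(x,y)$, where $\Gamma\PQ{}$ is the set of couplings of $P$ and $Q$. Since $\Sp=\{\mathbf{x}_1,\dots,\mathbf{x}_n\}$ is finite, a probability measure $\gamma$ on $\Sp\times\Sp$ is in bijective correspondence with the array of its point masses $\pi_{ij}\coloneqq\gamma(\{(\mathbf{x}_i,\mathbf{x}_j)\})$, which are nonnegative and sum to one. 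I would then verify that $\gamma$ has first marginal $P$ and second marginal $Q$ exactly when $\sum_{j}\pi_{ij}=P(\mathbf{x}_i)$ for all $i$ and $\sum_{i}\pi_{ij}=Q(\mathbf{x}_j)$ for all $j$; these marginal equalities already force $\sum_{i,j}\pi_{ij}=1$, so the total-mass constraint is redundant. Hence $\gamma\mapsto(\pi_{ij})$ is a bijection from $\Gamma\PQ{}$ onto the feasible region of the stated program.

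Second, I would rewrite the transport cost: for $\gamma$ supported on the finite set $\Sp\times\Sp$, the integral collapses to the finite sum $\int d(x,y)^p\,\mathrm{d}\gamma(x,y)=\sum_{i=1}^n\sum_{j=1}^n d^p(\mathbf{x}_i,\mathbf{x}_j)\,\pi_{ij}$, which is precisely the linear objective of the program. Combining this with the bijection above yields that $\W_p\PQ{}^p$ equals the infimum of that linear objective over the transportation polytope.

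Third, to justify writing $\min$ rather than $\inf$, I would observe that this polytope is nonempty --- the independent coupling $\pi_{ij}=P(\mathbf{x}_i)Q(\mathbf{x}_j)$ always belongs to it --- and is a bounded, closed subset of $\RR^{n^2}$, since it is contained in the probability simplex, hence compact; a continuous, in particular linear, function attains its minimum on a nonempty compact set.

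I do not expect a genuine obstacle: the argument is a routine identification of abstract couplings with matrices having fixed margins. The two points that merit a sentence of care are the surjectivity of that identification (every feasible $\pi$ does define a legitimate coupling, so the LP does not accidentally enlarge the feasible set) and the attainment claim, which is what permits writing $\min$ in the statement; it is also cleanest to phrase the nonnegativity constraint as $\pi_{ij}\ge 0$, since a strict inequality would render the feasible set non-closed and could jeopardize attainment.
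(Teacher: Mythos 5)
Your proposal is correct. The paper itself gives no proof of this statement --- it simply points the reader to a reference for ``useful discussions'' --- so there is no argument in the paper to compare against; your write-up supplies the standard and entirely adequate proof: on a finite space the couplings $\Gamma\PQ{}$ are exactly the matrices $(\pi_{ij})$ with row sums $P(\mathbf{x}_i)$ and column sums $Q(\mathbf{x}_j)$, the transport integral collapses to the linear objective $\sum_{i,j}d^p(\mathbf{x}_i,\mathbf{x}_j)\pi_{ij}$, and the minimum is attained because the transportation polytope is a nonempty compact subset of $\RR^{n^2}$ (nonempty via the product coupling) on which a linear function attains its infimum.

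Your closing remark is a genuine catch rather than a pedantic one: as stated, the constraint $\pi_{ij}>0$ should read $\pi_{ij}\geq 0$. With the strict inequality the feasible set is not closed, and optimal transport plans typically lie on the boundary of the polytope (many entries equal to zero; indeed, if $P$ or $Q$ assigns zero mass to some point, no strictly positive feasible $\pi$ exists at all), so the value would have to be an infimum rather than a minimum and could fail to be attained. With $\pi_{ij}\geq 0$ your compactness argument goes through and the ``$\min$'' in the statement is justified.
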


Useful discussions can be found in \citep{oberman2015efficient}.

\begin{theorem}
\label{thm:w2-formula-gaussian}
    For any two non-degenerate Gaussians $P=\cN(m_1, C_1)$ and $Q=\cN(m_2, C_2)$ on $\RR^n$, with respective means $m_1, m_2\in\RR^n$ and (symmetric positive semi-definite) covariance matrices $C_1, C_2\in\RR^{n\times n}$. The square of 2-Wasserstein distance $\W_2$ between $P, Q$ is,
    \[
        \W_2\PQ{}^2=\|m_1-m_2\|_2^2+\trace\left(C_1+C_2-2\left(C_2^{1/2}C_1C_2^{1/2}\right)^{1/2}\right)
    \]
    where $\|\cdot\|_2$ is the Euclidean norm.
\end{theorem}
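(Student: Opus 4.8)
The plan is to reduce the optimal-transport problem to a finite-dimensional semidefinite optimization over cross-covariance matrices, solve that optimization explicitly via the Schur complement and von Neumann's trace inequality, and exhibit a linear coupling attaining the optimum. (This is a classical identity, and one could alternatively just cite it; the route below is self-contained.) \textbf{Reduction to the centered case.} For any coupling $\gamma\in\Gamma(P,Q)$ with coordinates $X\sim P$, $Y\sim Q$, write $X=m_1+\tilde X$ and $Y=m_2+\tilde Y$ with $\EE[\tilde X]=\EE[\tilde Y]=\zV$. Expanding the square,
\[
\EE_\gamma\|X-Y\|_2^2=\|m_1-m_2\|_2^2+2(m_1-m_2)^\top\EE_\gamma[\tilde X-\tilde Y]+\EE_\gamma\|\tilde X-\tilde Y\|_2^2=\|m_1-m_2\|_2^2+\EE_\gamma\|\tilde X-\tilde Y\|_2^2,
\]
since the cross term vanishes. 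Taking the infimum over $\gamma$, it suffices to prove the formula for centered Gaussians $\cN(\zV,C_1)$, $\cN(\zV,C_2)$, i.e.\ $\inf_\gamma\EE_\gamma\|X-Y\|_2^2=\trace\big(C_1+C_2-2(C_2^{1/2}C_1C_2^{1/2})^{1/2}\big)$.

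\textbf{From couplings to cross-covariances.} For a centered coupling $\gamma$ with cross-covariance $K\coloneqq\EE_\gamma[XY^\top]$ we have $\EE_\gamma\|X-Y\|_2^2=\trace(C_1)+\trace(C_2)-2\trace(K)$, and the joint covariance $\left(\begin{smallmatrix}C_1 & K\\ K^\top & C_2\end{smallmatrix}\right)$ is positive semidefinite; conversely, any $K$ making that block matrix PSD is realized by the corresponding centered Gaussian. Hence minimizing $\EE_\gamma\|X-Y\|_2^2$ reduces to $\max\{\trace(K):\left(\begin{smallmatrix}C_1 & K\\ K^\top & C_2\end{smallmatrix}\right)\succeq0\}$.

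\textbf{Solving and attaining the optimum.} Since $P,Q$ are non-degenerate, $C_1,C_2\succ0$; congruence by $\mathrm{diag}(C_1^{-1/2},C_2^{-1/2})$ together with the Schur-complement criterion shows the PSD constraint is equivalent to $\|C_1^{-1/2}KC_2^{-1/2}\|_{\mathrm{op}}\le1$. Writing $M=C_1^{-1/2}KC_2^{-1/2}$, so $K=C_1^{1/2}MC_2^{1/2}$ with every singular value of $M$ at most $1$, von Neumann's trace inequality gives
\[
\trace(K)=\trace(C_2^{1/2}C_1^{1/2}M)\le\sum_i\sigma_i(C_2^{1/2}C_1^{1/2})\,\sigma_i(M)\le\sum_i\sigma_i(C_2^{1/2}C_1^{1/2})=\trace\big((C_2^{1/2}C_1C_2^{1/2})^{1/2}\big).
\]
Equality is attained by the linear coupling $Y=TX$, $X\sim P$, with $T=C_1^{-1/2}(C_1^{1/2}C_2C_1^{1/2})^{1/2}C_1^{-1/2}$, which is symmetric PSD and satisfies $TC_1T=C_2$ (hence $Y\sim Q$) and $\trace(\EE[XY^\top])=\trace(C_1T)=\trace\big((C_1^{1/2}C_2C_1^{1/2})^{1/2}\big)=\trace\big((C_2^{1/2}C_1C_2^{1/2})^{1/2}\big)$. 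Substituting into the previous step and restoring the mean term yields the claimed identity.

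\textbf{Main obstacle.} The only nontrivial point is the matrix optimization in the third step: converting the coupling constraint into the operator-norm bound $\|C_1^{-1/2}KC_2^{-1/2}\|_{\mathrm{op}}\le1$ via the Schur complement, and then using von Neumann's trace inequality to get a \emph{tight} bound on $\trace(K)$. The remaining ingredients---the mean/variance split, the second-moment expansion, and the identities $TC_1T=C_2$ and $\trace(C_1T)=\trace((C_1^{1/2}C_2C_1^{1/2})^{1/2})$, which follow from cyclicity of the trace and the commutation of a PSD matrix with its square root---are routine bookkeeping.
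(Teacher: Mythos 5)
Your proof is correct. Note, however, that the paper does not prove \cref{thm:w2-formula-gaussian} at all: it simply cites \citet{olkin1982distance} for this classical identity, so any self-contained argument is necessarily a different route from the paper's. Your derivation is sound at every step: the mean/centering split is valid because the cross term has zero expectation; passing from couplings to cross-covariances is legitimate since the objective depends only on second moments and every admissible block-PSD matrix is realized by a Gaussian coupling; the congruence by $\mathrm{diag}(C_1^{-1/2},C_2^{-1/2})$ plus the Schur complement correctly turns the constraint into $\|C_1^{-1/2}KC_2^{-1/2}\|_{\mathrm{op}}\le 1$ (using non-degeneracy, so $C_1,C_2\succ 0$); von Neumann's inequality with all $\sigma_i(M)\le 1$ gives $\trace(K)\le\trace\big((C_2^{1/2}C_1C_2^{1/2})^{1/2}\big)$ since $\sum_i\sigma_i(C_2^{1/2}C_1^{1/2})$ is exactly that trace; and the map $T=C_1^{-1/2}(C_1^{1/2}C_2C_1^{1/2})^{1/2}C_1^{-1/2}$ indeed satisfies $TC_1T=C_2$ and attains the bound, with $\trace\big((C_1^{1/2}C_2C_1^{1/2})^{1/2}\big)=\trace\big((C_2^{1/2}C_1C_2^{1/2})^{1/2}\big)$ because $C_1^{1/2}C_2C_1^{1/2}$ and $C_2^{1/2}C_1C_2^{1/2}$ are of the form $A^\top A$ and $AA^\top$. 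Compared with the original Olkin--Pukelsheim argument, which also reduces to maximizing $\trace(K)$ over admissible cross-covariances but resolves the extremal problem by a direct matrix-analytic/Lagrangian computation, your Schur-complement-plus-von-Neumann treatment is arguably cleaner and has the added benefit of exhibiting the optimal (linear, Gaussian) coupling explicitly, which certifies both the upper and lower bounds in one stroke.
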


See \citep{olkin1982distance} for a proof.

\subsection{Inequalities between \texorpdfstring{$p$}{p}-Wasserstein Distance and Total Variation Distance}
\label{appendix:subsec:wasserstein-ineq-tv}

Both Wasserstein distances and Total Variation distance can be regarded as optimal transportation costs. More specifically, 
\[
    \begin{split}
        \W_p\PQ{}&\coloneqq\left(\inf_{\gamma\in\Gamma\PQ{}}\int_{\Sp\times\Sp}d(x,y)^p\mathrm{d}\gamma(x,y)\right)^{\frac{1}{p}}\\
        \TV\PQ{}&\coloneqq\inf_{\gamma\in\Gamma\PQ{}}\int_{\Sp\times\Sp}\mathbf{1}_{x\neq y}\mathrm{d}\gamma(x,y)
    \end{split}
\]
where $\Gamma\PQ{}$ denotes the set of all measures on $\Sp\times\Sp$ with marginals $P$ and $Q$ on variable $x$ and $y$ respectively, (also called the set of all possible couplings of $P$ and $Q$). Bounding the distance $d(x,y)$ directly leads to inequalities between $p$-Wasserstein distance and Total Variation distance.

\begin{theorem}
\label{thm:wasserstein-ineq-tv}
    For any two distributions $P$ and $Q$ on a space $\Sp$, if $\Sp$ is bounded with diameter $\diam(\Sp)=\max\{d(x,y)|x,y\in\Sp\}$, then,
    \[
        \W_p\PQ{}^p\leq \diam(\Sp)^p \TV\PQ{}
    \]
    Moreover, if $\Sp$ is finite, let $d_{\min}=\min_{x\neq y}d(x,y)$ be the minimum mutual distance between pairs of distinct points in $\Sp$, then,
    \[
        \W_p\PQ{}^p\geq d^p_{\min}\TV\PQ{}
    \]
\end{theorem}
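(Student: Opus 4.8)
The plan is to exploit the optimal-transport (coupling) representations of both quantities, which are recalled just above the statement: $\W_p\PQ{}^p = \inf_{\gamma\in\Gamma\PQ{}} \int_{\Sp\times\Sp} d(x,y)^p \,\mathrm{d}\gamma(x,y)$ and $\TV\PQ{} = \inf_{\gamma\in\Gamma\PQ{}} \int_{\Sp\times\Sp} \mathbf{1}_{x\neq y}\,\mathrm{d}\gamma(x,y)$, the latter being the standard coupling characterization of total variation distance. Since both quantities are infima over the \emph{same} feasible set $\Gamma\PQ{}$ of measures on $\Sp\times\Sp$ with marginals $P$ and $Q$, each inequality will follow from a pointwise comparison of the two cost functions $(x,y)\mapsto d(x,y)^p$ and $(x,y)\mapsto\mathbf{1}_{x\neq y}$, followed by integration against a coupling and a single infimum.

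For the upper bound, observe that for every $(x,y)\in\Sp\times\Sp$ we have $d(x,y)^p \leq \diam(\Sp)^p\,\mathbf{1}_{x\neq y}$: when $x=y$ both sides vanish, and when $x\neq y$ the left side is at most $\diam(\Sp)^p$ by definition of the diameter. Integrating against an arbitrary coupling $\gamma\in\Gamma\PQ{}$ gives $\int d(x,y)^p\,\mathrm{d}\gamma \leq \diam(\Sp)^p \int \mathbf{1}_{x\neq y}\,\mathrm{d}\gamma$; since the left side is bounded below by $\W_p\PQ{}^p$, taking the infimum over $\gamma$ on the right side yields $\W_p\PQ{}^p \leq \diam(\Sp)^p\,\TV\PQ{}$. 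Note that no existence of an optimal coupling is needed here, only the two variational representations.

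For the lower bound, assume $\Sp$ is finite, so that $d_{\min} = \min_{x\neq y} d(x,y) > 0$. Then for every $(x,y)$ we have $d(x,y)^p \geq d_{\min}^p\,\mathbf{1}_{x\neq y}$ (again trivial when $x=y$, and by minimality of $d_{\min}$ when $x\neq y$). Integrating against any $\gamma\in\Gamma\PQ{}$ gives $\int d(x,y)^p\,\mathrm{d}\gamma \geq d_{\min}^p \int \mathbf{1}_{x\neq y}\,\mathrm{d}\gamma \geq d_{\min}^p\,\TV\PQ{}$, where the last step uses that $\TV\PQ{}$ is the infimum of $\int \mathbf{1}_{x\neq y}\,\mathrm{d}\gamma$ over $\Gamma\PQ{}$. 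Since this holds for every coupling, taking the infimum over $\gamma$ on the left side gives $\W_p\PQ{}^p \geq d_{\min}^p\,\TV\PQ{}$.

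There is no serious obstacle in this argument; the only points requiring care are to invoke the coupling representation of $\TV$ (standard, and already stated in the excerpt) and to be attentive to \emph{on which side} of each inequality the infimum over $\Gamma\PQ{}$ is taken, so that both bounds are obtained cleanly without assuming the existence of a minimizing coupling. The finiteness hypothesis in the second part is used solely to guarantee $d_{\min}>0$, which is exactly what makes the reverse pointwise bound $d(x,y)^p \geq d_{\min}^p \mathbf{1}_{x\neq y}$ nontrivial.
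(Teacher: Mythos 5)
Your proof is correct and follows essentially the same route as the paper: both arguments compare the cost functions $d(x,y)^p$ and $\mathbf{1}_{x\neq y}$ pointwise (bounding $d(x,y)$ by $\diam(\Sp)$ above and by $d_{\min}$ below off the diagonal) within the shared coupling representations of $\W_p^p$ and $\TV$. If anything, you are slightly more careful than the paper about on which side of each inequality the infimum over couplings is taken, which is a welcome precision but not a different method.
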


\begin{proof}
    This theorem is a generalization of Theorem 4 of \citep{gibbs2002choosing}. Since $d(\cdot,\cdot)$ is a metric of space $\Sp$, $d(x,y)=0$ if and only if $x=y$. Thus $d(x,y)\equiv d(x,y)\mathbf{1}_{x\neq y}$, and we have,
    \[
        \W_p\PQ{}^p=\inf_{\gamma\in\Gamma\PQ{}}\int_{\Sp\times\Sp}d(x,y)^p\mathbf{1}_{x\neq y}\mathrm{d}\gamma(x,y)
    \]
    If $\Sp$ is bounded, then for any $x, y$ in $\Sp$, it holds that $d(x,y)\leq\diam(\Sp)$. Applying this inequality to the formula above leads to $\W_p\PQ{}^p\leq \diam(\Sp)^p \TV\PQ{}$.
    
    Similarly, if $\Sp$ is finite, then for any distinct $x\neq y$ in $\Sp$, it holds that $d(x,y)\geq d_{\min}$. We can generalize it to: for any $x, y$ in $\Sp$, we have $d(x,y)\mathbf{1}_{x\neq y}\geq d_{\min}\mathbf{1}_{x\neq y}$. Applying this inequality to the formula above leads to $\W_p\PQ{}^p\geq d^p_{\min}\TV\PQ{}$.
\end{proof}

\section{``Breadth First Search''-Subadditivity on MRFs}
\label{appendix:random-field}

Most of our theoretical results in this paper are for the subadditivity of divergences on Bayes-nets. However, following the same recursive approach as in the proof of \cref{thm:subadditivity-markov-chain}, we can develop a different version of subadditivity on MRFs that depends on a Breadth-First Search (BFS) ordering $(1, \ldots, n)$ on the undirected graph $G$, which we call {\em BFS-Subadditivity on MRFs} (to distinguish it from the version we defined in \cref{def:subadditivity}).

For BFS-Subadditivity on MRFs, each local neighborhood is the union of a node $k\in\{1, \ldots, n\}$ and a subset $\Sigma_k=\cup_{i=1}^k N_i\setminus\{1, \ldots ,k\}$, where $N_i$ is the set of nodes adjacent to node $i$, and $\Sigma_k$ is a separating subset between $\{1, \ldots, k\}$ and $\{k+1, \ldots, n\}\setminus\Sigma_k$. The construction of BFS-Subadditivity of a divergence $\D$ requires exactly the same two properties as in \cref{thm:subadditivity-markov-chain}, i.e. $\D$ is subadditive with respect to product measures and length-$3$ Markov Chains. In this sense, it is not hard to verify that all the divergences we prove to satisfy subadditivity on Bayes-net in the paper, satisfy BFS-Subadditivity on MRFs as well.

\subsection{Constructing Subadditivity Upper-Bound on Generic Graphical Models}
\label{appendix:subsec:generic-graphical-model}

From the proof of \cref{thm:subadditivity-markov-chain} in \cref{proof:subadditivity-markov-chain}, we obtain the subadditivity upper-bound on Bayes-nets by repeatedly applying the subadditivity inequality on Markov Chain $X\to Y\to Z$. Moreover, we allow $X=\varnothing$ or $Y=\varnothing$ (i.e., $X$ and $Z$ are conditional independent), as addressed by the second and third cases in the proof. In general, for a generic probability graphical model with an underlying graph $G$ (there may be directed and undirected edges in $G$), let $P$ and $Q$ be two distributions characterized by such graphical model. If $\D$ satisfy subadditivity on Markov Chain $X\to Y\to Z$ with conditionally independent variables $X$ and $Y$, we can obtain a subadditivity upper-bound on $\D\PQ{}$ by the following procedure:
\begin{enumerate}
    \item Choose an ordering of nodes $(1,\cdots,n)$. The ordering is valid if the induction can be proceeded form start to end.
    \item For node $k=1,\cdots,n-1$, let $\Sigma_k$ be the smallest set of nodes such that $\Sigma_k\subsetneqq\{k+1,\cdots,n\}$ and $X_k$ is conditionally independent of $\cup_{i=k+1}^n X_i$ given $X_{\Sigma_k}$, which can be written as $X_k\bigCI\cup_{i=k+1}^n X_i\:|\:X_{\Sigma_k}$. If we cannot find such $\Sigma_k$, the ordering $(1,\cdots,n)$ is invalid and the induction cannot be proceeded. Applying the subadditivity of $\delta$ on the Markov Chain of super-nodes $X_{\{k+1,\cdots,n\}\setminus\Sigma_k}\to X_{\Sigma_k}\to X_k$ gives an inequality $\D\PQ{\cup_{i=k}^n X_i}\leq\D\PQ{\cup_{i=k+1}^n X_i}+\D\PQ{X_{\Sigma_k}\cup X_k}$.
    \item By combining all the inequalities obtained, we get a subadditivity upper-bound $\sum_{i=1}^n\D\PQ{X_{\Sigma_i}\cup X_i}\geq\D\PQ{}$.
\end{enumerate}
This process is identical to the proof of \cref{thm:subadditivity-markov-chain} for Bayes-nets, except that 
\begin{inparaenum}[(1)]
    \item we have to manually choose a valid ordering of nodes, and 
    \item the set of parents $\Pi_k$ is replaced by the smallest set of nodes $X_{\Sigma_k}\subsetneqq\{k+1,\cdots,n\}$ such that $X_k\bigCI\cup_{i=k+1}^n X_i\:|\:X_{\Sigma_k}$, which depends on the ordering we choose.
\end{inparaenum}
For Bayes-nets, the ordering we use is the reversed topological ordering, and for each $k$, we have $\Sigma_k=\Pi_k$.

\subsection{BFS-Subadditivity on MRFs and its Application to Sequences of Words}
\label{appendix:subsec:bfs-subadditivity-mrf}

\begin{figure}[ht]
    \centering
    
        \begin{subfigure}{.40\linewidth}
        \centering
        \includegraphics[width=\linewidth]{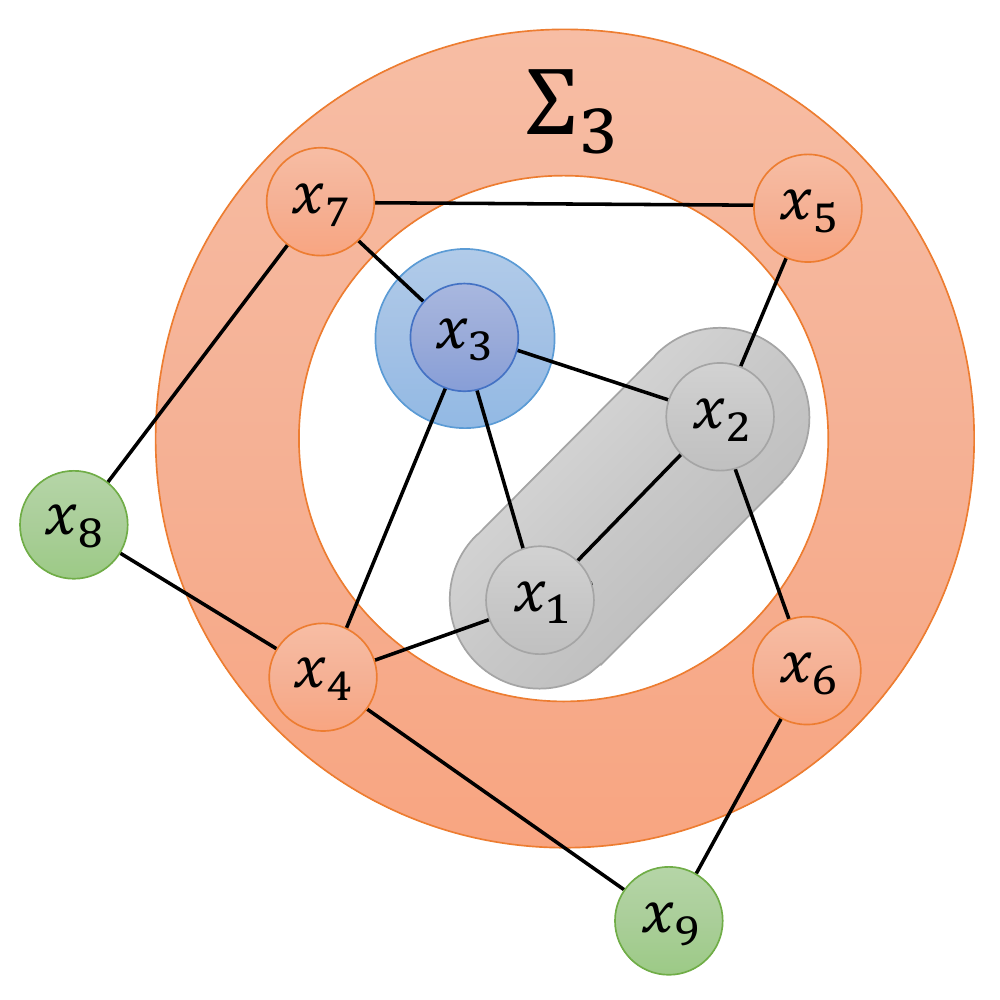}
        \caption{A MRF with $9$ variables.}
        \label{fig:mrf-graph}
    \end{subfigure}
    \hspace{.05\linewidth}
    \begin{subfigure}{.45\linewidth}
        \centering
        \includegraphics[width=\linewidth]{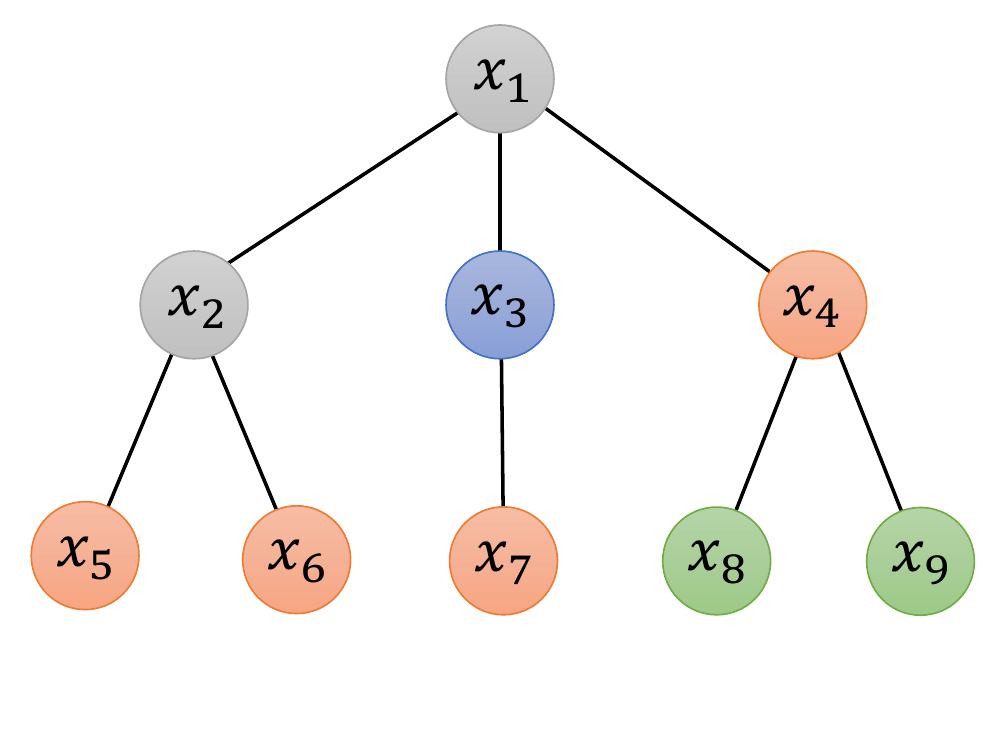}
        \vspace{13pt}
        \caption{A BFS tree of the MRF.}
        \label{fig:mrf-bfs-tree}
    \end{subfigure}
    \caption{A local neighborhood according to BFS-subadditivity, $\{3\}\cup\Sigma_3$, of a MRF with $9$ variables, if the BFS ordering $(1,\cdots,9)$ is used. Where (a) is the MRF and (b) is the corresponding BFS tree. It is a snapshot of the induction process at $k=3$. Where the gray nodes have been processed, the blue node is the current focus, the orange nodes represent the separating subset $\Sigma_3$, which is the smallest subset such that $X_3\protect\bigCI\cup_{i=4}^9 X_i\:|\:X_{\Sigma_3}$, and the green nodes are the rest.}
    \label{fig:mrf}
\end{figure}

Let us now illustrate this process on MRFs, whose underlying probability structure is described by undirected graphs. An enumeration of the nodes of a graph $G$ is said to be a BFS ordering if it is a possible output of the BFS algorithm on this graph. If we use a BFS ordering $(1,\cdots,n)$, then it is not hard to prove that for any $k\in\{1,\cdots,n\}$, we have $\Sigma_k=\cup_{i=1}^k N_i\setminus\{1,\cdots,k\}$, where $N_i$ is the set of nodes adjacent to node $i$ (i.e. the set of nearest neighbors). As shown in \cref{fig:mrf}, if we choose a BFS ordering, $\Sigma_k$ is actually the smallest set of nodes that surround the current and processed nodes $\{1,\cdots,k\}$. $\Sigma_k$ is called a separating subset between $\{1,\cdots,k\}$ and $\{k+1,\cdots,n\}\setminus\Sigma_k$, as every path from a node in $\{1,\cdots,k\}$ to a node in $\{k+1,\cdots,n\}\setminus\Sigma_k$ passes through $\Sigma_k$. By the global Markov property of MRFs, we indeed have $X_k\bigCI\cup_{i=k+1}^n X_i\:|\:X_{\Sigma_k}$.

As an example, we may consider a particular type of MRFs: sequences with local dependencies but no natural directionality, e.g., sequences of words. If we assume the distribution of a word depends on both the pre- and post- context, and consider up to $(2p+1)$-grams (i.e. consider the distribution of up to $2p+1$ consecutive words), the corresponding MRF is an undirected graph $G$, where each node $i$ is connected to its $p$ previous nodes and $p$ subsequent nodes. Let $(1,\cdots,n)$ be the natural ordering of these $n$ words. Clearly, both $(1,\cdots,n)$ and $(n,\cdots,1)$ are valid BFS orderings. Following the method above, and if we truncate the induction at step $k=n-p$ (see \cref{appendix:subsec:lod-truncation} for details), these two orderings result in an identical subadditivity upper bound $\sum_{k=1}^{n-p}\D\PQ{\cup_{i=k}^{k+p}X_i}$. Each local neighborhoods contains $p+1$ consecutive words. Equipped with this theoretical-justified subadditivity upper-bound, we can use a set of local discriminators in GANs, each on a subsequence of $p+1$ consecutive words. This is how we apply local discriminators to sequences of words.

\section{A Counter-Example for the Subadditivity of \texorpdfstring{$2$}{2}-Wasserstein Distance}
\label{appendix:counter-cexp-wasserstein}

In this section, we report a counter-example for the subadditivity of $2$-Wasserstein distance using Gaussian distributions in $\RR^3$. Note that as we shown in \cref{coro:linear-subadditivity-wp}, in a finite space $\Sp$, $2$-Wasserstein distance satisfies $(\sqrt{2}\diam(\Sp)/d_{\min})$-linear subadditivity on Bayes-nets, where $\diam(\Sp)$ is the diameter and $d_{\min}$ is the smallest distance between pairs of distinct points in $\Sp$. However the counter-example in this section shows that, in an arbitrary metric space $\Sp$, $2$-Wasserstein distance does not satisfy subadditivity (with linear coefficient $\ap=1$) on Bayes-nets and MRFs.

Consider an non-degenerate 3-dimensional Gaussian with zero mean $P=\cN(\mathbf{0}, C)$ on variables $(X, Y, Z)$ ($C\in\RR^{3\times3}$ is the covariance matrix), which are also Bayes-nets with structure $X\to Y\to Z$. From the definition of Bayes-nets: each variable is conditionally independent of its non-descendants given its parents, we know $P$ is a Bayes-net if and only if for any $x, y, z\in\RR$, it holds that $P_{Z|X,Y}(z|x,y)=P_{Z|Y}(z|y)$. Let $C_{ij}$ denote the element of $C$ at the $i$-th row and $j$-th column. It is not hard to compute that,
\[
    \begin{split}
        P_{Z|Y}(z|y)&=\cN\left(\frac{C_{32}}{C_{22}}y,C_{33}-\frac{C_{32}C_{23}}{C_{22}}\right)\\
        P_{Z|X,Y}(z|x,y)&=\cN\left(\begin{bmatrix}C_{31}&C_{32}\end{bmatrix}\begin{bmatrix}C_{11}&C_{12}\\C_{21}&C_{12}\end{bmatrix}^{-1}\begin{bmatrix}x\\y\end{bmatrix}, C_{33}-\begin{bmatrix}C_{31}&C_{32}\end{bmatrix}\begin{bmatrix}C_{11}&C_{12}\\C_{21}&C_{12}\end{bmatrix}^{-1}\begin{bmatrix}C_{13}\\C_{23}\end{bmatrix}\right)\\
    \end{split}
\]
Matching the means and variances of these two 1-dimensional Gaussians of $z$, we know that the two conditional distributions coincide, and therefore $P$ is a Bayes-net, if and only if $C_{32}C_{21}=C_{31}C_{22}$, i.e. the $2\times2$ upper-right (or equivalently, the lower-left) sub-matrix of $C$ has zero determinant. This condition can also be written as $\var[Y]\cov[X,Z]=\cov[X,Y]\cov[Y,Z]$.

It is clear that this condition on the covariance matrix $C$ is symmetric under switching variables $X$ and $Z$. This means $P_{X|Y,Z}(x|y,z)=P_{X|Y}(x|y)$ holds simultaneously, and the most appropriate graphical model to describe $P$ is the MRF. However, as long as the Markov property $P_{Z|X,Y}(z|x,y)=P_{Z|Y}(z|y)$ holds, $P$ is a valid Bayes-net. These 3-dimensional Gaussians are special, as they satisfy the definitions of both Bayes-nets and MRFs.

Based on the discussions above, we construct two 3-dimensional Gaussians $P$ and $Q$ that are valid Bayes-nets and MRFs, as follows.

\begin{counterexample}
\label{cexp:w2-gaussian}
    Consider two 3-dimensional Gaussians $P^x=\cN(\zV,C_1)$ and $Q^{xy}=\cN(\zV,C_2)$ in $\Sp=\RR^3$ parametrized by $(x,y)\in\{(x,y)\in\RR^2|0<x, y<1\}$, where,
    \[
    C_1=
        \begin{bmatrix}
        1 & x & 0 \\
        x & 1 & 0 \\
        0 & 0 & 1
        \end{bmatrix}
    \qquad
    C_2=
        \begin{bmatrix}
        1 & x & xy \\
        x & 1 & y \\
        xy & y & 1
        \end{bmatrix}
    \]
    and $\zV\in\RR^3$ is the zero vector. The two distributions are valid Bayes-nets and MRFs with structure $X\to Y\to Z$ (when considered as Bayes-nets) or $X$--$Y$--$Z$ (when considered as MRFs), since the $2\times2$ upper-right (or lower-left) sub-matrices of $C_1$ and $C_2$ has zero determinants. The $2$-Wasserstein distance between them, $W_2(P^x,Q^{xy})$, depends on parameters $(x, y)$. For any $(x,y) \in \{(x,y)\in\RR^2|0<x, y<1\}$, it holds that $\W_2(P^x_{XYZ}, Q^{xy}_{XYZ})>\W_2(P^x_{XY}, Q^{xy}_{XY}) + \W_2(P^x_{YZ}, Q^{xy}_{YZ})$, which violets the subadditivity inequality (with linear coefficient $\ap=1$) of $2$-Wasserstein distance on Bayes-nets and MRFs.
\end{counterexample}

\begin{figure}[ht]
    \centering
    \includegraphics[trim=0.3in 0.2in 0.3in 0.5in, clip, width=.50\linewidth]{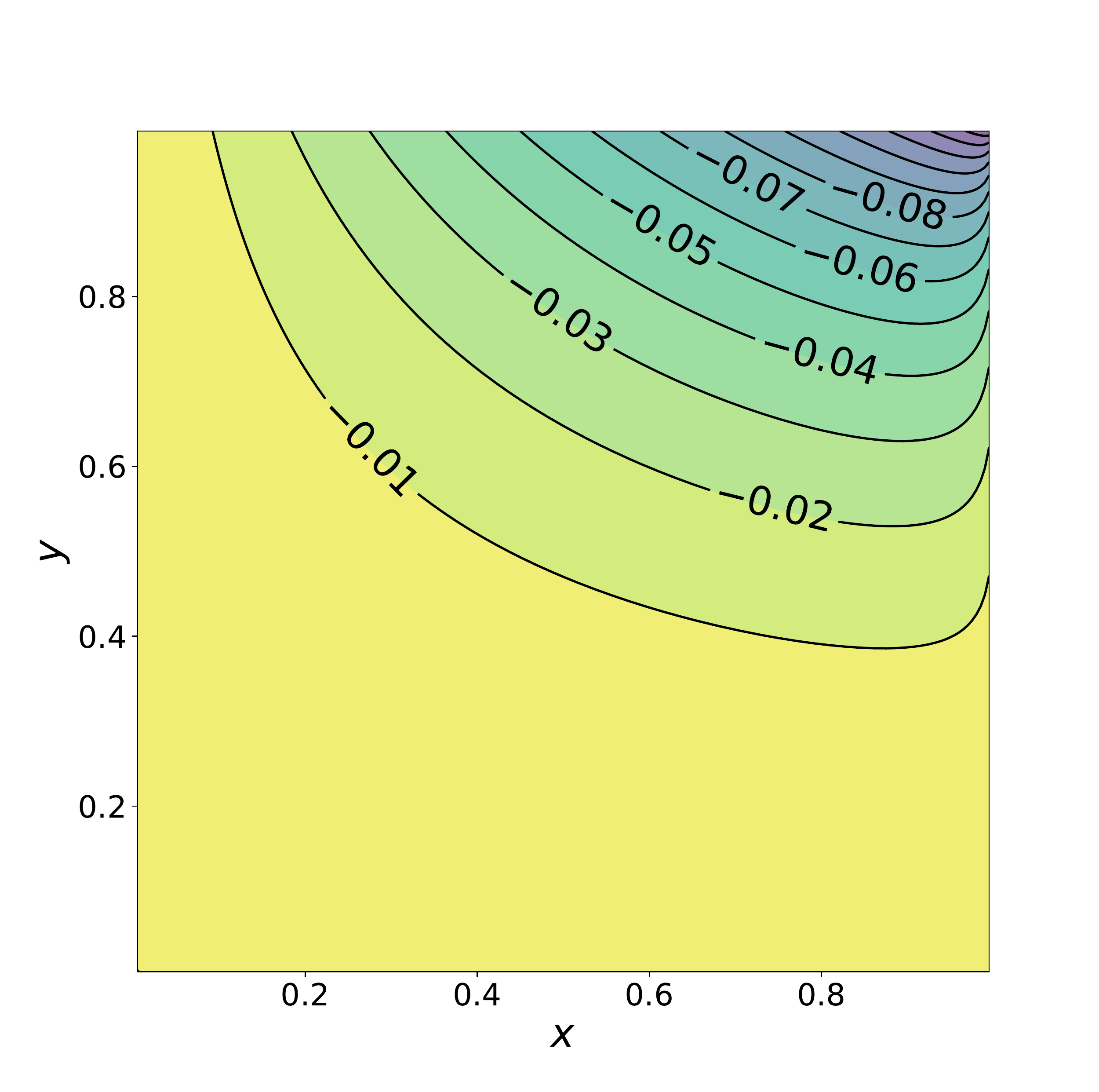}
    \caption{Contour maps showing the counter-example for the subadditivity of $2$-Wasserstein distance. The two distributions $P^x, Q^{xy}$ are 3-dimensional Gaussians $P^x=\cN(\zV,C_1)$, $Q^{xy}=\cN(\zV,C_2)$ which are valid Bayes-nets and MRFs. The contours and colors indicate the subadditivity gap $\Delta=\W_2(P^x_{XY}, Q^{xy}_{XY})+\W_2(P^x_{YZ}, Q^{xy}_{YZ})-\W_2(P^x_{XYZ}, Q^{xy}_{XYZ})$.}
    \label{fig:cexp-gaussian-w2}
\end{figure}

\cref{cexp:w2-gaussian} can be numerically verified, as the $2$-Wasserstein distance between Gaussians can be exactly computed using the formula in \cref{thm:w2-formula-gaussian} in \cref{appendix:subsec:wassserstein-formulas}. As shown in \cref{fig:cexp-gaussian-w2}, the subadditivity gap $\Delta=\W_2(P^x_{XY}, Q^{xy}_{XY})+\W_2(P^x_{YZ}, Q^{xy}_{YZ})-\W_2(P^x_{XYZ}, Q^{xy}_{XYZ})$ is negative for any $(x,y) \in \{(x,y)\in\RR^2|0<x, y<1\}$, thus the subadditivity inequality is violated.

This straightforward but fundamental counter-example shows that Wasserstein's subadditivity does not hold even if all distributions are Gaussians. For many common divergences including Jensen-Shannon divergence, Total Variation distance, and $p$-Wasserstein distance, the best we can prove is linear subadditivity.

\section{Local Subadditivity}
\label{appendix:local}

In this section, we consider the case when two distributions $P, Q$ are close to each other. This can happen after some training steps in a GAN. We consider two notions of ``closeness'' for distributions.

\begin{definition}[One- and Two-Sided $\eps$-Close Distributions]
    Distributions $P, Q$ are one-sided $\eps$-close for some $0<\eps<1$, if $\forall x\in\Sp\subseteq\RR^{nd}$, $P(x)/Q(x)<1+\eps$. Moreover, $P, Q$ are two-sided $\eps$-close, if $\forall x$, $1-\eps<P(x)/Q(x)<1+\eps$. Note this requires $P\ll\gg Q$.
\end{definition}

\subsection{Local Subadditivity under Perturbation}
\label{appendix:subsec:local-perturbation}

For the sake of theoretical simplicity, we consider the limit $\eps\to0$ for two-sided $\eps$-close distributions. We call $Q$ a perturbation of $P$ \citep{makur2015study}.

\begin{theorem}
\label{thm:local-perturbation-suadditivity}
    For two-sided $\eps$-close distributions $P, Q$ with $\eps\to 0$ on a common Bayes-net $G$, any $\gf$-divergence $\PD\PQ{}$ such that $\gf''(1)>0$ has subadditivity up to $\cO(\eps^3)$. That is,
    \[
        \PD\PQ{}\leq \sum_{i=1}^n \PD\PQ{X_i\cup X_{\Pi_i}} + \cO(\eps^3)
    \]
    Moreover, the subadditivity gap is proportional to the sum of $\CS$ divergences between marginals on the set of parents of each node, up to $\cO(\eps^3)$. That is,
    \[
        \Delta = \sum_{i=1}^n \PD\PQ{X_i\cup X_{\Pi_i}} - \PD\PQ{} = \frac{\gf''(1)}{2}\sum_{i=1}^n \CS(P_{\Pi_i},Q_{\Pi_i}) + \cO(\eps^3)
    \]
\end{theorem}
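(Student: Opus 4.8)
\textit{Proof plan.} The strategy is to pass to the local (perturbative) geometry of divergences, where every $\gf$-divergence with $\gf''(1)>0$ collapses to a multiple of the $\CS$-divergence, and then to prove the gap formula for $\CS$ itself from the Bayes-net factorization. Throughout, write $r(x)\coloneqq P(x)/Q(x)-1$, so two-sided $\eps$-closeness says $\sup_x|r(x)|<\eps$ and $\EE_Q[r]=\int(P-Q)\dx=0$; for any coordinate subset $S$ the marginals $P_S,Q_S$ are again two-sided $\eps$-close, $r_S\coloneqq P_S/Q_S-1=\EE_Q[r\,|\,X_S]$, and $\CS\PQ{S}=\EE_Q[r_S^2]$ exactly. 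All $\cO(\cdot)$ errors below are uniform over $\eps$-close pairs, with constants depending only on $n$, the maximal in-degree of $G$, and the behavior of $\gf$ near $1$. \textbf{Step 1 (reduction to $\CS$).} Since $\gf(1)=0$ and $\EE_Q[r]=0$, a second-order Taylor expansion of $t\mapsto\gf(1+t)$ at $t=0$ gives, for any two-sided $\eps$-close pair $(\mu,\nu)$,
\[
\PD(\mu,\nu)=\frac{\gf''(1)}{2}\,\CS(\mu,\nu)+\cO(\eps^3)
\]
(with $\gf$ only twice differentiable at $1$ the remainder is $o(\eps^2)$; the clean $\cO(\eps^3)$ holds once $\gf$ is, say, $C^3$ near $1$, which covers every divergence in \cref{tab:common-f-divs}). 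Applying this to $(P,Q)$ and to every local pair $(P_{X_i\cup X_{\Pi_i}},Q_{X_i\cup X_{\Pi_i}})$, the theorem reduces to the single identity
\[
\CS\PQ{}=\sum_{i=1}^n\Big(\CS\PQ{X_i\cup X_{\Pi_i}}-\CS\PQ{X_{\Pi_i}}\Big)+\cO(\eps^3),
\]
because then $\sum_i\PD\PQ{X_i\cup X_{\Pi_i}}-\PD\PQ{}=\frac{\gf''(1)}{2}\sum_i\CS\PQ{X_{\Pi_i}}+\cO(\eps^3)$, which is exactly the claimed gap formula, and $\CS\ge0$ yields the one-sided subadditivity.

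\textbf{Step 2 (log-likelihood decomposition; diagonal terms).} Set $\ell_i\coloneqq\log\!\big(P_{X_i|X_{\Pi_i}}/Q_{X_i|X_{\Pi_i}}\big)$, a function of $(X_i,X_{\Pi_i})$. Since both $P$ and $Q$ factor over $G$, $\log(P/Q)=\sum_i\ell_i$; and from $\log(1+t)=t+\cO(t^2)$ we get $\ell_i=(r_{X_i\cup X_{\Pi_i}}-r_{X_{\Pi_i}})+\cO(\eps^2)$ pointwise, and, using $\CS\PQ{}=\EE_Q[r^2]$ together with $\log(1+r)=r+\cO(\eps^2)$, $\CS\PQ{}=\EE_Q[(\log(P/Q))^2]+\cO(\eps^3)=\sum_i\EE_Q[\ell_i^2]+\sum_{i\neq j}\EE_Q[\ell_i\ell_j]+\cO(\eps^3)$. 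For a diagonal term, expand the square and use $\EE_Q[r_{X_i\cup X_{\Pi_i}}\,|\,X_{\Pi_i}]=r_{X_{\Pi_i}}$ --- valid because $r_{X_i\cup X_{\Pi_i}}=\EE_Q[r\,|\,X_i,X_{\Pi_i}]$ and $r_{X_{\Pi_i}}$ is $X_{\Pi_i}$-measurable, so the tower property applies --- to obtain $\EE_Q[\ell_i^2]=\CS\PQ{X_i\cup X_{\Pi_i}}-\CS\PQ{X_{\Pi_i}}+\cO(\eps^3)$, precisely the $i$-th summand needed.

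\textbf{Step 3 (cross terms; the crux).} It remains to show $\EE_Q[\ell_i\ell_j]=\cO(\eps^3)$ whenever $i\neq j$; this is the only place where the DAG structure of $Q$ enters. Fix $j<i$ in the topological order, so $\{j\}\cup\Pi_j\subseteq\{1,\cdots,i-1\}$ and hence $\ell_j$ is $X_{\{1,\cdots,i-1\}}$-measurable. Conditioning on $X_{\{1,\cdots,i-1\}}$ and using that $Q$ is a Bayes-net, so $Q_{X_i|X_{\{1,\cdots,i-1\}}}=Q_{X_i|X_{\Pi_i}}$, gives
\[
\EE_Q[\ell_i\,|\,X_{\{1,\cdots,i-1\}}]=\EE_Q[\ell_i\,|\,X_{\Pi_i}]=-\KL\big(Q_{X_i|X_{\Pi_i}},P_{X_i|X_{\Pi_i}}\big),
\]
a function of $X_{\Pi_i}$ whose absolute value is $\cO(\eps^2)$ --- this is Step 1 applied to $\KL$, the conditionals being $(1\pm\cO(\eps))$-close. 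Therefore $|\EE_Q[\ell_i\ell_j]|=\big|\EE_Q[\ell_j\cdot\EE_Q[\ell_i\,|\,X_{\{1,\cdots,i-1\}}]]\big|\le\|\ell_j\|_\infty\cdot\cO(\eps^2)=\cO(\eps^3)$. Summing over all $i$ establishes the $\CS$-identity of Step 1, and with it the theorem.

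\textbf{Main obstacle.} Step 1 is a routine local expansion once enough regularity of $\gf$ at $1$ is fixed. The substantive point is Step 3: although $\ell_i$ and $\ell_j$ are genuinely dependent under $Q$, the martingale-type identity $\EE_Q[\ell_i\,|\,X_{\{1,\cdots,i-1\}}]=-\KL(Q_{X_i|X_{\Pi_i}},P_{X_i|X_{\Pi_i}})=\cO(\eps^2)$ drives their covariance down to third order. This is the $\CS$-geometry counterpart of the ``merge into super-nodes and recurse'' argument behind \cref{thm:subadditivity-markov-chain}, and it is also why precisely the parent-marginal divergences $\CS\PQ{X_{\Pi_i}}$ --- and nothing smaller --- appear in the gap.
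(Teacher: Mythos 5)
Your argument is correct, and it reaches the theorem by a genuinely different route than the paper. Both proofs share the opening move (your Step 1 is exactly \cref{lem:local-fdiv-approximation}: every $\gf$-divergence with $\gf''(1)>0$ collapses to $\tfrac{\gf''(1)}{2}\CS$ up to $\cO(\eps^3)$ — and you are in fact more careful than the paper about the regularity needed at $t=1$ to get $\cO(\eps^3)$ rather than $o(\eps^2)$). After that the paths diverge. The paper proves the subadditivity part by converting $\PD$ into the squared Hellinger distance and invoking its exact subadditivity (\cref{thm:subadditivity-sh}), and proves the gap formula by an explicit first-order perturbation computation ($Q_{X|Y}=P_{X|Y}+\eps J_{X|Y}$, etc.) on a three-node Markov chain, showing the per-step gap is $\tfrac{\gf''(1)}{2}\CS(P_Y,Q_Y)+\cO(\eps^3)$, and then re-running the super-node induction of \cref{thm:subadditivity-markov-chain} to accumulate these gaps. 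You instead prove a single global identity, $\CS\PQ{}=\sum_i\big(\CS\PQ{X_i\cup X_{\Pi_i}}-\CS\PQ{X_{\Pi_i}}\big)+\cO(\eps^3)$, directly from the factorization $\log(P/Q)=\sum_i\ell_i$: the diagonal terms give the telescoped marginal $\chi^2$'s via the tower property, and the cross terms die at order $\eps^3$ because $\EE_Q[\ell_i\,|\,X_{1..i-1}]=-\KL(Q_{X_i|X_{\Pi_i}},P_{X_i|X_{\Pi_i}})=\cO(\eps^2)$ while $\|\ell_j\|_\infty=\cO(\eps)$ — a martingale-type orthogonality in place of induction. Your version delivers both conclusions (nonnegativity of the gap, hence subadditivity, and the exact form of the gap) from one identity, needs neither the Hellinger comparison nor the super-node recursion, and makes it conceptually transparent why precisely the parent-marginal $\CS(P_{\Pi_i},Q_{\Pi_i})$ terms appear (they are the conditional-expectation compensators). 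The paper's approach, on the other hand, reuses its general induction machinery, which is what lets the same scheme be recycled for the other structural results (e.g.\ the BFS-subadditivity on MRFs); minor housekeeping in your write-up — the $\Pi_i=\varnothing$ case, where $r_{\Pi_i}\equiv 0$ and $\CS\PQ{X_{\Pi_i}}=0$, and the dependence of the implied constants on $n$ — is handled implicitly and correctly.
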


\cref{thm:local-perturbation-suadditivity} indicates that when $P, Q$ are very close, the focus of the set of local discriminators falls on the differences between the marginals on the set of parents. We make use of the Taylor expansion of $f(\cdot)$ in the proof. To prove \cref{thm:local-perturbation-suadditivity}, we first prove the following lemma describing the approximation behavior of nearly all $\gf$-divergences when $P, Q$ are perturbations with respect to each other.

\begin{lemma}
\label{lem:local-fdiv-approximation}
    For two-sided $\eps$-close distributions $P, Q$ with $\eps\to 0$, any $\gf$-divergence $\PD\PQ{}$ with $\gf(t)$ twice differentiable at $t=1$ and $\gf''(1)>0$, is proportional to $\CS\PQ{}$ up to $\cO(\eps^3)$, i.e.
    \[
        \PD\PQ{}=\frac{\gf''(1)}{2}\CS\PQ{}+\cO(\eps^3)
    \]
    And $\CS$ is now symmetric up to $\cO(\eps^3)$, i.e. $\CS\PQ{}=\CS(Q,P)+\cO(\eps^3)$.
\end{lemma}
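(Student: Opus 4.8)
The plan is to reduce the lemma to a second‑order Taylor expansion of the generator $\gf$ about $t=1$. The essential point is that two‑sided $\eps$‑closeness confines the likelihood ratio $t(x):=P(x)/Q(x)$ to the interval $(1-\eps,1+\eps)$ \emph{uniformly in $x$}, so the local behaviour of $\gf$ near $1$ controls the integrand of $\PD\PQ{}=\int_{\Sp}\gf(P/Q)\,Q\,\dx$ everywhere.

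First I would normalize the generator: replacing $\gf(t)$ by $\gf(t)-c(t-1)$ leaves $\PD\PQ{}$ unchanged since $\int_{\Sp}(P-Q)\,\dx=0$, so I may assume $\gf'(1)=0$, and with $\gf(1)=0$ twice‑differentiability at $1$ gives $\gf(t)=\tfrac{\gf''(1)}{2}(t-1)^2+r(t)$ with $r(t)=o((t-1)^2)$ as $t\to1$. Setting $\omega(\eps):=\sup_{0<|t-1|<\eps}|r(t)|/(t-1)^2$ (so $\omega(\eps)\to0$ as $\eps\to0$), substituting $t=P/Q$, and integrating against $Q$ yields
\[
\PD\PQ{}=\frac{\gf''(1)}{2}\int_{\Sp}\frac{(P-Q)^2}{Q}\,\dx+\int_{\Sp}r\!\Big(\tfrac{P}{Q}\Big)Q\,\dx=\frac{\gf''(1)}{2}\,\CS\PQ{}+\int_{\Sp}r\!\Big(\tfrac{P}{Q}\Big)Q\,\dx ,
\]
because the generator of $\CS$ is $(t-1)^2$. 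Two‑sided $\eps$‑closeness gives $|r(P/Q)|\le\omega(\eps)(P/Q-1)^2$ pointwise, hence $\big|\int_{\Sp}r(P/Q)Q\,\dx\big|\le\omega(\eps)\,\CS\PQ{}\le\omega(\eps)\,\eps^2$, so $\PD\PQ{}=\tfrac{\gf''(1)}{2}\CS\PQ{}+o(\eps^2)$. The $\cO(\eps^3)$ error quoted in the statement is obtained in exactly the same way once $r(t)=\cO((t-1)^3)$ near $1$, which holds whenever $\gf$ is, say, $C^3$ (or has $\gf''$ Lipschitz) near $1$; this is the case for every common $\gf$‑divergence satisfying the hypotheses, where $\gf$ is in fact real‑analytic near $t=1$ (cf.\ \cref{tab:common-f-divs}).

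For the symmetry claim I would compute the difference directly:
\[
\CS\PQ{}-\CS(Q,P)=\int_{\Sp}\frac{(P-Q)^2}{Q}\,\dx-\int_{\Sp}\frac{(P-Q)^2}{P}\,\dx=\int_{\Sp}\frac{(P-Q)^3}{PQ}\,\dx ,
\]
and writing $P-Q=(t-1)Q$ with $t=P/Q$ makes the integrand $\tfrac{(t-1)^3}{t}\,Q$; since $1-\eps<t<1+\eps$ we have $|(t-1)^3/t|<\eps^3/(1-\eps)$ pointwise, so $|\CS\PQ{}-\CS(Q,P)|<\eps^3/(1-\eps)=\cO(\eps^3)$, with no further assumptions.

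The step I expect to be the crux is establishing the uniform control of the Taylor remainder: this is precisely where the \emph{two‑sided} closeness hypothesis (as opposed to merely one‑sided) is indispensable, since it forces $t(x)$ into a single shrinking neighbourhood of $1$ on which one modulus $\omega(\eps)$ dominates $r$. Everything else is elementary, and no integrability issues arise because $P$ and $Q$ are sandwiched, $(1-\eps)Q<P<(1+\eps)Q$, so all integrands above are bounded multiples of the density $Q$.
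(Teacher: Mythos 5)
Your proof is correct and follows essentially the same route as the paper's: a Taylor expansion of $\gf$ about $t=1$, with the first-order term killed by $\int(P-Q)\,\dx=0$, the second-order term identified with $\tfrac{\gf''(1)}{2}\CS\PQ{}$, and the symmetry of $\CS$ obtained from the pointwise control of $P/Q$. If anything, your handling of the remainder is slightly more careful than the paper's, which invokes an $\cO(\eps^3)$ Taylor remainder directly even though twice differentiability at the single point $t=1$ only guarantees $o(\eps^2)$; the extra regularity you flag (e.g.\ $\gf''$ Lipschitz or $\gf$ of class $C^3$ near $1$, true for all the common generators) is exactly what is implicitly needed for the stated $\cO(\eps^3)$.
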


\begin{proof}
    Since $\gf(t)$ twice differentiable at $t=1$, and $P(x)/Q(x)\in(1-\eps,1+\eps)$ with $0<\eps\ll 1$, by Taylor's theorem we get,
    \[
        f(\frac{P}{Q})=\gf'(1)\left(\frac{P}{Q}-1\right)+\frac12\gf''(1)\left(\frac{P}{Q}-1\right)^2+\cO(\eps^3)
    \]
    Multiply by $Q$ and integrate over $\Sp\in\RR^{nd}$ gives,
    \[
        \begin{split}
            \PD\PQ{}&=\frac{\gf''(1)}{2}\int Q\left(\frac{P}{Q}-1\right)^2\dx+\cO(\eps^3)\\
            &=\frac{\gf''(1)}{2}\CS\PQ{}+\cO(\eps^3)\\
        \end{split}
    \]
    Where the first order term vanishes because $\int P\dx=\int Q\dx=1$. This equation implies that all $\gf$-divergences such that $\gf''(1)>0$ behave similarly when the two distributions $P$ and $Q$ are sufficiently close.
    
    Meanwhile, because $P/Q=1+\cO(\eps)$, we have,
    \[
        \begin{split}
            \CS\PQ{}&=\int\frac{(P-Q)^2}{P}\frac{P}{Q}\dx\\
            &=\int\frac{(P-Q)^2}{P}(1+\cO(\eps))\dx\\
            &=\CS(Q,P)+\cO(\eps^3)
        \end{split}
    \]
    Thus we can exchange $P$ and $Q$ freely in any $\cO(\eps^2)$ terms (e.g. $(P-Q)^2/Q$), while preserving the equality up to $\cO(\eps^3)$.
\end{proof}

Based on \cref{lem:local-fdiv-approximation}, \cref{thm:local-perturbation-suadditivity} can be proved by comparing an $\gf$-divergence with the squared Hellinger distance.

\begin{prevproof}{thm:local-perturbation-suadditivity}
    We first prove that the subadditivity inequality holds using \cref{lem:local-fdiv-approximation}. Define $R(x)=\frac12\left(\sqrt{PQ}+\frac{P+Q}{2}\right)$ as the average of the geometric and arithmetic means of $P$ and $Q$. Clearly for any $x\in\Sp$, it holds that $|R(x)-Q(x)|<|P(x)-Q(x)|<\eps$. Thus $R/Q=1+\cO(\eps)$, and by \cref{lem:local-fdiv-approximation}, we have,
    \[
        \begin{split}
            \PD\PQ{}&=\frac{\gf''(1)}{2}\CS\PQ{}+\cO(\eps^3)\\
            &=\frac{\gf''(1)}{2}\int\frac{(P-Q)^2}{R}\frac{R}{Q}\dx+\cO(\eps^3)\\
            &=\frac{\gf''(1)}{2}\int\frac{(P-Q)^2}{R}\dx+\cO(\eps^3)\\
            &=2\gf''(1)\int\left(\sqrt{P}-\sqrt{Q}\right)\dx+\cO(\eps^3)\\
            &=4\gf''(1)\SH\PQ{}+\cO(\eps^3)
        \end{split}
    \]
    Since $\gf''(1)>0$, we can re-write this equation as $\SH\PQ{}=\frac{1}{4\gf''(1)}\PD\PQ{}+\cO(\eps^3)$. Applying this formula to both sides of the subadditivity inequality of $\SH$ (\cref{thm:subadditivity-sh}): $\SH\PQ{}\leq \sum_{i=1}^n \SH\PQ{X_i\cup X_{\Pi_i}}$, we conclude that the subadditivity inequality holds up to $\cO(\eps^3)$:
    \[
        \PD\PQ{}\leq \sum_{i=1}^n \PD\PQ{X_i\cup X_{\Pi_i}} + \cO(\eps^3)
    \]
    
    Then, we prove that the subadditivity gap $\Delta\coloneqq\sum_{i=1}^n \PD\PQ{X_i\cup X_{\Pi_i}} - \PD\PQ{}$ is proportional to $\sum_{i=1}^n \CS(P_{\Pi_i},Q_{\Pi_i})$ up to $\cO(\eps^3)$ using a different approach. Let us start from the simple case when $P, Q$ are Markov Chains with structure $X\to Y\to Z$. The Markov property $P_{Z|XY}=P_{Z|Y}$ holds (and the same for $Q$). Since the joint distributions $P_{XYZ}$ and $Q_{XYZ}$ are two-sided $\eps$-close, so are the marginal and conditional distributions. We define the differences between the marginals and conditionals of $P$ and $Q$ as follows,
    \[
        \begin{split}
            &Q_{X|Y} = P_{X|Y} + \eps J_{X|Y}\\
            &Q_{Y} = P_{Y} + \eps J_{Y}\\
            &Q_{Z|Y} = P_{Z|Y} + \eps J_{Z|Y}\\
        \end{split}
    \]
    Clearly $\int J_{X|Y}\dx=\int J_{Y}\dy=\int J_{Z|Y}\dz=0$.
    Using \cref{lem:local-fdiv-approximation}, we have,
    \[
        \begin{split}
            &\frac{2}{\eps^2\gf''(1)}\PD\PQ{XYZ}+\cO(\eps)\\
            &=\frac{1}{\eps^2}\int\frac{(P_{XYZ}-Q_{XYZ})^2}{P_{XYZ}}\dx\dy\dz\\
            &=\frac{1}{\eps^2}\int\frac{\left(P_{X|Y}P_{Y}P_{Z|Y}-Q_{X|Y}Q_{Y}Q_{Z|Y}\right)^2}{P_{X|Y}P_{Y}P_{Z|Y}}\dx\dy\dz\\
            &=\int\Bigg(\frac{J_{Y}^2P_{X|Y}P_{Z|Y}}{P_{Y}}+\frac{J_{X|Y}^2P_{Y}P_{Z|Y}}{P_{X|Y}}+\frac{J_{Z|Y}^2P_{X|Y}P_{Y}}{P_{Z|Y}}\\
            &\qquad+2J_{X|Y}J_{Y}P_{Z|Y}+2J_{Y}J_{Z|Y}P_{X|Y}+2J_{X|Y}J_{Z|Y}P_{Y}\Bigg)\dx\dy\dz\\
            &=\int\frac{J_{Y}^2}{P_{Y}}\dy+\int\frac{J_{X|Y}^2P_{Y}}{P_{X|Y}}\dx\dy+\int\frac{J_{Z|Y}^2P_{Y}}{P_{Z|Y}}\dy\dz\\
        \end{split}
    \]
    Similarly,
    \[
        \begin{split}
            \frac{2}{\eps^2\gf''(1)}\PD\PQ{XY}+\cO(\eps)&=\frac{1}{\eps^2}\int\frac{(P_{X|Y}P_{Y}-Q_{X|Y}Q_{Y})^2}{P_{X|Y}P_{Y}}\dx\dy\\
            &=\int\left(\frac{J_{X|Y}^2P_{Y}}{P_{X|Y}}+\frac{J_{Y}^2P_{X|Y}}{P_{Y}}+2J_{X|Y}J_{Y}\right)\dx\dy\\
            &=\int\frac{J_{X|Y}^2P_{Y}}{P_{X|Y}}\dx\dy+\int\frac{J_{Y}^2}{P_{y}}\dy
        \end{split}
    \]
    And,
    \[
        \frac{2}{\eps^2\gf''(1)}\PD\PQ{YZ}+\cO(\eps)=\int\frac{J_{Z|Y}^2P_{Y}}{P_{Z|Y}}\dy\dz+\int\frac{J_{Y}^2}{P_{Y}}\dy
    \]
    Thus, the subadditivity gap on the Markov Chain $X\to Y\to Z$ is,
    \[
        \begin{split}
            \Delta_\text{Markov Chain}&=\PD\PQ{XY}+\PD\PQ{YZ}-\PD\PQ{XY}\\
            &=\frac{\gf''(1)}{2}\int\frac{J_{Y}^2}{P_{Y}}\dy+\cO(\eps^3)\\
            &=\frac{\gf''(1)}{2}\CS(P_{Y},Q_{Y})+\cO(\eps^3)
        \end{split}
    \]
    Moreover, consider the special case when $Y=\varnothing$, thus $P, Q$ are product measures on conditionally independent variables $X$ and $Z$. Similarly, we have,
    \[
        \frac{2}{\eps^2\gf''(1)}\PD\PQ{XZ}+\cO(\eps)=\CS\PQ{X}+\CS\PQ{Z}
    \]
    Hence the subadditivity gap is,
    \[
        \Delta_\text{Product Measure}=\PD\PQ{X}+\PD\PQ{Z}-\PD\PQ{XZ}=0+\cO(\eps^3)\\
    \]
    
    Now, for any pair of generic Bayes-nets $P$ and $Q$, following the approach in the proof of \cref{thm:subadditivity-markov-chain} in \cref{proof:subadditivity-markov-chain}, we repeatedly apply the subadditivity inequality on Markov Chains of super-nodes $X_{\{1,\cdots,n-k-1\}\setminus\Pi_{n-k}}\to X_{\Pi_{n-k}}\to X_{n-k}$, for $k=0,1,\cdots,n-2$. Consider three cases:
    \begin{enumerate}
        \item $\Pi_{n-k}\neq\varnothing$ and $\Pi_{n-k}\subsetneqq \{1,\cdots,n-k-1\}$: In this case, the subadditivity gap is $\frac{\gf''(1)}{2}\CS(P_{\Pi_{n-k}},Q_{\Pi_{n-k}})+\cO(\eps^3)$.
        \item $\Pi_{n-k}=\{1,\cdots,n-k-1\}$: In this case, as discussed in \cref{proof:subadditivity-markov-chain}, we add a redundant term $\D\PQ{\cup_{i=1}^{n-k-1}X_i}\equiv\D\PQ{\Pi_{n-k}}$ into the subadditivity upper-bound. Thus, by \cref{lem:local-fdiv-approximation}, the subadditivity gap is $\frac{\gf''(1)}{2}\CS(P_{\Pi_{n-k}},Q_{\Pi_{n-k}})+\cO(\eps^3)$
        \item $\Pi_{n-k}=\varnothing$: In this case, $X_{n-k}$ is independent from $(X_1,\ldots,X_{n-k-1})$ in both Bayes-nets. Thus the subadditivity gap is $0$.
    \end{enumerate}
    For all the three cases, the subadditivity gap at an induction step $k$ is $\frac{\gf''(1)}{2}\CS(P_{\Pi_{n-k}},Q_{\Pi_{n-k}})+\cO(\eps^3)$ (note that $\CS(P_{\Pi_{n-k}},Q_{\Pi_{n-k}})=0$ when $\Pi_{n-k}=\varnothing$). Along with the induction process for $k=0,1,\cdots,n-2$, the subadditivity gaps accumulate, and we finally get,
    \[
        \Delta\coloneqq\sum_{i=1}^n \PD\PQ{X_i\cup X_{\Pi_i}} - \PD(P, Q)=\frac{\gf''(1)}{2}\sum_{i=1}^n \CS(P_{\Pi_i},Q_{\Pi_i}) + \cO(\eps^3)
    \]
\end{prevproof}

\subsection{Linear Subadditivity for Close Distributions}
\label{appendix:subsec:local-linear}

Now, we consider distributions that are one or two-sided $\eps$-close with a non-infinitesimal $\eps>0$. This is a more realistic setup compared to the setup in \cref{appendix:subsec:local-perturbation}. The Taylor expansion approach used there is no longer applicable. However, using the methodology to prove general $\gf$-divergence inequalities (\cref{lem:fdiv-general-ineq}), and a technique of equivalent $\gf$-divergences, we are able to obtain linear subadditivity for both cases, under very mild conditions.

We first prove a lemma which reveals the connection between the notion of closeness and linear subadditivity.

\begin{lemma}
\label{lem:local-linear-subadditivity-fdiv}
    Consider two $\gf$-divergences $D_{\gf_1}$ and $D_{\gf_2}$ with generator functions $\gf_1(t)$ and $\gf_2(t)$, where $\gf_2$ has subadditivity on Bayes-nets with respect to \cref{def:subadditivity}. Let $I\subseteq(0,\infty)$ be an interval. If there exists two positive constants $A<B$, such that for any $t\in I$, it holds that $\gf_2(t)\geq 0$ and $A\leq\gf_1(t)/\gf_2(t)\leq B$. Then, for any pair of distributions $P$ and $Q$, such that for any $x\in\Sp$, $P(x)/Q(x)\in I$, the linear subadditivity inequality of $D_{\gf_1}$ holds with coefficient $0<\ap=A/B<1$.
\end{lemma}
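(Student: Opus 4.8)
The plan is to transport the subadditivity of $D_{\gf_2}$ over to $D_{\gf_1}$ by sandwiching one divergence between constant multiples of the other, exploiting that the hypothesis $A\gf_2(t)\le\gf_1(t)\le B\gf_2(t)$ is available on exactly the interval $I$ in which every likelihood ratio occurring in the argument lies. The first step is to observe that localization never pushes the ratio out of $I$: if $P(x)/Q(x)\in I$ for all $x$, then for any index set $S$ (in particular $S=X_i\cup X_{\Pi_i}$) and any $x_S$,
\[
\frac{P_{X_S}(x_S)}{Q_{X_S}(x_S)}=\EE_{Q(x_{S^c}\mid x_S)}\!\left[\frac{P(x)}{Q(x)}\right],
\]
which is a $Q$-conditional average of numbers in $I$ and hence again lies in $I$. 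So $(P,Q)$ and every local pair $(P_{X_i\cup X_{\Pi_i}},Q_{X_i\cup X_{\Pi_i}})$ are pairs of distributions whose density ratio takes values only in $I$.

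Second, on any such pair the pointwise generator inequality upgrades to a divergence inequality: multiplying $A\gf_2(t)\le\gf_1(t)\le B\gf_2(t)$ by the nonnegative reference density and integrating — exactly the computation behind \cref{lem:fdiv-general-ineq} — gives $A\,D_{\gf_2}\le D_{\gf_1}\le B\,D_{\gf_2}$ for that pair. The one delicate point, and the step I expect to need the most care, is that the generator bound is assumed only on $I$ rather than on all of $[0,\infty)$; here I would invoke the equivalent-$\gf$-divergence principle — $D_\gf(\mu,\nu)$ depends on $\gf$ only through its restriction to the range of $\mathrm{d}\mu/\mathrm{d}\nu$ — so without altering any of the divergences in play we may replace $\gf_1,\gf_2$ by valid generators agreeing with them on $I$ and obeying the sandwich everywhere, and then quote \cref{lem:fdiv-general-ineq} verbatim. (Alternatively one simply integrates the inequality directly over the pairs identified in the first step, all of whose ratios lie in $I$.)

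Third, I would chain these: the upper bound on the joint, the assumed subadditivity of $D_{\gf_2}$ on Bayes-nets (which holds for all Bayes-nets, in particular $(P,Q)$), and the lower bound on each local marginal give
\[
D_{\gf_1}(P,Q)\ \le\ B\,D_{\gf_2}(P,Q)\ \le\ B\sum_{i=1}^{n}D_{\gf_2}\PQ{X_i\cup X_{\Pi_i}}\ \le\ \frac{B}{A}\sum_{i=1}^{n}D_{\gf_1}\PQ{X_i\cup X_{\Pi_i}}.
\]
This is the asserted linear subadditivity of $D_{\gf_1}$ on Bayes-nets, the coefficient being the ratio $B/A$ of the two sandwich constants (equivalently, $\frac{A}{B}D_{\gf_1}(P,Q)\le\sum_{i}D_{\gf_1}\PQ{X_i\cup X_{\Pi_i}}$, which is the form with $\ap=A/B$ recorded in the statement, the constant placed on the left-hand side). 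The only genuinely non-routine ingredient is the $I$-restriction handled in the second step; everything else is a single integration together with the already-established subadditivity of $D_{\gf_2}$.
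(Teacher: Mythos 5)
Your proposal is correct and follows essentially the same route as the paper's proof: show the density ratio of every relevant marginal pair stays in $I$ (the paper does this by integrating $aQ\le P\le bQ$ over the complementary variables, which is the same computation as your conditional-expectation formulation), integrate the sandwich $A\gf_2\le\gf_1\le B\gf_2$ against the reference density exactly as in \cref{lem:fdiv-general-ineq}, and chain the upper bound on the joint with the subadditivity of $D_{\gf_2}$ and the lower bound on each local marginal to get the coefficient $A/B$. The ``delicate point'' you flag is handled in the paper simply by integrating directly over pairs whose ratios lie in $I$, which is your parenthetical alternative, so no equivalent-generator device is needed.
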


\begin{proof}
    For any $t\in I$, multiplying $\gf_2(t)\geq 0$ to the inequalities $A\leq\gf_1(t)/\gf_2(t)\leq B$ gives,
    \[
        A\gf_2(t)\leq \gf_1(t)\leq B\gf_2(t) \qquad \forall t\in I
    \]
    Similar to the proof of \cref{lem:fdiv-general-ineq} in \cref{appendix:subsec:fdiv-ineq}, since for any $x\in\Sp$, it holds that $P(x)/Q(x)\in I$, we have,
    \[
        A\gf_2(P(x)/Q(x))\leq \gf_1(P(x)/Q(x))\leq B\gf_2(P(x)/Q(x)) \qquad \forall x\in\Sp
    \]
    Multiply non-negative $Q(x)$ and integrate over $\Sp$. Thus, for such pairs of $P, Q$, we obtain, 
    \[
        AD_{\gf_2}\PQ{}\leq D_{\gf_1}\PQ{}\leq BD_{\gf_2}\PQ{}
    \]
    
    Now consider $P, Q$ are Bayes-nets such that for any $x\in\Sp$, $P(x)/Q(x)\in I=[a, b]$, i.e. $a\leq P(x)/Q(x)\leq b$. For any non-empty set $S\subsetneqq\{X_1,\cdots,X_n\}$, let $\Sp_{\{X_1,\cdots,X_n\}\setminus S}$ be the space of the variables not in $S$. Then, multiplying non-negative $Q(x)$ to $a\leq P(x)/Q(x)\leq b$ and integrating over $\Sp_{\{X_1,\cdots,X_n\}\setminus S}$ gives $aQ_S\leq P_S\leq bQ_S$. Moreover, $Q_S$ is positive because $Q$ is positive. Thus, for any pair of marginal distributions $P_S$ and $Q_S$ of such distributions, they also satisfy that for any $x\in\Sp_S$, $P_S(x)/Q_S(x)\in I=[a, b]$.
    
    Applying the first inequality to pairs of marginals $P_{X_i\cup X_{\Pi_i}}$ and $Q_{X_i\cup X_{\Pi_i}}$ gives,
    \[
        D_{\gf_2}\PQ{X_i\cup X_{\Pi_i}}\leq \frac{1}{A} D_{\gf_1}\PQ{X_i\cup X_{\Pi_i}} \qquad \forall i\in\{1,\cdots,n\}
    \]
    Similarly, applying the second inequality to $P$ and $Q$ gives,
    \[
        \frac{1}{B}D_{\gf_1}\PQ{}\leq D_{\gf_2}\PQ{}
    \]
    Combine them with the subadditivity inequality of $D_{\gf_2}$, i.e. $D_{\gf_2}\PQ{}\leq \sum_{i=1}^n D_{\gf_2}\PQ{X_i\cup X_{\Pi_i}}$, we have,
    \[
        \frac{A}{B}D_{\gf_1}\PQ{}\leq \sum_{i=1}^n D_{\gf_1}\PQ{X_i\cup X_{\Pi_i}}
    \]
    This proves that $D_{\gf_1}$ satisfy $A/B$-linear subadditivity for such pairs of Bayes-nets $P$ and $Q$.
\end{proof}

Now, we list the two theorems characterizing the linear subadditivity of $\gf$-divergences when the distributions are one- or two-sided $\eps$-close.

\begin{theorem}
    An $\gf$-divergence whose $\gf(\cdot)$ is continuous on $(0,\infty)$ and twice differentiable at $1$ with $\gf''(1)>0$, satisfies $\ap$-linear subadditivity, when $P, Q$ are two-sided $\eps(\ap)$-close with $\eps>0$, where $\eps(\ap)$ is a non-increasing function and $\lim_{\eps\downarrow0}\ap=1$.
\end{theorem}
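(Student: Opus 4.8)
The plan is to deduce the statement from \cref{lem:local-linear-subadditivity-fdiv}, taking $\gf_2=\gf_{\SH}$, the squared-Hellinger generator $\gf_{\SH}(t)=\tfrac12(\sqrt t-1)^2$, which has subadditivity on Bayes-nets by \cref{thm:subadditivity-sh} and is nonnegative on $(0,\infty)$, vanishing only at $t=1$. All that is needed is to pin down, on a small two-sided neighbourhood of $t=1$, the ratio between $\gf$ and $\gf_{\SH}$.

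The first step is to pass to an \emph{equivalent} generator for $\PD$. Replace $\gf$ by $\tilde\gf(t):=\gf(t)-\gf'(1)(t-1)$ (here $\gf'(1)$ exists because $\gf$ is twice differentiable at $1$); since $\int(P(x)/Q(x)-1)\,Q(x)\,\dx=\int(P-Q)\,\dx=0$, we have $D_{\tilde\gf}\PQ{}=\PD\PQ{}$ for all $P,Q$, so it suffices to prove $\ap$-linear subadditivity for $D_{\tilde\gf}$. The new generator is still continuous on $(0,\infty)$ and twice differentiable at $1$, but now also $\tilde\gf(1)=\tilde\gf'(1)=0$ while $\tilde\gf''(1)=\gf''(1)>0$. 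Taylor's theorem at $t=1$ then gives $\tilde\gf(t)=\tfrac12\gf''(1)(t-1)^2+o((t-1)^2)$ and, since $\gf_{\SH}$ is smooth at $1$ with $\gf_{\SH}(1)=\gf_{\SH}'(1)=0$ and $\gf_{\SH}''(1)=\tfrac14$, also $\gf_{\SH}(t)=\tfrac18(t-1)^2+o((t-1)^2)$. Hence $g(t):=\tilde\gf(t)/\gf_{\SH}(t)$, which is well-defined and continuous on $(0,\infty)\setminus\{1\}$ (ratio of continuous functions, denominator nonzero), extends continuously to $t=1$ with $g(1)=4\gf''(1)>0$.

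Now fix a target coefficient $\ap\in(0,1)$ and set $\delta:=(1-\ap)/(1+\ap)\in(0,1)$. By continuity of $g$ at $1$ there is $\eps(\ap)\in(0,1)$ with $|g(t)-4\gf''(1)|\le\delta\cdot4\gf''(1)$ whenever $|t-1|<\eps(\ap)$; equivalently, with $A:=(1-\delta)4\gf''(1)$ and $B:=(1+\delta)4\gf''(1)$, we have $A\,\gf_{\SH}(t)\le\tilde\gf(t)\le B\,\gf_{\SH}(t)$ for every $t$ in the interval $I:=(1-\eps(\ap),1+\eps(\ap))$ (the point $t=1$ is harmless, since there these read $0\le0\le0$), together with $\gf_{\SH}\ge0$ on $I$ and $A/B=\ap$. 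Applying \cref{lem:local-linear-subadditivity-fdiv} with $\gf_1=\tilde\gf$, $\gf_2=\gf_{\SH}$ and this interval $I$ yields that $\PD=D_{\tilde\gf}$ satisfies $\ap$-linear subadditivity on Bayes-nets for every pair $P,Q$ with $P(x)/Q(x)\in I$ for all $x$ — that is, exactly when $P,Q$ are two-sided $\eps(\ap)$-close (which forces $P\ll\gg Q$, so the likelihood ratio is well-defined). Shrinking the neighbourhood only shrinks $\delta$, so $\eps(\ap)$ may be taken non-increasing in $\ap$, and $\delta\to0$ forces $\ap\to1$, giving $\lim_{\eps\downarrow0}\ap=1$.

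The main obstacle is the point $t=1$, where both $\tilde\gf$ and $\gf_{\SH}$ vanish: this is exactly what forces the passage to the equivalent generator $\tilde\gf$ (killing the linear term so that $\tilde\gf/\gf_{\SH}$ has a finite, strictly positive limit at $1$ rather than a pole), and it is the only place the hypotheses ``twice differentiable at $1$'' and ``$\gf''(1)>0$'' are essential.
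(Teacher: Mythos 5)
Your proposal is correct and follows essentially the same route as the paper's proof: compare $\gf$ to the squared-Hellinger generator via \cref{lem:local-linear-subadditivity-fdiv}, after passing to the equivalent generator $\gf(t)-\gf'(1)(t-1)$ so that the quotient with $\gf_{\SH}$ has a finite positive limit ($4\gf''(1)$) at $t=1$, then bound the quotient on a small two-sided interval around $1$ by continuity. Your version is slightly more explicit about the quantitative link between $\eps$ and $\ap$, but the key lemma, the generator-modification trick, and the comparison function are identical to the paper's.
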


\begin{proof}
    Following \cref{lem:local-linear-subadditivity-fdiv}, we consider the quotient $\gf(t)/\gf_{\SH}(t)$, where $\gf_{\SH}$ is the generator function of squared Hellinger distance, and $\gf_{\SH}(t)\coloneqq\frac12\left(\sqrt{t}-1\right)^2\geq0$ is always non-negative. If we can bound this quotient by positive numbers on an interval $t\in(1-\eps,1+\eps)$ for some $0<\eps<1$, then by \cref{lem:local-linear-subadditivity-fdiv}, we prove that $\PD$ satisfies linear subadditivity when the distributions $P$ and $Q$ are two-sided $\eps$-close.

    Because $\gf(t)$ and $\gf_{\SH}(t)$ are continuous functions on $(0, \infty)$, the quotient $\gf(t)/\gf_{\SH}(t)$ is also continuous on $(0,\infty)$. To bound the quotient in the neighborhood around $t=1$, we need to prove $\lim_{t\to1}\gf(t)/\gf_{\SH}(t)$ exists and is positive. For $\gf_{\SH}$, we know $\gf'_{\SH}(1)=\frac12\left(1-1/\sqrt{t}\right)\big|_{t=1}=0$ and $\gf''_{\SH}(1)=\frac14 t^{-3/2}\Big|_{t=1}=\frac14>0$. Thus, since $\gf(t)$ is twice differentiable at $t=1$, the limit of the quotient at $t=1$ exists and is positive if and only if $\gf'(1)=0$ and $\gf''(1)>0$. That is,
    \[
        0<\lim_{t\to1}\gf(t)/\gf_{\SH}(t)<\infty \iff \gf'(1)=0 \text{ and } \gf''(1)>0
    \]
    The latter condition is given, but the former condition, $\gf'(1)=0$, does not hold even for some $\gf$-divergences which satisfy subadditivity on any Bayes-nets, e.g. for $\KL$ divergence, $\gf'_{\KL}(1)=1+\log(t)\big|_{t=1}=1\neq 0$.
    
    However a trick can be used to rewrite the generator function $\gf(t)$ without changing the definition of $\PD$, so that the modified generator function satisfies the desired condition. For any $k\in\RR$, the modified generator $\hp(t) = \gf(t)+k(t-1)$ defines the same $\gf$-divergence,
    \[
        \begin{split}
            D_{\hp}\PQ{}&=\int Q\hp\left(\frac{P}{Q}\right)\dx=\int Q\left(\gf \left(\frac{P}{Q}\right)+k\left(\frac{P}{Q}-1\right)\right)\dx\\
            &=\int Q\gf\left(\frac{P}{Q}\right)\dx+k\int(P-Q)\dx=\PD\PQ{}
        \end{split}
    \]
    Thus, for any $\gf(t)$ twice differentiable at $t=1$ with $\gf''(1)>0$, we can define $\hp(t)\coloneqq\gf(t)-\gf'(1)(t-1)$. It is easy to verify that $\hp(t)$ has zero first derivative $\hp'(1)=0$ and positive second derivative  $\hp''(1)>0$ at $t=1$. The modified generator satisfies the two required conditions. As a consequence, we have $0<\lim_{t\to1}\hp(t)/\gf_{\SH}(t)<\infty$, and the quotient can be bounded by positive numbers in the neighborhood of $t=1$, because of the continuity of $\gf(t)$. Applying \cref{lem:local-linear-subadditivity-fdiv} to interval $I=(1-\eps, 1+\eps)$ concludes the proof.
\end{proof}

\cref{thm:local-linear-subadditivity-two-sided-close} applies to all practical $f$-divergences, including $\KL$, reverse $\KL$, $\CS$, reverse $\CS$, and squared Hellinger $\SH$ divergences.

In addition to the requirements of \cref{thm:local-linear-subadditivity-two-sided-close}, if $\gf(\cdot)$ is also strictly convex and $\gf(0)=\lim_{t\downarrow0}\gf(t)$ is finite, $\forall t\in[0,1)$, we have the following subadditivity result for one-sided close distributions.

\begin{theorem}
    An $\gf$-divergence whose $\gf(\cdot)$ is continuous and strictly convex on $(0,\infty)$, twice differentiable at $t=1$, and has finite $\gf(0)=\lim_{t\downarrow0}\gf(t)$, has linear subadditivity with coefficient $\ap>0$, when $P, Q$ are one-sided $\eps(\ap)$-close with $\eps>0$, where $\eps(\ap)$ is an non-increasing function and $\lim_{\eps\downarrow0}\ap>0$.
\end{theorem}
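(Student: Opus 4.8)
The plan is to reduce the statement, exactly as in the proof of \cref{thm:local-linear-subadditivity-two-sided-close}, to a purely one-dimensional comparison of generator functions, and then invoke \cref{lem:local-linear-subadditivity-fdiv}. Since $P,Q$ are one-sided $\eps$-close (which in particular entails $P\ll Q$), we have $P(x)/Q(x)\in[0,1+\eps)=:I$ for every $x$. I would compare $\PD$ against the squared Hellinger distance, whose generator $\gf_{\SH}(t)=\tfrac12(\sqrt t-1)^2$ is non-negative everywhere, vanishes only at $t=1$, is finite at the left endpoint ($\gf_{\SH}(0)=\tfrac12$), and whose divergence is subadditive on Bayes-nets (\cref{thm:subadditivity-sh}); so $\gf_{\SH}$ is an admissible $\gf_2$ for \cref{lem:local-linear-subadditivity-fdiv}. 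It then suffices to produce constants $0<A<B$ with $A\le \gf(t)/\gf_{\SH}(t)\le B$ for all $t\in I$.

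First I would normalize the generator with the same affine trick used in the two-sided proof: replace $\gf$ by $\hp(t)=\gf(t)-\gf'(1)(t-1)$, which defines the identical $\gf$-divergence (the extra term integrates to zero) and satisfies $\hp(1)=\hp'(1)=0$, while inheriting continuity, strict convexity on $(0,\infty)$, twice-differentiability at $t=1$, and finiteness of $\hp(0)=\gf(0)+\gf'(1)$. Strict convexity of $\hp$ together with $\hp(1)=\hp'(1)=0$ forces $\hp(t)>0$ for every $t\neq 1$, so the quotient $g(t):=\hp(t)/\gf_{\SH}(t)$ is a ratio of continuous, strictly positive functions on $I\setminus\{1\}$, hence continuous and positive there. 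Near $t=1$, using $\gf_{\SH}(1)=\gf_{\SH}'(1)=0$, $\gf_{\SH}''(1)=\tfrac14$ and a second-order Taylor expansion of $\hp$ at $1$, the limit $\lim_{t\to1}g(t)=4\hp''(1)=4\gf''(1)$ exists and is positive; thus $g$ extends to a continuous strictly positive function on the compact interval $\overline I=[0,1+\eps]$, and so attains a positive minimum $A=A(\eps)$ and a finite maximum $B=B(\eps)$ there. (At $t=1$ the pointwise inequality $A\gf_{\SH}\le \hp\le B\gf_{\SH}$ is the trivial $0\le 0\le 0$, so this is exactly the hypothesis of \cref{lem:local-linear-subadditivity-fdiv}.) Applying that lemma with $\gf_1=\hp$, $\gf_2=\gf_{\SH}$ and interval $I$ yields that $\PD$ satisfies $\ap$-linear subadditivity with $\ap=A(\eps)/B(\eps)>0$ whenever $P,Q$ are one-sided $\eps$-close.

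It remains to record the dependence on $\eps$. If $\eps_1<\eps_2$ then $[0,1+\eps_1]\subset[0,1+\eps_2]$, so $A(\eps)$ is non-increasing and $B(\eps)$ non-decreasing in $\eps$; hence $\ap(\eps)=A(\eps)/B(\eps)$ is non-increasing in $\eps$, which is precisely to say that the closeness threshold $\eps(\ap)$ guaranteeing coefficient $\ap$ is a non-increasing function. Finally, by continuity of $g$, $\lim_{\eps\downarrow0}\ap(\eps)=\big(\min_{t\in[0,1]}g(t)\big)\big/\big(\max_{t\in[0,1]}g(t)\big)$, which is strictly positive because $g$ is continuous and strictly positive on the compact set $[0,1]$; unlike the two-sided case, where the interval collapses to $\{1\}$ and this ratio is $1$, here it is in general strictly below $1$, which explains why only $\lim_{\eps\downarrow0}\ap>0$ is claimed.

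I expect the only delicate point to be the behavior of $g=\hp/\gf_{\SH}$ at $t=1$: one must verify that the limit exists and is positive, which is where the twice-differentiability at $1$ — together with genuinely quadratic local behavior, $\gf''(1)>0$, mirroring the hypothesis of \cref{thm:local-linear-subadditivity-two-sided-close} — is needed, and that strict convexity genuinely rules out any further zero of $\hp$ on $[0,1)$ so that $A(\eps)$ stays bounded away from $0$. The hypothesis $\gf(0)<\infty$ enters precisely to license including the endpoint $t=0$ in the compact interval over which $g$ is extremized; everything else is a routine reuse of the machinery already developed for the two-sided case.
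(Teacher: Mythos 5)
Your proposal is correct and follows essentially the same route as the paper: replace $\gf$ by the affinely normalized $\hp(t)=\gf(t)-\gf'(1)(t-1)$, bound the quotient $\hp/\gf_{\SH}$ by positive constants on $[0,1+\eps)$ (using strict convexity so $\hp>0$ away from $1$, finiteness of $\gf(0)$ at the left endpoint, and the quadratic behavior at $t=1$), and then invoke \cref{lem:local-linear-subadditivity-fdiv} with $\gf_2=\gf_{\SH}$. Your explicit reliance on $\gf''(1)>0$ matches the paper, whose proof inherits exactly this condition from \cref{thm:local-linear-subadditivity-two-sided-close}; your added monotonicity and limiting remarks about $\ap(\eps)$ are consistent elaborations of what the paper leaves implicit.
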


\begin{proof}
    From the proof of \cref{thm:local-linear-subadditivity-two-sided-close}, let $\hp(t)\coloneqq\gf(t)-\gf'(1)(t-1)$ be the modified generator function. We know the quotient $\hp(t)/\gf_{\SH}(t)$ can be bounded by positive numbers for any $t\in (1-\eps, 1+\eps)$ for some $0<\eps<1$. It remains to prove that $\hp(t)/\gf_{\SH}(t)$ can be bounded by positive numbers on the interval $[0, 1-\eps)$.
    
    The generator $\gf(t)$ is a strictly convex function on $(0,\infty)$, so is the modified generator $\hp(t)$, since their difference is a linear function of $t$. Because $\hp'(1)=0$, the tangent line of the curve of $\hp(t)$ at $t=1$ coincides with the x-axis. Since $\hp(t)$ is strictly convex on $(0,\infty)$, the graph of $\hp(t)$ lies above the x-axis, i.e. for any $t\in(0,\infty)$ we have $\hp(t)\geq 0$, where the equality holds if and only if $t=1$. Hence, for any $t\in [0,1-\eps)$, it holds that $\hp(t)>0$. Moreover, $\hp(0)=\gf(0)+\gf'(1)$ and we know $\gf(0)=\lim_{t\downarrow0}\gf(t)$ is finite. In this sense, $\hp(0)$ is finite and positive. By the continuity of the modified generator $\hp(t)$, we know $\hp(t)$ can be bounded by positive numbers on $[0,1-\eps)$. Moreover, clearly $\gf_{\SH}(t)\coloneqq\frac12\left(\sqrt{t}-1\right)^2$ can be bounded by positive numbers $[0,1-\eps)$. This implies that the quotient $\hp(t)/\gf_{\SH}(t)$ can be bounded by positive numbers on $[0,1-\eps)$. Applying \cref{lem:local-linear-subadditivity-fdiv} to the combined interval $I=[0, 1+\eps)=[0, 1-\eps)\cup\{\eps\}\cup(1-\eps, 1+\eps)$ concludes the proof.
\end{proof}

Using \cref{thm:local-linear-subadditivity-one-sided-close}, we can relax the condition $P\gg Q$, as long as $\gf(0)<\infty$ and $\gf(\cdot)$ is strictly convex. A broad class of $\gf$-divergences satisfy this; see \cref{appendix:local-exp} below.

\section{Examples of Local Subadditivity}
\label{appendix:local-exp}

In this section, we discuss a notable class of $\gf$-divergences that satisfy local subadditivity, namely the $\ap$-divergences. $\ap$-Divergences are $\gf$-divergences whose generator functions $\gf_{\HA}(\cdot)$ generalize power functions (see \cref{tab:common-f-divs} in \cref{appendix:subsec:common-fdivs}). We show that all $\ap$-divergences satisfy linear subadditivity when the distributions are two-sided close, and $\ap$-divergences with $\ap>0$ satisfy linear subadditivity when the distributions are only one-sided close.

Since for any $\ap\in\RR$, $\gf_{\HA}(t)$ is continuous with respect to $t$, and its second order derivative at $t=1$, i.e. $\gf''_{\HA}(1)=t^{\ap-2}\big|_{t=1}=1$ is positive, by \cref{thm:local-linear-subadditivity-two-sided-close} we conclude the following result.

\begin{example}
\label{exp:local-two-sided-close-alpha-div}
    $\ap$-divergences,
    \[
        \HA\PQ{}\coloneqq
        \begin{cases}
            \frac{1}{\ap(\ap-1)}\int Q\left(\left(P/Q\right)^\ap-1\right)\dx & \ap\neq0, 1\\
            \KL\PQ{} & \ap=1\\
            \KL(Q,P) & \ap=0\\
        \end{cases}
    \]
    which generalize KL and reverse KL divergences, $\CS$ and reverse $\CS$ divergences, and squared Hellinger distance (see \cref{appendix:subsec:common-fdivs} for details), satisfy linear subadditivity when the two distributions $P$ and $Q$ are two-sided $\eps$-close for some $\eps>0$.
\end{example}

For $\ap$-divergences with $\ap>0$, apart from the above-mentioned properties, $\gf_{\HA}(t)$ is strictly convex since for any $t\in(0, \infty)$, we have $\gf''_{\HA}(t)=t^{\ap-2}>0$. And $\gf(0)=\lim_{t\downarrow0}$ is always finite, because when $\ap=1$, we have $\lim_{t\downarrow0}\gf(t)=0$, and when $\ap>0$ and $\ap\neq1$, the limit $\lim_{t\downarrow0}\gf(t)=-\frac{1}{\ap(\ap-1)}$ exists. By \cref{thm:local-linear-subadditivity-one-sided-close}, we obtain the following.

\begin{example}
\label{exp:local-one-sided-close-alpha-div}
    $\ap$-divergences with $\ap>0$, which generalize KL divergence, $\CS$ divergence, and squared Hellinger distance, satisfy linear subadditivity when the two distributions $P$ and $Q$ are one-sided $\eps$-close for some $\eps>0$.
\end{example}

\section{Prior Work on Bounding the IPMs}
\label{appendix:ipm}

We list some of the prior work on bounding the Integral Probability Metrics (IPMs). All the concepts and theorems introduced here are used to prove the generalized subadditivity of neural distances on Bayes-nets (\cref{thm:subadditivity-ipm}) and on MRFs (\cref{coro:subadditivity-ipm-mrf}).

\subsection{Preliminaries and Notations}
\label{appendix:subsec:ipm-preliminaries}

Firstly, we introduce some concepts that help us characterize the set of discriminators $\cF$. Consider $\cF$ as a set of some functions $\df:\Sp\to\RR$, where $\Sp\subseteq\RR^D$. The Banach space of bounded continuous functions is denoted by $C_b(\Sp)\coloneqq\{\df:\Sp\to\RR|\df\text{ is continuous and } \|\df\|_{\infty}<\infty\}$, where $\|\df\|_\infty=\sup_{x\in X}|\df(x)|$ is the uniform norm. The linear span of $\cF$ is defined as,
\[
    \vspan\cF\coloneqq\left\{\ap_0+\sum_{i=1}^n\ap_i\df_i\bigg|\ap_i\in\RR, \df_i\in\cF, n\in\NN\right\}
\]
For a function $g\in\vspan\cF$, we define the $\cF$-variation norm $\|g\|_\cF$ as the infimum of the $L_1$ norm of the expansion coefficients of $g$ over $\cF$, that is,
\[
    \|g\|_\cF=\inf\Bigg\{\sum_{i=1}^n|\ap_i|\bigg|g=\ap_0+\sum_{i=1}^n\ap_i\df_i,\forall\ap_i\in\RR, \df_i\in\cF, n\in\NN\Bigg\}
\]
Let $\cl(\vspan\cF)$ be the closure of the linear span of $\cF$. We say $g\in\cl(\vspan\cF)$ is approximated by $\cF$ with an error decay function $\veps(r)$ for $r\geq 0$, if there exists a $\df_r\in\vspan\cF$, such that $\|\df_r\|_\cF\leq r$ and $\|\df-\df_r\|_{\infty}\leq\veps(r)$. In this sense, it is not hard to show that $g\in\cl(\vspan\cF)$ if and only if $\inf_{r\geq 0}\veps(r)=0$.

\subsection{The Universal Approximation Theorems}
\label{appendix:subsec:ipm-universal-approximation}

From Theorem 2.2 of \citep{zhang2018on}, we know that $d_\cF(P, Q)$ is discriminative, i.e. $d_\cF\PQ{}=0 \iff P=Q$, if and only if $C_b(X)$ is contained in the closure of $\vspan\cF$, i.e. $C_b(X)\subseteq\cl(\vspan\cF)$. In other words, it means that we require $\vspan\cF$ to be dense in $C_b(X)$, so that $d_\cF\PQ{}\to0$ implies the weak converge of the fake distribution $Q$ to the real distribution $P$.

By the famous universal approximator theorem (e.g. Theorem 1 of \citep{leshno1993multilayer}), the discriminative criteria $C_b(X)\subseteq\cl(\vspan\cF)$ can be satisfied by small discriminator sets such as the neural networks with only a single neuron, $\cF=\{\sigma(w^Tx+b)|w\in\RR^D, b\in\RR\}$, if the activation function $\sigma:\RR\to\RR$ is continuous but not a polynomial. Later, \citep{bach2017breaking} proves that the set of single-neuron neural networks with rectified linear unit (ReLU) activation also satisfies the criteria.

\begin{theorem}[Theorem 1 of \citep{leshno1993multilayer}, \citep{bach2017breaking}]
\label{thm:neural-network-approximator}
    For the set of neural networks with a single neuron, i.e. $\cF=\{\sigma(w^Tx+b)|w\in\RR^D, b\in\RR\}$. The linear span of $\cF$ is dense in the Banach space of bounded continuous functions $C_b(X)$, i.e. $C_b(x)\subseteq\cl(\vspan\cF)$, if the activation function $\sigma(\cdot)$ is continuous but not a polynomial, or if
    $\sigma(u)=\max\{u,0\}^\ap$ for some $\ap\in\NN$ (when $\ap=1$, $\sigma(u)=\max\{u,0\}$ is the ReLU activation).
\end{theorem}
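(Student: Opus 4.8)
This statement is the classical universal-approximation theorem of \citet{leshno1993multilayer} (the continuous non-polynomial case), with the ReLU-type activations revisited by \citet{bach2017breaking}, so the plan is to recall the standard argument. First I would observe that $\sigma(u)=\max\{u,0\}^\ap$ with $\ap\in\NN$ is itself continuous and not a polynomial (it equals $u^\ap$ on $u>0$ and $0$ on $u<0$, so no single polynomial represents it on $\RR$), hence is subsumed by the first case; alternatively one invokes the direct Fourier-analytic treatment of \citet{bach2017breaking}, which has the advantage of also yielding the quantitative $\cF$-variation error-decay bounds used in \cref{thm:subadditivity-ipm}. It therefore suffices to prove: for $\sigma\in C(\RR)$ not a polynomial, $C_b(\Sp)\subseteq\cl(\vspan\cF)$ with $\cF=\{x\mapsto\sigma(\langle w,x\rangle+b):w\in\RR^D,b\in\RR\}$.

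The proof has four steps. \emph{(i) Reduction to ridge functions.} Since $\Sp$ is bounded it suffices to approximate uniformly on a compact $K\subseteq\RR^D$, and a standard lemma (density of plane waves, e.g. via Stone--Weierstrass or the Radon transform) says finite linear combinations of ridge functions $x\mapsto g(\langle w,x\rangle)$, $w\in\RR^D$, $g\in C(\RR)$, are dense in $C(K)$; hence it is enough that for each fixed $w$ the univariate family $\cF_1=\{t\mapsto\sigma(at+b):a,b\in\RR\}$ has dense span in $C(J)$ for every compact interval $J$, which applied to $J=\{\langle w,x\rangle:x\in K\}$ reconstructs every ridge function. \emph{(ii) Mollification.} For a Gaussian mollifier $\rho_\delta$ the convolution $\sigma*\rho_\delta$ is real-analytic, converges to $\sigma$ uniformly on compacta as $\delta\downarrow0$, and lies in $\cl(\vspan\{\sigma(\cdot-y):y\in\RR\})$ because the defining integral is a uniform-on-compacta limit of Riemann sums of translates of $\sigma$; using that convolution by a Gaussian preserves the degree of a polynomial, if $\sigma*\rho_\delta$ were a polynomial for all $\delta$ then the degrees would be a fixed $N$ and $\sigma=\lim_{\delta\downarrow0}\sigma*\rho_\delta$ a polynomial of degree $\le N$, so we may fix $\delta$ making $\sigma*\rho_\delta$ analytic and non-polynomial and replace $\sigma$ by it.

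\emph{(iii) Extracting monomials.} For an analytic non-polynomial $\sigma$ every derivative $\sigma^{(k)}$ is a nonzero analytic function (else $\sigma$ is a polynomial of degree $<k$), so each zero set $\{b:\sigma^{(k)}(b)=0\}$ is discrete and their countable union is meager; hence some $b_0$ has $\sigma^{(k)}(b_0)\neq0$ for all $k\ge0$. Taking iterated difference quotients of $a\mapsto\sigma(at+b_0)$ at $a=0$ (these stay in $\vspan\cF_1$ and converge uniformly on compacta to $t^k\sigma^{(k)}(b_0)$) gives $t^k\sigma^{(k)}(b_0)\in\cl(\vspan\cF_1)$, hence $t^k\in\cl(\vspan\cF_1)$ for all $k$. \emph{(iv) Conclusion.} All monomials, hence all polynomials, lie in $\cl(\vspan\cF_1)$, and by the Weierstrass theorem polynomials are dense in $C(J)$, so $C(J)\subseteq\cl(\vspan\cF_1)$; chaining back through step (i) yields $C_b(\Sp)\subseteq\cl(\vspan\cF)$.

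The main obstacle is making steps (ii) and (iii) rigorous: one must verify that mollification keeps a function inside $\cl(\vspan\cF)$ even when $\sigma$ is unbounded (dominated-convergence/Riemann-sum bookkeeping, cleanest with a Gaussian kernel and polynomial-growth $\sigma$), and that a smooth non-polynomial function genuinely admits a point where no derivative vanishes — which is exactly why reducing to the real-analytic case in step (ii) is worthwhile, since it turns the latter claim into the elementary fact that a nonzero analytic function has a discrete zero set.
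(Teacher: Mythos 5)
The paper does not prove this statement at all: it is imported verbatim as Theorem~1 of \citet{leshno1993multilayer} together with \citet{bach2017breaking}, and the appendix explicitly defers to those references for the proofs, so there is no internal argument to compare against. Your reconstruction is essentially the classical Leshno--Lin--Pinkus--Schocken proof that those references contain: reduction to ridge functions, mollification to a smooth non-polynomial surrogate inside $\cl(\vspan\cF)$, extraction of the monomials $t^k$ via difference quotients at a point $b_0$ with $\sigma^{(k)}(b_0)\neq 0$ for all $k$, and Weierstrass; and your observation that $\max\{u,0\}^\ap$ is continuous and non-polynomial, hence subsumed (with \citet{bach2017breaking} needed only for the quantitative variation-norm rates used in \cref{thm:subadditivity-ipm}), is the right way to dispatch the second clause. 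The one place where your variant is weaker than the cited proof is the mollification step: a Gaussian kernel buys you real-analyticity, which elegantly replaces the Baire-category lemma (``if every point has some vanishing derivative, then the function is a polynomial'') by the discreteness of zero sets of nonzero analytic functions, but $\sigma*\rho_\delta$ need not even be defined for an arbitrary continuous $\sigma$ with superexponential growth, and the Riemann-sum argument placing it in $\cl(\vspan\{\sigma(\cdot-y)\})$ also needs tail control. You flag this yourself; to cover the theorem in the stated generality you should either first localize (only the values of $\sigma$ on compact sets enter after fixing the mollifier's support) and use a $C_c^\infty$ mollifier plus the Baire-category lemma, as Leshno et al.\ do, or restrict the Gaussian route to activations of polynomial growth --- which suffices for every case this paper actually uses, in particular ReLU. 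With that caveat repaired, the argument is complete and consistent with the cited sources.
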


See \citep{leshno1993multilayer} and \citep{bach2017breaking} for further details and the proofs.

\subsection{IPMs Upper-Bounding the Symmetric KL Divergence}
\label{appendix:subsec:ipm-lower-bound}

\citep{zhang2018on} explains how IPMs can control the likelihood function, so that along with the training of an IPM-based GAN, the training likelihood should generally increase. More specifically, they prove that if the densities $P$ and $Q$ exist, and $\log(P/Q)$ is inside the closure of the linear span of $\cF$, i.e. $\log(P/Q)\in\cl(\vspan\cF)$, a function of the IPM $d_\cF\PQ{}$ can upper-bound the Symmetric KL divergence $\SKL\PQ{}$. In this sense, minimizing the IPM leads to the minimization of Symmetric KL divergence (and thus KL divergence), which is equivalent to the maximization of the training likelihood.
    
\begin{theorem}[Proposition 2.7 and 2.9 of \citep{zhang2018on}]
\label{thm:ipm-discriminative}
    Any function $g$ inside the closure of the linear span of $\cF$, i.e. $g\in\cl(\vspan\cF)$, is approximated by $\cF$ with an error decay function $\veps(r)$. It satisfies,
    \[
        \Big|\EE_{x\sim P}[g(x)]-\EE_{x\sim Q}[g(x)]\Big|\leq2\veps(r)+r d_{\cF}\PQ{} \qquad \forall r\geq0
    \]
    Moreover, consider two distributions with positive densities $P$ and $Q$, if $g=\log(P/Q)\in\cl(\vspan\cF)$, we have,
    \[
        \SKL\PQ{}\equiv\Big|\EE_{x\sim P}[\log(P(x)/Q(x))]-\EE_{x\sim Q}[\log(P(x)/Q(x))]\Big|\leq2\veps(r)+r d_{\cF}\PQ{} \qquad \forall r\geq0
    \]
\end{theorem}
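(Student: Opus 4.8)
The plan is to prove the general inequality first; the Symmetric-KL statement then drops out as a special case. Fix $r\ge0$ and let $g\in\cl(\vspan\cF)$. By the definition of the error decay function recalled in \cref{appendix:subsec:ipm-preliminaries}, there is $\df_r\in\vspan\cF$ with $\|\df_r\|_\cF\le r$ and $\|g-\df_r\|_\infty\le\veps(r)$; since $\|\cdot\|_\cF$ is defined as an infimum, I would actually fix a representation $\df_r=\ap_0+\sum_{i=1}^m\ap_i\df_i$ with $\df_i\in\cF$ and $\sum_{i=1}^m|\ap_i|\le r+\delta$, carry $\delta$ through, and send $\delta\downarrow0$ at the end.

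The core estimate is a three-term triangle inequality obtained by inserting $\df_r$:
\[
    \Big|\EE_{x\sim P}[g]-\EE_{x\sim Q}[g]\Big|
    \le \Big|\EE_{P}[g-\df_r]\Big|+\Big|\EE_{P}[\df_r]-\EE_{Q}[\df_r]\Big|+\Big|\EE_{Q}[\df_r-g]\Big|.
\]
The first and third terms are each at most $\|g-\df_r\|_\infty\le\veps(r)$ since $|\EE_\mu[h]|\le\|h\|_\infty$, producing the $2\veps(r)$ contribution. For the middle term the constant $\ap_0$ cancels (as $\EE_P[\ap_0]=\EE_Q[\ap_0]$, and $|\ap_0|$ does not even enter $\|\cdot\|_\cF$), so linearity of expectation gives
\[
    \Big|\EE_{P}[\df_r]-\EE_{Q}[\df_r]\Big|
    =\Big|\sum_{i=1}^m\ap_i\big(\EE_P[\df_i]-\EE_Q[\df_i]\big)\Big|
    \le\sum_{i=1}^m|\ap_i|\,\Big|\EE_P[\df_i]-\EE_Q[\df_i]\Big|
    \le (r+\delta)\,d_{\cF}\PQ{},
\]
where the last step uses $|\EE_P[\df_i]-\EE_Q[\df_i]|\le d_{\cF}\PQ{}$ for every $\df_i\in\cF$; this is exactly the definition of the IPM when $\cF$ is symmetric under negation (and in general one only picks up $\max\{d_{\cF}\PQ{},d_{\cF}(Q,P)\}$, which is harmless for the symmetric discriminator classes of \cref{thm:subadditivity-ipm} and \cref{coro:subadditivity-ipm-mrf}). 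Letting $\delta\downarrow0$ and combining the three pieces yields $|\EE_P[g]-\EE_Q[g]|\le 2\veps(r)+r\,d_{\cF}\PQ{}$.

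For the Symmetric-KL claim, I would write $\SKL\PQ{}=\KL\PQ{}+\KL(Q,P)=\EE_{x\sim P}[\log(P/Q)]+\EE_{x\sim Q}[\log(Q/P)]=\EE_{x\sim P}[\log(P/Q)]-\EE_{x\sim Q}[\log(P/Q)]$, which is nonnegative because each $\KL$ is; hence $\SKL\PQ{}=\big|\EE_P[\log(P/Q)]-\EE_Q[\log(P/Q)]\big|$. Under the hypothesis $g=\log(P/Q)\in\cl(\vspan\cF)$, the first part applies verbatim with this choice of $g$ and gives $\SKL\PQ{}\le 2\veps(r)+r\,d_{\cF}\PQ{}$.

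The main obstacle I anticipate is not the algebra but the integrability bookkeeping that makes the triangle inequality legitimate: one needs $\EE_P[\df_r],\EE_Q[\df_r],\EE_P[g],\EE_Q[g]$ all finite, which is not automatic because generators of $\cF$ (e.g.\ ReLU neurons) are unbounded. This is handled by observing that $g$ and $\df_r$ differ by an $L_\infty$-bounded function, so all four expectations are simultaneously finite; in the regimes where the theorem is actually used ($\Sp$ bounded in \cref{thm:subadditivity-ipm} and \cref{coro:subadditivity-ipm-mrf}) this is automatic, and in the $\SKL$ case the stated inequality is vacuous when its left-hand side is infinite. The only remaining wrinkle is the infimum defining $\|\cdot\|_\cF$, which the $\delta$-slack argument above absorbs.
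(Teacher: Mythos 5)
Your proposal is correct and follows essentially the same route as the paper's proof: insert the approximant $\df_r$, use the triangle inequality to split off two terms bounded by $\veps(r)$ via $\|g-\df_r\|_\infty$, bound the span term by $r\,d_{\cF}\PQ{}$ through the expansion coefficients, and then specialize to $g=\log(P/Q)$ using nonnegativity of $\SKL$. Your added care about the infimum in $\|\cdot\|_\cF$ (the $\delta$-slack), the sign/symmetry convention in the IPM definition, and integrability are minor refinements of steps the paper's proof glosses over, not a different argument.
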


\begin{proof}
    The proof is in Appendix C of \citep{zhang2018on}. We repeat the proof here for completeness.
    
    Since $g$ is approximated by $\cF$ with error decay function $\veps(r)$, for any $r\geq0$, there exist some $\df_r\in\vspan\cF$, which can be represented as $\df_r=\sum_{i=1}^n\ap_i\df_i+\ap_0$ with some $\ap_i\in\RR$ and $\df_i\in\cF$, such that $\sum_{i=1}^n|\ap_i|=\|\df_r\|_\cF\leq r$ and $\|g-\df_r\|_\infty<\veps(r)$. In this sense, we have,
    \[
        \begin{split}
            &\Big|\EE_{x\sim P}[g(x)]-\EE_{x\sim Q}[g(x)]\Big|\\
            &=\Big|\big(\EE_{x\sim P}[g(x)]-\EE_{x\sim P}[\df_r(x)]\big)-\big(\EE_{x\sim Q}[g(x)]-\EE_{x\sim Q}[\df_r(x)]\big)+\big(\EE_{x\sim P}[\df_r(x)]-\EE_{x\sim Q}[\df_r(x)]\big)\Big|\\
            &\leq\Big|\EE_{x\sim P}[g(x)-\df_r(x)]\Big|+\Big|\EE_{x\sim Q}[g(x)-\df_r(x)]\Big|+\Big|\EE_{x\sim P}[\df_r(x)]-\EE_{x\sim Q}[\df_r(x)]\Big|\\
            &\leq\EE_{x\sim P}\big|g(x)-\df_r(x)\big|+\EE_{x\sim Q}\big|g(x)-\df_r(x)\big|+\Big|\sum_{i=1}^n\ap_i\big(\EE_{x\sim P}[\df_i(x)]-\EE_{x\sim Q}[\df_i(x)]\big)\Big|\\
            &\leq2\veps(r)+\sum_{i=1}^n|\ap_i|\Big|\EE_{x\sim P}[\df_i(x)]-\EE_{x\sim Q}[\df_i(x)]\Big|\\
            &\leq2\veps(r)+r d_\cF\PQ{}\\
        \end{split}
    \]
    Applying this inequality to $g=\log(P/Q)$ proves that, for any $r\geq0$, this linear function of IPM $2\veps(r)+r d_\cF\PQ{}$ upper-bounds the Symmetric KL divergence $\SKL\PQ{}$.
\end{proof}

The upper-bounds obtained by \cref{thm:ipm-discriminative} are a set linear functions of the IPM, $\{2\veps(r)+r d_\cF\PQ{}\big|r\geq0\}$. In order to prove that the IPM $d_\cF\PQ{}$ can upper-bound the Symmetric KL divergence $\SKL\PQ{}$ up to some constant coefficient and additive error, i.e. $\ap\SKL\PQ{}-\eps\leq d_\cF\PQ{}$ for some constants $\ap, \veps>0$, we have to control both $\veps(r)$ and $r$ simultaneously. Because $\lim_{r\to\infty}\veps(r)=0$, all we need is an efficient upper-bound on $\veps(r)$ for large enough $r$, which is provided in \citep{bach2017breaking}.

\begin{theorem}[Proposition 6 of \citep{bach2017breaking}]
\label{thm:approximability-lipschitz}
    For a bounded space $\Sp$, let $g:\Sp\to\RR$ be a bounded and Lipschitz continuous function (i.e. there exists a constant $\eta>0$ such that $\|g\|_\infty<\eta$ and for any $x, y\in\Sp\subseteq\RR^{D}$, it holds that $\|g(x)-g(y)\|_\infty\leq\frac{1}{\diam\Sp}\eta\|x-y\|_2$), and let $\cF$ be a set of neural networks with a single neuron, which have ReLU activation and bounded parameters (i.e. $\cF=\{\max\{w^Tx+b, 0\}\big|w\in\RR^{D}, b\in\RR, \|[w,b]\|_2=1\}$). Then, we have $g\in\cl(\vspan\cF)$, and $g$ is approximated by $\cF$ with error decay function $\veps(r)$, such that,
    \[
        \veps(r)\leq C(D)\eta\left(\frac{r}{\eta}\right)^{-\frac{2}{D+1}}\log\left(\frac{r}{\eta}\right) \qquad \forall r\geq R(D)
    \]
    where $C(D), R(D)$ are constants which only depend on the number of dimensions, $D$.
\end{theorem}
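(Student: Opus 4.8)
The plan is to reconstruct the argument behind this result, which is Proposition~6 of \citep{bach2017breaking} specialised to Lipschitz targets: reduce to a scale-free problem on a sphere, solve the resulting approximation problem by classical harmonic analysis, and optimise a single degree parameter at the end. First I would normalise: since $\|g\|_\infty<\eta$ and $g$ is $(\eta/\diam\Sp)$-Lipschitz, after translating and rescaling $\Sp$ into the unit ball and dividing $g$ by $\eta$ one may assume $\|g\|_\infty\le1$ and $g$ is $1$-Lipschitz; every resulting bound then acquires a prefactor $\eta$ in front of a function of $r/\eta$ (the $\cF$-variation norm rescales inversely), which is exactly the shape claimed. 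Next I would exploit positive homogeneity of the ReLU to lift to the sphere: writing $\tilde x=(x,1)$ and $u=\tilde x/\|\tilde x\|_2$, one has $\max\{w^\top x+b,0\}=\|\tilde x\|_2\,\max\{(w,b)^\top u,0\}$ with $\|(w,b)\|_2=1\iff(w,b)\in\mathbb{S}^D$, and $x\mapsto u$ sends $\Sp$ onto a spherical cap $\Xi\subseteq\mathbb{S}^D$ on which $\|\tilde x\|_2\in[1,\sqrt2]$. So approximating $g$ on $\Sp$ by $\vspan\cF$ is, up to the bounded Lipschitz factor $\|\tilde x\|_2$, the problem of approximating $G(u):=g(x(u))/\|\tilde x(u)\|_2$ on $\Xi$ by functions of the form $\int_{\mathbb{S}^D}\max\{\omega^\top u,0\}\,\mathrm d\mu(\omega)$, with $\|\mu\|_{\mathrm{TV}}$ playing the role of $\|\cdot\|_\cF$ (constants are available since the uniform average of $\omega\mapsto\max\{\omega^\top u,0\}$ is constant on $\mathbb{S}^D$, which is exactly what the stray $\ap_0$ term in $\vspan\cF$ supplies). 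Since only the values of $G$ on $\Xi$ are constrained, I would first extend $G$ to all of $\mathbb{S}^D$ keeping it bounded and $\cO(1)$-Lipschitz (a McShane extension, then a clip to the sup bound).

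The core is then a harmonic-analysis estimate on $\mathbb{S}^D$. I would expand $G=\sum_{k\ge0}G_k$ into spherical harmonics and approximate it by a de~la~Vall\'ee--Poussin mean $V_NG=\sum_k\eta(k/N)G_k$ (smooth cutoff $\eta$, equal to $1$ on $[0,1]$ and supported on $[0,2]$), for which the spherical Jackson theorem gives $\|G-V_NG\|_\infty\lesssim E_N(G)_\infty\lesssim N^{-1}$ because $G$ is $\cO(1)$-Lipschitz. For the variation norm, the representation operator $\mu\mapsto\int\max\{\omega^\top\cdot,0\}\,\mathrm d\mu$ is diagonalised by spherical harmonics with polynomially decaying eigenvalues $\lambda_k\asymp k^{-(D+3)}$ (the order-one arc-cosine kernel), so the usual inversion/Sobolev computation yields the borderline embedding $H^{(D+3)/2}(\mathbb{S}^D)\hookrightarrow\{g:\|g\|_\cF<\infty\}$ up to a logarithmic loss; since $\|V_NG\|_{H^{(D+3)/2}}\lesssim N^{(D+1)/2}\|G\|_{H^1}\lesssim N^{(D+1)/2}$ (using $\|G\|_{H^1}\lesssim1$, which follows from the sup and Lipschitz bounds on the compact $\mathbb{S}^D$), this produces a signed measure $\mu_N$ representing $V_NG$ with $\|\mu_N\|_{\mathrm{TV}}\lesssim N^{(D+1)/2}\log N$. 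As $V_NG$ is a spherical polynomial of degree $\le2N$, it lies in a space of dimension $\lesssim N^D$, so by Carath\'eodory $\mu_N$ may be replaced by a combination of $\lesssim N^D$ atoms; undoing the lift then produces a genuine finite element $\df_N\in\vspan\cF$ with $\|\df_N\|_\cF\le\|\mu_N\|_{\mathrm{TV}}\lesssim N^{(D+1)/2}\log N$ and $\|g-\df_N\|_\infty\lesssim N^{-1}$.

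Finally I would optimise the degree: given a budget $r$ on $\|\df_N\|_\cF$, choose $N\asymp(r/\log r)^{2/(D+1)}$, which gives $\|g-\df_N\|_\infty\lesssim N^{-1}\lesssim r^{-2/(D+1)}\log r$; taking $R(D)$ to be any threshold large enough that $N\ge1$ and the Jackson and eigenvalue asymptotics apply, and restoring the normalisation, one gets $\veps(r)\le C(D)\,\eta\,(r/\eta)^{-2/(D+1)}\log(r/\eta)$ for all $r\ge R(D)$, while $g\in\cl(\vspan\cF)$ follows because this tends to $0$. The hard part will be the harmonic-analysis step: getting the polynomial eigenvalue decay of the arc-cosine kernel on $\mathbb{S}^D$ exactly right (including the parity phenomenon, which is precisely why the bias/constant term in $\vspan\cF$ is needed) and pairing it with a sharp spherical Jackson estimate so that the two rates combine to the exponent $2/(D+1)$ while losing only a single logarithm rather than a power of $r$. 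The lifting, the McShane extension, the Carath\'eodory discretisation, and the final optimisation are routine once that estimate is in place.
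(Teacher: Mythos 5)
The paper does not actually prove \cref{thm:approximability-lipschitz}: it is quoted verbatim as Proposition~6 of \citep{bach2017breaking}, and the appendix simply points the reader to Bach's Proposition~3 and Appendices~C.3 and~D.4. So the comparison here is with that external argument, and your sketch does follow essentially the same route as Bach: normalize, use positive homogeneity of the ReLU to lift $\Sp$ to a spherical cap in $\mathbb{S}^{D}$ with the bias absorbed into the last coordinate, expand in spherical harmonics, and trade a Jackson-type $\cO(N^{-1})$ uniform error for an $\cO(N^{(D+1)/2}\log N)$ growth of the variation norm, which after optimizing $N$ gives the exponent $2/(D+1)$ with a single logarithm.

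There is, however, a concrete gap in the harmonic-analysis step as you wrote it. The ReLU ridge functions $u\mapsto\max\{\omega^{\top}u,0\}$ on the sphere have vanishing Legendre coefficients at every odd degree $k\geq 3$, so every element of $\cl(\vspan\cF)$ (viewed on all of $\mathbb{S}^{D}$) has zero projection onto those harmonics. Consequently, if you extend $G$ from the cap by an arbitrary McShane extension, the de~la~Vall\'ee--Poussin approximant $V_NG$ will generically contain odd harmonics of degree $\geq 3$, and no Sobolev-to-variation-norm embedding can represent them: the step ``the usual inversion/Sobolev computation yields the borderline embedding'' is false for such functions, regardless of smoothness. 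Your parenthetical fix is also misplaced: the constant term $\ap_0$ in $\vspan\cF$ only supplies degree-$0$ harmonics and does nothing for the parity obstruction. The repair (and the role it plays in Bach's argument) is that the lifted image is a cap with $u_{D+1}\geq 1/\sqrt{2}$, strictly inside the open upper hemisphere, so one may choose the extension of $G$ to be \emph{even} (reflect through the origin and interpolate, preserving boundedness and the Lipschitz constant up to a constant factor); then only even-degree harmonics appear, the nonvanishing even coefficients with polynomial decay make the inversion legitimate, and the rest of your calculation goes through. Two smaller points: the Carath\'eodory step as stated does not apply, because a finite combination of ReLU atoms is not a spherical polynomial and hence cannot reproduce $V_NG$ exactly; instead discretize the representing measure with a small uniform error (using the uniform Lipschitz dependence of $(\omega^{\top}u)_+$ on $\omega$), which costs nothing in the rate. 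Also, $k^{-(D+3)}$ is the decay of the kernel eigenvalues, while the coefficients relevant to inverting the representation operator decay like $k^{-(D+3)/2}$; your final exponent balance is nevertheless the right one.
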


See Proposition 3, Appendix C.3, and Appendix D.4 of \citep{bach2017breaking} for the proof.

\section{Subadditivity Upper-Bounds at Different ``Levels of Detail'' on Sequences}
\label{appendix:levels-of-detail}

The subadditivity upper-bound on a Bayes-net, $\sum_{i=1}^n \D\PQ{X_i\cup X_{\Pi_i}}$, depends on the structure of the Bayes-net. More specifically, the underlying DAG $G$ determines the set of local neighborhoods $\{\{1\}\cup\Pi_1,\cdots,\{n\}\cup\Pi_n\}$, and consequently, determines how we construct the set of local discriminators. In this section, we discuss that the set of local neighborhoods can be change either by truncating the induction process when deriving the subadditivity upper-bound (see the proof of \cref{thm:subadditivity-markov-chain} in \cref{proof:subadditivity-markov-chain} for example), or by contracting the neighboring nodes of the Bayes-net. Both methods result in a tighter subadditivity upper-bound at a coarser level-of-detail (i.e., with larger local neighborhoods). For the sake of simplicity, we limit the scope to sequences describing auto-regressive time series. For such graph $G$, there are $T$ nodes ($\{1,\cdots,T\}$), and each node depends on its $p$ previous nodes; see \cref{fig:lod-truncation} for an example.

\begin{figure}[ht]
    \centering
    \begin{subfigure}{.46\linewidth}
        \centering
        \includegraphics[width=\linewidth]{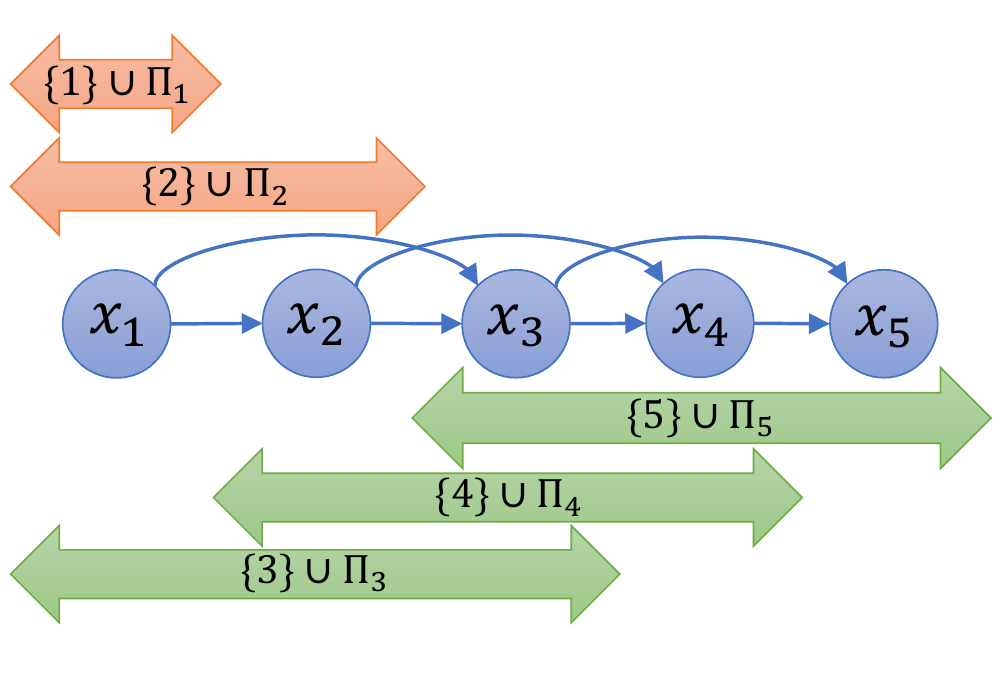}
        \caption{Local neighborhoods of auto-regressive time series with $T=5$ and $p=2$. The original set of local neighborhoods is represented by the red and green bars. Two local neighborhoods $\{1\}\cup\Pi_1$ and $\{2\}\cup\Pi_2$ (red bars) can be safely removed by truncating the induction process.}
        \label{fig:lod-truncation}
    \end{subfigure}
    \hspace{.05\linewidth}
    \begin{subfigure}{.47\linewidth}
        \centering
        \includegraphics[width=\linewidth]{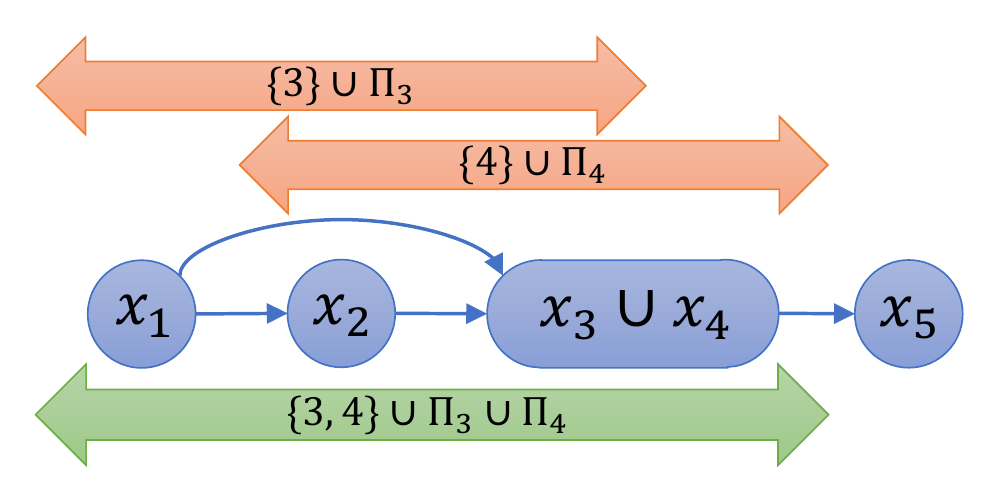}
        \vspace{18.5pt}
        \caption{Change of the local neighborhoods of auto-regressive time series with $T=5$ and $p=2$, if contracting neighboring nodes $3$ and $4$ to form a super-node $\{3, 4\}$. Two local neighborhoods $\{3\}\cup\Pi_3$ and $\{4\}\cup\Pi_4$ (red bars) are replaced by a new neighborhood $\{3, 4\}\cup\Pi_3\cup\Pi_4$ (green bar).}
        \label{fig:lod-contraction}
    \end{subfigure}
    \caption{Changes of the local neighborhoods of a Bayes-net representing auto-regressive time series with $T=2$ and $p=2$, if we (a) truncate the induction process, or (b) contract a pair of neighboring nodes. In each case, the subadditivity upper-bound becomes tighter and characterize the Bayes-net at a coarser level-of-detail.}
    \label{fig:lod}
\end{figure}

\subsection{Truncation of Induction}
\label{appendix:subsec:lod-truncation}

For a probability divergence $\D$ which satisfies subadditivity on Bayes-nets, the subadditivity upper-bound $\sum_{t=1}^T \D\PQ{X_t\cup X_{\Pi_t}}$ of $\D\PQ{}$ is obtained by repeatedly applying the subadditivity of $\D$ on Markov Chains of super-nodes $X_{\{1,\cdots,s\}\setminus\Pi_{s+1}}\to X_{\Pi_{s+1}}\to X_{s+1}$, for $s=T-1,T-2,\cdots,1$. We can truncate the induction process and get an alternative upper-bound: $\D\PQ{}<\D\PQ{\cup_{t=1}^{s}X_t}+\sum_{t=s+1}^T \D\PQ{X_t\cup X_{\Pi_t}}$. This new upper-bound is tighter, but it does not encode the conditional independence information of the sub-sequence $(X_1,\cdots,X_s)$. However, this alternative upper-bound is preferable if we choose $s$ to be the largest number where its set of parents is exactly its previous nodes, i.e. $\Pi_s=\{1,\cdots,s-1\}$. The subadditivity inequality that we combined at induction step $s$ is $\D\PQ{\cup_{t=1}^{s}X_t}\equiv\D\PQ{X_{\Pi_s}\cup X_s}\leq\D\PQ{\cup_{t=1}^{s-1}X_t}+\D\PQ{X_{\Pi_s}\cup X_s}$ (corresponding to the second case in the proof of \cref{thm:subadditivity-markov-chain} in \cref{proof:subadditivity-markov-chain}). Truncating at such $s$ avoids introducing the redundant term $\D\PQ{\cup_{t=1}^{s-1}X_t}$ into the upper-bound. As shown in \cref{fig:lod-truncation}, for this specific example $s=p+1=3$ is the largest number such that $\Pi_3=\{1, 2\}$. Truncating at $s=3$ removes $\{1\}\cup\Pi_1$ and $\{2\}\cup\Pi_2$ from the set of local neighborhoods, resulting in a more efficient subadditivity upper-bound $\sum_{t=3}^5\D\PQ{X_t\cup X_{\Pi_t}}$. This is helpful for time series data, since it makes all local neighborhoods have the same number of dimensions. If all $X_t\in\RR^d$, then for $t=3, 4$ and $5$, $X_t\cup X_{\Pi_t}\in\RR^{3d}$. In this sense, we can share the same neural network architecture among all the local discriminators.

\subsection{Neighboring Nodes Contraction}
\label{appendix:subsec:lod-contraction}

The set of local neighborhoods is determined by the structure $G$ of the Bayes-net. Network contraction not only simplifies the Bayes-net but also leads to a tighter subadditivity upper-bound at a lower level-of-detail. Here, we only consider the contraction of neighboring nodes in a time series $(X_1,\cdots,X_T)$. If we merge node $s$ with $s+1$ ($s=1,\cdots,T-1$), and form a super-node $\{s. s+1\}$, local neighborhoods $\{s\}\cup\Pi_s$ and $\{s+1\}\cup\Pi_{s+1}$ are replaced by $\{s, s+1\}\cup\Pi_s\cup\Pi_{s+1}$, and the total number of neighborhoods decreases by one. As shown in \cref{fig:lod-contraction}, when nodes $3$ and $4$ are merged, local neighborhoods $\{3\}\cup\Pi_3$ and $\{4\}\cup\Pi_4$ are replaced by $\{3, 4\}\cup\Pi_3\cup\Pi_4$. We omit the conditional dependence between nodes $3$ and $4$, but reduce one local discriminator in the GAN. Neighboring nodes contraction allows us to control the level-of-detail that the subadditivity upper-bound encodes flexibly. This can be useful when the variables in the Bayes-net have non-uniform dimensionalities.

\section{More Experiment Results}
\label{appendix:more-experiment-results}

\subsection{Experiments on Synthetic MRFs}

We perform some additional experiments to demonstrate the benefits of exploiting the underlying MRF structure of the data in the design of model-based GANs. We generate a synthetic Gaussian MRF dataset, the graph of which is a $4$-cycle (namely a graph with $4$ nodes, each of which has degree $2$). We train our model-based GAN (we call them MRF GANs) and the standard model-free GAN on five thousand samples. The network architecture (except for the number of inputs) and the hyper-parameters are the same as what we used on the EARTHQUAKE dataset (see \cref{appendix:experimental-setups}). We evaluate the performance of the GANs by the Energy Statistics score (for the definition, see \cref{sec:experiments}). We observe that the average Energy Statistics of MRF GAN is $(1.3\pm0.2)\times 10^{-3}$, which is smaller (thus better) than the standard GAN's, $(7.9\pm1.2)\times 10^{-3}$. This simple experiment confirms that the benefits of exploiting the conditional independence structure apply to MRFs as well. This is consistent with our theory. Such benefits can be reaped even in low dimensions.

\subsection{Sensitivity Analysis of Bayes-net GANs}
In this part, we analyze how sensitive Bayes-net GAN is to its causal structure. In \cref{subsec:experiment-bayesnet}, we know from the experiments on two real Bayes-nets that the Bayes-net GANs with the ground truth causal DAGs consistently outperform the standard GANs in terms of the generation quality and the convergence speed. Here, we perturb the ground truth DAG of {\it the EARTHQUAKE dataset} by randomly rewiring $1$ of its $4$ edges. The graph editing distance between the resulting noisy causal DAG and the ground truth is therefore $1$. We train a Bayes-net GAN with local discriminators constructed using this noisy DAG, and evaluate it by the four metrics as in \cref{subsec:experiment-bayesnet} (\cref{tab:bayesnets-scores-noisy}), as well as using the causal structure predicted from its generated data (\cref{fig:bayesnet-structure-noisy}). In \cref{tab:bayesnets-scores-noisy}, we observe that the energy statistics and the detection AUC scores are not sensitive to the noise in the casual structure used by the Bayes-net GAN. The BIC score and the predicted structure are more affected. The Bayes-net GAN with the noisy causal graph learns some redundant indirect dependence (\cref{fig:bayesnet-structure-local-noisy}).

\begin{table}[H]
    \centering
    \resizebox{\textwidth}{!}{%
        \begin{tabular}{c|cccc}
        \toprule
        DAG used      & \begin{tabular}[c]{@{}c@{}}{\bf Energy Stats.} ($\times 10^{-2}$)\\{\small (smaller is better)}\end{tabular} & \begin{tabular}[c]{@{}c@{}}{\bf Detection AUC}\\{\small (smaller is better)}\end{tabular} & \begin{tabular}[c]{@{}c@{}}{\bf Rel. BIC} ($\times 10^2$)\\{\small (larger is better)}\end{tabular} & \begin{tabular}[c]{@{}c@{}}{\bf Rel. GED}\\{\small (smaller is better)}\end{tabular} \\ \midrule
        Ground truth & $0.24\pm0.04$                                                                                                           & $0.523\pm0.005$                                                             & $+1.68\pm0.17$                                                                        & $0.4\pm0.7$                                                            \\ \hline
        Noisy        & $0.27\pm0.11$                                                                                                           & $0.528\pm0.002$                                                             & $-2.02\pm0.15$                                                                        & $2.4\pm1.1$                                                            \\ \bottomrule                           
        \end{tabular}%
    }
    \captionof{table}{Quality metrics of samples generated by the Bayes-net GANs with the ground truth and the noisy causal structure.}
    \label{tab:bayesnets-scores-noisy}
\end{table}

\begin{figure}[H]
    \centering
    \begin{subfigure}[t]{.35\linewidth}
        \centering
        \makebox[1.4\width]{\centering\includegraphics[width=.7\linewidth]{figs/bayesnet-structure-local-true.pdf}}
        \caption{Bayes-net GAN with ground truth DAG}
        \label{fig:bayesnet-structure-local-true-comp}
    \end{subfigure}%
    \hspace{4em}
    \begin{subfigure}[t]{.35\linewidth}
        \centering
        \makebox[1.4\width]{\centering\includegraphics[width=.7\linewidth]{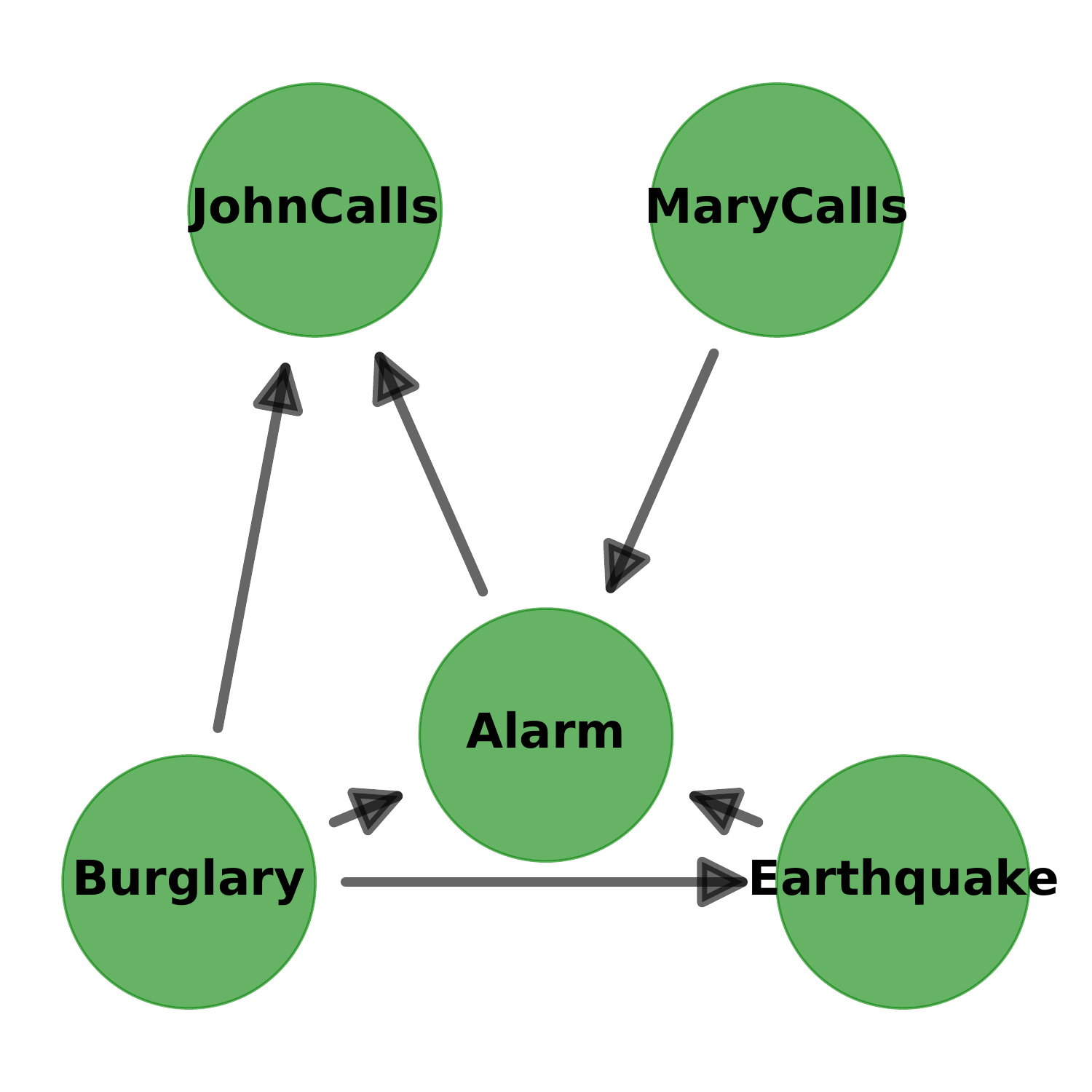}}
        \caption{Bayes-net GAN with noisy DAG}
        \label{fig:bayesnet-structure-local-noisy}
    \end{subfigure}%
    \caption{Causal structures predicted from the data generated by the Bayes-net GAN with \subref{fig:bayesnet-structure-local-true-comp} the ground truth causal graph and \subref{fig:bayesnet-structure-local-noisy} the noisy causal graph.}
    \label{fig:bayesnet-structure-noisy}
\end{figure}

\section{Experimental Setups}
\label{appendix:experimental-setups}

In this section, we report the detailed setups of the experiments in \cref{sec:experiments}. Unless otherwise stated, model comparisons between our model-based GANs and the standard GANs are conducted using exactly the same set of training hyper-parameters. All experiments are repeated five or ten times.

\subsection{Datasets}
\label{appendix:subsec:datasets}

\begin{itemize}[leftmargin=*]
    \item \textbf{Experiment 1: synthetic ball throwing trajectories}: {\it The ball throwing trajectory dataset} is synthetic, consists of single-variate time-series data $(y_1, \ldots, y_{15})$ representing the $y$-coordinates of ball throwing trajectories lasting $1$ second, where $y_t = v_0*(t/15)-g(t/15)^2/2$. $v_0$ is a Gaussian random variable and $g=9.8$ is the gravitational acceleration.
    \item \textbf{Experiment 2: real Bayes-nets}: The {\it EARTHQUAKE dataset} is a small Bayes-net with $5$ nodes and $4$ edges characterizing the alarm system against burglary which can get occasionally set off by an earthquake~\citep{korb2010bayesian}. {\it The CHILD dataset} is a Bayes-net with $20$ nodes and $25$ edges for diagnosing congenital heart disease in a newborn ``blue baby''~\citep{spiegelhalter1992learning}. The underlying causal DAGs of two Bayes-nets are known, and are obtained from \url{https://www.bnlearn.com/bnrepository/}. Both Bayes-nets have categorical features. We simulate samples from the Bayes-nets and then train GANs on them.
\end{itemize}

\subsection{Local Discriminators}
\label{appendix:subsec:local-discriminators}

\begin{itemize}[leftmargin=*]
    \item \textbf{Experiment 1: synthetic ball throwing trajectories}: Each local discriminator measures the Jensen-Shannon divergence in the local neighborhood, following~\citep{nowozin2016f}. The use of local discriminators is justified by the $(1/\ln2)$-linear subadditivity of Jensen-Shannon divergence on Bayes-nets (\cref{coro:linear-subadditivity-js}).
    \item \textbf{Experiment 2: real Bayes-nets}: Each local discriminator measures the Wasserstein distance in the local neighborhood, following~\citep{arjovsky2017wasserstein}. The use of local discriminators is justified by the generalized subadditivity of neural distances on Bayes-nets (\cref{thm:subadditivity-ipm}). We use Gumbel-Softmax~\citep{jang2016categorical} in the output layer of the generator, so that the generator produces categorical data while allowing (approximate) back propagation.
\end{itemize}

\subsection{Network Architectures}
\label{appendix:subsec:network-architectures}

\begin{itemize}[leftmargin=*]
    \item \textbf{Experiment 1: synthetic ball throwing trajectories}: For the GANs on {\it the ball throwing trajectory} dataset, we use a $5$-layer fully connected network (FCN) for the generator where the number of hidden dimensions is set to $8$ for all layers. We take a hybrid design for each local discriminator. Each local discriminator is a combination of a $4$-layer FCN and a $3$-layer convolutional neural network (CNN) so that it can penalize both inaccurate global distributions (via FCN) and inaccurate local dynamics (via CNN). The discriminator of the standard GAN on {\it the ball throwing trajectory} dataset is the local discriminator with localization width $15$ (which is equal to the length of the time-series). The local discriminators share the same architecture (except the input layer) even if the localization width varies, but they do not share parameters. We make the discriminators' architecture powerful enough such that adding more neurons/layers cannot bring us any further performance gain.
    \item \textbf{Experiment 2: real Bayes-nets}: For the GANs on {\it the EARTHQUAKE dataset}, we use a $5$-layer FCN for the generator where the number of hidden dimensions is set to $32$ for all layers. We also use a $4$-layer FCN for each local discriminator where the number of hidden dimensions is set to $8$. For the GANs on {\it the CHILD dataset}, we use a $7$-layer FCN with $256$ hidden dimensions for the generator, and a $6$-layer FCN with $32$ hidden dimensions for the discriminator. There is no parameter sharing among the local discriminators. We apply Batch Normalization after each hidden layer in the generator, but not in the discriminator. All ReLUs are leaky, with slope $0.2$. In the experiments, we keep the architecture of the generator and the other hyper-parameters the same, and compare our model-based GANs with the standard GANs on Bayes-nets.
\end{itemize}

\subsection{Training Setups and Hyper-Parameters}
\label{appendix:subsec:training-setups}

The networks are implemented using the {\it PyTorch} framework. All networks are trained from scratch on one {\it NVIDIA RTX 2080 Ti} GPU with 11GB memory.

\begin{itemize}[leftmargin=*]
    \item \textbf{Experiment 1: synthetic ball throwing trajectories}: We train GANs with local discriminators (with localization width equals to $1, 2, 3, 5, 8, 11, 15$) for $500$ epochs, with learning rate $0.0001$ and batch size $128$. We repeat each experiment $10$ times and report the averages with uncertainties.
    \item \textbf{Experiment 2: real Bayes-nets}: We train the standard GANs and the Bayes-net GANs for $100$ epochs, with learning rate $0.001$ and batch size $128$. We repeat each experiment $5$ times and report the averages with uncertainties.
\end{itemize}

\subsection{Evaluation Setups}
\label{appendix:subsec:evaluation-setups}

\begin{itemize}[leftmargin=*]
    \item \textbf{Experiment 1: ball throwing trajectories}: We estimate the gravitational acceleration $g$ learned by the GANs, via degree-$2$ polynomial regression on the generated trajectories.
    \item \textbf{Experiment 2: real Bayes-nets}: The energy statistics are calculated using the standard {\it torch-two-sample} package (available at \url{https://github.com/josipd/torch-two-sample}). The fake detection AUC scores are obtained by training binary classifiers to distinguish the fake samples from the real ones. The binary classifier is a $3$-layer FCN with hidden dimensions $16$ on {\it the EARTHQUAKE dataset} and a $5$-layer FCN with hidden dimension $32$ on {\it the CHILD dataset}. We train the classifiers for $100$ epochs, with learning rate $0.001$ and batch size $128$.
\end{itemize}

\section{Empirical Verification of Subadditivity}
\label{appendix:verification-subadditivity}

In this section, we verify the subadditivity of squared Hellinger distance, KL divergence, Symmetric KL divergence, and the linear subadditivity of Jensen-Shannon divergence, Total Variation distance, $1$-Wasserstein distance, and $2$-Wasserstein distance on binary auto-regressive sequences in a finite space $\Sp$.

To construct a simple Bayes-net $P$ on a sequence of bits $(X_1, \cdots, X_n)\in\{0,1\}^n$, consider the auto-regressive sequence defined by,
\[
    P(X_t=1|X_{t-1},\cdots,X_{t-p})=\sigma(\sum_{i=1}^{p}\varphi_i X_{t-i})
\]
where $p\in\NN$ such that $0<p<n$ is called the order of this auto-regressive sequence, and $[\varphi_1,\cdots,\varphi_n]$ are the coefficients. The marginal distributions of the initial variables $X_1,\cdots,X_p$ have to be pre-defined. We assume they are conditionally independent, and define,
\[
    P(X_i=1)=\psi_i \qquad \forall i\in\{1,\cdots,p\}
\]
where for any $i\in\{1,\cdots,p\}$, $\psi_i\in[0,1]$. If the distribution of a binary sequence $(X_1,\cdots,X_n)$ follows the definitions above, we say it is a binary auto-regressive sequence of order $p$ with coefficients $[\varphi_1,\cdots,\varphi_n]$ and initials $[\psi_1,\cdots,\psi_n]$.

Binary auto-regressive sequences are Bayes-nets, because each variable $X_t$ is conditionally independent of its non-descendants given its parent variables $X_{t-1},\cdots,X_{t-p}$. The probabilistic graph $G$ is determined by the length $n$ and the order $p$. For a statistical divergence $\D$ satisfying subadditivity, as described in \cref{appendix:subsec:lod-truncation}, we truncate the induction process and get a subadditivity upper-bound $\sum_{t=p+1}^n\D\PQ{\cup_{i=t-p}^t X_i}$. We verify that the subadditivity inequality (or linear subadditivity inequality) holds for various statistical divergences, on two specific examples.

\begin{example}[Binary Auto-Regressive Sequences with Different Local Dependencies]
\label{exp:verify-subadditivity-discrete-dependency}
    Consider binary auto-regressive sequences $(X_1, X_2, X_3, X_4)\in\{0,1\}^4$ of order $p=2$ with initials $[\psi_1,\psi_2]=[\frac12,\frac12]$. Two distributions $P^x$ (with coefficients $[\varphi_1, \varphi_2]=[0,x]$) and $Q^y$ (with coefficients $[\varphi_1, \varphi_2]=[0,y]$) are Bayes-nets with identical underlying structure. Divergence $\D(P^x,Q^y)$ is a function of the parameters $(x, y)$. For all $(x,y)\in\{(x,y)\in\RR^2|x\neq y\}$, we have $\D(P^x, Q^y)<\sum_{t=p+1}^n \D(P^x_{\cup_{i=t-p}^t X_i}, Q^y_{\cup_{i=t-p}^t X_i})$ if $\D$ satisfies subadditivity, or $\ap\cdot\D(P^x, Q^y)<\sum_{t=p+1}^n \D(P^x_{\cup_{i=t-p}^t X_i}, Q^y_{\cup_{i=t-p}^t X_i})$ if $\D$ satisfies $\ap$-linear subadditivity.
\end{example}

\begin{example}[Binary Auto-Regressive Sequences with Different Initial Distributions]
\label{exp:verify-subadditivity-discrete-initial}
    Consider binary auto-regressive sequences $(X_1, X_2, X_3, X_4)\in\{0,1\}^4$ of order $p=2$ with coefficients $[\varphi_1, \varphi_2]=[1,-1]$. Two distributions $P^x$ (with initials $[\psi_1,\psi_2]=[\frac12, x]$) and $Q^y$ (with initials $[\psi_1,\psi_2]=[\frac12, y]$) are Bayes-nets with identical underlying structure. Divergence $\D(P^x,Q^y)$ is a function of the parameters $(x, y)$. For all $(x,y)\in\{(x,y)\in\RR^2|0<x\neq y<1\}$, we have $\D(P^x, Q^y)<\sum_{t=p+1}^n \D(P^x_{\cup_{i=t-p}^t X_i}, Q^y_{\cup_{i=t-p}^t X_i})$ if $\D$ satisfies subadditivity, or $\ap\cdot\D(P^x, Q^y)<\sum_{t=p+1}^n \D(P^x_{\cup_{i=t-p}^t X_i}, Q^y_{\cup_{i=t-p}^t X_i})$ if $\D$ satisfies $\ap$-linear subadditivity.
\end{example}

\begin{figure}[bp]
    \centering
    \hspace{-43pt}
    \large
    \begin{tabular}{lcc}
        & \cref{exp:verify-subadditivity-discrete-dependency} & \cref{exp:verify-subadditivity-discrete-initial} \\
        $\SH$ & \begin{tabular}[c]{@{}l@{}}\includegraphics[trim=0.4in 0.4in 0.4in 0.4in, clip, width=0.30\linewidth]{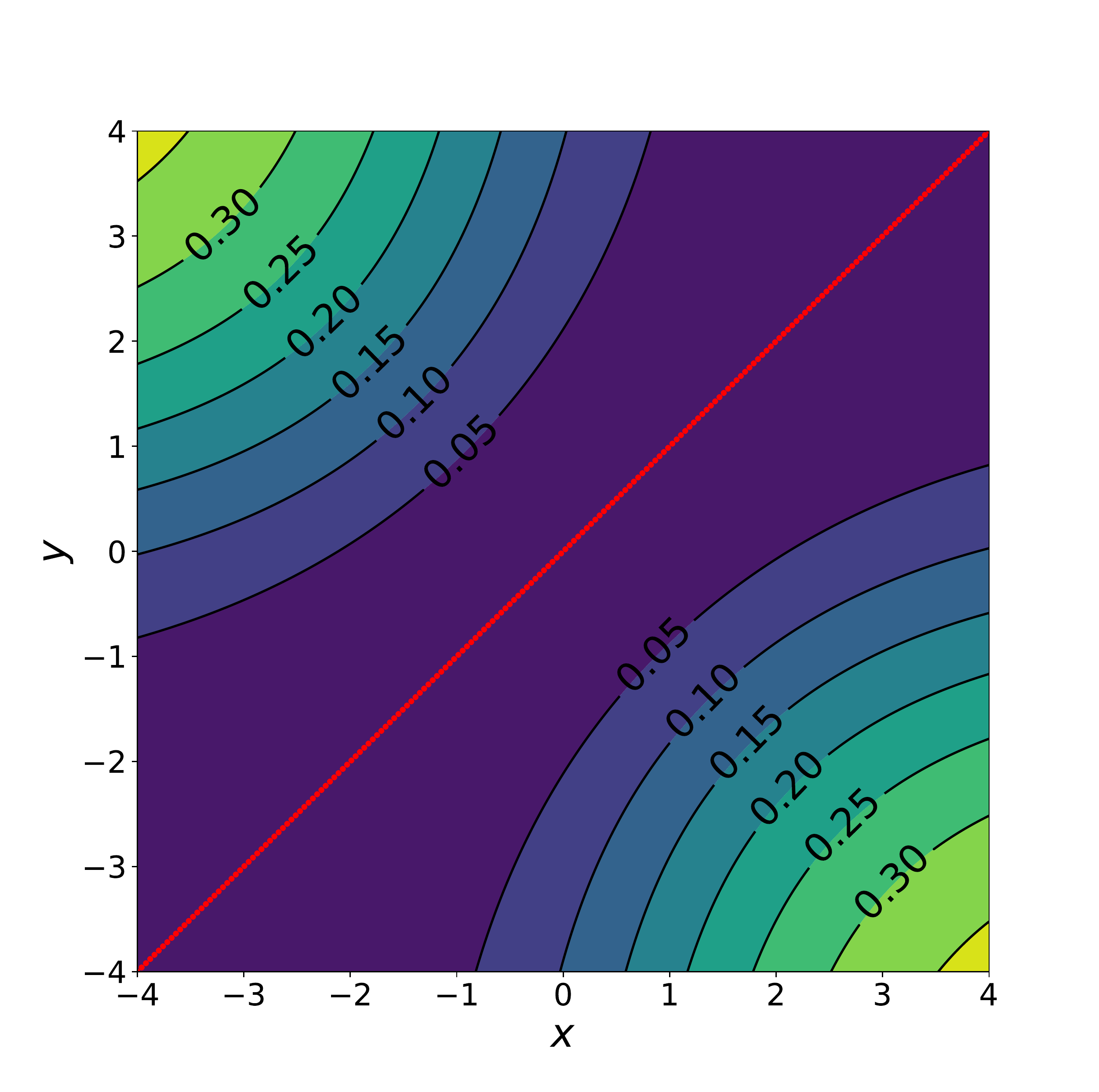}\end{tabular} & \begin{tabular}[c]{@{}l@{}}\includegraphics[trim=0.4in 0.4in 0.4in 0.4in, clip, width=0.30\linewidth]{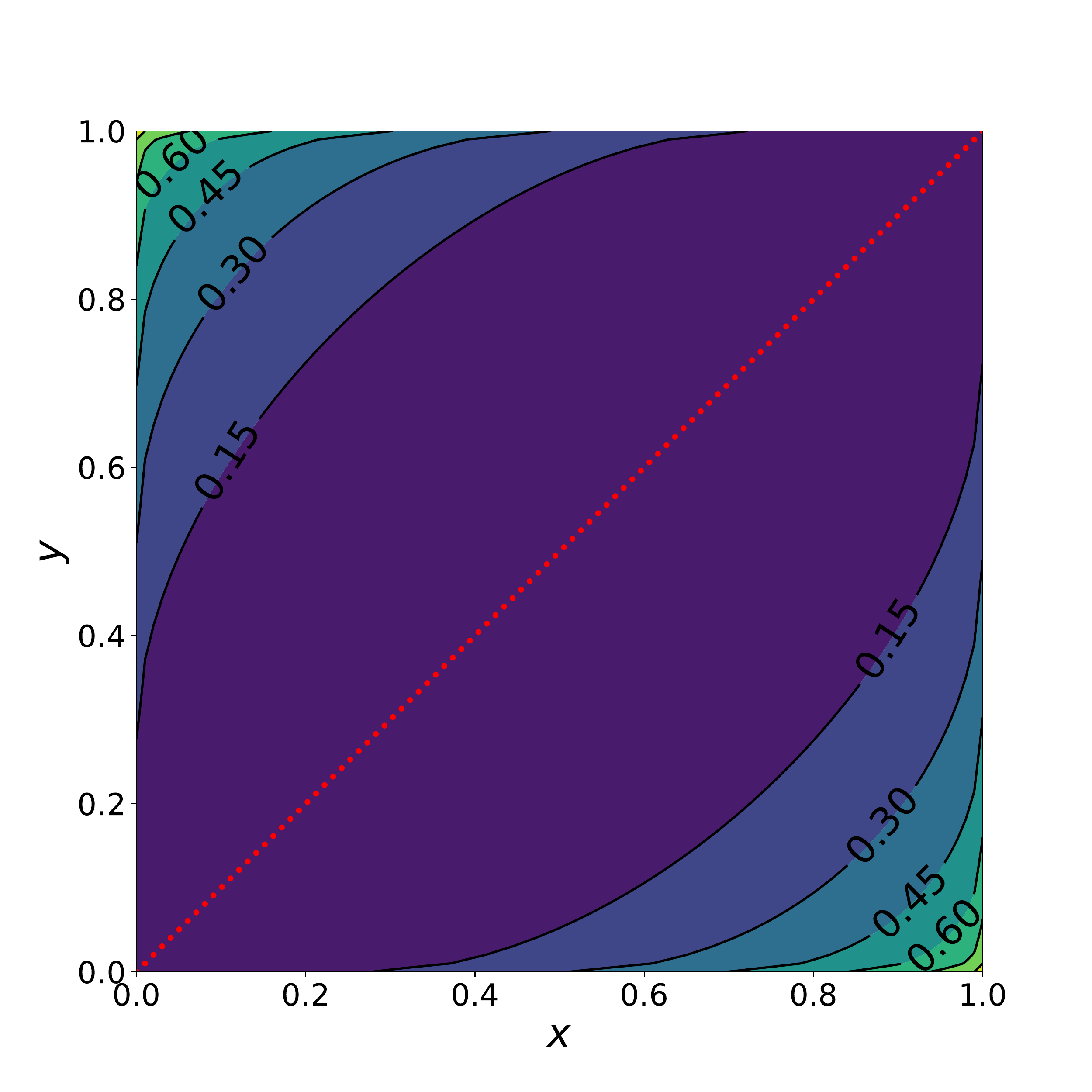}\end{tabular} \\
        $\KL$ & \begin{tabular}[c]{@{}l@{}}\includegraphics[trim=0.4in 0.4in 0.4in 0.4in, clip, width=0.30\linewidth]{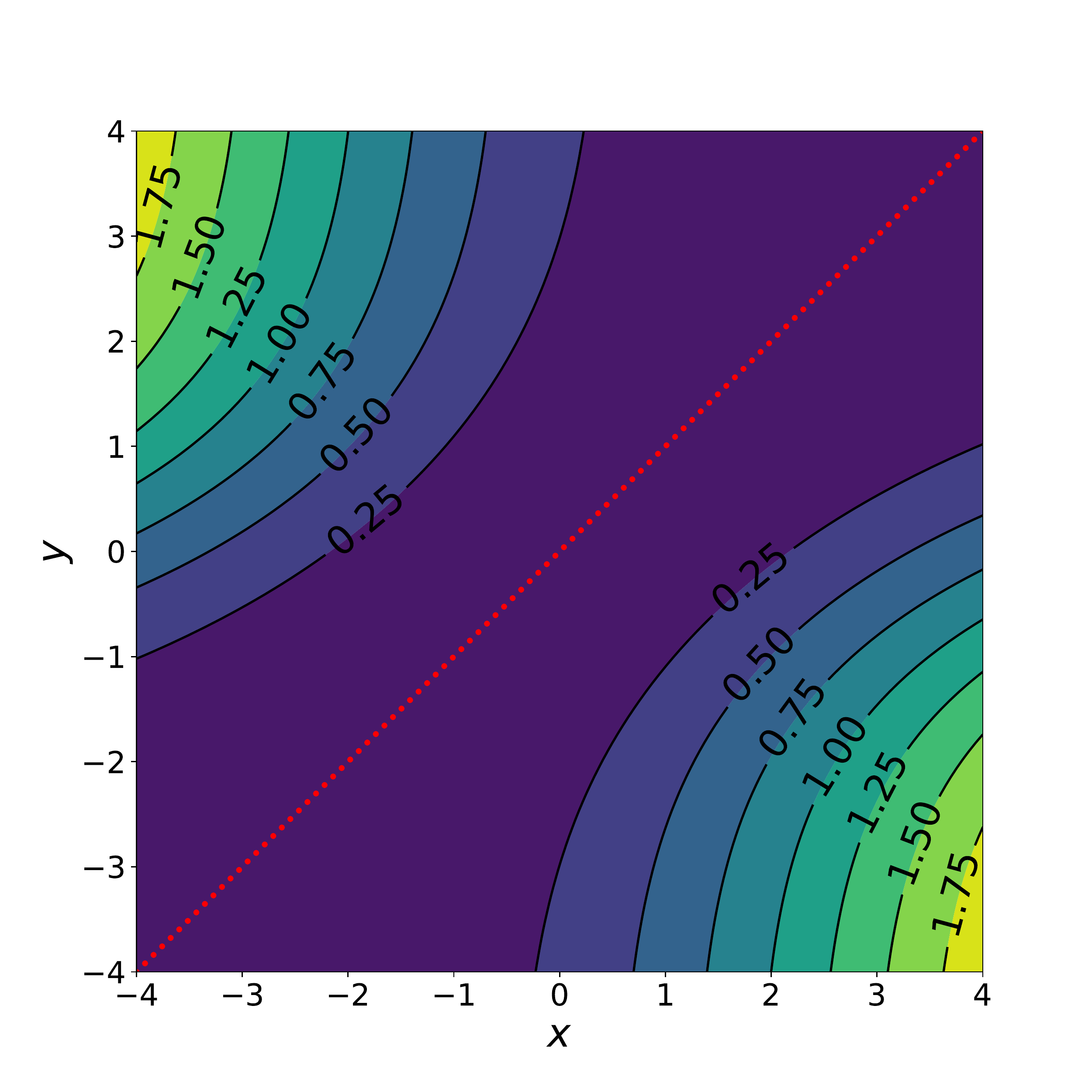}\end{tabular} & \begin{tabular}[c]{@{}l@{}}\includegraphics[trim=0.4in 0.4in 0.4in 0.4in, clip, width=0.30\linewidth]{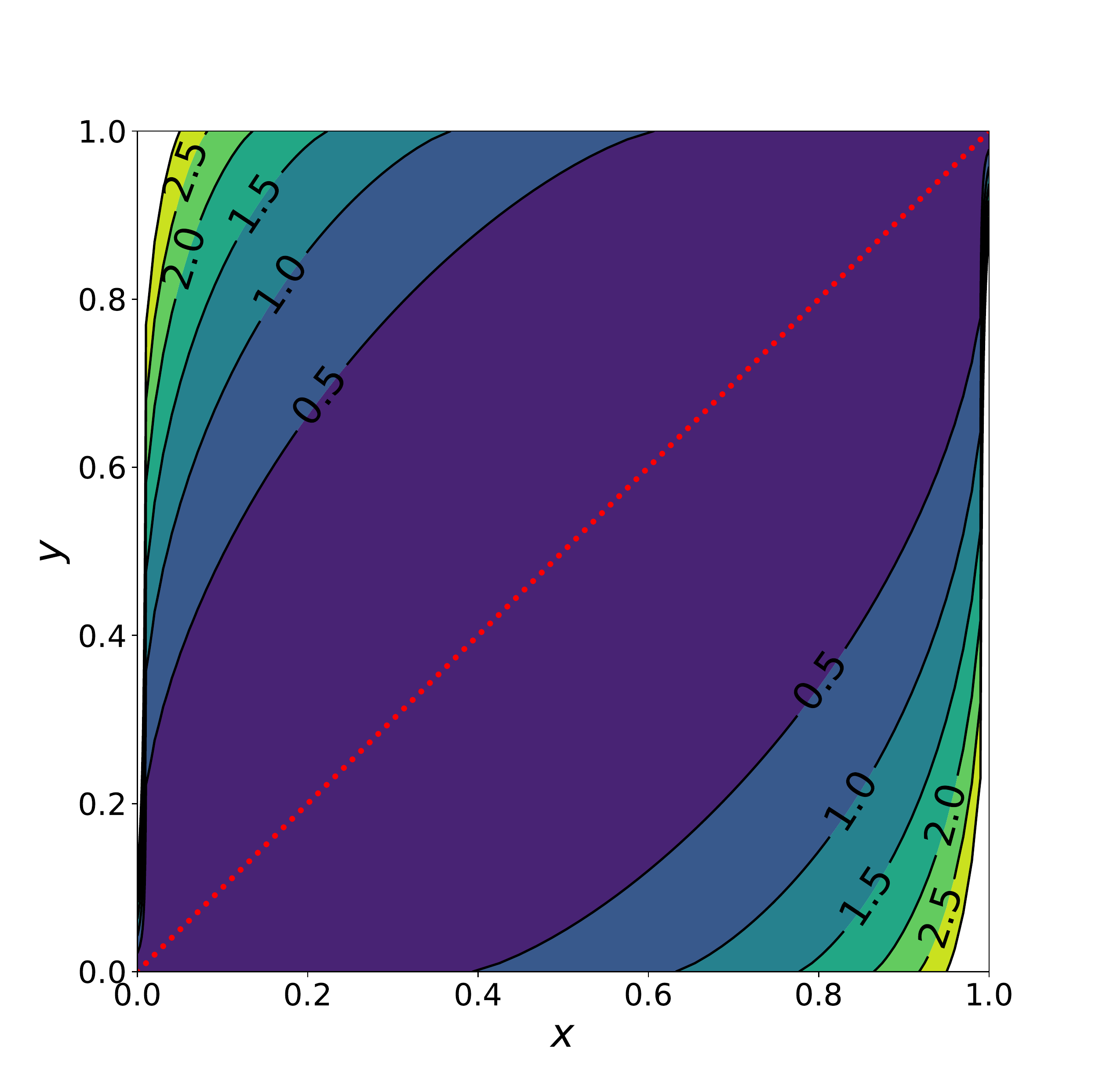}\end{tabular} \\
        $\SKL$ & \begin{tabular}[c]{@{}l@{}}\includegraphics[trim=0.4in 0.4in 0.4in 0.4in, clip, width=0.30\linewidth]{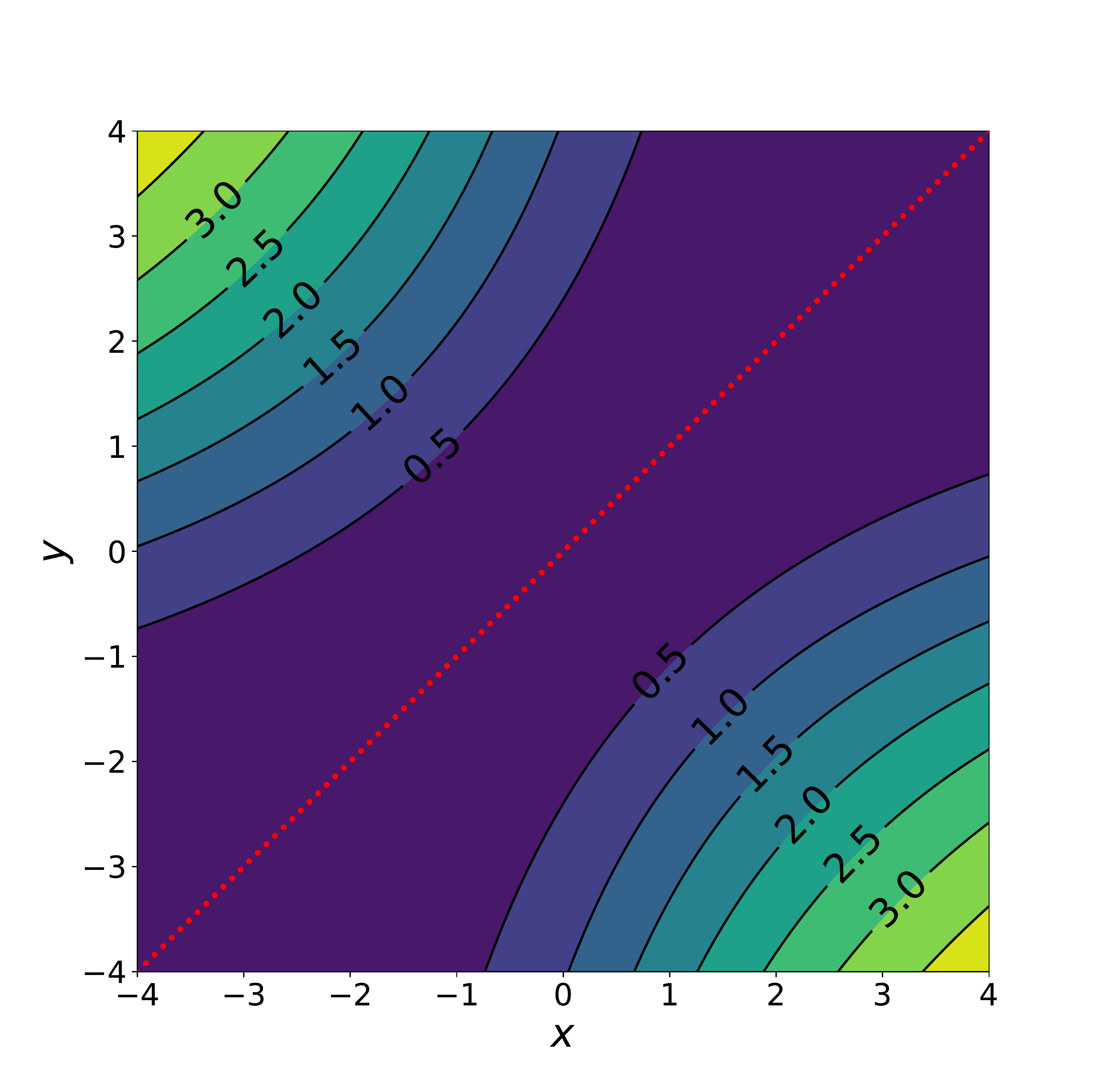}\end{tabular} & \begin{tabular}[c]{@{}l@{}}\includegraphics[trim=0.4in 0.4in 0.4in 0.4in, clip, width=0.30\linewidth]{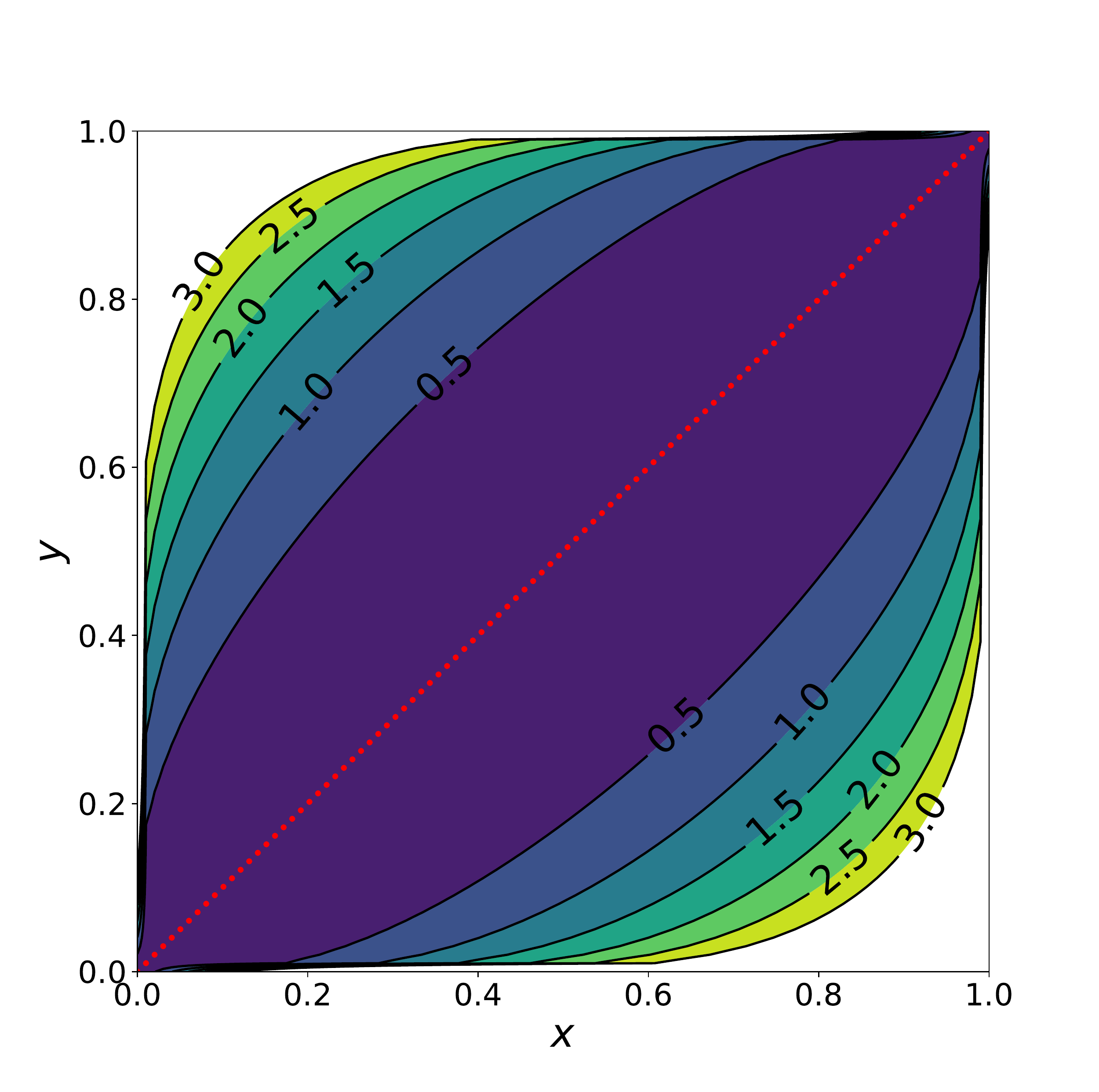}\end{tabular} \\
        $\JS$ & \begin{tabular}[c]{@{}l@{}}\includegraphics[trim=0.4in 0.4in 0.4in 0.4in, clip, width=0.30\linewidth]{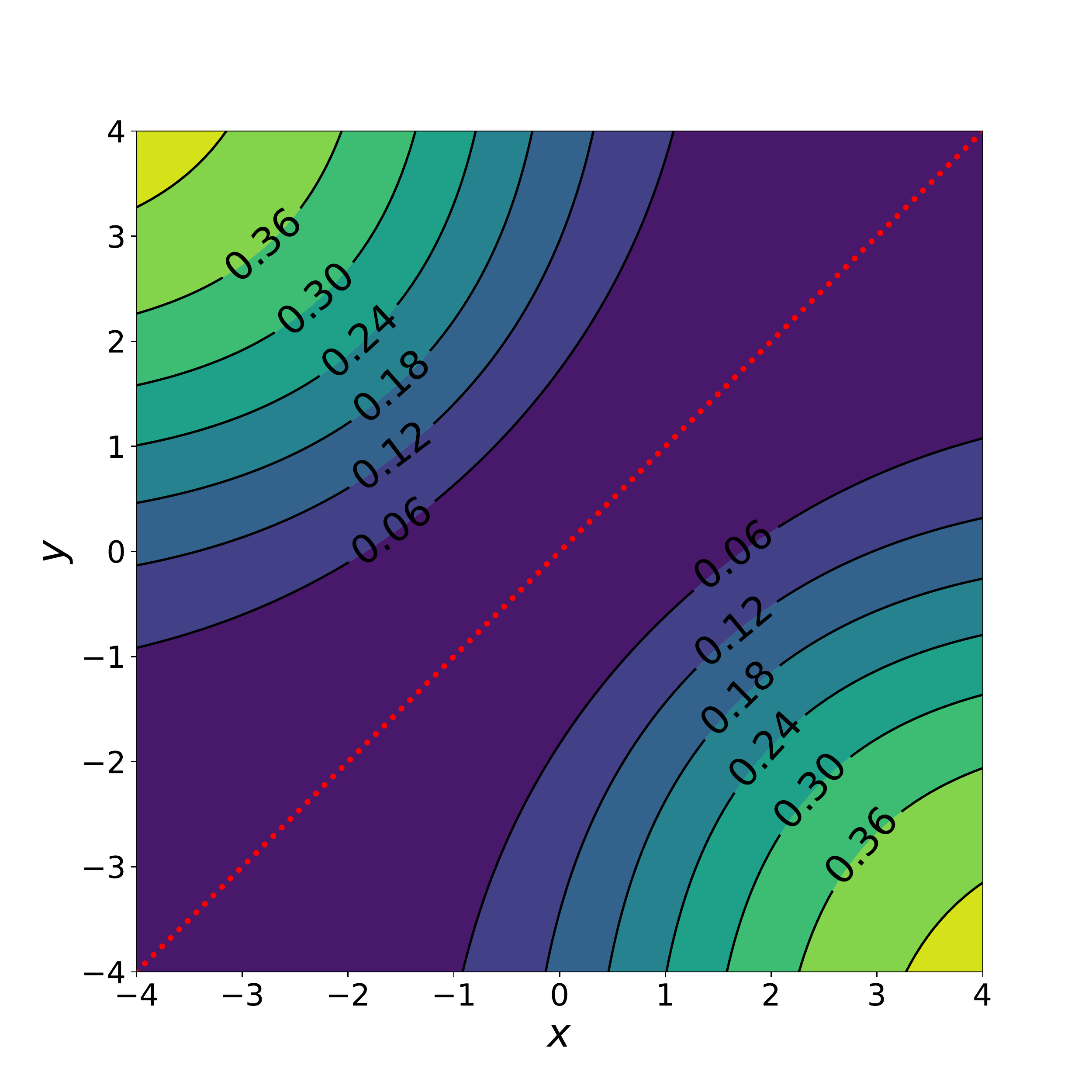}\end{tabular} & \begin{tabular}[c]{@{}l@{}}\includegraphics[trim=0.4in 0.4in 0.4in 0.4in, clip, width=0.30\linewidth]{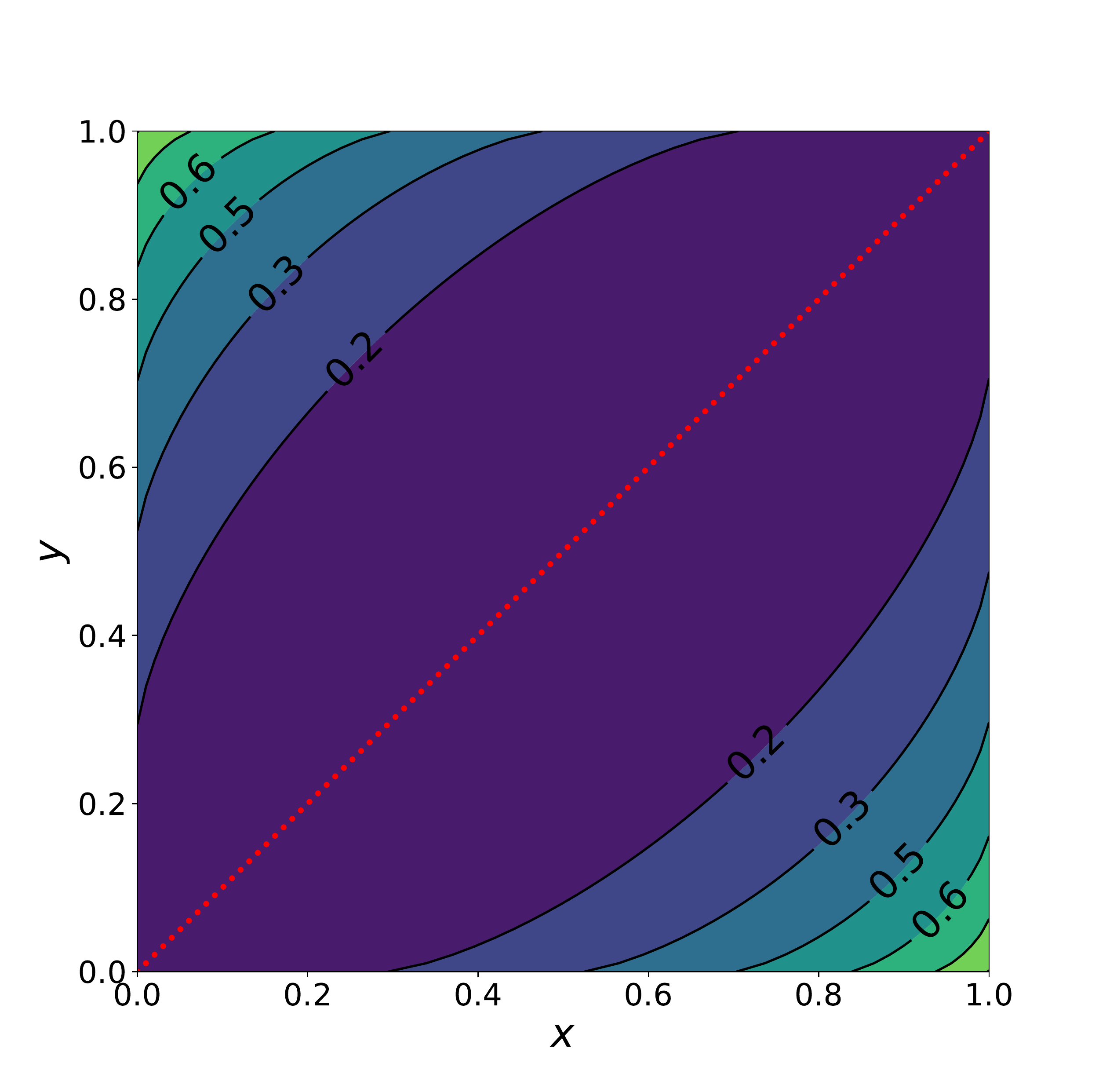}\end{tabular} \\
    \end{tabular}
\end{figure}

\begin{figure}[tp]
    \centering
    \hspace{-35pt}
    \large
    \begin{tabular}{lcc}
        & \cref{exp:verify-subadditivity-discrete-dependency} & \cref{exp:verify-subadditivity-discrete-initial} \\
        $\TV$ & \begin{tabular}[c]{@{}l@{}}\includegraphics[trim=0.4in 0.4in 0.4in 0.4in, clip, width=0.30\linewidth]{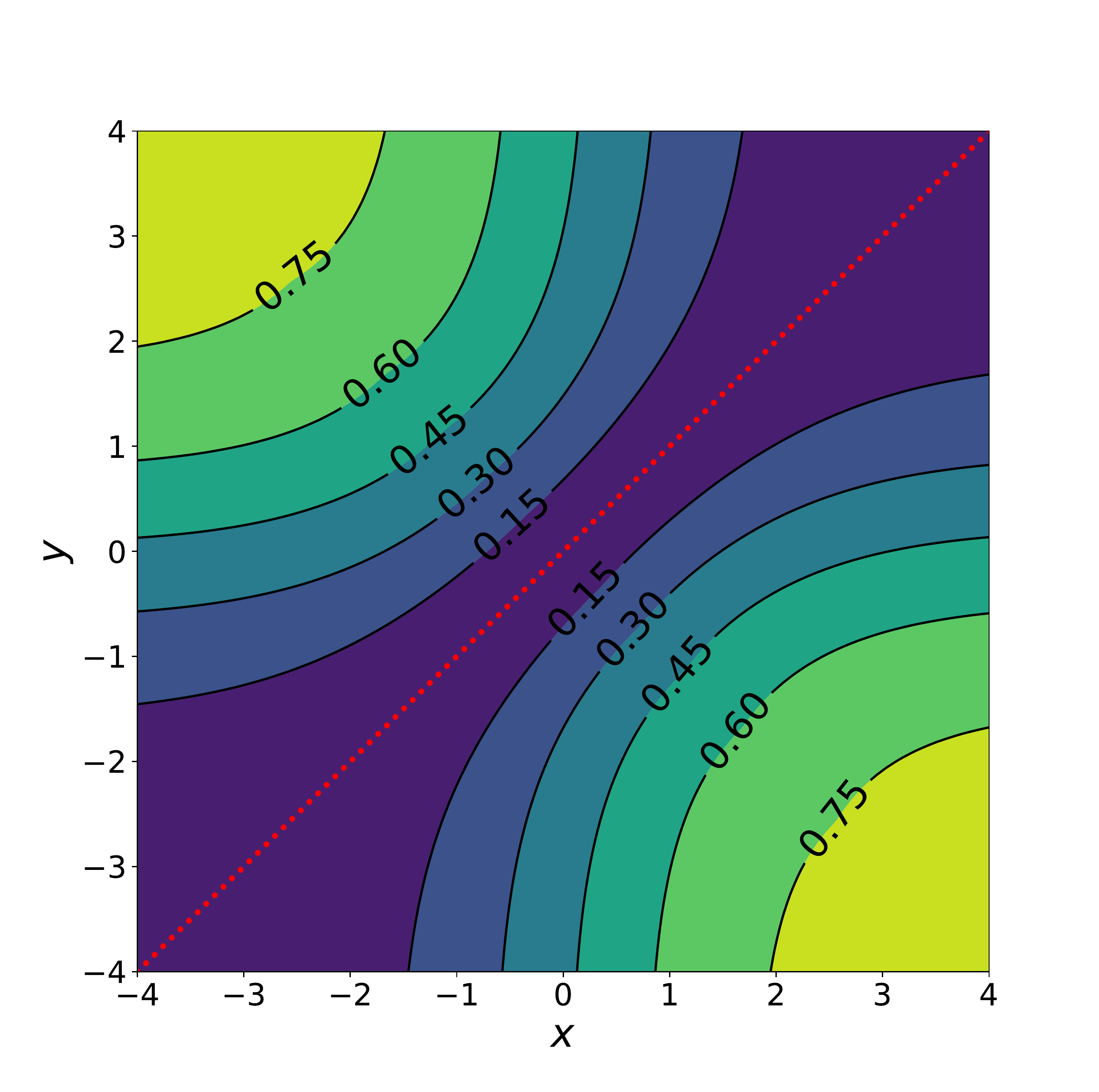}\end{tabular} & \begin{tabular}[c]{@{}l@{}}\includegraphics[trim=0.4in 0.4in 0.4in 0.4in, clip, width=0.30\linewidth]{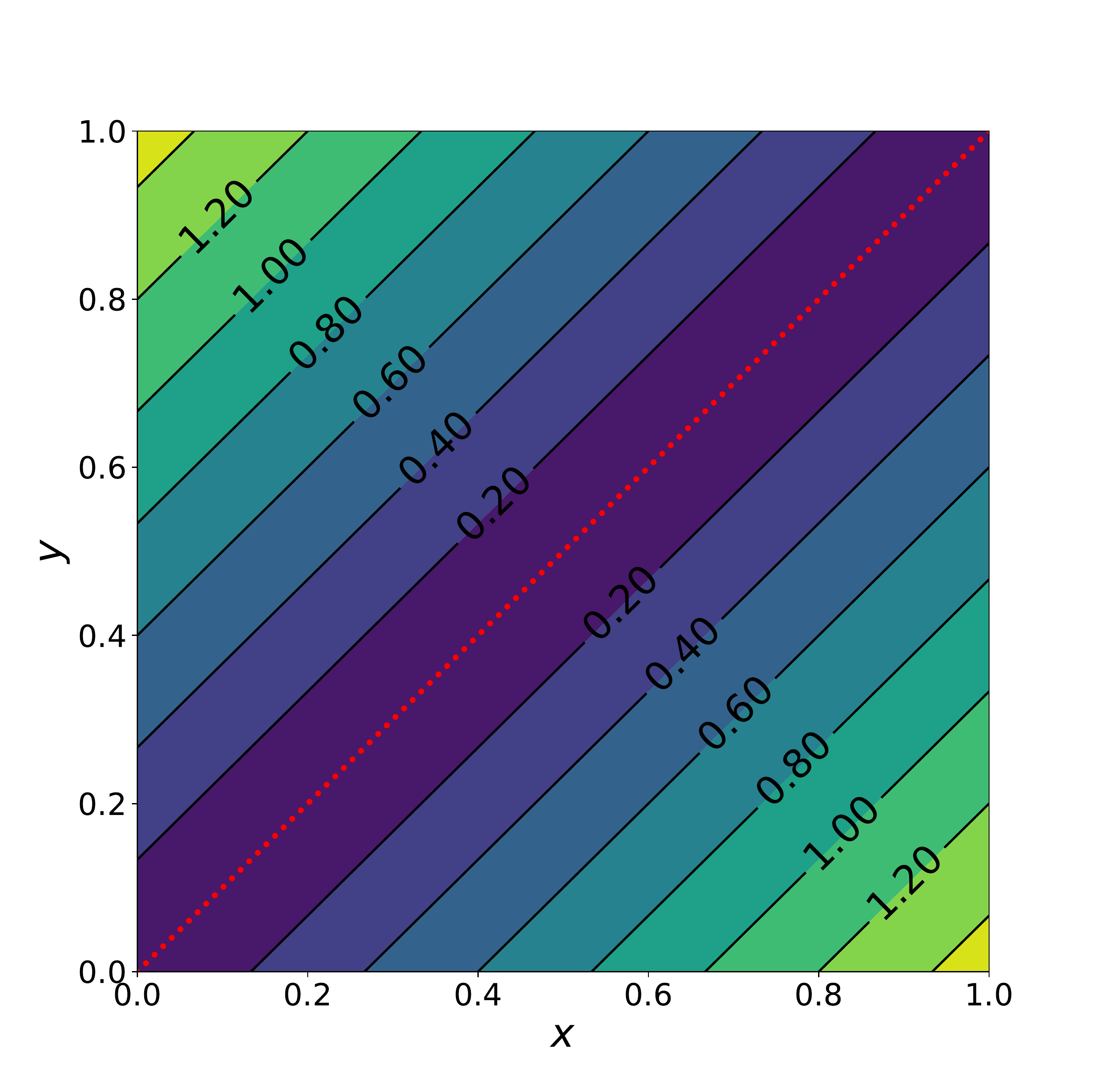}\end{tabular} \\
        $\W_1$ & \begin{tabular}[c]{@{}l@{}}\includegraphics[trim=0.4in 0.4in 0.4in 0.4in, clip, width=0.30\linewidth]{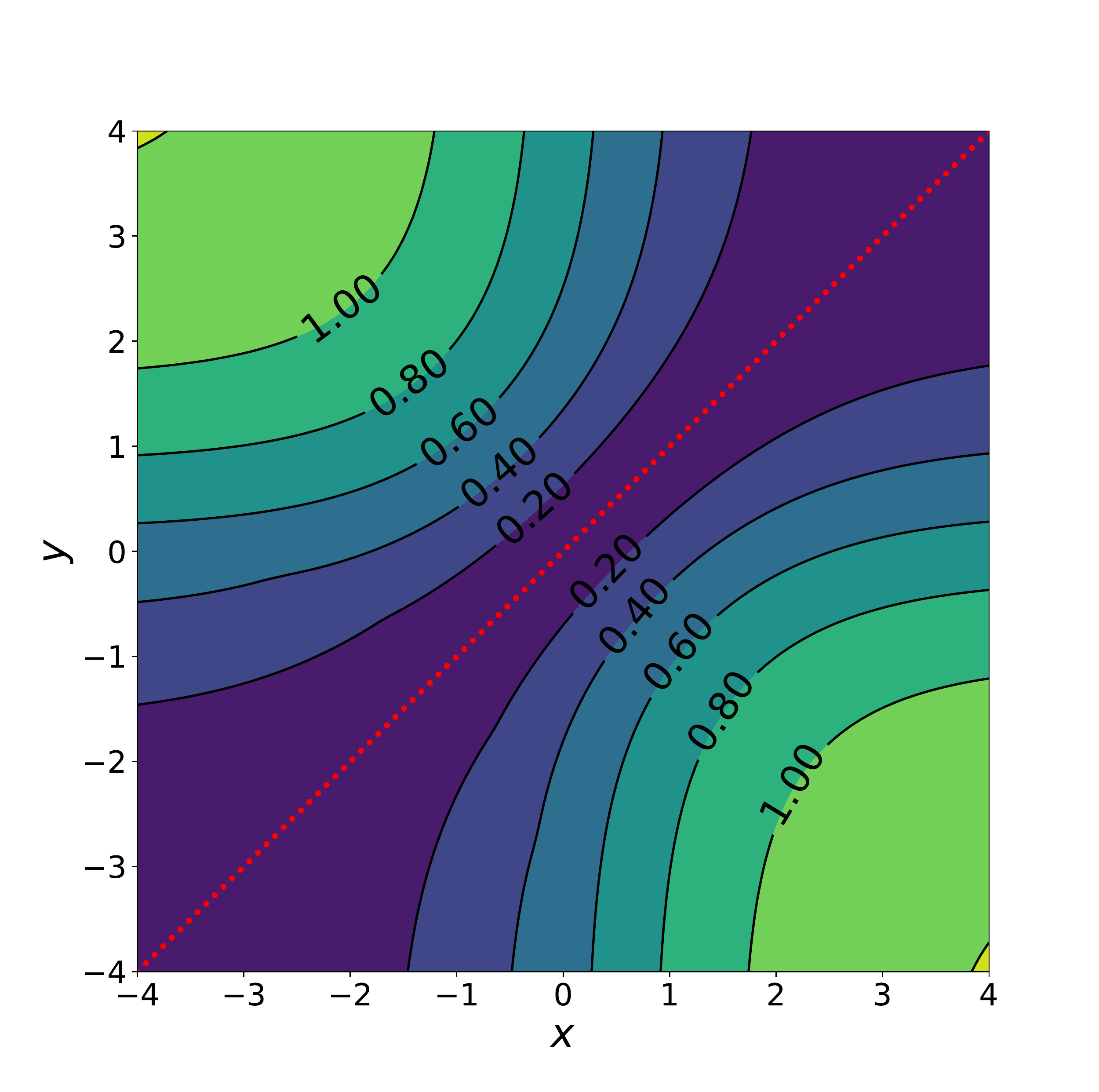}\end{tabular} & \begin{tabular}[c]{@{}l@{}}\includegraphics[trim=0.4in 0.4in 0.4in 0.4in, clip, width=0.30\linewidth]{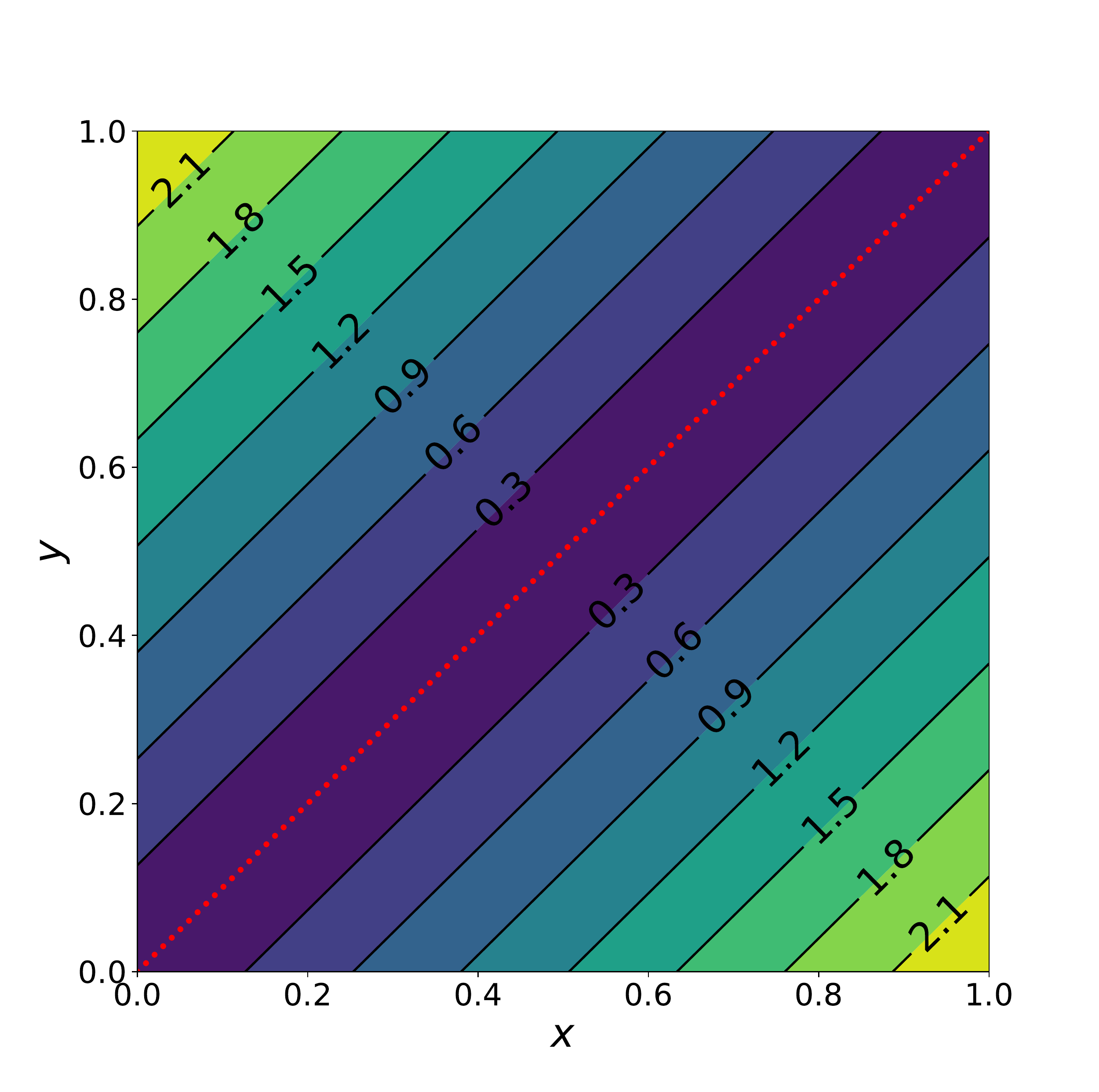}\end{tabular} \\
        $\W_2$ & \begin{tabular}[c]{@{}l@{}}\includegraphics[trim=0.4in 0.4in 0.4in 0.4in, clip, width=0.30\linewidth]{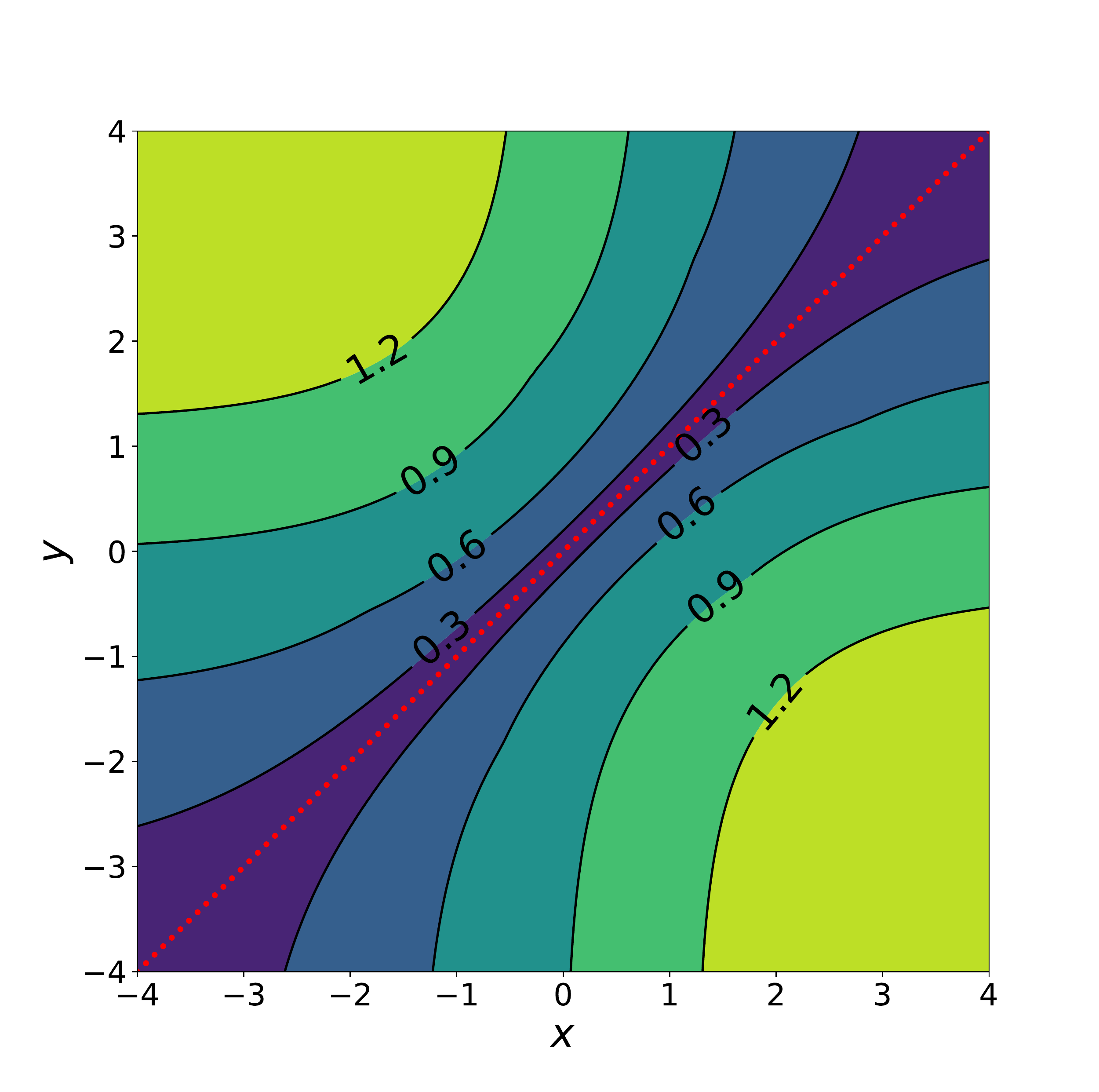}\end{tabular} & \begin{tabular}[c]{@{}l@{}}\includegraphics[trim=0.4in 0.4in 0.4in 0.4in, clip, width=0.30\linewidth]{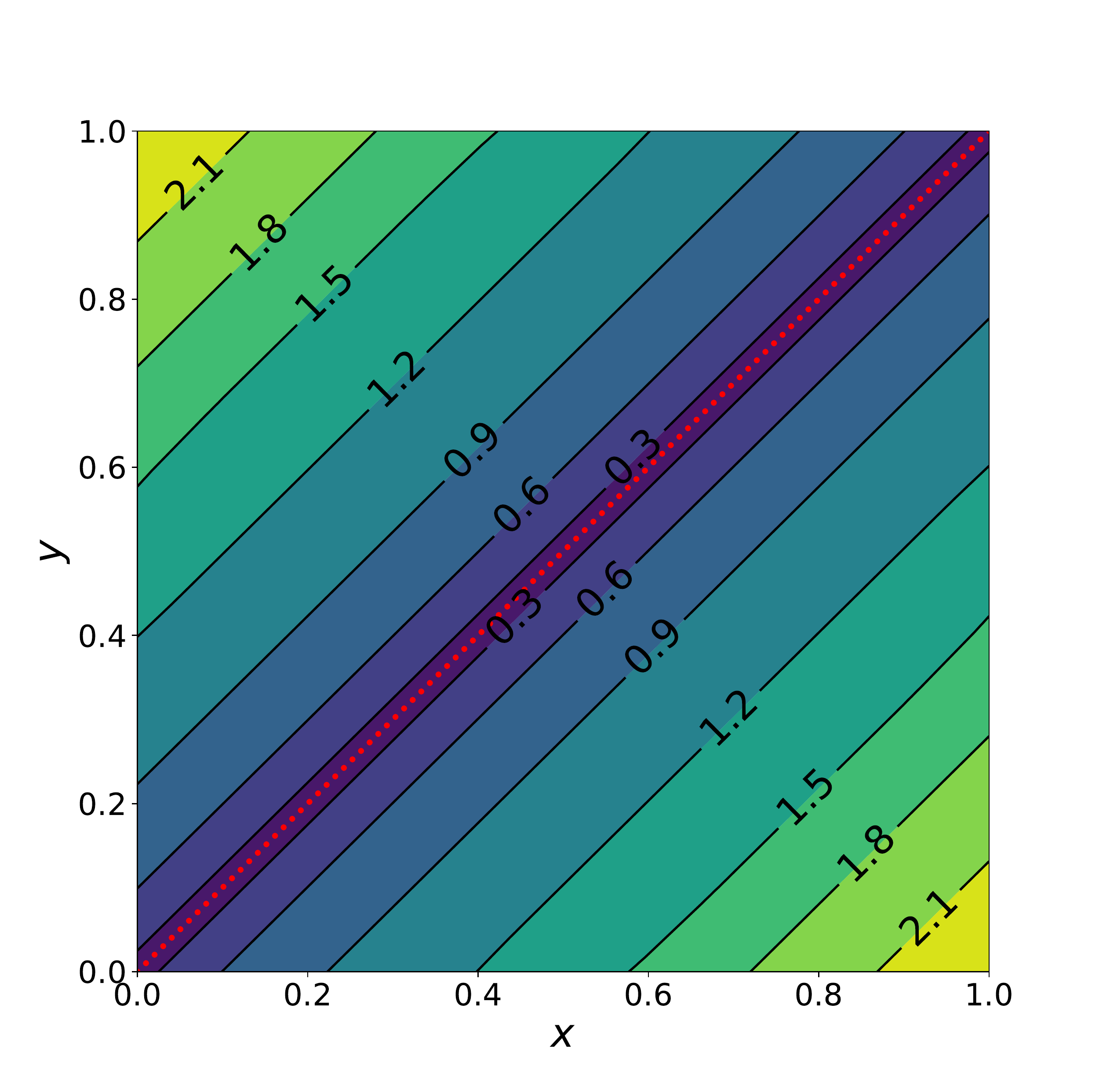}\end{tabular} \\
    \end{tabular}
    \caption{Contour maps showing the binary auto-regressive sequence examples of subadditivity or linear subadditivity of $\SH$, $\KL$, $\SKL$, $\JS$, $\TV$, $\W_1$, and $\W_2$.  The two distributions $P^x, Q^y$ are distributions of binary auto-regressive sequences with length $n=4$ and order $p=2$, following definitions in \cref{exp:verify-subadditivity-discrete-dependency} and \cref{exp:verify-subadditivity-discrete-initial}. The contours and colors indicate the subadditivity gap $\Delta=\sum_{t=p+1}^n \D(P^x_{\cup_{i=t-p}^t X_i}, Q^y_{\cup_{i=t-p}^t X_i})-\D(P^x, Q^y)$ (if $\D$ satisfies subadditivity) or $\Delta=\sum_{t=p+1}^n \D(P^x_{\cup_{i=t-p}^t X_i}, Q^y_{\cup_{i=t-p}^t X_i})-\ap\cdot\D(P^x, Q^y)$ (if $\D$ satisfies $\ap$-linear subadditivity). The red dotted line indicates places where the subadditivity gap is $0$. White regions have too large subadditivity gap to be colored.}
    \label{fig:verify-subadditivity}
\end{figure}

We verify the subadditivity of $\SH$, $\KL$, $\SKL$, and the linear subadditivity of $\JS$, $\TV$, $\W_1$ and $\W_2$ on these two examples, as shown in \cref{fig:verify-subadditivity}. We draw contour plots of the subadditivity gap $\Delta=\sum_{t=p+1}^n \D(P^x_{\cup_{i=t-p}^t X_i}, Q^y_{\cup_{i=t-p}^t X_i})-\D(P^x, Q^y)$ (if $\D$ satisfies subadditivity) or $\Delta=\sum_{t=p+1}^n \D(P^x_{\cup_{i=t-p}^t X_i}, Q^y_{\cup_{i=t-p}^t X_i})-\ap\cdot\D(P^x, Q^y)$ (if $\D$ satisfies $\ap$-linear subadditivity). All the inequalities are verified as we can visually confirm all contours are positive.

\section{Empirical Verification of the Local Approximations of \texorpdfstring{$\gf$}{f}-Divergences}
\label{appendix:verfication-local}

In this section, we observe the local behavior of common $\gf$-divergences when the two distributions $P$ and $Q$ are sufficiently close. And we verify the conclusion of \cref{lem:local-fdiv-approximation}: all $\gf$-divergences $\PD$ with a generator function $\gf(t)$ that is twice differentiable at $t=1$ and satisfies $\gf''(1)>0$ have similar local approximations up to a constant factor up to $\cO(\eps^3)$. More specifically, for a pair of two-sided $\eps$-close distributions $P$ and $Q$, we verify all such $\gf$-divergences satisfy:
\[
    \PD\PQ{}=\frac{\gf''(1)}{2}\CS\PQ{}+\cO(\eps^3)
\]

Let us consider a simple example of two-sided close distributions on $\Sp=\RR$. Suppose $Q=\cN(0, 1)$ is the 1-dimensional unit Gaussian. Let $P(x)=\left(1+\eps\sin(x)\right)Q(x)$ for some $\eps\in(0,1)$. It is easy to verify that $P$ is a valid probability distribution: $\int_{-\infty}^\infty P(x)\dx=\int_{-\infty}^\infty Q(x)\dx+\eps\int_{-\infty}^\infty \sin(x)Q(x)\dx=1$, where the term $\int_{-\infty}^\infty \sin(x)Q(x)\dx$ vanishes because $Q(x)$ is an even function and $\sin(x)$ is odd. Since for any $x\in\Sp=\RR$, it holds that $P(x)/Q(x)=1+\eps\sin(x)\in[1-\eps,1+\eps]$, we know $P$ and $Q$ are two-sided $\eps$-close.

We compute several common $\gf$-divergences between such $P$ and $Q$, for different $\eps\in[0, 0.5]$, as shown in \cref{fig:verify-local-divergences}. We can see that, except for Total Variation distance which has a generator $\gf_\TV$ not differentiable at $1$, all common $\gf$-divergences behave similarly up to a constant factor. Actually, these curves cluster into three groups according to $f''(1)$. In the first cluster: $\gf_\SKL''(1)=\gf_{\CS}''(1)=\gf_{R\CS}''(1)=2$. In the second cluster: $\gf_\KL''(1)=\gf_{R\KL}''(1)=1$. While in the third cluster: $\gf_{\SH}''(1)=\gf_\JS''(1)=\frac{1}{4}$. Moreover, we visualize the differences between $\gf$-divergences normalized with respect to $\gf''(1)$ and $\CS$ divergence, for $\eps\in{[0, 0.01]}$. We can see in \cref{fig:verify-local-differences}, all the differences are very small. This verifies that all $\gf$-divergences such that $\gf''(1)>0$ satisfy $\frac{2}{f''(1)}\PD\PQ{}=\CS\PQ{}$ up to $\cO(\eps^3)$.

\clearpage
\newpage

\begin{figure}[H]
    \centering
    \begin{subfigure}{.49\linewidth}
        \centering
        \begin{minipage}[t]{\linewidth}
            \includegraphics[width=\linewidth]{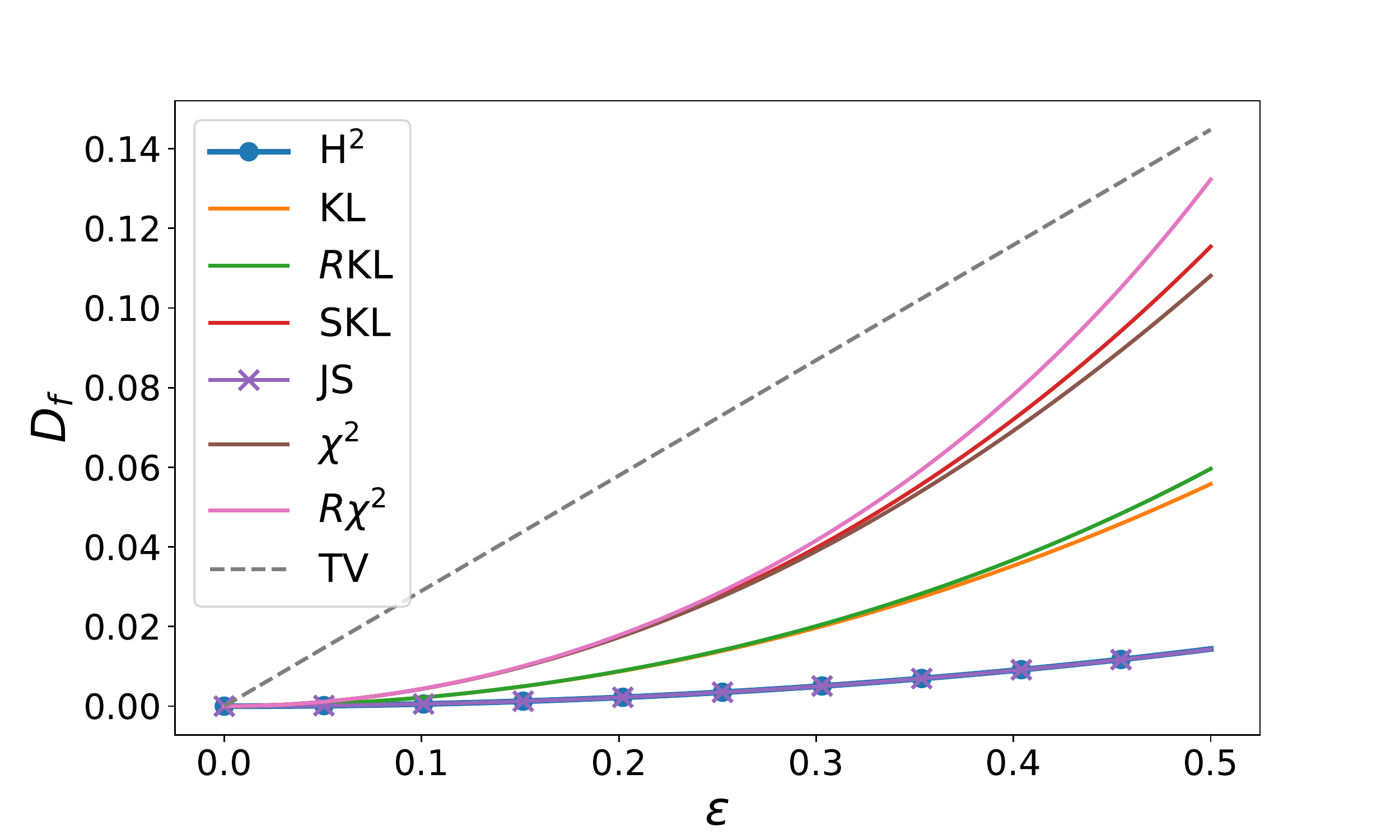}
        \end{minipage}
        \caption{Common $\gf$-divergences between such $P$ and $Q$\\ for $\eps\in{[0, 0.5]}$.}
        \label{fig:verify-local-divergences}
    \end{subfigure}
    \begin{subfigure}{.49\linewidth}
        \centering
        \begin{minipage}[t]{\linewidth}
            \includegraphics[width=\linewidth]{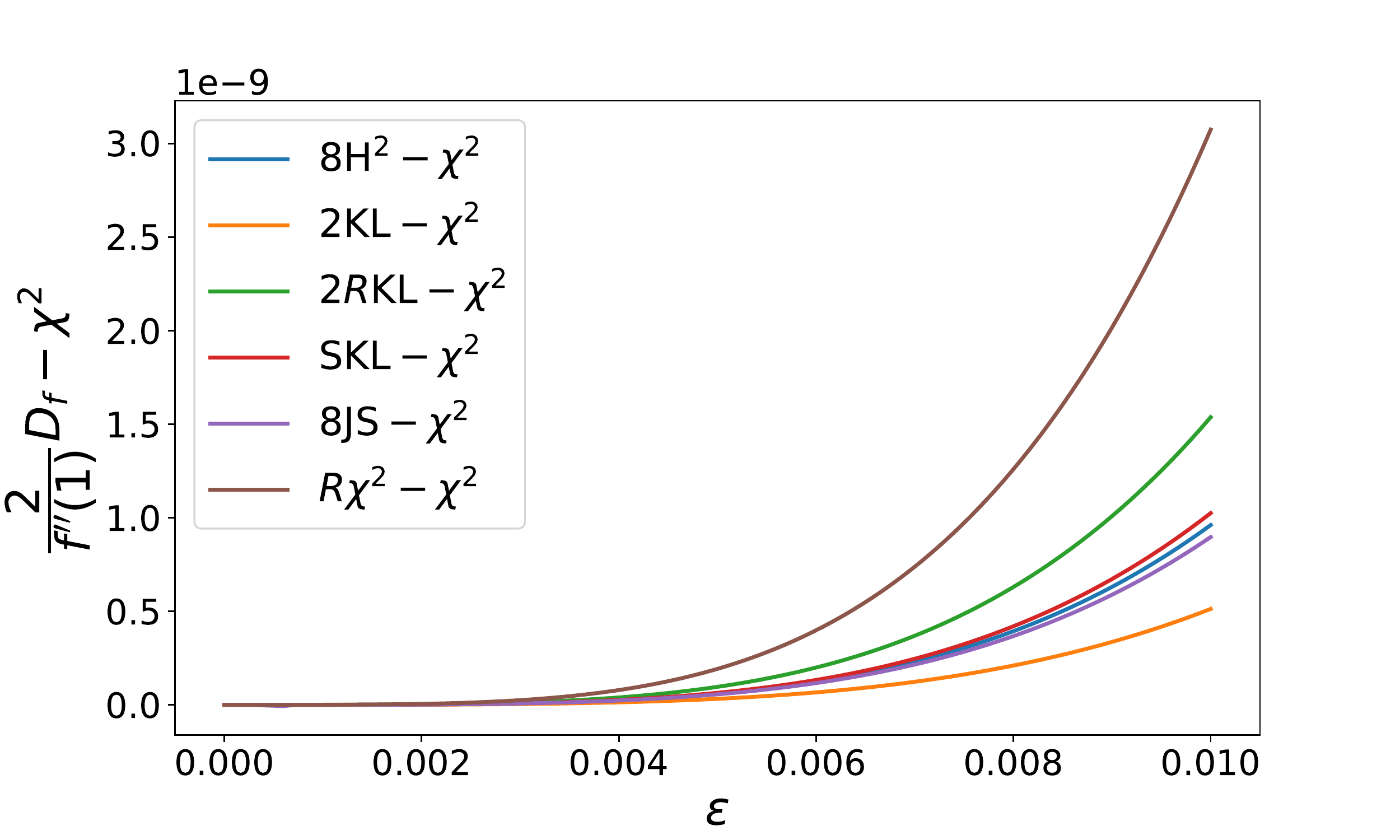}
        \end{minipage}
        \caption{Differences between $\gf$-divergences normalized with respect to $\gf''(1)$ and $\CS$ divergence for $\eps\in{[0, 0.01]}$.}
        \label{fig:verify-local-differences}
    \end{subfigure}
    \caption{Common $\gf$-divergences between two-sided $\eps$-close distributions $P, Q$, where $Q$ is the 1-dimensional unit Gaussian and $P(x)=\left(1+\eps\sin(x)\right)Q(x)$. In (a), we compare these $\gf$-divergences for $\eps\in[0, 0.5]$. In (b), we verify the conclusion of \cref{lem:local-fdiv-approximation}: $\frac{2}{f''(1)}\PD\PQ{}=\chi^2\PQ{}+\cO(\eps^3)$.}
    \label{fig:verify-local}
\end{figure}

\end{document}